\renewcommand*{\backrefalt}[4]{%
    \ifcase #1 \footnotesize{(Not cited.)}%
    \or        \footnotesize{(Cited on page~#2.)}%
    \else      \footnotesize{(Cited on pages~#2.)}%
    \fi}
\definecolor{ashgrey}{rgb}{0.7, 0.75, 0.71}
\renewcommand{\arraystretch}{1.4}
\newtheorem{assumption}{Assumption}
\newtheorem{lemma}{Lemma}
\newtheorem{theorem}{Theorem}
\newtheorem{proposition}{Proposition}
\newcommand{\bbE}{\mathbb{E}}
\newcommand{\softmax}{\sigma}
\newcommand{\dint}{\mathrm{d}}
\newcommand{\balpha}{\boldsymbol{\alpha}}
\newcommand{\bgamma}{\boldsymbol{\gamma}}
\newcommand{\brho}{\boldsymbol{\rho}}
\newcommand{\bpsi}{\boldsymbol{\psi}}
\newcommand{\bx}{\boldsymbol{x}}
\newcommand{\bbX}{\boldsymbol{X}}
\newcommand{\dajn}{\Delta \boldsymbol{a}_{j_1}^{n}}
\newcommand{\doijn}{\Delta \boldsymbol{\omega}_{i_2j_2|j_1}^{n}}
\newcommand{\deijn}{\Delta \boldsymbol{\eta}_{j_1i_2j_2}^{n}}
\newcommand{\dvijn}{\Delta \nu_{j_1i_2j_2}^{n}}
\newcommand{\dtijn}{\Delta \tau_{j_1i_2j_2}^{n}}
\newcommand{\ajn}{ \boldsymbol{a}_{j_1}^{n}}
\newcommand{\bjn}{ b_{j_1}^{n}}
\newcommand{\bejn}{ \beta_{j_2|j_1}^{n}}
\newcommand{\vjn}{\nu_{j_1j_2}^{n}}
\newcommand{\tjn}{\tau_{j_1j_2}^{n}}
\newcommand{\oin}{ \boldsymbol{\omega}_{i_2|j_1}^{n}}
\newcommand{\bein}{ \beta_{i_2|j_1}^{n}}
\newcommand{\ein}{ \boldsymbol{\eta}_{j_1i_2}^{n}}
\newcommand{\vin}{\nu_{j_1i_2}^{n}}
\newcommand{\tin}{\tau_{j_1i_2}^{n}}
\newcommand{\aj}{ \boldsymbol{a}_{j_1}^{*}}
\newcommand{\bj}{ b_{j_1}^{*}}
\newcommand{\oj}{ \boldsymbol{\omega}_{j_2|j_1}^{*}}
\newcommand{\bej}{ \beta_{j_2|j_1}^{*}}
\newcommand{\ej}{ \boldsymbol{\eta}_{j_1j_2}^{*}}
\newcommand{\vj}{\nu_{j_1j_2}^{*}}
\newcommand{\tj}{\tau_{j_1j_2}^{*}}
\newcommand{\ai}{ \boldsymbol{a}_{i_1}^{*}}
\newcommand{\bi}{ b_{i_1}^{*}}
\newcommand{\oi}{ \boldsymbol{\omega}_{i_2|i_1}^{*}}
\newcommand{\bei}{ \beta_{i_2|i_1}^{*}}
\newcommand{\ei}{ \boldsymbol{\eta}_{i_1i_2}^{*}}
\newcommand{\vi}{\nu_{i_1i_2}^{*}}
\newcommand{\ti}{\tau_{i_1i_2}^{*}}
\newcommand{\zerod}{\boldsymbol{0}_d}
\newcommand{\brj}{r^{SL}_{j_2|j_1}}
\newcommand{\trj}{r^{LL}_{j_2|j_1}}
\newcommand{\hrj}{r^{SS}_{j_2|j_1}}
\newcommand{\brone}{r^{SL}_{1|1}}
\newcommand{\hrone}{r^{SS}_{1|1}}
\newcommand{\trone}{r^{LL}_{1|1}}
\newcommand{\normf}[1]{\|#1\|_{L^2(\mu)}}
\DeclareMathOperator*{\argmax}{arg\,max}
\DeclareMathOperator*{\argmin}{arg\,min}
\begin{document}

\begin{center}

{\bf{\LARGE{On Expert Estimation in Hierarchical Mixture of
Experts: Beyond Softmax Gating Functions}}}
  
\vspace*{.2in}
{\large{
\begin{tabular}{ccccc}
Huy Nguyen$^{\dagger,\star}$ & Xing Han$^{\diamond,\star}$ & Carl William Harris$^{\diamond}$ & Suchi Saria$^{\diamond,\star\star}$ & Nhat Ho$^{\dagger,\star\star}$
\end{tabular}
}}

\vspace*{.2in}

\begin{tabular}{cc}
$^{\dagger}$The University of Texas at Austin\\
$^{\diamond}$Johns Hopkins University
\end{tabular}

\vspace*{.2in}

\today

\vspace*{.2in}

\begin{abstract}
With the growing prominence of the Mixture of Experts (MoE) architecture in developing large-scale foundation models, we investigate the Hierarchical Mixture of Experts (HMoE), a specialized variant of MoE that excels in handling complex inputs and improving performance on targeted tasks. Our analysis highlights the advantages of using the Laplace gating function over the traditional Softmax gating within the HMoE frameworks. We theoretically demonstrate that applying the Laplace gating function at both levels of the HMoE model helps eliminate undesirable parameter interactions caused by the Softmax gating and, therefore, accelerates the expert convergence as well as enhances the expert specialization. Empirical validation across diverse scenarios supports these theoretical claims. This includes large-scale multimodal tasks, image classification, and latent domain discovery and prediction tasks, where our modified HMoE models show great performance improvements compared to the conventional HMoE models.
\end{abstract}

\end{center}

\let\thefootnote\relax\footnotetext{$\star$ Equal Contribution, $\star\star$ Equal Advising.}

\section{Introduction}\label{sec:intro}
In recent years, the integration of mixture-of-experts (MoE) within large-scale foundation models has markedly advanced the machine learning field \cite{liu2024deepseek, jiang2024mixtral, fedus2022switch, riquelme2021scaling, zhou2022mixture, mustafa2022multimodal}. Going back in time, this statistical model was first introduced by \cite{jacobs1991adaptive} as an adaptive variant of classic mixture models \cite{Lindsay-1995}, combining the power of several experts which are often formulated as feed-forward networks \cite{shazeer2017outrageously,liu2024deepseek}, classifiers \cite{chen2022theory,nguyen2024general}, or regression functions \cite{deveaux1989linear,faria2010regression}. However, instead of assigning those experts constant weights as in mixture models, the MoE employs a gating mechanism to dynamically allocate data-dependent weights to the experts. In other words, the set of weights will vary with the input value, thereby enhancing the model generalization and allowing the MoE to efficiently handle diverse and complex datasets. Furthermore, in order to increase the model capacity, that is, the number of learnable parameters, \cite{shazeer2017outrageously} proposed a so-called Top-$K$ sparse gating which activated only a few relevant experts per input rather than the entire set of experts. They demonstrated that this sparse gating mechanism helps achieve a significant improvement in the model capacity and model performance without a proportional increase in the computational overhead. As a consequence, there is a surge of interest in applying sparse MoE models in various large-scale applications, including natural language processing \cite{puigcerver2024sparse,zhou2023brainformers,Du_Glam_MoE}, computer vision \cite{liang_m3vit_2022,riquelme2021scaling}, multi-task learning \cite{gupta2022sparsely,hazimeh_dselect_k_2021}, speech recognition \cite{You_Speech_MoE,gulati_conformer_2020}, etc.

\vspace{0.5 em}
\noindent
The Hierarchical Mixture of Experts (HMoE) \cite{Jordan-1994,fritsch1996adaptively} is a special type of MoE that is characterized by a layered structure of decision modules and expert networks that operate in tandem to refine decision-making at each level, optimizing the allocation of computational resources and enhancing specialization for complex tasks. Unlike the standard MoE, which typically involves a single gating network directing inputs to various expert networks, HMoE introduces multiple layers of gating mechanisms and experts. This hierarchical design divides the problem space recursively, allowing different experts to specialize in subspaces of the input, leading to enhanced flexibility and model generalization \cite{jiang1999approximation, azran2004data}. 
Figure \ref{fig:demo_fig} compares HMoE and standard MoE in processing multimodal input data. The HMoE's hierarchical arrangement excels at processing intricate inputs, including those that can be categorized into semantically distinct subgroups like text, images, or time series, or involve various sub-domains. This architecture allows experts at lower levels to grasp detailed token-level intricacies while permitting experts at higher levels to concentrate on broader or domain-specific tasks; it also enhances model transparency. Conversely, using a standard MoE with an equivalent number of experts necessitates a single gating network to select from numerous experts each time, potentially causing interference among them.

\begin{figure*}[t]
    \centering
    \includegraphics[width=\textwidth]{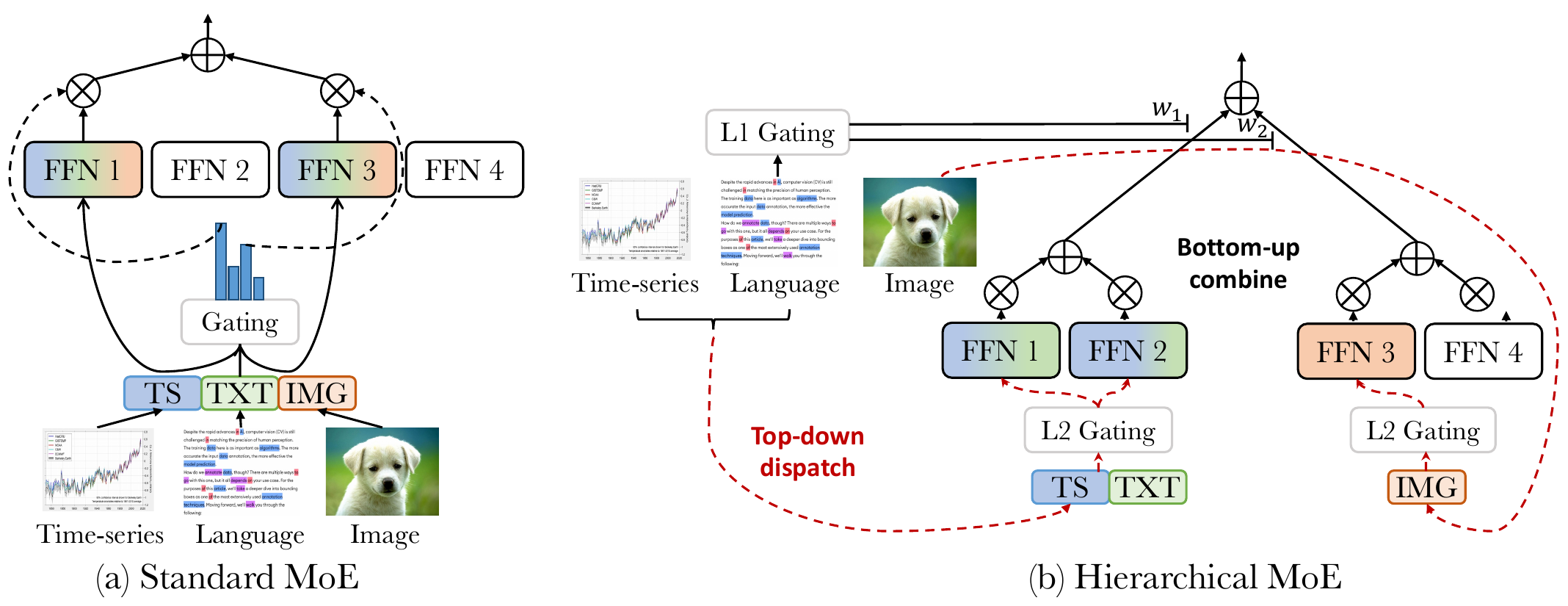}
    \caption{\small Comparison of HMoE and standard MoE in managing multimodal input: MoE excels at processing homogeneous inputs. However, it faces challenges with more intricate structures, such as inputs that can be split into subgroups or those with inherently hierarchical configurations. By contrast, HMoE improves upon this by decomposing tasks into subproblems and directing subsets of data to specialized groups of experts. This approach allows for more granular specialization and enhances the model's capability to handle complex inputs.}
    \label{fig:demo_fig}
\end{figure*}

\vspace{0.5 em}
\noindent
\textbf{Related works.} ~MoE \cite{jacobs1991adaptive, xu1994alternative} has gained significant popularity for managing complex tasks. Unlike traditional models that reuse the same parameters for all inputs, MoE selects distinct parameters for each specific input. This results in a \textit{sparsely} activated layer, enabling a substantial scaling of model capacity without a corresponding increase in computational cost. Recent studies \cite{shazeer2017outrageously, fedus2022switch, mustafa2022multimodal, zhou2023brainformers, shen2023scaling, han2024fusemoe} have demonstrated the effectiveness of integrating MoE with cutting-edge models across a diverse range of tasks. \cite{nie2021evomoe, zhou2022mixture, puigcerver2024sparse} have also tackled key challenges such as accuracy and training instability. As an advanced type of MoE, HMoE has been applied to image classification \cite{irsoy2021dropout}, speech recognition \cite{peng1996bayesian, zhao1994hierarchical}, and complex decision-making tasks \cite{jeremiah2013specifying, moges2016hierarchical}; its hierarchical structures have also been shown to be effective in improving model performance in complex data structures \cite{ng2007extension, peralta2014embedded, zhao2021hierarchical, azran2004data}. Most recently, building upon the spirit of HMoE, \cite{liao2025hmora} proposed a hybrid routing approach combining token-level and task-level routing in a hierarchical manner, and it is more efficient in leveraging the multi-granular information in large language models.

\vspace{0.5 em}
\noindent
While MoE has been widely employed to scale up large models, its theoretical foundations have remained relatively underdeveloped. First of all, \cite{mendes2011convergence} studied the maximum likelihood estimator for parameters of the MoE with each expert being a polynomial regression model. In particular, they investigated the convergence rate of the estimated density to the true density under the Kullback-Leibler (KL) divergence and gave some insights on how many experts should be chosen. Next, \cite{ho2022gaussian} conducted a similar convergence analysis for input-free gating Gaussian MoE but using the Hellinger distance for the density estimation problem instead of the KL divergence. Additionally, they utilized the generalized Wasserstein distance to capture the parameter estimation rates which were negatively affected by the algebraic interactions among parameters.
\cite{nguyen2023demystifying} then generalized these results to a more popular setting known as softmax gating Gaussian MoE. Rather than leveraging the generalized Wasserstein distance for the parameter estimation problem, they proposed novel Voronoi-based loss functions which were shown to characterize the parameter estimation rates more accurately. Recently, \cite{han2024fusemoe} advocated using a new Laplace gating function which induced faster convergence rates than softmax gating due to a reduced number of parameter interactions. However, given that HMoE requires the choice of multiple gating functions, to the best of our knowledge, a comprehensive convergence analysis for HMoE has remained elusive in the literature.

\vspace{0.5 em}
\noindent
\textbf{Contributions.} In this paper, we explore the intricacies of HMoE training by examining the effectiveness of three distinct combinations of two widely used gating functions: the Softmax gating function \cite{Jordan-1994} and the Laplace gating function \cite{han2024fusemoe}, implemented at two hierarchical levels of the HMoE model. Additionally, we provide insights into the practical performance of HMoE when applied to multimodal and multi-domain inputs. We hope this work will serve as a foundation for future research in this relatively underexplored area. Our main contributions can be summarized as follows:

\vspace{0.5 em}
\noindent
\textbf{1. Theoretical convergence analysis of expert estimation.} Expert specialization, as discussed in \cite{dai2024deepseekmoe}, is a critical issue involving the rate at which an expert becomes specialized in specific tasks or aspects of the data. However, to the best of our knowledge, prior research has primarily focused on studying expert specialization in single-level MoE models, leaving the dynamics in HMoE models largely unexplored. To address this gap, we perform a comprehensive convergence analysis of experts within the two-level HMoE model from a statistical perspective. Specifically, we examine the Gaussian HMoE model \cite{Jordan-1994} with three different combinations of Softmax and Laplace gating functions. Our theoretical findings reveal that using Softmax gating at either level induces intrinsic interactions among the model parameters, expressed through partial differential equations (PDEs), which hinder expert convergence. In contrast, employing Laplace gating at both levels helps eliminate these parameter interactions, thereby significantly accelerating expert convergence and enhancing expert specialization.

\vspace{0.5 em}
\noindent
\textbf{2. Application of HMoE in multi-modal and multi-domain learning.} We demonstrate HMoE’s effectiveness over standard MoE, and further validate our theoretical findings on input data with multi-modal or multi-domain structures. By incorporating the three aforementioned combinations of gating functions, our experiments confirm that using the Laplace gating at both levels improves performance across multiple downstream tasks compared to the standard Softmax gating baseline. Additionally, we observe that different combinations of the Laplace and Softmax gating can also noticeably enhance results, leading to better and more robust performance by offering a broader selection of gating function combinations. These findings highlight the practical benefits of selecting appropriate gating functions to enhance HMoE’s capabilities.

\vspace{0.5 em}
\noindent
\textbf{Organization.} The paper proceeds as follows. In Section~\ref{sec:preliminaries}, we exhibit the problem setup following by some fundamental results on the density estimation of the Gaussian HMoE model. Next, we investigate the convergence behavior of parameter estimation and expert estimation in Section~\ref{sec:theory}. Then, in Section~\ref{sec:exp}, we perform comprehensive synthetic and real-world experiments on datasets in different domains to justify our theoretical findings and demonstrate the efficacy of the HMoE model before concluding the paper in Section~\ref{sec:discussion}. Finally, we provide the proof for establishing the parameter and expert estimation rates in Section~\ref{appendix:param_rates}, while other proofs and experimental details are deferred to the Appendices.

\vspace{0.5 em}
\noindent
\textbf{Notations.} We let $[n]$ stand for the set $\{1,2,\ldots,n\}$ for any  $n\in\mathbb{N}$. Next, for any set $S$, we denote $|S|$ as its cardinality. For any vector $v \in \mathbb{R}^{d}$ and $\alpha:=(\alpha_1,\alpha_2,\ldots,\alpha_d)\in\mathbb{N}^d$, we let $v^{\alpha}=v_{1}^{\alpha_{1}}v_{2}^{\alpha_{2}}\ldots v_{d}^{\alpha_{d}}$, $|v|:=v_1+v_2+\ldots+v_d$ and $\alpha!:=\alpha_{1}!\alpha_{2}!\ldots \alpha_{d}!$, while $\|v\|$ stands for its $L^2$-norm value. For any two positive sequences $(a_n)_{n\geq 1}$ and $(b_n)_{n\geq 1}$, we write $a_n = \mathcal{O}(b_n)$ or $a_{n} \lesssim b_{n}$ if there exist $C > 0$ such that $a_n \leq C b_n$ for all $ n\in\mathbb{N}$. Additionally, the notation $a_{n} = \mathcal{O}_{P}(b_{n})$ means that $a_{n}/b_{n}$ is stochastically bounded, while the notation $a_n=\widetilde{\mathcal{O}}(b_n)$ indicates that the previous bound may depend on the logarithmic function of $b_n$. Lastly, for any two probability density functions $p,q$ dominated by the Lebesgue measure $\mu$, we denote $h^2(p,q) = \frac 1 2 \int (\sqrt p - \sqrt q)^2 d\mu$ as their squared Hellinger distance and $V(p,q) = \frac 1 2 \int |p-q| d\mu$ as their Total Variation distance.

\section{Preliminaries}
\label{sec:preliminaries}
In this section, we formulate the Gaussian HMoE model and present some essential assumptions for our theoretical study in Section~\ref{sec:setup}. Then, we explore the convergence behavior of the conditional density estimation of the Gaussian HMoE in Section~\ref{sec:density_estimation}. 
\subsection{Problem Setup}
\label{sec:setup}
To begin with, we assume that an i.i.d. sample of size $n$: $(\bbX_{1}, Y_{1}), (\bbX_{2}, Y_{2}), \ldots, (\bbX_{n}, Y_{n})$ in $\mathbb{R}^d\times\mathbb{R}$, where $\bbX_i$ is a covariate and $Y_i$ is a response variable, is generated from the two-level Gaussian HMoE model whose conditional density function is given by
\begin{align}
    \label{eq:general_density}
    p_{G_{*}}(y|\bx) &:= \sum_{i_1=1}^{k^*_1} \softmax(s_1(\bx,\ai)+\bi)\sum_{i_2=1}^{k^*_2}\softmax(s_2(\bx,\oi)+\bei) \pi(y|(\ei)^{\top}\bx+\ti,\vi).
\end{align}
Throughout this paper, we consider three different types of Gaussian HMoE models corresponding to three different combinations of the Softmax gating and the Laplace gating specified by the similarity score functions $s_1$ and $s_2$. In particular, we refer to the above model as
\begin{itemize}
    \item \emph{the Softmax-Softmax Gating Gaussian HMoE} if $s_1(\bx,\ai)=(\ai)^{\top}\bx$ and $s_2(\bx,\oi)=(\oi)^{\top}\bx$, and customize the conditional density notation~\eqref{eq:general_density} as $p^{SS}_{G_*}(y|\bx)$;
    \item \emph{the Softmax-Laplace Gating Gaussian HMoE} if $s_1(\bx,\ai)=(\ai)^{\top}\bx$ and $s_2(\bx,\oi)=-\|\oi-\bx\|$, and customize the conditional density notation~\eqref{eq:general_density} as $p^{SL}_{G_*}(y|\bx)$;
    \item \emph{the Laplace-Laplace Gating Gaussian HMoE} if $s_1(\bx,\ai)=-\|\ai-\bx\|$ and $s_2(\bx,\oi)=-\|\oi-\bx\|$, and customize the conditional density notation~\eqref{eq:general_density} as $p^{LL}_{G_*}(y|\bx)$;
\end{itemize}
Next, in each type of the Gaussian HMoE, we define $G_*$ as a \emph{mixing measure}, i.e., a weighted sum of Dirac measures $\delta$ given by
\begin{align*}
    G_{*} := \sum_{i_1=1}^{k^*_1}\exp(\bi)\sum_{i_2=1}^{k^*_2}\exp(\bei)\delta_{(\ai,\oi,\ti,\ei,\vi)},
\end{align*}
where $(\bi,\ai,\bei,\oi,\ti,\ei,\vi)$ are true yet unknown parameters in the parameter space $\Theta\subseteq\mathbb{R}\times\mathbb{R}^d\times\mathbb{R}\times\mathbb{R}^d\times\mathbb{R}^q\times\mathbb{R}_+$. Besides, $k^*_1$ denotes the number of mixtures in the two-level Gaussian HMoE, whereas $k^*_2$ is the number of experts in each mixture. 
For any integer $k\in\mathbb{N}$ and real-valued vector $(v_i)_{i=1}^{k}$, we denote by $\softmax(v_i):={\exp(v_i)}/{\sum_{j=1}^{k}\exp(v_j)}$ the softmax function. Meanwhile, $\pi(\cdot|\mu,\nu)$ stands for the univariate Gaussian density function with mean $\mu$ and variance $\nu$. Additionally, it is worth noting that the conditional expectation of the response variable $Y$ given the covariate $\bbX$ is also an HMoE
\begin{align*}
    \mathbb{E}[Y|\bbX]=\sum_{i_1=1}^{k^*_1} \softmax(s_1(\bbX,\ai)+\bi)\sum_{i_2=1}^{k^*_2}\softmax(s_2(\bbX,\oi)+\bei) \cdot[(\ei)^{\top}\bbX+\ti],
\end{align*}
where $(\ei)^{\top}\bx+\ti$ is referred to as an expert. 

\vspace{0.5 em}
\noindent
Recall that expert specialization is an essential problem in the MoE literature where we explore how fast an expert specializes in some tasks or some aspects of the data \cite{dai2024deepseekmoe,oldfield2024multilinear,krishnamurthy2023improvingexpertspecializationmixture}. Therefore, understanding the convergence behavior of expert estimation is of great importance.

\vspace{0.5 em}
\noindent
\textbf{Maximum likelihood estimation (MLE).} We can estimate the experts $(\ei)^{\top}\bx+\ti$ by estimating their parameters. To estimate the unknown parameters, or equivalently the unknown mixing measure $G_*$, we utilize the maximum likelihood method \cite{vandeGeer-00}. For simplicity, we assume that the value of $k^*_1$ is known (since the analysis would become unnecessarily complicated otherwise), while the value of $k^*_2$ remains unknown. Then, we over-specify the true model~\eqref{eq:general_density} by considering an MLE within a class of mixing measures with at most $k^*_1k_2$ components, where $k_2>k^*_2$, as follows:
\begin{align}
    \label{eq:least_squared_estimator}
    \widehat{G}^{type}_n:=\argmax_{G\in\mathcal{G}_{k^*_1,k_2}(\Theta)}\frac{1}{n}\sum_{i=1}^{n}\log(p^{type}_{G}(Y_i|\bbX_i)),
\end{align}
in which
\begin{align*}
    \mathcal{G}_{k^*_1,k_2}(\Theta):=\Big\{G=\sum_{i_1=1}^{k^*_1}\exp(b_{i_1})\sum_{i_2=1}^{k'_2}\exp(\beta_{i_2|i_1})\delta_{(\boldsymbol{a}_{i_1},\boldsymbol{\omega}_{i_2|i_1},\boldsymbol{\eta}_{i_1i_2},\tau_{i_1i_2},\nu_{i_1i_2})}:k'_2\in[k_2],\\
    (b_{i_1},\boldsymbol{a}_{i_1},\beta_{i_2|i_1},\boldsymbol{\omega}_{i_1i_2},\tau_{i_1i_2},\boldsymbol{\eta}_{i_1i_2},\nu_{i_1i_2})\in\Theta\Big\}
\end{align*}
and $type\in\{SS,SL,LL\}$.

\vspace{0.5 em}
\noindent
\textbf{Assumptions.} For the sake of theory, let us introduce some mild assumptions on the model parameters as well as the covariate throughout this paper:

\vspace{0.5 em}
\noindent
\emph{(A.1) We assume that the parameter space $\Theta$ is compact and the covariate space $\mathcal{X}$ is bounded to guarantee the MLE convergence.}

\vspace{0.5 em}
\noindent
\emph{(A.2) In order that the Gaussian HMoE is identifiable, that is, $p^{SS}_{G}(y|\bx)=p^{SS}_{G_*}(y|\bx)$ for almost every $(\bx,y)$ implies $G\equiv G_*$, the softmax gating value must not be invariant to parameter translation. Therefore, we let $\boldsymbol{a}^*_{k^*_1}=\zerod,b^*_{k^*_1}=0$ and $\boldsymbol{\omega}^*_{k^*_2|i_1}=\zerod,\beta^*_{k^*_2|i_1}=0$ for any $i_1\in[k^*_1]$.}

\vspace{0.5 em}
\noindent
\emph{(A.3) For any $i_1\in[k^*_1]$, we let $(\boldsymbol{\eta}^*_{i_11},\tau^*_{i_11},\nu^*_{i_11}),\ldots,(\boldsymbol{\eta}^*_{i_1k^*_2},\tau^*_{i_1k^*_2},\nu^*_{i_1k^*_2})$ be distinct parameters so that the Gaussian distributions within the same mixture are different from each other.}

\vspace{0.5 em}
\noindent
\emph{(A.4) To ensure that the gating depend on the covariate, we assume at least one among gating parameters in the first level $\boldsymbol{a}^*_{1},\ldots,\boldsymbol{a}^*_{k^*_1}$ (resp. those in the second level $\boldsymbol{\omega}^*_{1},\ldots,\boldsymbol{\omega}^*_{k^*_1}$) is different from zero.}

\subsection{Density Estimation}
\label{sec:density_estimation}
Subsequently, we study the consistency of the MLE under the Gaussian HMoE model and determine the convergence rate of the density estimation. 
\begin{proposition}
    \label{prop:identifiability}
    For each $type\in\{SS,SL,LL\}$, suppose that the equation $p^{type}_{G}(y|\bx)=p^{type}_{G_*}(y|\bx)$ holds true for almost surely $(\bx,y)$, then we get that $G\equiv G_*$.
\end{proposition}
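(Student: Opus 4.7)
The plan is to prove identifiability in three stages, working outward from the Gaussian experts to the hierarchical gating. First, for every fixed $\bx$, the conditional density $p^{type}_G(\cdot|\bx)$ is a finite convex combination of univariate Gaussians in $y$ with means $\boldsymbol{\eta}^\top\bx+\tau$ and variances $\nu$, whose weights are nonnegative and sum to one (being products of outer and inner gating probabilities). Teicher's classical identifiability theorem for finite Gaussian mixtures then forces the distinct Gaussian components and their aggregated weights in $G$ and $G_*$ to match for every $\bx$.

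The second stage upgrades this pointwise matching of Gaussian components into equality of expert parameter triples $(\boldsymbol{\eta},\tau,\nu)$. Two atoms with different triples produce the same Gaussian only on a Lebesgue-null set (the affine subspace $\{\bx:(\boldsymbol{\eta}_1-\boldsymbol{\eta}_2)^\top\bx=\tau_2-\tau_1\}\cap\{\nu_1=\nu_2\}$). Taking $\bx$ outside the finite union of all such pairwise null sets, the distinct triples of $G$ coincide as a set with those of $G_*$, and the aggregated weight $W_G(\bx;\boldsymbol{\eta},\tau,\nu)$ equals $W_{G_*}(\bx;\boldsymbol{\eta},\tau,\nu)$ as a function of $\bx$. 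Assumption~(A.3) (distinctness of experts within a parent) then pins down which atoms of $G$ correspond to which atoms of $G_*$ within each parent group.

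The third stage recovers the gating parameters. For each shared triple, the aggregated weight $W_G(\bx)$ is a sum over $(i_1,i_2)$ of products of outer and inner gating probabilities. Clearing a common denominator, the equation $W_G\equiv W_{G_*}$ reduces to an identity among linear combinations of the exponential family $\exp(\boldsymbol{a}^\top\bx)$ (or its Laplace analogue $\exp(-\|\boldsymbol{a}-\bx\|)$), which is linearly independent across distinct parameters over any open subset of $\mathcal{X}$. Combined with the anchor conditions in (A.2) that remove the translation symmetry of softmax (namely $\boldsymbol{a}^*_{k^*_1}=\zerod,b^*_{k^*_1}=0$ and $\boldsymbol{\omega}^*_{k^*_2|i_1}=\zerod,\beta^*_{k^*_2|i_1}=0$), and with (A.4) which precludes degenerate constant gating, this pins down the outer parameters $(\boldsymbol{a}_{i_1},b_{i_1})$ and then, branch by branch, the inner parameters $(\boldsymbol{\omega}_{i_2|i_1},\beta_{i_2|i_1})$, up to the allowed permutation.

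The main technical obstacle lies in the final stage for the softmax-softmax case, where the two-level gating is a genuine rational function of $\bx$ and the outer and inner parameters are entangled through the product of softmaxes. The strategy is first to aggregate over the inner index $i_2$ to expose only the outer gating, apply the standard identification of softmax, and then descend branch-by-branch to the inner level. A secondary subtlety is the possibility that the same Gaussian triple is shared across different parents $j_1\neq j_1'$ (which (A.3) does not forbid); this is resolved by comparing the explicit $\bx$-dependence of the aggregated weights for those shared triples and exploiting the non-triviality guaranteed by (A.4).
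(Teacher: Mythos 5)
Your proposal is correct and follows essentially the same route as the paper's proof: identifiability of finite location-scale Gaussian mixtures (Teicher) to match the expert components and their aggregated weights, followed by identification of the two-level gating via the translation-anchoring conditions in (A.2). The only difference is one of care rather than strategy — you make explicit the null-set argument for matching expert triples across $\bx$ and the parent-assignment ambiguity when a triple is shared across branches, both of which the paper absorbs into a ``WLOG'' before isolating the outer softmax and descending to the inner level.
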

\noindent
The proof of Proposition~\ref{prop:identifiability} is deferred to Appendix~\ref{appendix:identifiability}. The above result indicates that the Gaussian HMoE model is identifiable, which ensures that the MLE $\widehat{G}^{type}_n$ converge to the true counterpart $G_*$. Given the identifiable property of the Gaussian HMoE model, we proceed to investigate the convergence behavior of the density estimation $p^{type}_{\widehat{G}_n}$ to the true density $p^{type}_{G_*}$ in Proposition~\ref{prop:density_estimation} whose proof can be found in Appendix~\ref{appendix:density_rate}.
\begin{proposition}    \label{prop:density_estimation}
For each $type\in\{SS,SL,LL\}$ and an MLE $\widehat{G}^{type}_n$ defined in equation~\eqref{eq:least_squared_estimator}, the corresponding density estimation $p^{type}_{\widehat{G}_n}$ converges to the true density $p^{type}_{G_*}$ under the Hellinger distance $h$ at the following rate:
\begin{align*}
    \bbE_{\bbX}[h(p^{type}_{\widehat{G}^{type}_{n}}(\cdot|\bbX), p^{type}_{G_{*}}(\cdot|\bbX))] =\widetilde{\mathcal{O}}_P(n^{-1/2}).
    \end{align*}
\end{proposition}
\noindent
Proposition~\ref{prop:density_estimation} indicates that the conditional density estimation of the Gaussian HMoE $p^{type}_{\widehat{G}_n}$ admits the convergence rate of order $\widetilde{\mathcal{O}}_P(n^{-1/2})$, which is parametric on the sample size $n$. Given this result, we will discuss a strategy to determine the convergence rate of parameter estimation based on the above density estimation rate.

\vspace{0.5 em}
\noindent
\textbf{From density estimation rate to parameter estimation rate.} Consequently, if we are able to construct a loss function among parameters denoted by, for example, $\mathcal{L}(\widehat{G}^{type}_n,G_*)$, satisfying the bound
\begin{align}
    \label{eq:hellinger_bound}
    \mathcal{L}(\widehat{G}^{type}_n,G_*)\lesssim \bbE_{\bbX}[h(p^{type}_{\widehat{G}^{type}_{n}}(\cdot|\bbX), p^{type}_{G_{*}}(\cdot|\bbX))],
\end{align}
then we will obtain the parameter estimation rates $\mathcal{L}(\widehat{G}^{type}_n,G_*)=\widetilde{\mathcal{O}}_P(n^{-1/2})$, which leads to our desired rates for estimating experts. However, while such Hellinger bound has been well studied under the setting of one-level Gaussian MoE \cite{ho2022gaussian,nguyen2023demystifying}, it has remained elusive for the hierarchical setting.

\section{Convergence Rates of Parameter Estimation and Expert Estimation}
\label{sec:theory}
In this section, we conduct a convergence analysis of parameter estimation and expert estimation under three different types of the two-level Gaussian HMoE associated with three distinct combinations of the Softmax gating and the Laplace gating. Our main objective is to find which gating combination would induce the fastest expert estimation rate, and then provide useful insights into the design of Gaussian HMoE.

\subsection{Softmax-Softmax Gating Gaussian HMoE}
\label{sec:softmax_softmax}

We start with the Softmax-Softmax gating Gaussian HMoE model where we use the Softmax gating in both levels, and the corresponding conditional density function is given by
\begin{align}
    \label{eq:softmax_softmax}
    p^{SS}_{G_{*}}(y|\bx) &:= \sum_{i_1=1}^{k^*_1} \softmax((\ai)^{\top}\bx+\bi)\sum_{i_2=1}^{k^*_2}\softmax((\oi)^{\top}\bx+\bei) \pi(y|(\ei)^{\top}\bx+\ti,\vi),
\end{align}
where the abbreviation $SS$ stands for ``Softmax-Softmax''. As mentioned in Section~\ref{sec:density_estimation}, in order to determine the parameter and expert estimation rates given the density estimation rate in Proposition~\ref{prop:density_estimation}, it suffices to build a loss function among parameters $\mathcal{L}(\widehat{G}^{SS}_n,G_*)$ such that the Hellinger lower bound in equation~\eqref{eq:hellinger_bound} holds true. In the following paragraph, we will highlight some fundamental challenges for deriving that bound, which indicates how to design the loss function among parameters in order to capture the convergence rates of parameter estimation and expert estimation accurately.

\vspace{0.5 em}
\noindent
\textbf{Challenges.} 
Our main technique for establishing the Hellinger lower bound~\eqref{eq:hellinger_bound} is to decompose the density estimation and the true density, i.e., $p^{SS}_{\widehat{G}^{SS}_n}(y|\bx)-p^{SS}_{G_*}(y|\bx)$, into a combination of linearly independent terms by applying the Taylor expansion to the function $u(\bx;\boldsymbol{a},\boldsymbol{\omega},\boldsymbol{\eta},\tau,\nu):=\exp(\boldsymbol{a}^{\top}\bx)\exp(\boldsymbol{\omega}^{\top}\bx)\pi(y|\boldsymbol{\eta}^{\top}\bx+\tau,\nu)$ with respect to its parameters. In previous works \cite{ho2022gaussian,nguyen2023demystifying}, it is well-known that there is an interaction between the mean parameter $\tau$ and the variance parameter $\nu$ of the Gaussian density via the partial differential equation (PDE) $\frac{\partial u}{\partial\nu}=\frac{1}{2}\cdot\frac{\partial^2u}{\partial\tau^2}$. Such PDE induces several linearly dependent terms in the aforementioned decomposition, thereby leading to significantly slow rates for estimating those parameters. In this paper, we discover that the first-level gating parameter $\boldsymbol{a}$ also interacts with the second-level parameters $\boldsymbol{\eta}, \tau, \boldsymbol{\omega}$, that is,
\begin{align}
    \label{eq:PDE}
    \text{(I) }~ \frac{\partial u}{\partial\boldsymbol{\eta}}=\frac{\partial^2u}{\partial\boldsymbol{a}\partial\tau}; \qquad  \text{(II) }~ \frac{\partial u}{\partial\boldsymbol{a}}=\frac{\partial u}{\partial \boldsymbol{\omega}}.
\end{align}
To the best of our knowledge, these intrinsic interactions have not been noted before in the literature. Therefore, we have to take the solvability of the unforeseen system of polynomial equations~\eqref{eq:system_SS} into account to capture that interaction.

\vspace{0.5 em}
\noindent
\textbf{System of polynomial equations.} For each $m\geq 2$, we define $r^{SS}(m)$ as the smallest natural number $r$ such that the following system does not have any non-trivial solutions for the unknown variables $(p_{i_2},\boldsymbol{q}_{1i_2},\boldsymbol{q}_{2},\boldsymbol{q}_{3i_2},q_{4i_2},q_{5i_2})_{i_2=1}^{m}$
\begin{align}
    \label{eq:system_SS}
    \sum_{i_2=1}^{m}\sum_{(\balpha_1,\balpha_2,\balpha_3,\alpha_4,\alpha_5)\in\mathcal{I}^{SS}_{\brho_1,\rho_2}}\frac{1}{\balpha!}\cdot p^2_{i_2}\boldsymbol{q}_{1i_2}^{\balpha_1}\boldsymbol{q}_{2}^{\balpha_2}\boldsymbol{q}_{3i_2}^{\balpha_3}q_{4i_2}^{\alpha_4}q_{5i_2}^{\alpha_5}=0, \quad 1\leq|\brho_1|+\rho_2\leq r, 
\end{align}
where $\mathcal{I}^{SS}_{\brho_1,\rho_2}:=\{(\balpha_1,\balpha_2,\balpha_3,\alpha_4,\alpha_5)\in\mathbb{R}^d\times\mathbb{R}^d\times\mathbb{R}^d\times\mathbb{R}\times\mathbb{R}_+:\balpha_1+\balpha_2+\balpha_3=\brho_1, |\balpha_3|+\alpha_4+2\alpha_5=\rho_2\}$. Here, a solution is categorized as non-trivial if all the values of $p_{i_2}$ are different from zero and at least one among $q_{4i_2}$ is non-zero. Note that $r^{SS}(m)$ is a monotonically increasing function. However, finding the exact value of $r^{SS}(m)$ is a demanding problem in the field of algebraic geometry \cite{Sturmfels_System}. Thus, we provide in Lemma~\ref{lemma:rss_values} (whose proof is in Appendix~\ref{appendix:rss_values}) some specific values of $r^{SS}(m)$ when $m$ is small, while those for larger $m$ are left for future development. 
\begin{lemma}
    \label{lemma:rss_values}
    For any $d\geq 1$, we have that $r^{SS}(2)=4$ and $r^{SS}(3)=6$, while we conjecture that $r^{SS}(m)\geq 7$ for $m\geq 4$.
\end{lemma}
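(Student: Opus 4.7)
The plan is to reduce the two-level polynomial system to the classical one-level Gaussian mixture polynomial system whose inconsistency thresholds are already understood in the MoE literature, and then transport the known values $4$ and $6$. The first step is to collapse the sum over $(\balpha_1,\balpha_2)$ via the multinomial identity $\sum_{\balpha_1+\balpha_2=\bgamma}\frac{\boldsymbol{q}_{1i_2}^{\balpha_1}\boldsymbol{q}_{2}^{\balpha_2}}{\balpha_1!\,\balpha_2!}=\frac{(\boldsymbol{q}_{1i_2}+\boldsymbol{q}_{2})^{\bgamma}}{\bgamma!}$. Setting $\boldsymbol{z}_{i_2}:=\boldsymbol{q}_{1i_2}+\boldsymbol{q}_{2}$, the equation at multi-degree $(\brho_1,\rho_2)$ becomes
\begin{equation*}
\sum_{i_2=1}^{m}p_{i_2}^{2}\sum_{\substack{\balpha_3\leq\brho_1\\|\balpha_3|+\alpha_4+2\alpha_5=\rho_2}}\frac{\boldsymbol{z}_{i_2}^{\brho_1-\balpha_3}\boldsymbol{q}_{3i_2}^{\balpha_3}q_{4i_2}^{\alpha_4}q_{5i_2}^{\alpha_5}}{(\brho_1-\balpha_3)!\,\balpha_3!\,\alpha_4!\,\alpha_5!}=0,
\end{equation*}
so the level-one interaction $\partial u/\partial\boldsymbol{a}=\partial u/\partial\boldsymbol{\omega}$ from equation~\eqref{eq:PDE} manifests as the fact that only the combination $\boldsymbol{z}_{i_2}$ enters, not $\boldsymbol{q}_{1i_2}$ and $\boldsymbol{q}_{2}$ separately.

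For the upper bound, I would restrict to $\brho_1=\zerod$, which forces $\balpha_3=\zerod$ and removes every first-level variable. What remains is exactly the one-level Gaussian mixture polynomial system $\sum_{i_2}p_{i_2}^{2}\sum_{\alpha_4+2\alpha_5=\rho_2}\frac{q_{4i_2}^{\alpha_4}q_{5i_2}^{\alpha_5}}{\alpha_4!\,\alpha_5!}=0$ for $\rho_2=1,\ldots,r$ studied in \cite{ho2022gaussian,nguyen2023demystifying,nguyen2024gaussian}, and the non-triviality condition ($p_{i_2}\neq 0$ for all $i_2$, some $q_{4i_2}\neq 0$) matches verbatim. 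Hence the two-level system is inconsistent whenever the one-level one is, and importing the known thresholds yields $r^{SS}(2)\leq 4$ and $r^{SS}(3)\leq 6$.

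For the matching lower bounds, I would lift any non-trivial one-level witness of degree $r-1$ by taking $\boldsymbol{q}_{1i_2}=-\boldsymbol{q}_{2}$ (so $\boldsymbol{z}_{i_2}=\zerod$) and $\boldsymbol{q}_{3i_2}=\zerod$, with $\boldsymbol{q}_{2}$ arbitrary and $(p_{i_2},q_{4i_2},q_{5i_2})$ reused from the one-level solution. For any $\brho_1\neq\zerod$, every surviving monomial in the reformulated equation contains either $\boldsymbol{z}_{i_2}^{\brho_1-\balpha_3}$ with $\brho_1-\balpha_3\neq\zerod$ or $\boldsymbol{q}_{3i_2}^{\balpha_3}$ with $\balpha_3=\brho_1\neq\zerod$, both of which vanish; the $\brho_1=\zerod$ equations coincide with the one-level system and are satisfied by construction. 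This gives $r^{SS}(2)\geq 4$ and $r^{SS}(3)\geq 6$, and more generally $r^{SS}(m)$ is at least the one-level threshold, so the conjecture $r^{SS}(m)\geq 7$ for $m\geq 4$ is inherited from the analogous open conjecture in the one-level setting.

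The main obstacle is confirming the one-level thresholds $4$ and $6$ themselves. Although these are reported in prior Gaussian MoE work, the underlying argument is a delicate case analysis on the polynomials $S_k(q_{4i_2},q_{5i_2}):=\sum_{\alpha_4+2\alpha_5=k}\frac{q_{4i_2}^{\alpha_4}q_{5i_2}^{\alpha_5}}{\alpha_4!\,\alpha_5!}$, because various coincidences among the $q_{4i_2}$ (equal values, opposite signs, vanishing subsets) must each be excluded. For $m=2$, one shows that the three equations of degree at most $3$ leave a one-parameter family of non-trivial solutions, which the degree-$4$ equation then eliminates; for $m=3$, the argument extends through degree $6$ and requires carefully separating the sub-cases where two of the $q_{4i_2}$ coincide. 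It is the combinatorial explosion of these coincidence cases for $m\geq 4$ that reduces the result to a conjecture rather than a theorem at that scale.
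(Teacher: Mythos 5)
Your argument is correct and essentially coincides with the paper's: both obtain the upper bounds by restricting the system~\eqref{eq:system_SS} to the equations with $\brho_1=\zerod$, which reproduces the one-level Gaussian polynomial system whose thresholds $4$ and $6$ are imported from prior work, and both obtain the matching lower bounds by lifting an explicit one-level witness with all vector-valued unknowns annihilated (the paper sets $\boldsymbol{q}_{1i_2}=\boldsymbol{q}_{2}=\boldsymbol{q}_{3i_2}=\zerod$ and verifies the scalar equations directly, while your choice $\boldsymbol{q}_{1i_2}=-\boldsymbol{q}_{2}$, $\boldsymbol{q}_{3i_2}=\zerod$ is a mild generalization that kills the same monomials). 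Your Vandermonde collapse onto $\boldsymbol{z}_{i_2}=\boldsymbol{q}_{1i_2}+\boldsymbol{q}_{2}$ is a pleasant way of making the interaction $\partial u/\partial\boldsymbol{a}=\partial u/\partial\boldsymbol{\omega}$ visible at the level of the polynomial system, but it is not needed for, and does not change, the proof.
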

\noindent
Subsequently, we need to design a loss function $\mathcal{L}(\cdot,\cdot)$ among parameters that satisfies the lower bound in equation~\eqref{eq:hellinger_bound}. In the literature, \cite{nguyen2016latentmixing} utilized the generalized Wasserstein to capture the convergence behavior of MLE in mixture models. Then, \cite{ho2022gaussian} reused the generalized Wasserstein for establishing the convergence rate of parameter estimation in input-independent gating Gaussian MoE. An advantage of using this divergence is that we can deduce the convergence rates of individual parameters from
the convergence rate of the MLE $\widehat{G}_n$ as indicated in Theorem 1 in \cite{ho2022gaussian}. On the other hand, the generalized Wasserstein divergence is incapable of accurately capturing those rates. More concretely, the generalized Wasserstein implies the same estimation rates for all the individual parameters although those rates should change with  
the number of fitted experts. To close this gap, \cite{nguyen2023demystifying} proposed using a loss function constructed based on the concept of Voronoi cells \cite{manole22refined} for analyzing the convergence of parameter estimation in one-level Softmax gating Gaussian MoE. In order to leverage this Voronoi loss function for our work, we need to generalize it to the hierarchical setting.

\begin{figure*}[t]
    \centering
    \includegraphics[scale=0.3]{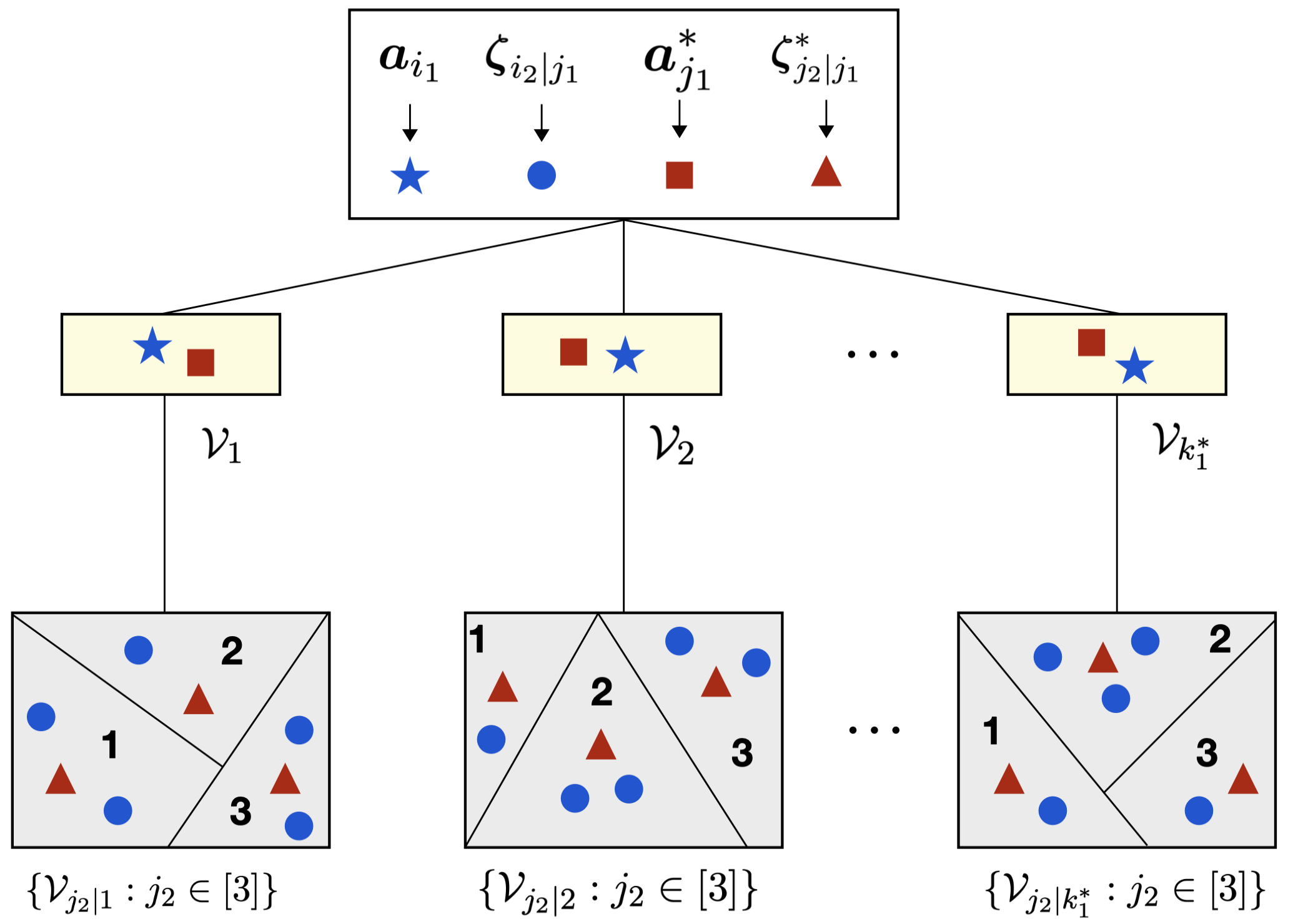}
    \caption{Illustration of Voronoi cells defined in equations~\eqref{eq:Voronoi_cells_level_1} and \eqref{eq:Voronoi_cells_level_2}. In the first level, Voronoi cells $\mathcal{V}_{j_1}$, for $j_1\in[k_1^*]$, are generated by ground-truth first-level parameters $\boldsymbol{a}^*_{j_1}$ (red squares) and contain first-level fitted parameters $\boldsymbol{a}_{i_1}$ (blue stars). Since the value of $k_1^*$ is known, the red squares are exactly fitted, implying that each Voronoi cell $\mathcal{V}_{j_1}$ has only one blue star. In the second level, each gray rectangle depicts a set of $k_2^*=3$ Voronoi cells $\{\mathcal{V}_{j_2|j_1}:j_2\in[k_2^*]\}$ generated by ground-truth second-level parameters $\boldsymbol{\zeta}^*_{j_2|j_1}$ (red triangles), for $j_1\in[k_1^*]$. These three Voronoi cells $\mathcal{V}_{j_2|j_1}$ contain a total of $k_2=5$ second-level fitted parameters $\boldsymbol{\zeta}_{i_2|j_1}$ (blue rounds). Since $k_2>k_2^*$, there exist some Voronoi cells $\mathcal{V}_{j_2|j_1}$ having more than one blue round.
    } 
    \label{fig:voronoi}
\end{figure*}

\vspace{0.5 em}
\noindent
\textbf{Voronoi loss.} To precisely characterize the convergence rate of parameter estimation, it is necessary to capture the number of fitted parameters approaching each individual true parameter in both levels of Gaussian HMoE. For that purpose, let us introduce the concept of Voronoi cells \cite{manole22refined}. In particular, given an arbitrary mixing measure $G\in\mathcal{G}_{k^*_1k_2}(\Theta)$, we distribute its atoms across the Voronoi cells $\{\mathcal{V}_{j_1}(G),j_1 \in [k^*_1]\}$ and $\{\mathcal{V}_{j_2|j_1}(G),j_1 \in [k^*_1],j_2\in[k^*_2]\}$ generated by the atoms of $G_*$ (see also Figure~\ref{fig:voronoi}), where
\begin{align}
    \label{eq:Voronoi_cells_level_1}
    \mathcal{V}_{j_1}\equiv\mathcal{V}_{j_1}(G)&:=\{i_1\in[k^*_1]:\|\boldsymbol{a}_{i_1}-\boldsymbol{a}^*_{j_1}\|\leq\|\boldsymbol{a}_{i_1}-\boldsymbol{a}^*_{\ell_1}\|,\forall \ell_1\neq j_1\},\\
    \label{eq:Voronoi_cells_level_2}
    \mathcal{V}_{j_2|j_1}\equiv\mathcal{V}_{j_2|j_1}(G)&:=\{i_2\in[k_2]:\|\boldsymbol{\zeta}_{i_2|j_1}-\boldsymbol{\zeta}^*_{j_2|j_1}\|\leq\|\boldsymbol{\zeta}_{i_2|j_1}-\boldsymbol{\zeta}^*_{\ell_2|j_1}\|,\forall \ell_2\neq j_2\},
\end{align}
with $\boldsymbol{\zeta}_{i_2|j_1}:=(\boldsymbol{\omega}_{i_2|j_1},\boldsymbol{\eta}_{j_1i_2},\tau_{j_1i_2},\nu_{j_1i_2})$ and $\boldsymbol{\zeta}^*_{j_2|j_1}:=(\boldsymbol{\omega}^*_{j_2|j_1},\boldsymbol{\eta}^*_{j_2|j_1},\tau^*_{j_1j_2},\nu^*_{j_1j_2})$. Note that when the MLE $\widehat{G}_n$ is sufficiently close to its true counterpart $G_*$, since the value of $k^*_1$ is known, we have $|\mathcal{V}_{j_1}(\widehat{G}_n)|=1$ for any $j_1\in[k^*_1]$, meaning that each parameter $a^*_{j_1}$ is fitted by exactly one parameter. On the other hand, as $k^*_2$ is unknown and we over-specify it by a larger value $k_2$, a Voronoi cell $\mathcal{V}_{j_2|j_1}$ could have more than one element. Furthermore, the cardinality of $\mathcal{V}_{j_2|j_1}$ is exactly the number of fitted parameters converging to $\zeta^*_{j_2|j_1}$. For instance, $|\mathcal{V}_{j_2|j_1}|=2$ indicates that $\zeta^*_{j_2|j_1}$ is fitted by two parameters.
Now, we define a Voronoi loss function based on the Voronoi cells as follows:
\begin{align}
    &\mathcal{L}_{(r_1,r_2,r_3)}(G,G_*):=\sum_{j_1=1}^{k^*_1}\Big|\sum_{i_1\in\mathcal{V}_{j_1}}\exp(b_{i_1})-\exp(b^*_{j_1})\Big|+\sum_{j_1=1}^{k^*_1}\sum_{i_1\in\mathcal{V}_{j_1}}\exp(b_{i_1})\|\Delta\boldsymbol{a}_{i_1j_1}\|\nonumber\\
    +&\sum_{j_1=1}^{k^*_1}\sum_{i_1\in\mathcal{V}_{j_1}}\exp(b_{i_1})\Bigg[\sum_{j_2:|\mathcal{V}_{j_2|j_1}|=1}\sum_{i_2\in\mathcal{V}_{j_2|j_1}}\exp(\beta_{i_2|j_1})\Big(\|\Delta\boldsymbol{\omega}_{i_2j_2|j_1}\|+\|\Delta\boldsymbol{\eta}_{j_1i_2j_2}\|+|\Delta\tau_{j_1i_2j_2}|+|\Delta\nu_{j_1i_2j_2}|\Big)\nonumber\\
    +&\sum_{j_2:|\mathcal{V}_{j_2|j_1}|>1}\sum_{i_2\in\mathcal{V}_{j_2|j_1}}\exp(\beta_{i_2|j_1})\Big(\|\Delta\boldsymbol{\omega}_{i_2j_2|j_1}\|^{2}+\|\Delta\boldsymbol{\eta}_{j_1i_2j_2}\|^{r_1(|\mathcal{V}_{j_2|j_1}|)}+|\Delta\tau_{j_1i_2j_2}|^{r_2(|\mathcal{V}_{j_2|j_1}|)}\nonumber\\
     \label{eq:loss_l1}
    +&|\Delta\nu_{j_1i_2j_2}|^{r_3(|\mathcal{V}_{j_2|j_1}|)}\Big)\Bigg]+\sum_{j_1=1}^{k^*_1}\sum_{i_1\in\mathcal{V}_{j_1}}\exp(b_{i_1})\sum_{j_2=1}^{k^*_2}\Big|\sum_{i_2\in\mathcal{V}_{j_2|j_1}}\exp(\beta_{i_2|j_1})-\exp(\beta^*_{j_2|j_1})\Big|,
\end{align}
where $r_1,r_2,r_3:\mathbb{N}\to\mathbb{N}$ are some integer-valued functions and we denote $\Delta\boldsymbol{a}_{i_1j_1}:=\boldsymbol{a}_{i_1}-\boldsymbol{a}^*_{j_1}$, $\Delta\boldsymbol{\omega}_{i_2j_2|j_1}:=\boldsymbol{\omega}_{i_2|j_1}-\boldsymbol{\omega}_{j_2|j_1}$, $\Delta\boldsymbol{\eta}_{j_1i_2j_2}:=\boldsymbol{\eta}_{j_1i_2}-\boldsymbol{\eta}^*_{j_1j_2}$, $\Delta\tau_{j_1i_2j_2}:=\tau_{j_1i_2}-\tau^*_{j_1j_2}$ and $\Delta\nu_{j_1i_2j_2}:=\nu_{j_1i_2}-\nu^*_{j_1j_2}$. Given the above loss function, we are ready to characterize the convergence behavior of expert estimation in the following theorem.
\begin{theorem}
    \label{theorem:param_rates_SS}
    The following Hellinger lower bounds hold true for any $G\in\mathcal{G}_{k^*_1,k_2}(\Theta)$:
    \begin{align*}
        \bbE_{\bbX}[h(p^{SS}_{G}(\cdot|\bbX),p^{SS}_{G_*}(\cdot|\bbX))]&\gtrsim\mathcal{L}_{(\frac{1}{2}r^{SS},r^{SS},\frac{1}{2}r^{SS})}(G,G_*).
    \end{align*}
   As a result, we obtain that $\mathcal{L}_{(\frac{1}{2}r^{SS},r^{SS},\frac{1}{2}r^{SS})}(\widehat{G}^{SS}_n,G_*)=\widetilde{\mathcal{O}}_P(n^{-1/2})$.
\end{theorem}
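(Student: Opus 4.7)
The plan is to derive the Hellinger lower bound through the standard local--global decomposition used for convergence analysis in mixtures of experts. The global part, valid for mixing measures $G$ bounded away from $G_*$, will follow from the model identifiability guaranteed by assumption (A.2) together with the compactness of $\Theta$ and a weak-limit/Fatou argument: any violating sequence would, along a subsequence, produce a density limit equal to $p^{SS}_{G_*}$ and hence (by identifiability) a mixing-measure limit equal to $G_*$, contradicting the away-from-$G_*$ assumption. The bulk of the work therefore concentrates on the local inequality
\[
\lim_{\epsilon\to 0}\inf_{G:\,\mathcal{L}_{(\frac{1}{2}r^{SS},r^{SS},\frac{1}{2}r^{SS})}(G,G_*)\leq\epsilon}\frac{\bbE_{\bbX}[h(p^{SS}_G(\cdot|\bbX),p^{SS}_{G_*}(\cdot|\bbX))]}{\mathcal{L}_{(\frac{1}{2}r^{SS},r^{SS},\frac{1}{2}r^{SS})}(G,G_*)}>0,
\]
which I would prove by contradiction.

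\textbf{Taylor expansion step.} Suppose there is a sequence $G_n\to G_*$ along which the ratio above vanishes. I would first reindex the atoms of $G_n$ through the Voronoi cells $\mathcal{V}_{j_1}(G_n)$ and $\mathcal{V}_{j_2|j_1}(G_n)$; since $k^*_1$ is known, the outer cells are singletons for $n$ large, while the inner cells can have cardinality $m\geq 1$ determined by how many fitted components collapse onto $\boldsymbol{\zeta}^*_{j_2|j_1}$. Applying a Taylor expansion to
\[
u(\bx;\boldsymbol{a},\boldsymbol{\omega},\boldsymbol{\eta},\tau,\nu):=\exp(\boldsymbol{a}^{\top}\bx)\exp(\boldsymbol{\omega}^{\top}\bx)\pi(y|\boldsymbol{\eta}^{\top}\bx+\tau,\nu)
\]
around each true parameter, I would use a first-order expansion inside singleton cells (which supplies the linear exponents in the $|\mathcal{V}_{j_2|j_1}|=1$ block of $\mathcal{L}$) and an expansion up to Taylor degree $r^{SS}(m)$ in the $(\boldsymbol{\eta},\tau,\nu)$ directions inside cells of cardinality $m>1$. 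Matching these expansions against the multi-index constraints in the definition of $\mathcal{I}^{SS}_{\brho_1,\rho_2}$ produces precisely the exponent triple $(\tfrac{1}{2}r^{SS},r^{SS},\tfrac{1}{2}r^{SS})$ for $(\boldsymbol{\eta},\tau,\nu)$ and the quadratic exponent for $\Delta\boldsymbol{\omega}$.

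\textbf{Reduction to the polynomial system.} The decomposition of $p^{SS}_{G_n}(y|\bx)-p^{SS}_{G_*}(y|\bx)$ then becomes a linear combination of partial derivatives of $u$ whose coefficients are polynomials in the parameter differences $(\Delta \boldsymbol{a}_{i_1 j_1},\Delta \boldsymbol{\omega}_{i_2 j_2|j_1},\Delta \boldsymbol{\eta}_{j_1 i_2 j_2},\Delta\tau_{j_1 i_2 j_2},\Delta\nu_{j_1 i_2 j_2})$ together with the weight ratios $\exp(b_{i_1})/\exp(b^*_{j_1})$ and $\exp(\beta_{i_2|j_1})/\exp(\beta^*_{j_2|j_1})$. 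The cross-level PDE relations~\eqref{eq:PDE} together with the Gaussian identity $\partial u/\partial\nu=\tfrac{1}{2}\partial^2 u/\partial\tau^2$ collapse several of these derivatives onto the same atom, so their coefficients must be pooled before normalizing. Dividing the decomposition by $\mathcal{L}_{(\frac{1}{2}r^{SS},r^{SS},\frac{1}{2}r^{SS})}(G_n,G_*)$, invoking the vanishing of the ratio and passing from the Hellinger to an $L^1(\mu)$ statement via Fatou's lemma, I would conclude that the normalized limit coefficients satisfy exactly the polynomial system~\eqref{eq:system_SS}. A pigeonhole argument on the terms of $\mathcal{L}$ ensures the limit retains a non-trivial component, in particular some $q_{4i_2}\neq 0$, contradicting the definition of $r^{SS}(m)$ recorded in Lemma~\ref{lemma:rss_values}. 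This closes the local bound; composing the resulting Hellinger inequality with Theorem~\ref{theorem:regression_estimation} then yields $\mathcal{L}_{(\frac{1}{2}r^{SS},r^{SS},\frac{1}{2}r^{SS})}(\widehat{G}^{SS}_n,G_*)=\widetilde{\mathcal{O}}_P(n^{-1/2})$.

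\textbf{Main obstacle.} The hardest part will be the bookkeeping forced by the previously unobserved cross-level PDE coupling~\eqref{eq:PDE}: unlike in the single-level analyses of \cite{ho2022gaussian,nguyen2023demystifying,nguyen2024gaussian}, the first-level gating parameter $\boldsymbol{a}$ now interacts with the second-level parameters $\boldsymbol{\omega},\boldsymbol{\eta},\tau$, so Taylor coefficients must be grouped jointly across both gating levels before the polynomial system can be read off. Carefully identifying which derivatives coincide, enforcing the correct multi-index constraints that produce $\mathcal{I}^{SS}_{\brho_1,\rho_2}$, and verifying that the pooled coefficients align with the system~\eqref{eq:system_SS}, constitute the principal technical effort; once this alignment is established, the algebraic-geometric input of Lemma~\ref{lemma:rss_values} closes the contradiction immediately.
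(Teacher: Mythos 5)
Your proposal follows essentially the same route as the paper's proof: the global-to-local reduction via compactness, Fatou, and the identifiability proposition; the Taylor expansion of $Q_n$ organized by Voronoi cells with first-order expansion on singleton cells and order-$r^{SS}(|\mathcal{V}_{j_2|j_1}|)$ expansion on larger cells; and the normalization leading to the polynomial system~\eqref{eq:system_SS}, whose non-trivial solvability contradicts the definition of $r^{SS}$ via Lemma~\ref{lemma:rss_values}. The correctly identified difficulty — pooling Taylor coefficients across both gating levels because of the cross-level PDE coupling~\eqref{eq:PDE} — is exactly what the paper's Steps 1--3 carry out, so no substantive deviation or gap is present.
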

\noindent
Proof of Theorem~\ref{theorem:param_rates_SS} is in Section~\ref{appendix:softmax_softmax}. The above results together with the formulation of the Voronoi loss $\mathcal{L}_{(\frac{1}{2}r^{SS},r^{SS},\frac{1}{2}r^{SS})}$ in equation~\eqref{eq:loss_l1} implies that

\vspace{0.5 em}
\noindent
\textbf{(i) Exact-specified parameters:} The rates for estimating exact-specified parameters $\boldsymbol{a}^*_{j_1}$, $\boldsymbol{\omega}^*_{j_2|j_1}$, $\boldsymbol{\eta}^*_{j_1j_2}$, $\tau^*_{j_1j_2}$, $\nu^*_{j_1j_2}$ which are approached by exactly one fitted parameter, i.e. their Voronoi cells have only one element $|\mathcal{V}_{j_1}|=|\mathcal{V}_{j_2|j_1}|=1$, are parametric on the sample size $n$, standing at the order $\widetilde{\mathcal{O}}_P(n^{-1/2})$. Additionally, the gating bias parameters $\exp(b^*_{j_1})$ and $\exp(\beta^*_{j_2|j_1})$ also share the same parametric estimation rates.

\vspace{0.5 em}
\noindent
\textbf{(ii) Over-specified parameters:} For over-specified parameters $\boldsymbol{\omega}^*_{j_2|j_1},\boldsymbol{\eta}^*_{j_1j_2},\tau^*_{j_1j_2},\nu^*_{j_1j_2}$ which are fitted by more than one parameter, i.e. $|\mathcal{V}_{j_2|j_1}|>1$, their estimation rates are not homogeneous. In particular, the rates for estimating $\boldsymbol{\omega}^*_{j_2|j_1}$ are of order $\widetilde{\mathcal{O}}_P(n^{-1/4})$. At the same time, those for $\boldsymbol{\eta}^*_{j_1j_2},\tau^*_{j_1j_2},\nu^*_{j_1j_2}$ depend on their number of fitted parameters $|\mathcal{V}_{j_2|j_1}|$ and the solvability of the polynomial equation system in equation~\eqref{eq:system_SS}, standing at the orders of $\widetilde{\mathcal{O}}_P(n^{-1/r^{SS}(|\mathcal{V}_{j_2|j_1}|)})$, $\widetilde{\mathcal{O}}_P(n^{-1/2r^{SS}(|\mathcal{V}_{j_2|j_1}|)})$, $\widetilde{\mathcal{O}}_P(n^{-1/r^{SS}(|\mathcal{V}_{j_2|j_1}|)})$, respectively. For instance, when $|\mathcal{V}_{j_2|j_1}|=3$, these rates become $\widetilde{\mathcal{O}}_P(n^{-1/6}), \widetilde{\mathcal{O}}_P(n^{-1/12}),\widetilde{\mathcal{O}}_P(n^{-1/6})$, which are significantly slower than those for exact-specified parameters. These slow rates occur due to the interactions mentioned in the ``Challenges'' paragraph.

\vspace{0.5 em}
\noindent
\textbf{(iii) Expert estimation:} Recall that expert specialization is an essential problem where we learn how fast an expert specializes in some tasks or some aspects of the data. Therefore, it is important to understand the convergence behavior of the expert estimation, particularly its data-dependent term $(\boldsymbol{\eta}^*_{j_1j_2})^{\top}\bx$. According to the Cauchy-Schwarz inequality, we have
\begin{align}
    \label{eq:expert_bound}
    \Big|(\hat{\boldsymbol{\eta}}^{SS,n}_{i_1i_2})^{\top}\bx-(\boldsymbol{\eta}^*_{j_1j_2})^{\top}\bx\Big|\leq\|\hat{\boldsymbol{\eta}}^{SS,n}_{i_1i_2}-\boldsymbol{\eta}^*_{j_1j_2}\|\cdot\|\bx\|,
\end{align}
where $\hat{\boldsymbol{\eta}}^{SS,n}_{i_1i_2}$ is an MLE of $\boldsymbol{\eta}^*_{j_1j_2}$.
Since the input space is bounded and
from the estimation rate of $\boldsymbol{\eta}^*_{j_1j_2}$ in the above two remarks, we deduce that $(\boldsymbol{\eta}^*_{j_1j_2})^{\top}\bx$ admits an estimation rate of order $\widetilde{\mathcal{O}}_P(n^{-1/2})$ when $|\mathcal{V}_{j_2|j_1}|=1$ or $\widetilde{\mathcal{O}}_P(n^{-1/r^{SS}(|\mathcal{V}_{j_2|j_1}|)})$ when $|\mathcal{V}_{j_2|j_1}|>1$. Note that the latter rate is significantly slow since the term $r^{SS}(|\mathcal{V}_{j_2|j_1}|)$ grows as the number of fitted experts $|\mathcal{V}_{j_2|j_1}|$ increases.

\subsection{Softmax-Laplace Gating Gaussian HMoE} 
\label{sec:softmax_laplace}
Moving to this section, we study the convergence behavior of parameter and expert estimation under the Softmax-Laplace gating Gaussian HMoE model where we replace the Softmax gating in the second level with the Laplace gating. In particular, the conditional density function in equation~\eqref{eq:softmax_softmax} becomes
\begin{align}
    \label{eq:softmax_laplace}
    p^{SL}_{G_{*}}(y|\bx) &:= \sum_{i_1=1}^{k^*_1} \softmax((\ai)^{\top}\bx+\bi)\sum_{i_2=1}^{k^*_2}\softmax(-\|\oi-\bx\|+\bei) \pi(y|(\ei)^{\top}\bx+\ti,\vi),
\end{align}
where the abbreviation $SL$ stands for ``Softmax-Laplace''. 
The main difference between the density $p^{SL}_{G_{*}}(y|\bx)$ and its counterpart $p^{SS}_{G_{*}}(y|\bx)$ is the Laplace gating function $\softmax(-\|\oi-\bx\|+\bei)$ in the second level. 

\vspace{0.5 em}
\noindent
\textbf{Disappearance of the gating parameter interaction.} Due to the gating change in the second level, the interaction between parameters $\boldsymbol{a}$ and $\boldsymbol{\omega}$ via the PDE $\frac{\partial u}{\partial\boldsymbol{a}}=\frac{\partial u}{\partial\boldsymbol{\omega}}$ in equation~\eqref{eq:PDE} no longer holds true, while others still exist. As a consequence, we only need to consider a simpler (fewer variables) system of polynomial equations than that in equation~\eqref{eq:system_SS}. More specifically, for each $m\geq 2$, we define $r^{SL}(m)$ as the smallest natural number $r$ such that the following system does not have any non-trivial solutions for the unknown variables $(p_{i_2},\boldsymbol{q}_{2},\boldsymbol{q}_{3i_2},q_{4i_2},q_{5i_2})_{i_2=1}^{m}$:
\begin{align}
    \label{eq:system_SL}
    \sum_{i_2=1}^{m}\sum_{(\balpha_2,\balpha_3,\alpha_4,\alpha_5)\in\mathcal{I}^{SL}_{\brho_1,\rho_2}}\frac{1}{\balpha!}\cdot p^2_{i_2}\boldsymbol{q}_{2}^{\balpha_2}\boldsymbol{q}_{3i_2}^{\balpha_3}q_{4i_2}^{\alpha_4}q_{5i_2}^{\alpha_5}=0, \quad 1\leq|\brho_1|+\rho_2\leq r, 
\end{align}
where $\mathcal{I}^{SL}_{\brho_1,\rho_2}:=\{(\balpha_2,\balpha_3,\alpha_4,\alpha_5)\in\mathbb{R}^d\times\mathbb{R}^d\times\mathbb{R}\times\mathbb{R}_+:\balpha_2+\balpha_3=\brho_1, |\balpha_3|+\alpha_4+2\alpha_5=\rho_2\}$. Here, a solution is called non-trivial if all the values of $p_{i_2}$ are different from zero and at least one among $q_{4i_2}$ is non-zero. This system has been considered in \cite{nguyen2023demystifying} where they show that $r^{SL}(2)=4$ and $r^{SL}(3)=6$.
We observe that the function $r^{SL}$ shares the same values with $r^{SS}$ in Lemma~\ref{lemma:rss_values} at some particular points. Nevertheless, it is challenging to make an explicit comparison between these two functions, which requires further technical tools in algebraic geometry \cite{Sturmfels_System} to be developed. 

\vspace{0.5 em}
\noindent
Next, given the density estimation rate $\bbE_{\bbX}[h(p^{SL}_{\widehat{G}^{SL}_{n}}(\cdot|\bbX), p^{SL}_{G_{*}}(\cdot|\bbX))]=\widetilde{\mathcal{O}}_P(n^{-1/2})$ in Proposition~\ref{prop:density_estimation} and the Voronoi loss function $\mathcal{L}_{(\frac{1}{2}r^{SL},r^{SL},\frac{1}{2}r^{SL})}(G,G_*)$ defined in equation~\eqref{eq:loss_l1}, we will establish the convergence of parameter and expert estimation under the Softmax-Laplace gating Gaussian HMoE in Theorem~\ref{theorem:param_rates_SL}.
\begin{theorem}
    \label{theorem:param_rates_SL}
    The following Hellinger lower bounds hold true for any $G\in\mathcal{G}_{k^*_1,k_2}(\Theta)$:
    \begin{align*}
        \bbE_{\bbX}[h(p^{SL}_{G}(\cdot|\bbX),p^{SL}_{G_*}(\cdot|\bbX))]&\gtrsim\mathcal{L}_{(\frac{1}{2}r^{SL},r^{SL},\frac{1}{2}r^{SL})}(G,G_*).
    \end{align*}
   As a result, we obtain that $\mathcal{L}_{(\frac{1}{2}r^{SL},r^{SL},\frac{1}{2}r^{SL})}(\widehat{G}^{SL}_n,G_*)=\widetilde{\mathcal{O}}_P(n^{-1/2})$.
\end{theorem}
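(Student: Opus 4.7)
The plan is to establish the Hellinger lower bound via a local-to-global argument: it suffices to prove that for any sequence $(G_n)\subseteq\mathcal{G}_{k^*_1,k_2}(\Theta)$ with $\mathcal{L}_{(\frac{1}{2}r^{SL},r^{SL},\frac{1}{2}r^{SL})}(G_n,G_*)\to 0$, the ratio $V(p^{SL}_{G_n},p^{SL}_{G_*})/\mathcal{L}_{(\frac{1}{2}r^{SL},r^{SL},\frac{1}{2}r^{SL})}(G_n,G_*)$ stays bounded away from zero (so that $h$, which dominates $V/\sqrt{2}$, yields the stated lower bound). Standard compactness/continuity on the complement of shrinking neighborhoods of $G_*$ then extends this to the full parameter space, and combining with the density rate $\bbE_{\bbX}[h(p^{SL}_{\widehat{G}^{SL}_n}(\cdot|\bbX),p^{SL}_{G_*}(\cdot|\bbX))]=\widetilde{\mathcal{O}}_P(n^{-1/2})$ (obtained by the same bracketing-entropy argument used for Theorem~\ref{theorem:regression_estimation}) will yield the rate $\mathcal{L}_{(\frac{1}{2}r^{SL},r^{SL},\frac{1}{2}r^{SL})}(\widehat{G}^{SL}_n,G_*)=\widetilde{\mathcal{O}}_P(n^{-1/2})$ on the Voronoi loss.

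The local bound proceeds by contradiction. Assuming the ratio tends to zero along some $(G_n)$, I decompose $p^{SL}_{G_n}(y|\bx)-p^{SL}_{G_*}(y|\bx)$ by grouping over the two-level Voronoi partition $\{\mathcal{V}_{j_1},\mathcal{V}_{j_2|j_1}\}$ induced by $G_*$, and apply a Taylor expansion to
\begin{equation*}
u^{SL}(\bx;\boldsymbol{a},\boldsymbol{\omega},\boldsymbol{\eta},\tau,\nu):=\exp(\boldsymbol{a}^{\top}\bx)\exp(-\|\boldsymbol{\omega}-\bx\|)\pi(y|\boldsymbol{\eta}^{\top}\bx+\tau,\nu)
\end{equation*}
around each true atom $(\boldsymbol{a}^*_{j_1},\boldsymbol{\omega}^*_{j_2|j_1},\boldsymbol{\eta}^*_{j_1j_2},\tau^*_{j_1j_2},\nu^*_{j_1j_2})$. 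For exact-specified cells ($|\mathcal{V}_{j_1}|=|\mathcal{V}_{j_2|j_1}|=1$) a first-order expansion is enough, while for over-specified cells ($|\mathcal{V}_{j_2|j_1}|>1$) I expand up to order $r^{SL}(|\mathcal{V}_{j_2|j_1}|)$ in the variance direction (and $\tfrac{1}{2}r^{SL}$ in the mean and slope directions), matching the exponents in $\mathcal{L}_{(\frac{1}{2}r^{SL},r^{SL},\frac{1}{2}r^{SL})}$.

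After the expansion I use the two surviving PDE identities, $\tfrac{\partial u^{SL}}{\partial\nu}=\tfrac{1}{2}\tfrac{\partial^{2}u^{SL}}{\partial\tau^{2}}$ and $\tfrac{\partial u^{SL}}{\partial\boldsymbol{\eta}}=\tfrac{\partial^{2}u^{SL}}{\partial\boldsymbol{a}\partial\tau}$, to collapse dependent terms; crucially, the identity $\tfrac{\partial u}{\partial\boldsymbol{a}}=\tfrac{\partial u}{\partial\boldsymbol{\omega}}$ from the SS analysis no longer holds once Laplace gating replaces softmax at the second level, so the expansion in the $\boldsymbol{\omega}$ direction decouples cleanly and contributes only a second-order term (explaining the square in the $\|\Delta\boldsymbol{\omega}_{i_2j_2|j_1}\|^{2}$ entry of the loss). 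The remaining derivatives can be regrouped into a linearly independent family of functions of $(\bx,y)$ whose coefficients are polynomials in $p_{i_2}:=\exp(\beta_{i_2|j_1})$, $\boldsymbol{q}_{2}:=\Delta\boldsymbol{a}_{i_1j_1}$, $\boldsymbol{q}_{3i_2}:=\Delta\boldsymbol{\eta}_{j_1i_2j_2}$, $q_{4i_2}:=\Delta\tau_{j_1i_2j_2}$, $q_{5i_2}:=\Delta\nu_{j_1i_2j_2}$. After dividing by $\mathcal{L}_{(\frac{1}{2}r^{SL},r^{SL},\frac{1}{2}r^{SL})}(G_n,G_*)$ and passing to a subsequence via compactness of $\Theta$, the assumed vanishing of the total variation forces each limiting coefficient to be zero, and the resulting equations are exactly the polynomial system~\eqref{eq:system_SL} indexed by $\mathcal{I}^{SL}_{\brho_1,\rho_2}$. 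By definition of $r^{SL}(|\mathcal{V}_{j_2|j_1}|)$ as the smallest order admitting only trivial solutions, one concludes that either all $p_{i_2}=0$ or all $q_{4i_2}=0$, contradicting the normalization built into the rescaled Voronoi loss.

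The principal obstacle is the bookkeeping across the two hierarchical levels: the first-level softmax still ties parameters across second-level Voronoi cells through the shared $\boldsymbol{q}_{2}=\Delta\boldsymbol{a}$ (whence $\boldsymbol{q}_{2}$ carries no index $i_2$ in~\eqref{eq:system_SL}), and one must verify that the constraints $\balpha_2+\balpha_3=\brho_1$ and $|\balpha_3|+\alpha_4+2\alpha_5=\rho_2$ reproduce exactly the terms that survive after applying the two PDE identities. Once this is handled, the parametric rate $\widetilde{\mathcal{O}}_P(n^{-1/2})$ on the Voronoi loss follows immediately by pairing the proven lower bound with the density estimation rate.
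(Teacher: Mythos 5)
Your proposal follows essentially the same route as the paper's proof: reduce the Hellinger bound to a Total Variation bound, establish the local version by contradiction along a sequence $G_n$ with vanishing Voronoi loss, Taylor-expand the density difference around the true atoms over the two-level Voronoi cells, and show that vanishing of all normalized coefficients would force a non-trivial solution of the system~\eqref{eq:system_SL} (with the shared $\boldsymbol{q}_2=\Delta\boldsymbol{a}$ carrying no index $i_2$), contradicting the definition of $r^{SL}$; Fatou's lemma and linear independence close the argument, and the parametric rate follows by pairing with the density estimation rate. The only cosmetic difference is that the paper applies a single Taylor expansion of order $r^{SL}(|\mathcal{V}_{j_2|j_1}|)$ jointly in all parameters and lets the exponents $(\tfrac{1}{2}r^{SL},r^{SL},\tfrac{1}{2}r^{SL})$ emerge from the rescaling by $\overline{M}_n$, rather than expanding to different orders per direction as you phrase it.
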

\noindent
Proof of Theorem~\ref{theorem:param_rates_SL} is in Section~\ref{appendix:softmax_laplace}. From the above results, it can be observed that the parameter and expert estimation when using the Softmax gating and Laplace gating in the first and second levels of the Gaussian HMoE admit similar convergence behavior as when using the Softmax gating in both levels in Theorem~\ref{theorem:param_rates_SS}. 

\vspace{0.5em}
\noindent
\textbf{(i) Parameter estimation rates:} Exact-specified parameters $\boldsymbol{a}^*_{j_1},\boldsymbol{\omega}^*_{j_2|j_1},\boldsymbol{\eta}^*_{j_1j_2},\tau^*_{j_1j_2},\nu^*_{j_1j_2}$ share the same estimation rate of order $\widetilde{\mathcal{O}}_P(n^{-1/2})$. On the other hand, the convergence rates of estimating over-specified parameters are diverse. More concretely, parameters $\boldsymbol{\omega}^*_{j_2|j_1}$ admit the estimation rate of the order $\widetilde{\mathcal{O}}_P(n^{-1/4})$, while those for  $\boldsymbol{\eta}^*_{j_1j_2},\tau^*_{j_1j_2},\nu^*_{j_1j_2}$ are of the orders $\widetilde{\mathcal{O}}_P(n^{-1/r^{SL}(|\mathcal{V}_{j_2|j_1}|)})$, $\widetilde{\mathcal{O}}_P(n^{-1/2r^{SL}(|\mathcal{V}_{j_2|j_1}|)})$, $\widetilde{\mathcal{O}}_P(n^{-1/r^{SL}(|\mathcal{V}_{j_2|j_1}|)})$, respectively. Note that since the last three rates hinge upon the solvability of the system~\eqref{eq:system_SL} and the cardinalities of Voronoi cells $\mathcal{V}_{j_2|j_1}$, they will become increasingly slow when the value of $|\mathcal{V}_{j_2|j_1}|$ increases, e.g., $\widetilde{\mathcal{O}}_P(n^{-1/6})$, $\widetilde{\mathcal{O}}_P(n^{-1/12})$, $\widetilde{\mathcal{O}}_P(n^{-1/6})$ when $|\mathcal{V}_{j_2|j_1}|=3$.

\vspace{0.5em}
\noindent
\textbf{(ii) Expert estimation rates:} By arguing analogously to equation~\eqref{eq:expert_bound}, it follows that the data-dependent term of expert $(\boldsymbol{\eta}^*_{j_1j_2})^{\top}\bx$ has an estimation rate of order $\widetilde{\mathcal{O}}_P(n^{-1/2})$ when $|\mathcal{V}_{j_2|j_1}|=1$ or $\widetilde{\mathcal{O}}_P(n^{-1/r^{SL}(|\mathcal{V}_{j_2|j_1}|)})$ when $|\mathcal{V}_{j_2|j_1}|>1$. Thus, we can see that substituting the Softmax gating with the Laplace gating in the second level is insufficient to accelerate the expert estimation rate (see Table~\ref{table:expert_rates}). This is because the interaction $\frac{\partial u}{\partial\boldsymbol{\eta}}=\frac{\partial^2u}{\partial\boldsymbol{a}\partial\tau}$ between $\boldsymbol{\eta}$ and other parameters mentioned in equation~\eqref{eq:PDE} still holds under the setting of Softmax-Laplace gating Gaussian HMoE.

\subsection{Laplace-Laplace Gating Gaussian HMoE}
\label{sec:laplace_laplace}
In this section, we consider the Laplace-Laplace gating Gaussian HMoE where we employ the Laplace gating in both levels of the model. More specifically, the conditional density function in equation~\eqref{eq:softmax_laplace} turns into
\begin{align}
    \label{eq:laplace_laplace}
    p^{LL}_{G_{*}}(y|\bx) &:= \sum_{i_1=1}^{k^*_1} \softmax(-\|\ai-\bx\|+\bi)\sum_{i_2=1}^{k^*_2}\softmax(-\|\oi-\bx\|+\bei)\pi(y|(\ei)^{\top}\bx+\ti,\vi),
\end{align}
where the abbreviation $LL$ stands for ``Laplace-Laplace''. 

\vspace{0.5 em}
\noindent
\textbf{Benefits of the Laplace gating over the Softmax gating.} Under this setting, the first-level Softmax gating $\softmax((\ai)^{\top}\bx+\bi)$ used in previous sections is replaced with the Laplace gating $\softmax(-\|\ai-\bx\|+\bi)$, leading to the disappearance of the interaction $\frac{\partial u}{\partial\boldsymbol{\eta}}=\frac{\partial^2u}{\partial\boldsymbol{a}\partial\tau}$ between $\boldsymbol{\eta}$ and other parameters mentioned in equation~\eqref{eq:PDE}. Therefore, we only need to cope with the parameter interaction $\frac{\partial u}{\partial\nu}=\frac{1}{2}\cdot\frac{\partial^2u}{\partial\tau^2}$ as in \cite{ho2022gaussian}. Consequently, it is sufficient to take account of the following system of polynomial equations with substantially fewer variables than those in equations~\eqref{eq:system_SS} and \eqref{eq:system_SL}. In particular, for each $m\geq 2$, we define $r^{LL}(m)$ as the smallest natural number $r$ such that the following system does not have any non-trivial solutions for the unknown variables $(p_{i_2},q_{4i_2},q_{5i_2})_{i_2=1}^{m}$:
\begin{align}
    \label{eq:system_LL}
    \sum_{i_2=1}^{m}\sum_{(\alpha_4,\alpha_5)\in\mathcal{I}^{LL}_{\rho}}\frac{1}{\balpha!}\cdot p^2_{i_2}q_{4i_2}^{\alpha_4}q_{5i_2}^{\alpha_5}=0, \quad 1\leq\rho\leq r, 
\end{align}
where $\mathcal{I}^{LL}_{\rho}:=\{(\alpha_4,\alpha_5)\in\mathbb{R}\times\mathbb{R}_+:\alpha_4+2\alpha_5=\rho\}$. Here, a solution is called non-trivial if all the values of $p_{i_2}$ are different from zero and at least one among $q_{4i_2}$ is non-zero. The above system has been studied in \cite{Ho-Nguyen-Ann-16} which show that $r^{LL}(2)=4$ and $r^{LL}(3)=6$. These values are similar to those of the aforementioned functions $r^{SS}$ and $r^{SL}$. 

\vspace{0.5 em}
\noindent
As demonstrated in Appendix~\ref{appendix:density_rate}, we also obtain the convergence rate of density estimation $\bbE_{\bbX}[h(p^{LL}_{\widehat{G}^{LL}_{n}}(\cdot|\bbX), p^{LL}_{G_{*}}(\cdot|\bbX))]=\widetilde{\mathcal{O}}_P(n^{-1/2})$ under this setting. Given that result and the Voronoi loss function $\mathcal{L}_{(2,r^{LL},\frac{1}{2}r^{LL})}(G,G_*)$ defined in equation~\eqref{eq:loss_l1}, we are ready to investigate the impacts of using the Laplace gating in both levels on the convergence behavior of parameter and expert estimation in the below theorem. 
\begin{theorem}
    \label{theorem:param_rates_LL}
    The following Hellinger lower bounds hold true for any $G\in\mathcal{G}_{k^*_1,k_2}(\Theta)$:
    \begin{align*}
        \bbE_{\bbX}[h(p^{LL}_{G}(\cdot|\bbX),p^{LL}_{G_*}(\cdot|\bbX))]&\gtrsim\mathcal{L}_{(2,r^{LL},\frac{1}{2}r^{LL})}(G,G_*).
    \end{align*}
   As a result, we obtain that $\mathcal{L}_{(2,r^{LL},\frac{1}{2}r^{LL})}(\widehat{G}^{LL}_n,G_*)=\widetilde{\mathcal{O}}_P(n^{-1/2})$.
\end{theorem}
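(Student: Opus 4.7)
The statement splits into a Hellinger lower bound by the Voronoi loss $\mathcal{L}_{(2,r^{LL},\frac{1}{2}r^{LL})}$ and a parametric rate conclusion. Once the lower bound is in hand, the rate $\mathcal{L}_{(2,r^{LL},\frac{1}{2}r^{LL})}(\widehat{G}^{LL}_n,G_*)=\widetilde{\mathcal{O}}_P(n^{-1/2})$ follows by composing with the Laplace--Laplace density estimation rate (the obvious analogue of Theorem~\ref{theorem:regression_estimation}, established in Appendix~\ref{appendix:density_rate} by the same bracketing-entropy argument). So the real task is to prove the inequality, for which I would follow the same two-regime template used in Theorems~\ref{theorem:param_rates_SS} and \ref{theorem:param_rates_SL}.

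\textbf{Local and global regimes.} The global part asserts that outside any Voronoi-type neighborhood of $G_*$ the ratio $\bbE_{\bbX}[h(p^{LL}_G(\cdot|\bbX),p^{LL}_{G_*}(\cdot|\bbX))]/\mathcal{L}_{(2,r^{LL},\frac{1}{2}r^{LL})}(G,G_*)$ is bounded below; I would prove it by compactness of $\mathcal{G}_{k^*_1,k_2}(\Theta)$ together with identifiability of the Laplace--Laplace HMoE, which follows from Assumptions~(A.1)--(A.4). The local part, valid for $G$ in a small neighborhood of $G_*$, is the delicate step and I would tackle it by contradiction: suppose a sequence $G_n\in\mathcal{G}_{k^*_1,k_2}(\Theta)$ satisfies $\mathcal{L}_n:=\mathcal{L}_{(2,r^{LL},\frac{1}{2}r^{LL})}(G_n,G_*)\to 0$ while $\bbE_{\bbX}[h(p^{LL}_{G_n}(\cdot|\bbX),p^{LL}_{G_*}(\cdot|\bbX))]/\mathcal{L}_n\to 0$. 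After extracting subsequences, each atom of $G_n$ lies in a unique Voronoi cell; $|\mathcal{V}_{j_1}(G_n)|=1$ because $k^*_1$ is known, so first-level atoms are exact-specified, while second-level cells $\mathcal{V}_{j_2|j_1}$ may be over-specified with cardinality $m\geq 2$.

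\textbf{Taylor expansion and the key simplification.} I would then decompose $p^{LL}_{G_n}(y|\bx)-p^{LL}_{G_*}(y|\bx)$ by Taylor-expanding, around each true atom, the composite function $u(\bx,y;\boldsymbol{\omega},\boldsymbol{\eta},\tau,\nu):=\sigma(-\|\boldsymbol{\omega}-\bx\|)\,\pi(y|\boldsymbol{\eta}^{\top}\bx+\tau,\nu)$, with the first-level Laplace kernel $\sigma(-\|\boldsymbol{a}^*_{j_1}-\bx\|+b^*_{j_1})$ treated as an exact-specified factor outside the expansion. The critical simplification in the Laplace--Laplace regime is that $\sigma(-\|\boldsymbol{a}-\bx\|+b)$ depends on $\boldsymbol{a}$ only through the radial quantity $\|\boldsymbol{a}-\bx\|$ and not linearly through $\boldsymbol{a}^\top\bx$, so neither of the hidden interactions $\frac{\partial u}{\partial\boldsymbol{\eta}}=\frac{\partial^2 u}{\partial\boldsymbol{a}\partial\tau}$ nor $\frac{\partial u}{\partial\boldsymbol{a}}=\frac{\partial u}{\partial\boldsymbol{\omega}}$ in equation~\eqref{eq:PDE} is active. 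The only surviving interaction is the intrinsic Gaussian identity $\frac{\partial u}{\partial\nu}=\frac{1}{2}\cdot\frac{\partial^2 u}{\partial\tau^2}$, which couples only $\tau$ and $\nu$, exactly as in the one-level case of~\cite{Ho-Nguyen-Ann-16}.

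\textbf{Polynomial system and the main obstacle.} Equating the coefficients of the resulting linearly independent basis $\{\sigma(-\|\boldsymbol{a}^*_{j_1}-\bx\|+b^*_{j_1})\cdot \bx^\alpha\,\partial^\rho_{\tau,\nu}\pi(y|\cdot)\}$ to zero inside an over-specified cell with $|\mathcal{V}_{j_2|j_1}|=m\geq 2$ produces exactly the system~\eqref{eq:system_LL} in the rescaled variables $(p_{i_2},q_{4i_2},q_{5i_2})_{i_2=1}^m$ obtained from $\exp(\beta_{i_2|j_1})$, $\Delta\tau_{j_1i_2j_2}$, $\Delta\nu_{j_1i_2j_2}$; by definition of $r^{LL}(m)$ the only solution up to order $r^{LL}(m)$ is the trivial one, contradicting the normalization $\mathcal{L}_n\equiv 1$. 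The choice $r_1=2$ for $\|\Delta\boldsymbol{\eta}\|^{r_1}$ in the loss is justified because $\boldsymbol{\eta}$ now decouples from every gating parameter and behaves as in an ordinary over-specified regression, while $r_2=r^{LL}$ and $r_3=\frac{1}{2}r^{LL}$ are dictated by the Gaussian PDE. The main obstacle I anticipate is the two-level bookkeeping: for each pair $(j_1,j_2)$ one must cleanly separate the local contribution from atoms in $\mathcal{V}_{j_2|j_1}$ from cross-terms produced by atoms attached to a different first-level cell, and verify that the hierarchical basis above is genuinely linearly independent. This verification reduces, via Assumption~(A.4) and the fact that $\boldsymbol{a}^*_{j_1}$ are distinct, to linear independence of finitely many Laplace kernels tensored with polynomial-Gaussian derivatives, a by now standard fact but one which in the hierarchical setting must be checked uniformly across both gating levels.
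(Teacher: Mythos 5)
Your proposal follows essentially the same route as the paper's proof: a global/local split settled by compactness, identifiability and Fatou's lemma, a contradiction sequence $G_n$ in the local regime, a Taylor decomposition whose coefficients are matched against a linearly independent basis of Laplace-kernel derivatives tensored with polynomial--Gaussian derivatives, and a reduction of the over-specified cells to the system~\eqref{eq:system_LL} in $(p_{i_2},q_{4i_2},q_{5i_2})$, with the key observation --- identical to the paper's --- that replacing the first-level softmax by a Laplace kernel deactivates both interactions in~\eqref{eq:PDE} and leaves only the Gaussian identity $\frac{\partial u}{\partial\nu}=\frac{1}{2}\frac{\partial^2u}{\partial\tau^2}$, which is what permits $r_1=2$ for $\boldsymbol{\eta}$. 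The only loose point is your claim that the first-level kernel can be ``treated as an exact-specified factor outside the expansion'' at the true parameter $\boldsymbol{a}^*_{j_1}$: since $\boldsymbol{a}^n_{j_1}\neq\boldsymbol{a}^*_{j_1}$ you still need a first-order expansion in $\Delta\boldsymbol{a}^n_{j_1}$ (the paper's $B_n$ term and the $\balpha_2$-indices in $A_{n,j_1,1}$) to recover the $\|\Delta\boldsymbol{a}_{i_1j_1}\|$ contribution to the Voronoi loss, though this is routine and does not affect the rest of your argument.
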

\noindent
The proof of Theorem~\ref{theorem:param_rates_LL} can be found in Section~\ref{appendix:laplace_laplace}. From the formulation of the loss function $\mathcal{L}_{(2,r^{LL},\frac{1}{2}r^{LL})}$ in equation~\eqref{eq:loss_l1}, we have two following critical observations:

\vspace{0.5em}
\noindent
\textbf{(i) Parameter estimation rates:} All parameter estimations share the same convergence behavior as those under the previous two settings, except for the estimations of parameters $\boldsymbol{\eta}^*_{j_1j_2}$ which enjoy a convergence rate of order $\widetilde{\mathcal{O}}_P(n^{-1/2})$ when $|\mathcal{V}_{j_2|j_1}|=1$ and $\widetilde{\mathcal{O}}_P(n^{-1/4})$ when $|\mathcal{V}_{j_2|j_1}|>1$. It is worth noting that these rates are faster than their counterparts in Sections~\ref{sec:softmax_softmax} and \ref{sec:softmax_laplace} as they no longer depend on the solvability of any equation system. 

\begin{table*}[t!]
\caption{Summary of estimation rates for the data-dependent term $(\boldsymbol{\eta}^*_{j_1j_2})^{\top}\bx$ in experts. Experts are called exact-specified when $|\mathcal{V}_{j_2|j_1}|=1$ and over-specified when $|\mathcal{V}_{j_2|j_1}|>1$.}
\renewcommand\arraystretch{1.5}
\centering
\scalebox{1}{
\begin{tabular}{ | c | c |c|c|c|c|} 
\hline
{\textbf{}} & {\bf Softmax-Softmax} & {\bf Softmax-Laplace}& {\bf Laplace-Laplace}  \\
\hline 
Exact-specified experts &$\widetilde{\mathcal{O}}_P(n^{-1/2})$ &$\widetilde{\mathcal{O}}_P(n^{-1/2})$  & $\widetilde{\mathcal{O}}_P(n^{-1/2})$ \\
\hline
Over-specified experts&$\widetilde{\mathcal{O}}_P(n^{-1/r^{SS}(|\mathcal{V}_{j_2|j_1}|)})$ &$\widetilde{\mathcal{O}}_P(n^{-1/r^{SL}(|\mathcal{V}_{j_2|j_1}|)})$  & $\boldsymbol{\widetilde{\mathcal{O}}_P(n^{-1/4})}$ \\
\hline
\end{tabular}}
\label{table:expert_rates}
\end{table*}

\begin{table*}[t!]
\caption{Summary of estimation rates for over-specified parameters $\boldsymbol{\omega}^*_{j_2|j_1}$, $\boldsymbol{\eta}^*_{j_1j_2}$, $\tau^*_{j_1j_2}$, and $\nu^*_{j_1j_2}$. Meanwhile, exact-specified parameters $\boldsymbol{a}^*_{j_1}$, $\boldsymbol{\omega}^*_{j_2|j_1}$, $\boldsymbol{\eta}^*_{j_1j_2}$, $\tau^*_{j_1j_2}$, and $\nu^*_{j_1j_2}$ share the same estimation rate of order $\widetilde{\mathcal{O}}_P(n^{-1/2})$.}
\centering
\scalebox{0.95}{
\begin{tabular}{ | c | c |c|c|} 
\hline
{\textbf{}} & {\bf Softmax-Softmax} & {\bf Softmax-Laplace}& {\bf Laplace-Laplace}  \\
\hline
$\boldsymbol{\omega}^*_{j_2|j_1}$& $\widetilde{\mathcal{O}}_P(n^{-1/4})$ & $\widetilde{\mathcal{O}}_P(n^{-1/4})$  & $\widetilde{\mathcal{O}}_P(n^{-1/4})$ \\
\hline
$\boldsymbol{\eta}^*_{j_1j_2}$&$\widetilde{\mathcal{O}}_P(n^{-1/r^{SS}(|\mathcal{V}_{j_2|j_1}|)})$ &$\widetilde{\mathcal{O}}_P(n^{-1/r^{SL}(|\mathcal{V}_{j_2|j_1}|)})$  & $\widetilde{\mathcal{O}}_P(n^{-1/4})$ \\
\hline
$\tau^*_{j_1j_2}$&$\widetilde{\mathcal{O}}_P(n^{-1/2r^{SS}(|\mathcal{V}_{j_2|j_1}|)})$ &$\widetilde{\mathcal{O}}_P(n^{-1/2r^{SL}(|\mathcal{V}_{j_2|j_1}|)})$  & $\widetilde{\mathcal{O}}_P(n^{-1/2r^{LL}(|\mathcal{V}_{j_2|j_1}|)})$ \\
\hline
$\nu^*_{j_1j_2}$&$\widetilde{\mathcal{O}}_P(n^{-1/r^{SS}(|\mathcal{V}_{j_2|j_1}|)})$ &$\widetilde{\mathcal{O}}_P(n^{-1/r^{SL}(|\mathcal{V}_{j_2|j_1}|)})$  & $\widetilde{\mathcal{O}}_P(n^{-1/r^{LL}(|\mathcal{V}_{j_2|j_1}|)})$ \\
\hline
\end{tabular}}
\label{table:parameter_rates}
\end{table*}

\vspace{0.5em}
\noindent
\textbf{(ii) Expert estimation rates:} By employing the same arguments as in equation~\eqref{eq:expert_bound}, we deduce that the data-dependent terms of experts $(\boldsymbol{\eta}^*_{j_1j_2})^{\top}\bx$ also admit the same estimation rates as $\boldsymbol{\eta}^*_{j_1j_2}$, that is, $\widetilde{\mathcal{O}}_P(n^{-1/2})$ when $|\mathcal{V}_{j_2|j_1}|=1$ and $\widetilde{\mathcal{O}}_P(n^{-1/4})$ when $|\mathcal{V}_{j_2|j_1}|>1$. Compared to those when using the Softmax gating in either level or both levels of the Gaussian HMoE, the expert estimation rates when using the Laplace gating in both levels are improved significantly, as they no longer depend on the term $r^{LL}(|\mathcal{V}_{j_2|j_1}|)$ (see Table~\ref{table:expert_rates}). This acceleration occurs since the interaction $\frac{\partial u}{\partial\boldsymbol{\eta}}=\frac{\partial^2u}{\partial\boldsymbol{a}\partial\tau}$ between $\boldsymbol{\eta}$ and other parameters mentioned in equation~\eqref{eq:PDE} disappear under this setting. As a result, we claim that the convergence of expert estimation under the two-level Gaussian HMoE is benefited the most when equipped with the Laplace gating in both levels.


\subsection{Summary of Main Theoretical Findings}
\label{sec:practical_implications}
In this section, we summarize the key findings from our convergence analysis of parameter estimation and expert estimation under three types of the Gaussian HMoE model in Sections~\ref{sec:softmax_softmax}, \ref{sec:softmax_laplace} and \ref{sec:laplace_laplace}:

\vspace{0.5 em}
\noindent
\textbf{1. Softmax-Softmax Gating Gaussian HMoE:} Using the Softmax gating in both levels of the Gaussian HMoE model induces  parameter interactions between the first-level gating parameter $\boldsymbol{a}$ with not only the second-level expert parameters $\boldsymbol{\eta},\tau$ but also the second-level gating parameters $\boldsymbol{\omega}$ through the PDEs in equation~\eqref{eq:PDE}. As a result, the convergence rates of estimating the over-specified parameters and experts hinge upon the solvability of a complex system of polynomial equations, which are significantly slow.

\vspace{0.5 em}
\noindent
\textbf{2. Softmax-Laplace Gating Gaussian HMoE:} When replacing the Softmax gating with the Laplace gating in the second level of the Gaussian HMoE model, the gating parameter in the first level $\boldsymbol{a}$ does not interact with the second-level gating parameter $\boldsymbol{\omega}$. However, since the interaction between $\boldsymbol{a}$ and the second-level expert parameters $\boldsymbol{\eta},\tau$ still holds true, our theory indicates that the disappearance of the gating parameter interaction only helps slightly reduce the complexity of the polynomial equation system but not improve the convergence rates of parameter estimation and expert estimation substantially. 

\vspace{0.5 em}
\noindent
\textbf{3. Laplace-Laplace Gating Gaussian HMoE:} By employing the Laplace gating in both levels of the Gaussian HMoE model, we observe that the interactions of the first-level gating parameter $\boldsymbol{a}$ with both the second-level gating parameters $\boldsymbol{\omega}$ and expert parameters $\boldsymbol{\eta},\tau$ no longer exist. Consequently, the convergence rate of expert estimation is considerably accelerated and becomes independent of the previous systems of polynomial equations. Hence, our theory suggests that the combination of Laplace gating in both levels of the Gaussian HMoE model is optimal for the expert convergence.

\section{Experiments}\label{sec:exp}

\definecolor{grn}{RGB}{27, 129, 62}
\begin{algorithm}[t]
\small
\caption{Computation Procedure for the 2-Level Hierarchical MoE Module}
\label{alg:hmoe} 
\begin{algorithmic}[1]
\STATE \textbf{Input}: $\mathbf{x} \in \mathbb{R}^{B\times N \times D}$; batch size $B$, sequence length $N$, embedding dimension $D$, number of outer/inner experts $E_o/E_i$, capacity per outer/inner expert $\mathcal{C}_o, \mathcal{C}_i$, dispatch tensor $\mathbf{D}$, combine tensor $\mathbf{C}$ 
\STATE $\mathbf{D}_{o}, \mathbf{C}_{o}, \mathbf{L}_{o} = \mathsf{Gate_{outer}}(\mathbf{x})$ ~\textcolor{grn}{$\rhd$ compute outer dispatch, outer combine tensors, and outer gating loss}
\STATE $\mathbf{x}_{\mathsf{outer}}^{(e,b,c,d)} = \sum_{n} \mathbf{D}_o^{(b,n,e,c)} \cdot \mathbf{x}^{(b,n,d)}$ ~\textcolor{grn}{$\rhd$ dispatch inputs to outer experts using dispatch tensor}
\STATE $\mathbf{D}_{i}, \mathbf{C}_{i}, \mathbf{L}_{i} = \mathsf{Gate_{inner}}(\mathbf{x}_{\mathsf{outer}})$ ~\textcolor{grn}{$\rhd$ compute inner dispatch, inner combine tensors, and inner gating loss}
\STATE $\mathbf{x}_{\mathsf{experts}}^{(e_o, e_i, b, c_i, d)} = \sum_{c_o} \mathbf{D}_i^{(e_o, b, c_o, e_i, c_i)} \cdot \mathbf{x}_{\mathsf{outer}}^{(e_o,b,c_o,d)}$ ~\textcolor{grn}{$\rhd$ dispatch inputs to the inner experts}
\STATE $\mathbf{y}_{\mathsf{experts}} = \mathsf{Experts}(\mathbf{x}_{\mathsf{experts}})$ ~\textcolor{grn}{$\rhd$ expert processing}
\STATE $\mathbf{y}_{\mathsf{outer}}^{(e_o,b,n,d)} = \sum_{e_i, c_i} \mathbf{C}_i^{(e_o, b, c_o, e_i, c_i)} \cdot \mathbf{y}_{\mathsf{experts}}^{(e_o, e_i, b, c_i, d)}$~\textcolor{grn}{$\rhd$ combine inner expert outputs}
\STATE $\mathbf{y}^{(b,n,d)} = \sum_{e,c}\mathbf{C}_o^{(b,n,e,c)} \cdot \mathbf{y}_{\mathsf{outer}}^{(e,b,c,d)}$ ~\textcolor{grn}{$\rhd$ combine outer expert outputs}
\STATE $\mathcal{L} = \lambda (\mathcal{L}_o + \mathcal{L}_i)$ ~\textcolor{grn}{$\rhd$ compute total loss}
\STATE \textbf{Return}: $\mathbf{y}, \mathcal{L}$
\end{algorithmic}
\end{algorithm}
\noindent
In this section, we empirically demonstrate the effects of employing various combinations of gating functions in HMoE to validate our theoretical findings and discuss empirical insights. 
We conduct a comprehensive empirical analysis of hierarchical gating mechanisms and perform case studies across various applications. Besides, we show that HMoE outperforms standard MoE and other alternatives, particularly in cases with inherent subgroups or multilevel structures, where HMoE excels.
Beyond performance improvements, these experiments provide valuable insights into how different gating function combinations influence the distribution of input modules, offering explanations for the performance variations observed with different gating configurations.
\begin{figure*}[t!]
    \begin{minipage}{\textwidth}
    \centering
    \begin{tabular}{@{\hspace{-2.8ex}} c @{\hspace{-1.4ex}} c @{\hspace{-1.5ex}} c @{\hspace{-1.5ex}}}
        \begin{tabular}{c}
        \includegraphics[width=.7\textwidth]{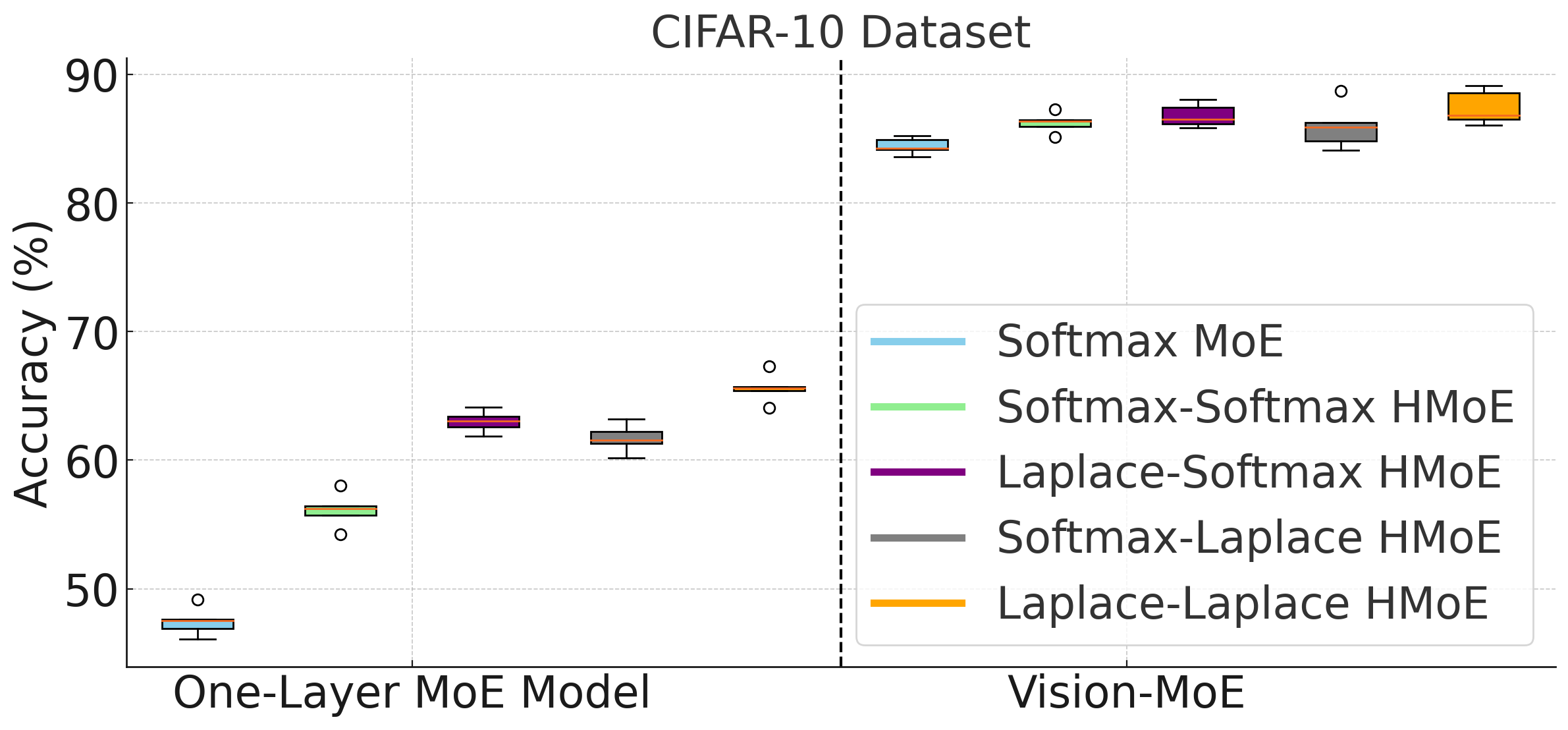}
        \\
        {\small{(a)}}
        \end{tabular} & \\
        \begin{tabular}{c}
        \includegraphics[width=.7\textwidth]{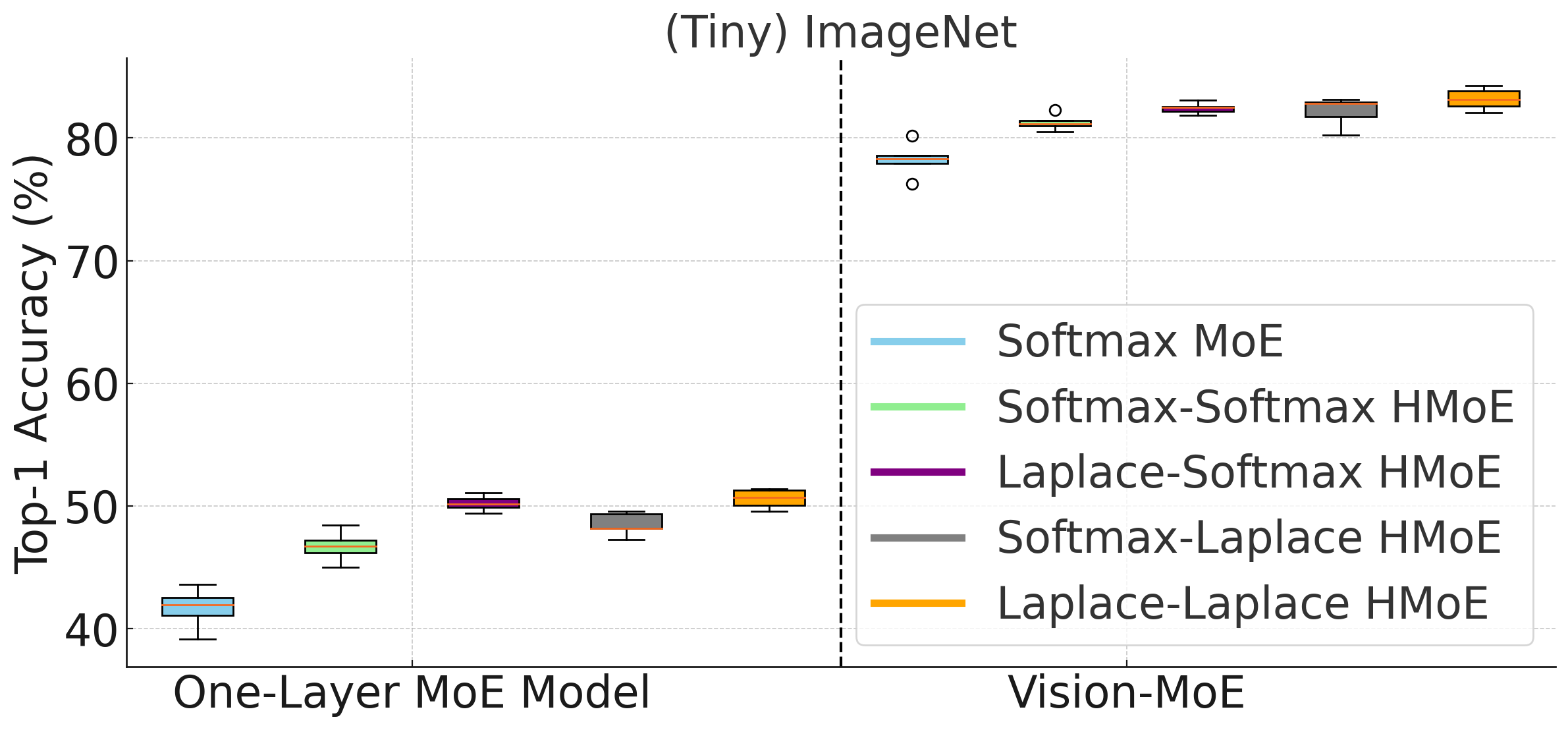} 
        \\
        {\small{(b)}}
        \end{tabular} \\
        \begin{tabular}{c}
        \includegraphics[width=.7\textwidth]{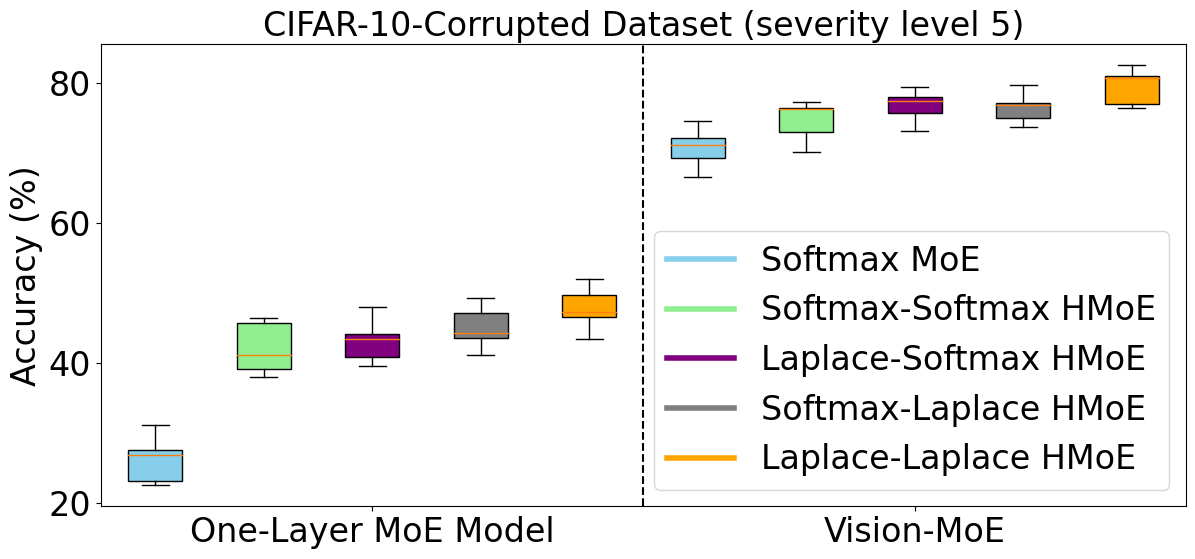} 
        \\
        {\small{(c)}}
        \end{tabular} \\
        \end{tabular}
    \end{minipage}
    \caption{\small We evaluate the impact of using different gating function combinations in HMoE and compare it with standard MoE on (a) CIFAR-10, (b) ImageNet, and (c) CIFAR-10-Corrupted. First, we present the results of one-layer MoE models (left side of each figure), where the model contains only the module of that specific setting. For the one-layer results, we use Tiny-ImageNet as a substitute for the full ImageNet. Next, we integrate these MoE modules into the state-of-the-art Vision MoE model (right) \cite{riquelme2021scaling} and compare the performance on the full datasets.}
    \label{fig:image_results}
\end{figure*}

\vspace{0.5 em}
\noindent
\textbf{HMoE Implementation.}
We implement the two-level HMoE module, drawing on the work of \cite{lepikhin2020gshard}. Algorithm \ref{alg:hmoe} outlines the procedure, which uses a recursive computation strategy to process inputs from coarse to fine. First, the inputs are partitioned by the outer dispatcher (Step 2), and then further subdivided by the inner dispatcher (Step 4). These subgroups are directed to specialized groups and experts for independent processing, based on the Top-$k$ routing mechanism with a specified gating function. In particular, each level's choice of gating functions can strongly influence how the inputs are partitioned. The outputs from the experts are then recursively combined using inner and outer combination tensors to form the final output. Gating losses from both levels are integrated and scaled to regularize training, ensuring balanced expert utilization.

\subsection{Comparison of Different Hierarchical Gating Mechanisms}
Figure \ref{fig:image_results} compares the performance of different gating function combinations on the CIFAR-10 \cite{krizhevsky2009learning} and ImageNet \cite{deng2009imagenet} datasets. We first evaluate a single module (i.e., a one-layer MoE model) on CIFAR-10 and Tiny-ImageNet, followed by integrating these modules into the Vision-MoE framework \cite{riquelme2021scaling}: in the Vision Transformer (ViT) models, we selectively replace an even number of FFN layers with targeted MoE layers and test the models on the full datasets. The performance gap between different gating functions is more pronounced in the one-layer MoE models due to the amplified effect of the module differences, while the difference becomes smaller after incorporating them into Vision MoE. The results show that (1) HMoE can noticeably improve the performance of standard MoE; (2) the Laplace-Laplace gating combination achieves the best performance, while the combination of Laplace and Softmax gating also improves the results over pure Softmax-gating HMoE. 

\vspace{0.5 em}
\noindent
\textbf{Generalization to Out-of-Distribution Data.}
We further evaluate HMoE’s robustness to out-of-distribution (OOD) data by applying the same pipeline on the CIFAR-10-corrupted dataset \cite{hendrycks2019benchmarking}. The models are trained on the original clean data and then tested on corrupted variants. To better control the level of distribution shift, we combine clean and corrupted samples in the test set using self-defined mixture ratios. Figure \ref{fig:image_results} (c) presents the results, averaged over five random seeds and 20 corruption types. Specifically, we mix $50\%$ of brightness-type corruptions at severity level 5 with clean samples in the test set. Under this setting, HMoE shows a greater performance advantage over standard MoE. We also observe a trend consistent with our clean-data experiments regarding the impact of different gating-function combinations. This advantage stems from HMoE’s hierarchical structure, which partitions the input space more finely, promoting better expert specialization and thus improved OOD robustness.
For both experiments, the standard Softmax MoE uses 8 experts, while HMoE employs 2 groups with 4 experts each, ensuring both methods have the same overall capacity.

\begin{table}[ht]
\centering
\caption{\small Comparison of HMoE-based fusion methods (gray) and baselines, utilizing vital signs, clinical notes, and CXR from the MIMIC ecosystem. The best results are highlighted in \textbf{bold font}, and the second-best results are \underline{underlined}. All results are averaged across 5 random experiments.}
\label{tab:mimic_4_mods}
\renewcommand\arraystretch{1.8}
\scalebox{0.72}{
\begin{tabular}{c|c|ccccccc}
\toprule
\textbf{Task} & \textbf{Metric} & \textbf{HAIM} & \textbf{MISTS} & \textbf{MoE} & \cellcolor{ashgrey}\textbf{HMoE-SS} & \cellcolor{ashgrey}\textbf{HMoE-SL} & \cellcolor{ashgrey}\textbf{HMoE-LS} & \cellcolor{ashgrey}\textbf{HMoE-LL} \\
\midrule
\multirow{2}{*}{48-IHM} & AUROC 
& $78.87 \pm 0.00$ & $77.23 \pm 0.82$ & $83.13 \pm 0.36$ & $85.59 \pm 0.44$ & $86.41 \pm 0.38$ & \underline{$86.52 \pm 0.42$} & $\mathbf{87.49 \pm 0.27}$ \\
& F1 
& $39.78 \pm 0.00$ & $45.98 \pm 0.49$ & $46.82 \pm 0.28$ & $47.57 \pm 0.32$ & $47.65 \pm 0.23$ & \underline{$47.73 \pm 0.28$} & $\mathbf{47.91 \pm 0.34}$ \\
\midrule
\multirow{2}{*}{LOS} & AUROC 
& $82.46 \pm 0.00$ & $80.34 \pm 0.61$ & $83.76 \pm 0.59$ & $86.26 \pm 0.61$ & \underline{$86.37 \pm 0.55$} & $86.22 \pm 0.74$ & $\mathbf{86.45 \pm 0.48}$ \\
& F1 
& $72.75 \pm 0.00$ & $73.22 \pm 0.43$ & $74.32 \pm 0.44$ & $76.07 \pm 0.29$ & \underline{$76.23 \pm 0.32$} & $75.79 \pm 0.28$ & $\mathbf{77.31 \pm 0.37}$ \\
\midrule
\multirow{2}{*}{25-PHE} & AUROC 
& $63.57 \pm 0.00$ & $71.49 \pm 0.59$ & $73.87 \pm 0.71$ & $73.81 \pm 0.51$ & $\mathbf{74.59 \pm 0.47}$ & $74.31 \pm 0.62$ & \underline{$74.54 \pm 0.53$} \\
& F1 
& $\mathbf{42.80 \pm 0.00}$ & $33.29 \pm 0.23$ & \underline{$35.96 \pm 0.23$} & $35.64 \pm 0.18$ & $35.88 \pm 0.31$ & $35.72 \pm 0.24$ & $35.92 \pm 0.19$ \\
\bottomrule
\end{tabular}}
\end{table}

\subsection{Laplace Gating Mechanism Improves Multimodal Fusion}
\paragraph{The MIMIC Ecosystem} We evaluate the combination of Laplace gating and HMoE using the MIMIC ecosystem—a comprehensive database that includes records from nearly $300$k patients admitted to a medical center between 2008 and 2019—focusing on a subset of 73,181 ICU stays. We integrated multiple patient modalities, including vital signs (time series) and clinical notes from MIMIC-IV \cite{johnson2020mimic}, and chest X-ray images from MIMIC-CXR \cite{johnson2019mimicjpg}. These modalities are linked via corresponding patient IDs, creating a multimodal input for each patient sample. Our tasks of interest include 48-hour in-hospital mortality prediction (48-IHM), 25-type phenotype classification (25-PHE), and length-of-stay (LOS) prediction. The baselines include: (1) the HAIM data pipeline \cite{soenksen2022integrated}, specifically designed for integrating multimodal data from MIMIC-IV; (2) MISTS, a cross-attention fusion approach combined with irregular sequence modeling for multimodal EHR \cite{zhang2023improving}; and (3) multimodal fusion using MoE \cite{han2024fusemoe}. We implement the HMoE-based fusion approach following \cite{han2024fusemoe}. First, the data is processed by modality-specific encoders. The resulting modality embeddings are then fed into 12 stacked HMoE modules with residual connections to generate the final outcome. Detailed descriptions of these building blocks are provided in the appendix.
Table \ref{tab:mimic_4_mods} summarizes the performance of integrating time series, clinical notes, and CXR data across multiple prediction tasks. HMoE-LL (Laplace-Laplace) outperforms most baselines by a substantial margin. Note that the HAIM approach \cite{soenksen2022integrated} uses simple feature extractors as modality encoders and straightforwardly concatenates modality embeddings for prediction, resulting in no randomness. While the MoE-based fusion method \cite{han2024fusemoe} has demonstrated effectiveness for multimodal fusion, the hierarchical nature of the HMoE module further enhances its ability to handle multimodal inputs, enabling more specialized expert assignments and improved performance.

\begin{table}[ht]
\centering
\caption{Comparison of HMoE-based fusion methods (shown in gray) and baselines on the CMU-MOSI dataset, a multimodal sentiment analysis task leveraging text, video, and audio. Results are averaged across 5 random experiments.}
\label{tab:cmu_mosi}
\begin{tabular}{c|c|c|c|c}
\toprule
\textbf{Method / Metric} & \textbf{MAE}$\downarrow$ & \textbf{Acc-2}$\uparrow$ & \textbf{Corr}$\uparrow$ & \textbf{F1}$\uparrow$ \\
\midrule
TFN & $0.90 \pm 0.02$ & $80.81 \pm 0.34$ & $0.70 \pm 0.04$ & $80.70 \pm 0.18$ \\
MulT & $0.86 \pm 0.01$ & $84.10 \pm 0.21$ & $0.71 \pm 0.02$ & $83.90 \pm 0.27$ \\
MAG & $0.71 \pm 0.04$ & $86.10 \pm 0.44$ & $0.80 \pm 0.03$ & $86.00 \pm 0.09$ \\
Softmax-MoE & $0.67 \pm 0.01$ & $87.28 \pm 0.18$ & $0.82 \pm 0.02$ & $87.29 \pm 0.22$ \\
\cellcolor{ashgrey}Softmax-Softmax HMoE & $0.61 \pm 0.02$ & $89.31 \pm 0.13$ & $0.82 \pm 0.03$ & $87.83 \pm 0.14$ \\
\cellcolor{ashgrey}Softmax-Laplace HMoE & \underline{$0.58 \pm 0.01$} & \underline{$89.75 \pm 0.22$} & \underline{$0.83 \pm 0.05$} & \underline{$88.02 \pm 0.10$} \\
\cellcolor{ashgrey}Laplace-Softmax HMoE & $0.61 \pm 0.01$ & $89.34 \pm 0.24$ & $0.82 \pm 0.02$ & $87.74 \pm 0.07$ \\
\cellcolor{ashgrey}Laplace-Laplace HMoE & $\mathbf{0.56 \pm 0.01}$ & $\mathbf{90.27 \pm 0.17}$ & $\mathbf{0.84 \pm 0.03}$ & $\mathbf{88.36 \pm 0.15}$ \\
\bottomrule
\end{tabular}
\end{table}

\begin{figure*}[t!]
    \begin{minipage}{\textwidth}
    \centering
    \begin{tabular}{@{\hspace{-3.8ex}} c @{\hspace{-4.9ex}} c @{\hspace{-4.9ex}} c @{\hspace{-1.5ex}}}
        \begin{tabular}{c}
        \includegraphics[width=.37\textwidth]{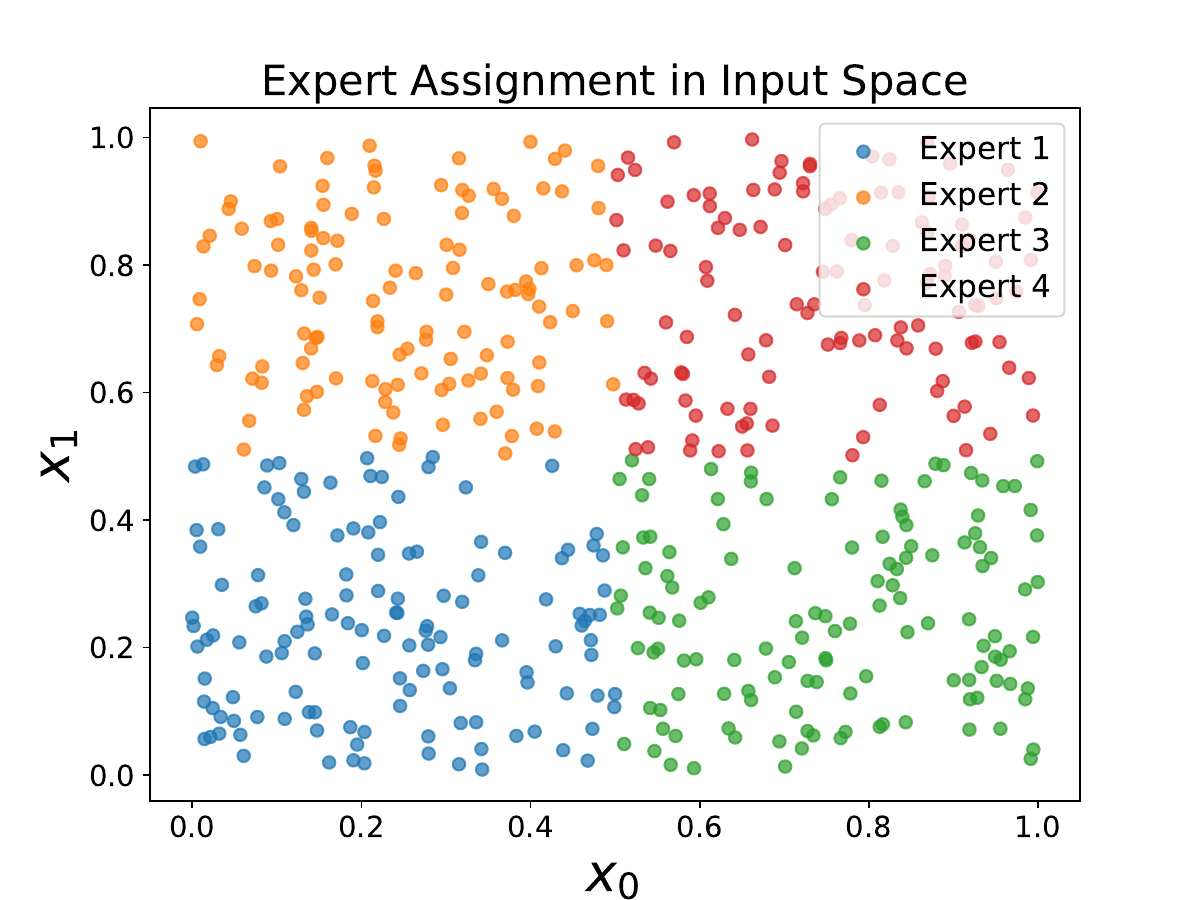}
        \\
        {\small{(a)}}
        \end{tabular} & 
        \begin{tabular}{c}
        \includegraphics[width=.37\textwidth]{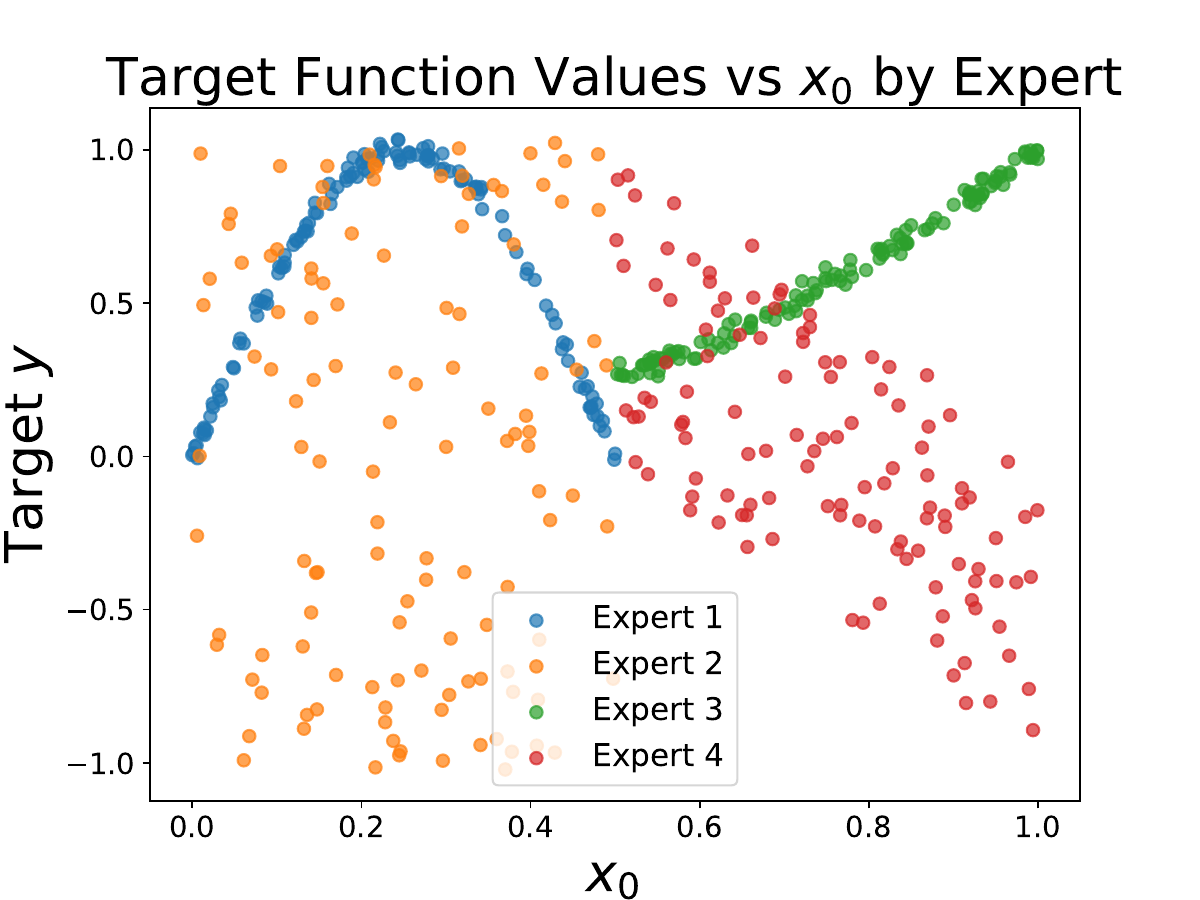} 
        \\
        {\small{(b)}}
        \end{tabular} &
        \begin{tabular}{c}
        \includegraphics[width=.37\textwidth]{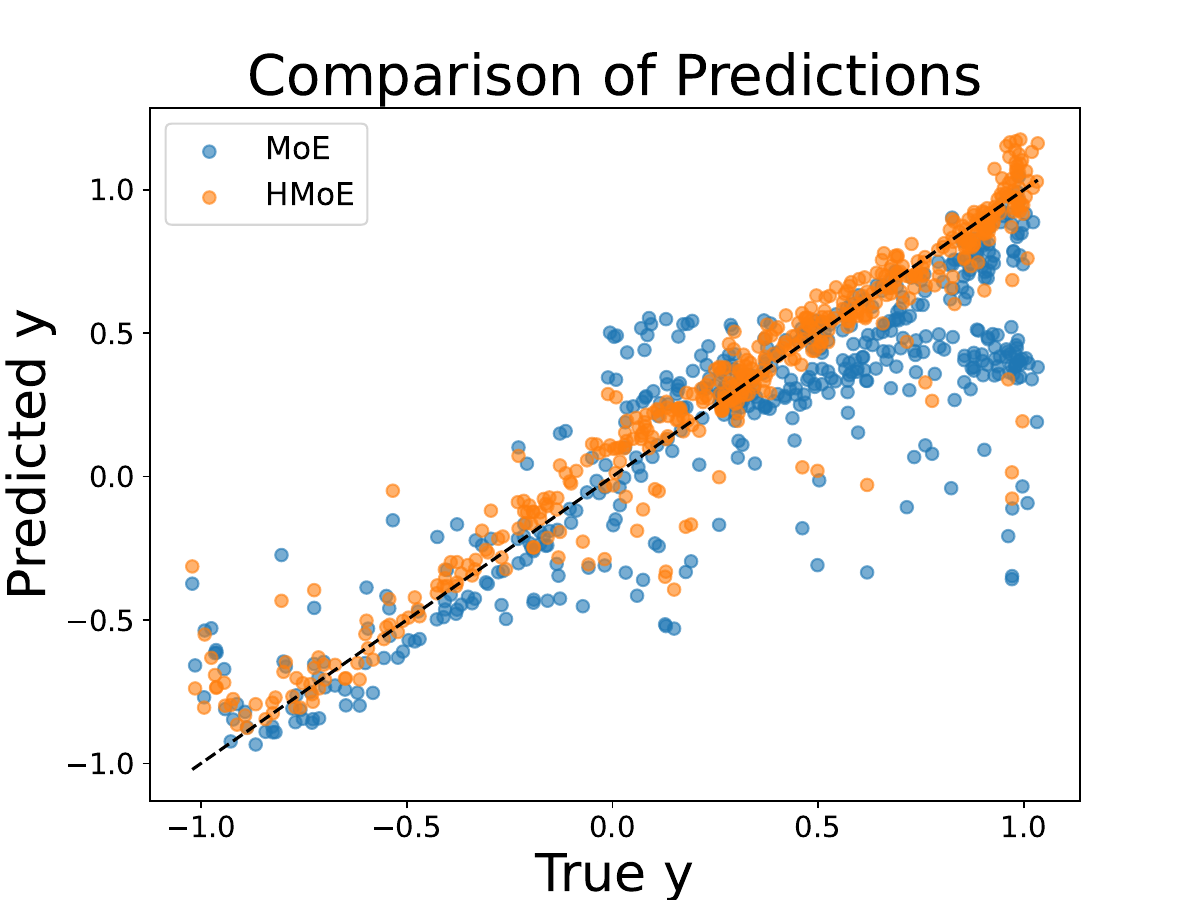} 
        \\
        {\small{(c)}}
        \end{tabular} \\
        \end{tabular}
    \end{minipage}
    \caption{Synthetic experiment illustrating how HMoE more effectively handles data with multi-level structures. Figures (a) and (b) depict the hierarchical target generation process, and (c) shows HMoE’s predictive advantage over MoE.}
    \label{fig:synthetic_result_1}
\end{figure*}
\begin{figure*}[t!]
    \begin{minipage}{\textwidth}
    \centering
    \vspace{-3ex}
    \begin{tabular}{@{\hspace{-3.8ex}} c @{\hspace{-4.9ex}} c @{\hspace{-4.9ex}} c @{\hspace{-1.5ex}}}
        \begin{tabular}{c}
        \includegraphics[width=.37\textwidth]{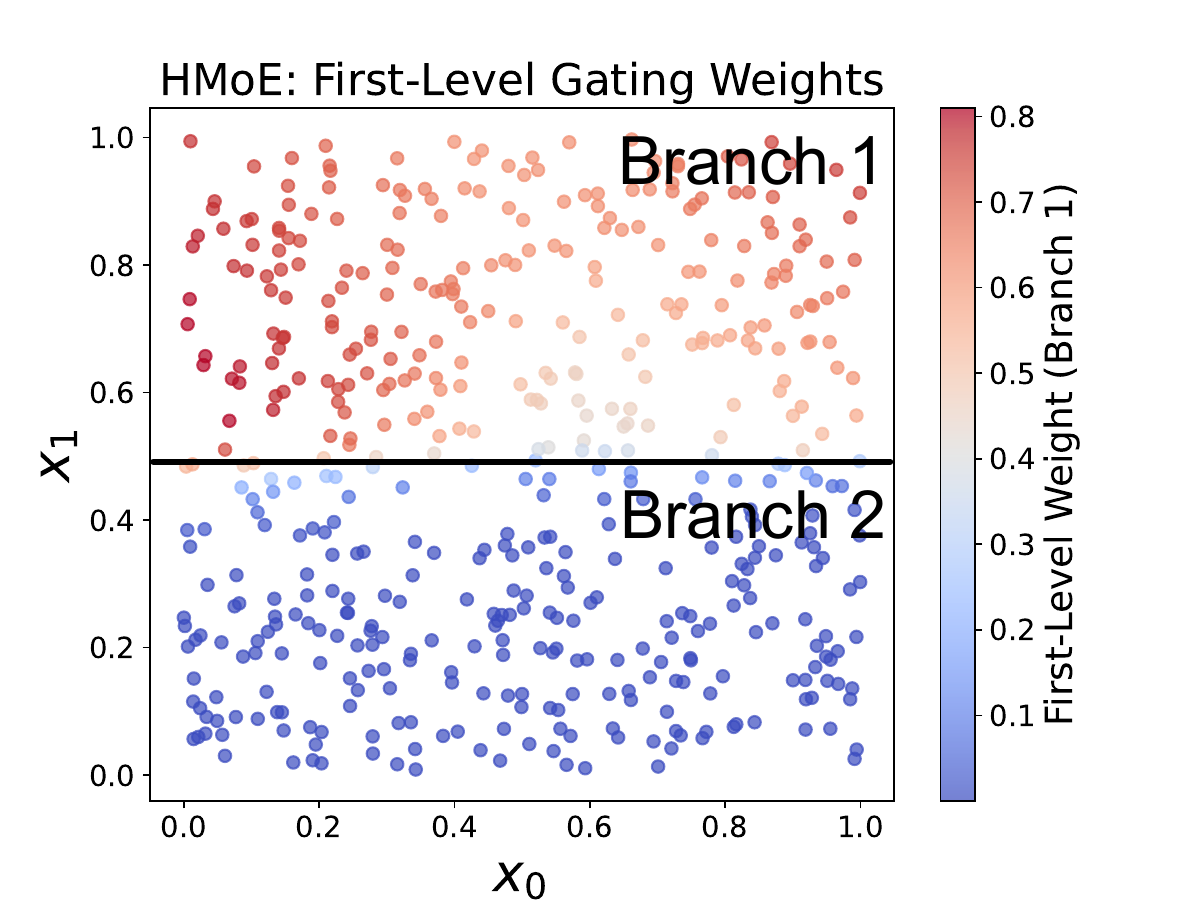}
        \\
        {\small{(d)}}
        \end{tabular} & 
        \begin{tabular}{c}
        \includegraphics[width=.37\textwidth]{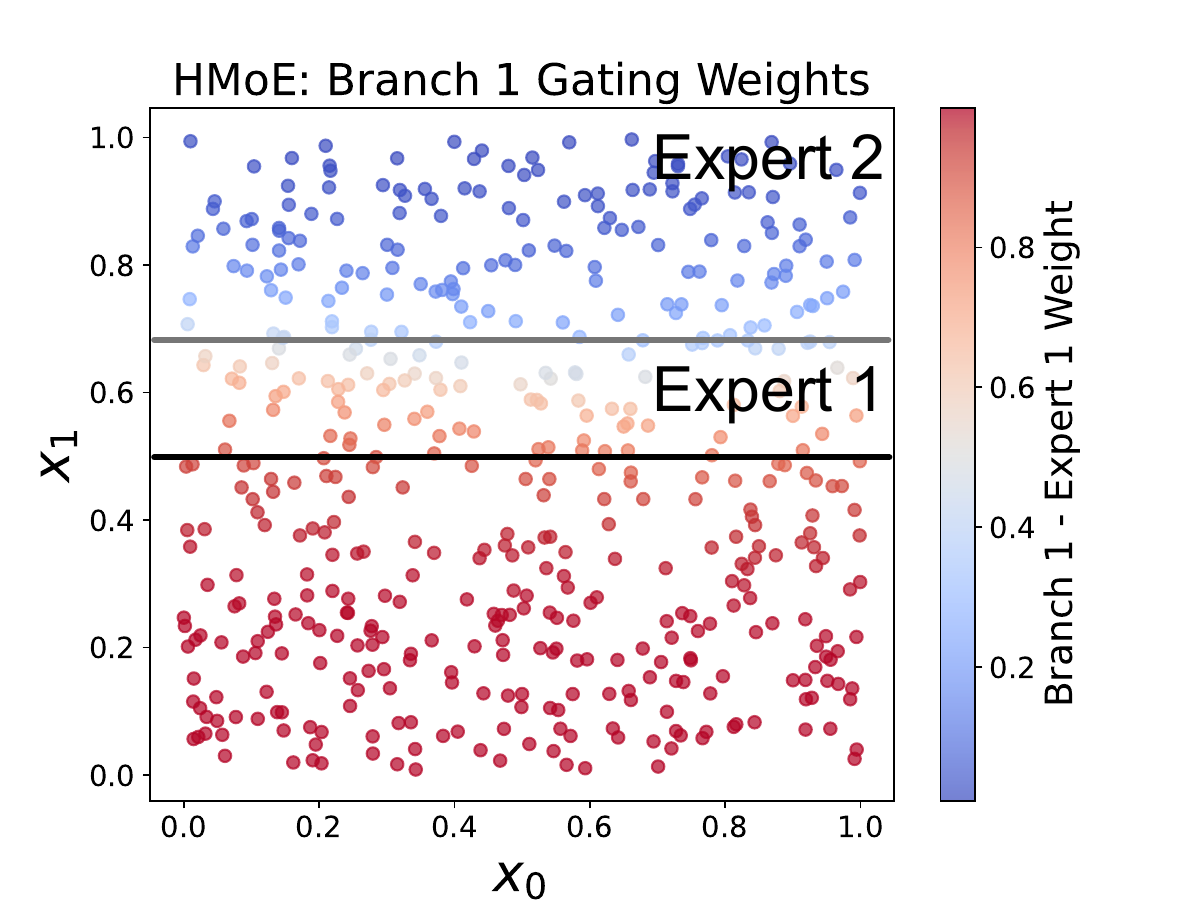} 
        \\
        {\small{(e)}}
        \end{tabular} &
        \begin{tabular}{c}
        \includegraphics[width=.37\textwidth]{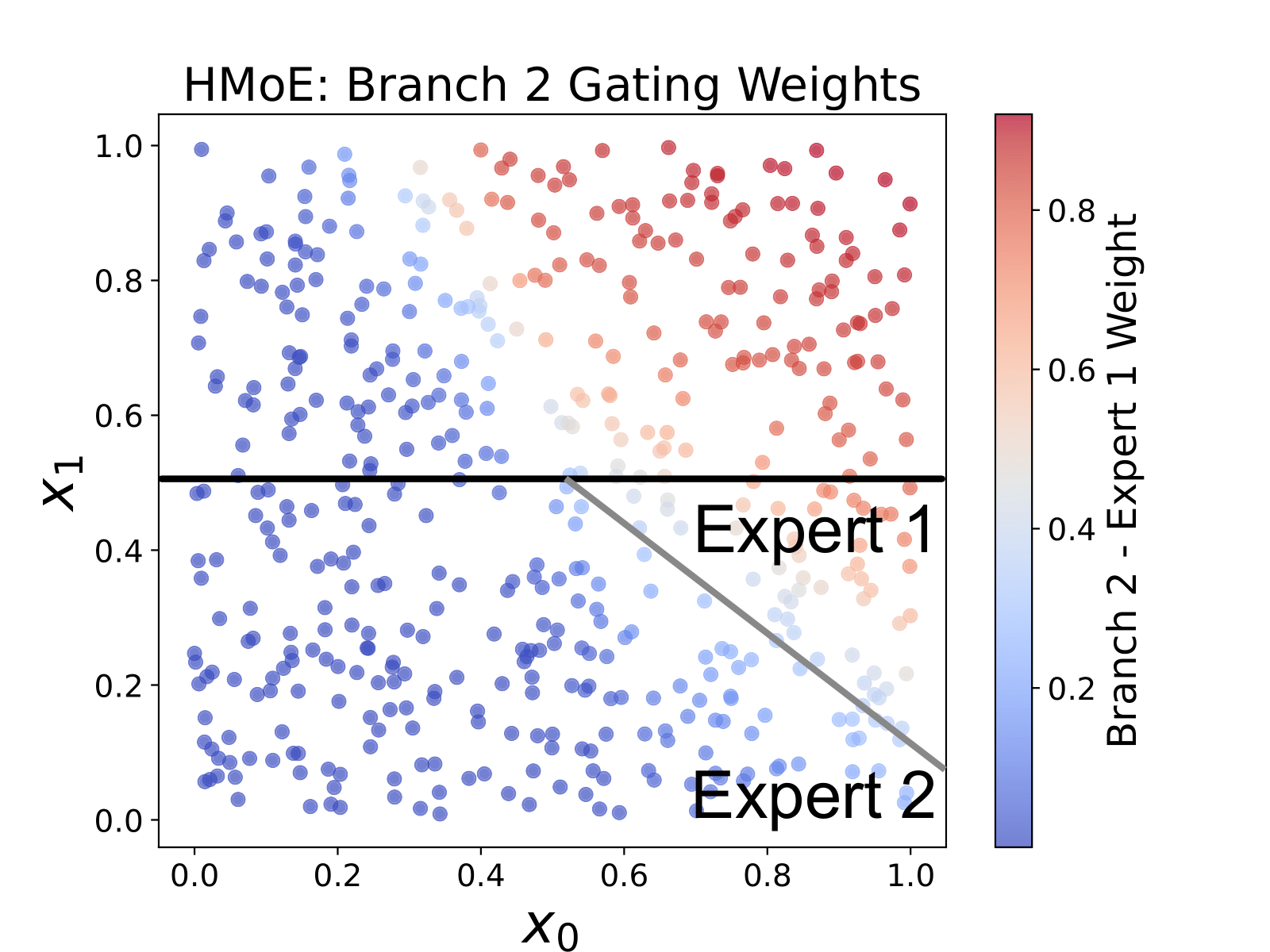} 
        \\
        {\small{(f)}}
        \end{tabular} \\
        \end{tabular}
    \end{minipage}
    \caption{Synthetic experiment illustrating how HMoE more effectively handles data with multi-level structures. Figures (d)–(f) highlight how HMoE’s coarse-to-fine partitioning of the input space results in stronger expert specialization.}
    \label{fig:synthetic_result_2}
\end{figure*}

\paragraph{CMU-MOSI Dataset}
We also tested HMoE as a fusion method on the CMU-MOSI dataset \cite{zadeh2018multi}, which utilizes visual, acoustic, and textual data for a sentiment analysis task. Following the preprocessing steps outlined by \cite{hu2022unimse}, we employed a pre-trained T5 \cite{2020t5} for text encoding, librosa \cite{mcfee2015librosa} for audio feature extraction, and EfficientNet \cite{tan2019efficientnet} for video feature encoding. The baselines include (1) the early fusion method, Tensor Fusion Network (TFN) \cite{zadeh2017tensor}; (2) the Multimodal Transformer (MulT), which fuses modalities by modeling their interactions \cite{tsai2019multimodal}; and (3) the Multimodal Adaptation Gate (MAG), which focuses on the consistency and differences across modalities \cite{rahman2020integrating}. As shown in Table \ref{tab:cmu_mosi}, among all fusion methods, employing Laplace gating at both levels of HMoE yields the best results, while the Softmax-Laplace combination ranks a close second.


\subsection{HMoE Naturally Capture Hierarchical Structures in the Data}
\vspace{0.5 em}
\noindent
\textbf{Synthetic Experiment.} 
We begin by demonstrating HMoE’s advantage in handling data with multi-level structures compared to standard MoE. As illustrated in Figure \ref{fig:synthetic_result_2}(a), we designed a target generation process where two input features, $x_0$ and $x_1$, are each sampled uniformly from the interval $[0, 1]$. The feature $x_0$ provides a coarse partition of the data into two groups, and within each group, $x_1$ further divides the data into distinct regions. Each region is governed by a different target function—specifically, sine, cosine, quadratic, or linear (see Figure \ref{fig:synthetic_result_2}(b)).
In our setup, the standard MoE model utilizes a single Softmax gating mechanism to assign data among four experts, whereas HMoE employs two branches, each containing two experts. Both models were trained on 2,000 samples and evaluated on 500 samples under the same configuration. Figure \ref{fig:synthetic_result_2}(c) presents a comparison of prediction accuracy, showing that HMoE significantly outperforms standard MoE, particularly in the positive $y$ region. We further examine the outputs of the gating networks at both levels: Figure \ref{fig:synthetic_result_2}(d) shows the first-level, coarse partition, while Figures \ref{fig:synthetic_result_2}(e) and \ref{fig:synthetic_result_2}(f) illustrate how experts specialize in each branch’s corresponding region. The resulting specialization boundaries closely align with the target function shapes, demonstrating that HMoE enhances expert specialization and interpretability, and highlighting its advantage in capturing multi-level structures in the data.

\vspace{0.5 em}
\noindent
\textbf{Laplace HMoE Enhances Latent Domain Generalization.}
Many real-world datasets can be grouped into different latent domains. For example, in clinical prediction tasks, patients might be categorized by factors such as age, medical history, treatments, or symptoms. Training a single, generic model on heterogeneous patient data often proves less effective than using a domain-specific model, as suggested by SLDG \cite{wu2024iterative}. However, SLDG assigns a fixed classifier to each domain without accounting for potential interactions among domains. Moreover, it relies heavily on hierarchical clustering, making the approach vulnerable to variations in clustering quality.
We evaluated HMoE on this task by replacing domain-specific classifiers with the HMoE module. Through its hierarchical routing mechanism, HMoE recursively partitions inputs, allowing tokens from each patient to interact with multiple inner and outer experts. For a fair comparison with baselines, we excluded clinical notes from MIMIC-IV and used only lab values to test different methods; we also evaluated HMoE on the eICU dataset \cite{pollard2018eicu}, which includes over $139$k ICU stays from 2014 to 2015. Following \cite{wu2024iterative}, we evaluated HMoE on two predictive tasks—readmission prediction and mortality prediction—and compared against the following baselines: (1) Oracle: Trained directly on the target test data. (2) Base: Trained only on the source training data. (3) DANN \cite{ganin2016domain} and (4) MLDG \cite{li2018learning}, which require domain IDs. (5) IRM \cite{arjovsky2019invariant}, which does not require domain IDs. Tables \ref{tab:MIMICIV_results} and \ref{tab:eICU_results} show the performance on both datasets. By leveraging hierarchical routing mechanisms, HMoE effectively partitions the input and identifies potential latent subgroups, assigning specialized experts to handle them. This leads to better overall generalization. Among the HMoE models, while performance differences are small, the Laplace-Laplace gating variant achieves the strongest results.

\begin{table}[t]
\caption{\small On the eICU dataset, domain generalization results show that HMoE achieves a balance between personalization and interactions across domains, while applying Laplace gating on both levels achieves the best performance. The best outcome is highlighted in \textbf{bold font}, the second-best is \underline{underlined}, and Oracle’s results are in \textit{italics}. Results are averaged across 5 random experiments.}
\centering
\label{tab:eICU_results}
\begin{tabular}{c|cc|cc}
\toprule
\multirow{2}{*}{\textbf{Task}} & \multicolumn{2}{c}{\textbf{Readmission}} & \multicolumn{2}{c}{\textbf{Mortality}} \\
\cmidrule(lr){2-3} \cmidrule(lr){4-5}
& \textbf{AUPRC} & \textbf{AUROC} & \textbf{AUPRC} & \textbf{AUROC} \\
\midrule
Oracle  & $\mathit{21.92 \pm 0.15}$ & $\mathit{67.72 \pm 0.42}$ & $\mathit{27.14 \pm 0.06}$ & $\mathit{83.87 \pm 0.57}$ \\
Base    & $10.41 \pm 0.12$ & $51.01 \pm 0.31$ & $23.02 \pm 0.24$ & $80.31 \pm 0.43$ \\
DANN    & $13.50 \pm 0.09$ & $53.79 \pm 0.19$ & $24.47 \pm 0.08$ & $80.82 \pm 0.27$ \\
MLDG    & $10.41 \pm 0.07$ & $52.54 \pm 0.43$ & $22.41 \pm 0.12$ & $79.73 \pm 0.39$ \\
IRM     & $13.62 \pm 0.13$ & $53.78 \pm 0.22$ & $25.18 \pm 0.09$ & $80.09 \pm 0.47$ \\
SLDG    & $18.57 \pm 0.10$ & $62.30 \pm 0.46$ & $\mathbf{26.79 \pm 0.16}$ & $\mathbf{82.44 \pm 0.19}$ \\
\cellcolor{ashgrey}HMoE-SS & $19.39 \pm 0.05$ & $63.61 \pm 0.23$ & $26.60 \pm 0.08$ & $81.92 \pm 0.28$ \\
\cellcolor{ashgrey}HMoE-SL & $19.35 \pm 0.09$ & $65.33 \pm 0.15$ & $26.57 \pm 0.04$ & $81.97 \pm 0.33$ \\
\cellcolor{ashgrey}HMoE-LS & \underline{$19.46 \pm 0.06$} & \underline{$65.54 \pm 0.21$} & $26.63 \pm 0.13$ & $81.93 \pm 0.41$ \\
\cellcolor{ashgrey}HMoE-LL & $\mathbf{19.74 \pm 0.11}$ & $\mathbf{65.67 \pm 0.17}$ & \underline{$26.71 \pm 0.11$} & \underline{$82.06 \pm 0.29$} \\
\bottomrule
\end{tabular}
\end{table}


\begin{table}[ht]
\centering
\caption{\small For domain generalization on the MIMIC-IV dataset (excluding clinical notes), HMoE with Laplace gating outperforms most baselines. The results are averaged over 5 random experiments.}
\label{tab:MIMICIV_results}
\begin{tabular}{c|cc|cc}
\toprule
\multirow{2}{*}{\textbf{Task}} & \multicolumn{2}{c}{\textbf{Readmission}} & \multicolumn{2}{c}{\textbf{Mortality}} \\
\cmidrule(lr){2-3} \cmidrule(lr){4-5}
 & \textbf{AUPRC} & \textbf{AUROC} & \textbf{AUPRC} & \textbf{AUROC} \\
\midrule
Oracle   & $\mathit{28.21 \pm 0.34}$ & $\mathit{69.31 \pm 0.53}$ & $\mathit{42.83 \pm 0.48}$ & $\mathit{89.82 \pm 0.75}$ \\
Base     & $23.70 \pm 0.23$ & $66.54 \pm 0.41$ & $37.40 \pm 0.20$ & $86.10 \pm 0.64$ \\
DANN     & $24.68 \pm 0.09$ & $67.31 \pm 0.33$ & $38.01 \pm 0.17$ & $87.34 \pm 0.39$ \\
MLDG     & $20.50 \pm 0.14$ & $63.72 \pm 0.29$ & $35.98 \pm 0.31$ & $85.72 \pm 0.68$ \\
IRM      & $24.23 \pm 0.21$ & $66.80 \pm 0.22$ & $38.72 \pm 0.19$ & $87.59 \pm 0.43$ \\
SLDG     & $27.41 \pm 0.10$ & $69.02 \pm 0.40$ & $41.56 \pm 0.12$ & $\mathbf{89.85 \pm 0.59}$ \\
\cellcolor{ashgrey}HMoE-SS  & \underline{$27.82 \pm 0.24$} & $69.13 \pm 0.21$ & $42.23 \pm 0.32$ & $89.47 \pm 0.18$ \\
\cellcolor{ashgrey}HMoE-SL  & $\mathbf{27.96 \pm 0.18}$ & \underline{$69.17 \pm 0.25$} & \underline{$42.44 \pm 0.35$} & $89.62 \pm 0.13$ \\
\cellcolor{ashgrey}HMoE-LS  & $27.63 \pm 0.13$ & $69.08 \pm 0.36$ & $42.41 \pm 0.19$ & \underline{$89.69 \pm 0.25$} \\
\cellcolor{ashgrey}HMoE-LL  & $\mathbf{27.96 \pm 0.22}$ & $\mathbf{69.19 \pm 0.31}$ & $\mathbf{42.46 \pm 0.27}$ & $89.67 \pm 0.23$ \\
\bottomrule
\end{tabular}
\end{table}

\subsection{Quantatitive Analysis}



\paragraph{Multimodal Routing Distributions.}
We then analyze how modality tokens are distributed across different experts and groups. Figure \ref{fig:mod_dist} displays the distribution of three modality tokens in the best-performing HMoE block for corresponding tasks from MIMIC-IV. The HMoE module consists of two expert groups, each containing four experts. The results are taken from the final HMoE block of the trained model, using the first batch of data. Most vital signs and clinical notes tokens are routed to expert group 1, while CXR tokens are predominantly routed to expert group 2. For tasks (a) and (b), vital signs and clinical notes contribute more heavily to the overall HMoE prediction, particularly in task (b). However, for task (c), CXR tokens play a more significant role, contributing almost as much as vital signs, despite being present in smaller quantities. Additionally, due to the load-balancing loss applied during training, the total token count is nearly uniformly distributed among experts, with minimal token dropping because of exceeding capacity limits.

\paragraph{Distribution of Clinical Events.}
Given that the number of clinical event categories is much larger than the number of modalities, it is more intuitive to visualize the impact of different gating function combinations on the distribution of clinical events. Figure \ref{fig:event_dist} (a) illustrates the routing distribution for the most commonly observed clinical events using the best-performing Laplace-Laplace gating function combination of HMoE in latent domain discovery, compared to the Softmax gating function. The results indicate that the Laplace-Laplace combination promotes greater diversification in routing clinical event samples to experts while encouraging expert sharing across different categories. We further conduct ablation studies by varying the number of inner and outer experts in the best-performing HMoE across four tasks, as shown in Figure \ref{fig:event_dist} (b) and (c), where their number of outer and inner experts is fixed at 2 and 4, respectively. The results demonstrate that increasing the number of experts has a positive impact on performance, particularly for inner experts, though this improvement comes with an increase in computational demands.

\vspace{0.5 em}
\noindent
\textbf{Why Laplace Gating Performs Better.}
In the standard Softmax gating \cite{nguyen2023demystifying}, the similarity score is computed as the inner product of a token’s hidden representation and an expert embedding. However, this approach can lead to representation collapse \cite{chi2022on, pham2024competesmoe}, where a small number of experts dominate the decision-making process, rendering other experts redundant and slowing parameter estimation. By contrast, Laplace gating partially addresses this issue by computing similarity as the $L_2$-distance between token representations and expert embeddings. This approach is less biased towards experts with large norms, giving all experts a more balanced chance of selection based on proximity to the token representation. Consequently, Laplace gating is especially effective for heterogeneous or multimodal/multi-domain inputs, since it is less sensitive to the scale and variance of feature distributions. Empirically, using Laplace gating at both gating layers further enhances these benefits: it often yields lower validation errors across tasks, indicating that each gating layer more effectively supports expert specialization. 

\vspace{0.5 em}
\noindent
\textbf{Limitations.}
The enhanced ability to process complex, multi-domain inputs comes with an increased computational cost, which is a key limitation of HMoE. From our large-scale experiments, we observed that standard MoE requires approximately $80\%$ of the computation time for ImageNet and $76\%$ for MIMIC-IV multimodal tasks compared to HMoE, assuming the same total number of experts. While the gating function itself does not introduce additional parameters, the increase in computation primarily arises from extra dispatch and combination steps (e.g., steps 2 and 8 in Algorithm \ref{alg:hmoe}).

\begin{figure*}[t!]
    \begin{minipage}{\textwidth}
    \centering
    \begin{tabular}{@{\hspace{-3.8ex}} c @{\hspace{-2.4ex}} c @{\hspace{-1.5ex}} c @{\hspace{-1.5ex}}}
        \begin{tabular}{c}
        \includegraphics[width=.34\textwidth]{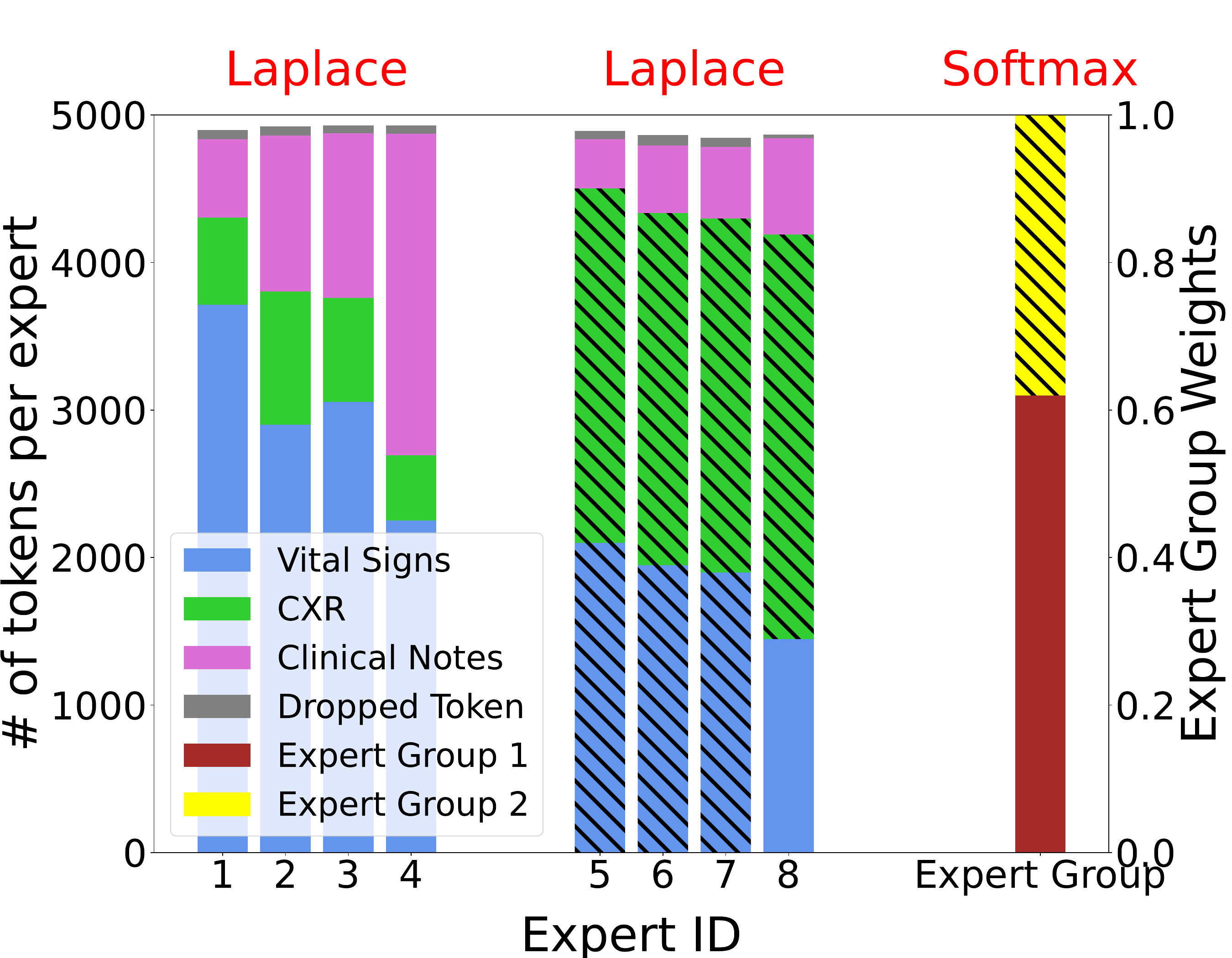}
        \\
        {\small{(a) In-Hospital Mortality}}
        \end{tabular} & 
        \begin{tabular}{c}
        \includegraphics[width=.34\textwidth]{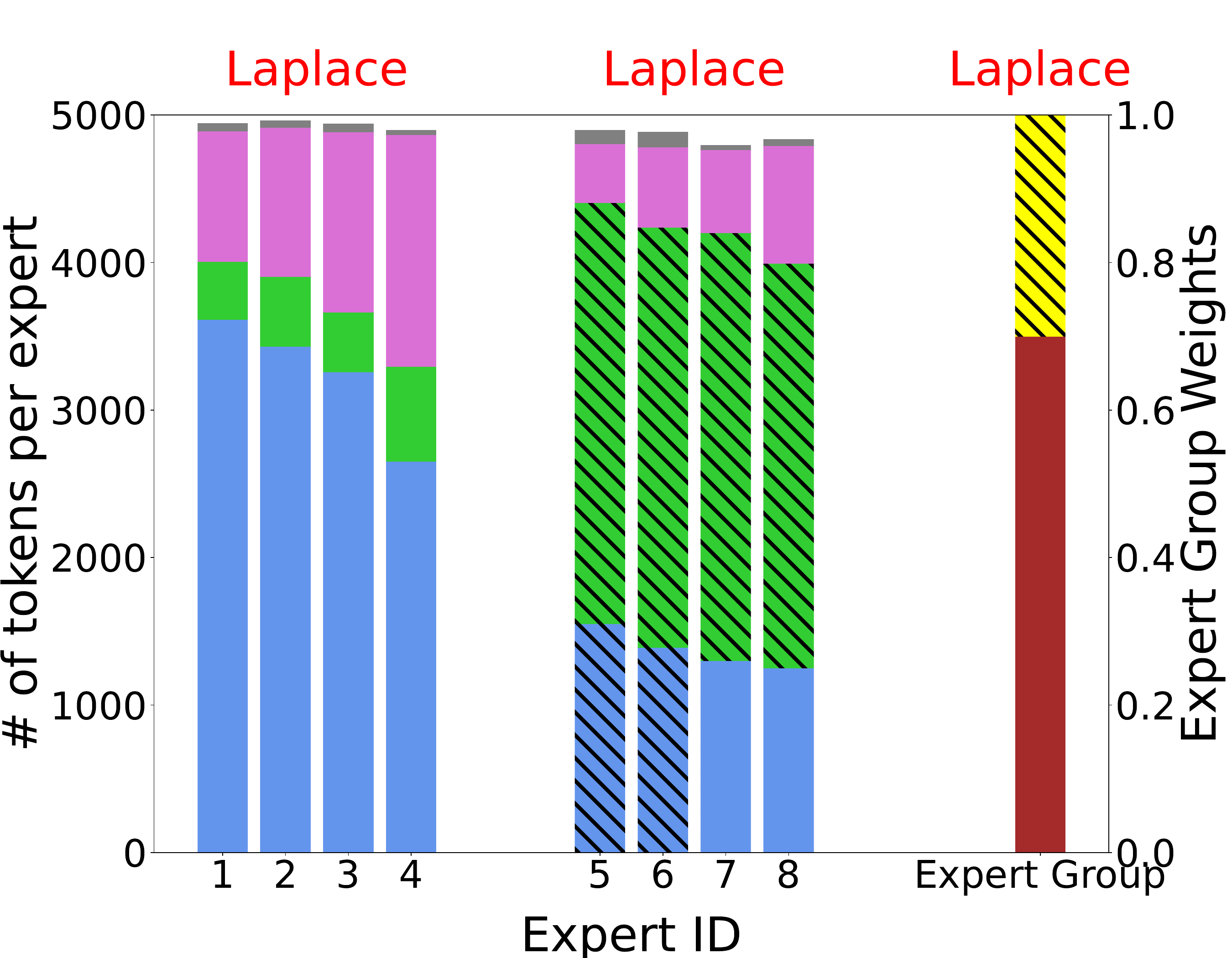} 
        \\
        {\small{(b) Length-of-Stay}}
        \end{tabular} &
        \begin{tabular}{c}
        \includegraphics[width=.34\textwidth]{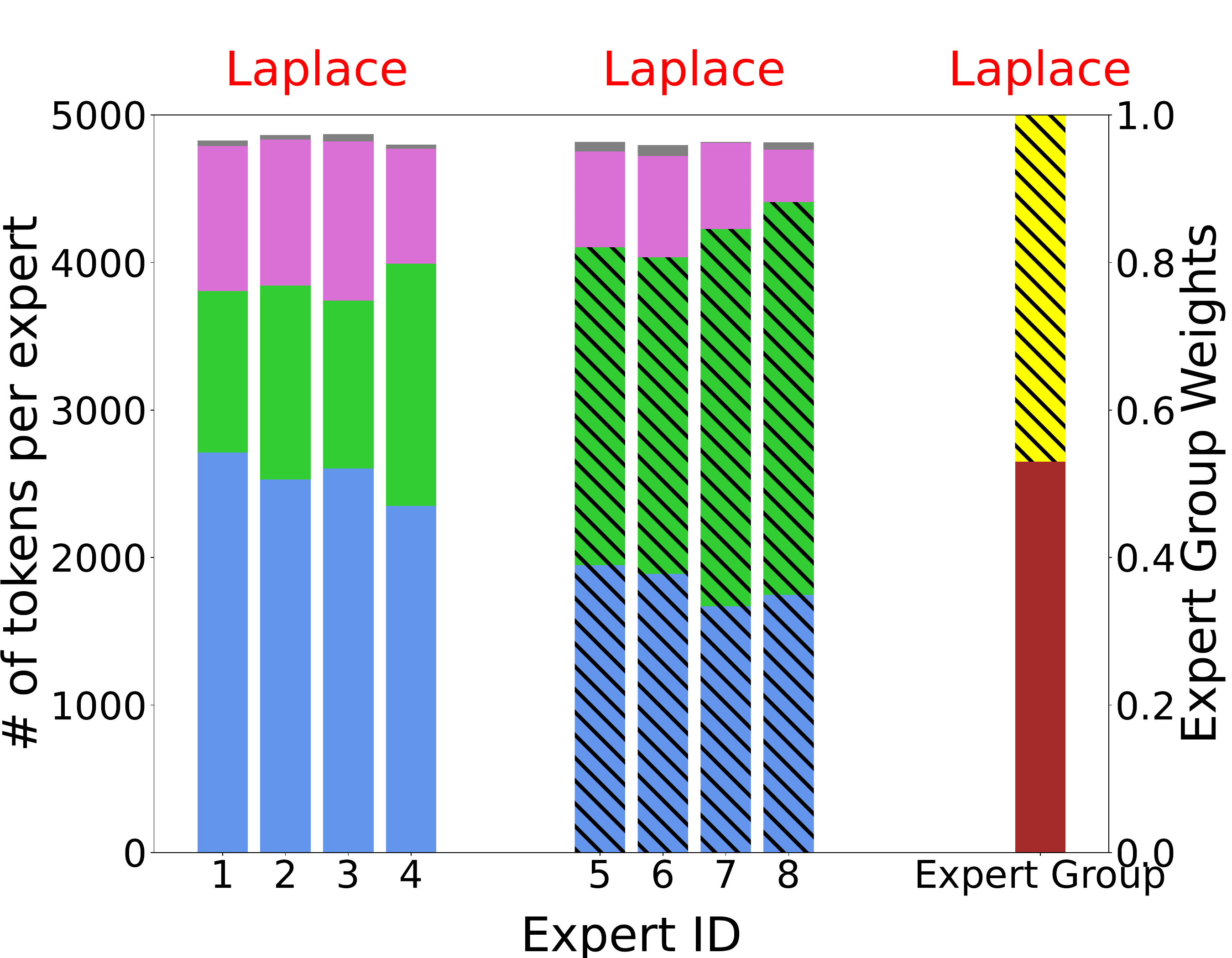} 
        \\
        {\small{(c) Phenotyping}}
        \end{tabular} \\
        \end{tabular}
    \end{minipage}
    \caption{\small Token distribution (time series, CXR, clinical notes) of HMoE blocks of a multimodal transformer. We present the best-performing gating combinations for three tasks evaluated on MIMIC-IV, where the HMoE block comprises 2 outer expert groups, each containing 4 inner experts. Expert IDs 1 to 4 (left section of each figure) represent token distributions from expert group 1, and expert IDs 5 to 8 (middle section) represent token distributions from expert group 2. The right section shows the relative weights assigned to each expert group.}
    \label{fig:mod_dist}
\end{figure*}

\begin{figure*}[t!]
    \begin{minipage}{\textwidth}
    \centering
    \begin{tabular}{@{\hspace{-3.8ex}} c @{\hspace{-2.4ex}} c @{\hspace{-1.5ex}} c @{\hspace{-1.5ex}}}
        \begin{tabular}{c}
        \includegraphics[width=.34\textwidth]{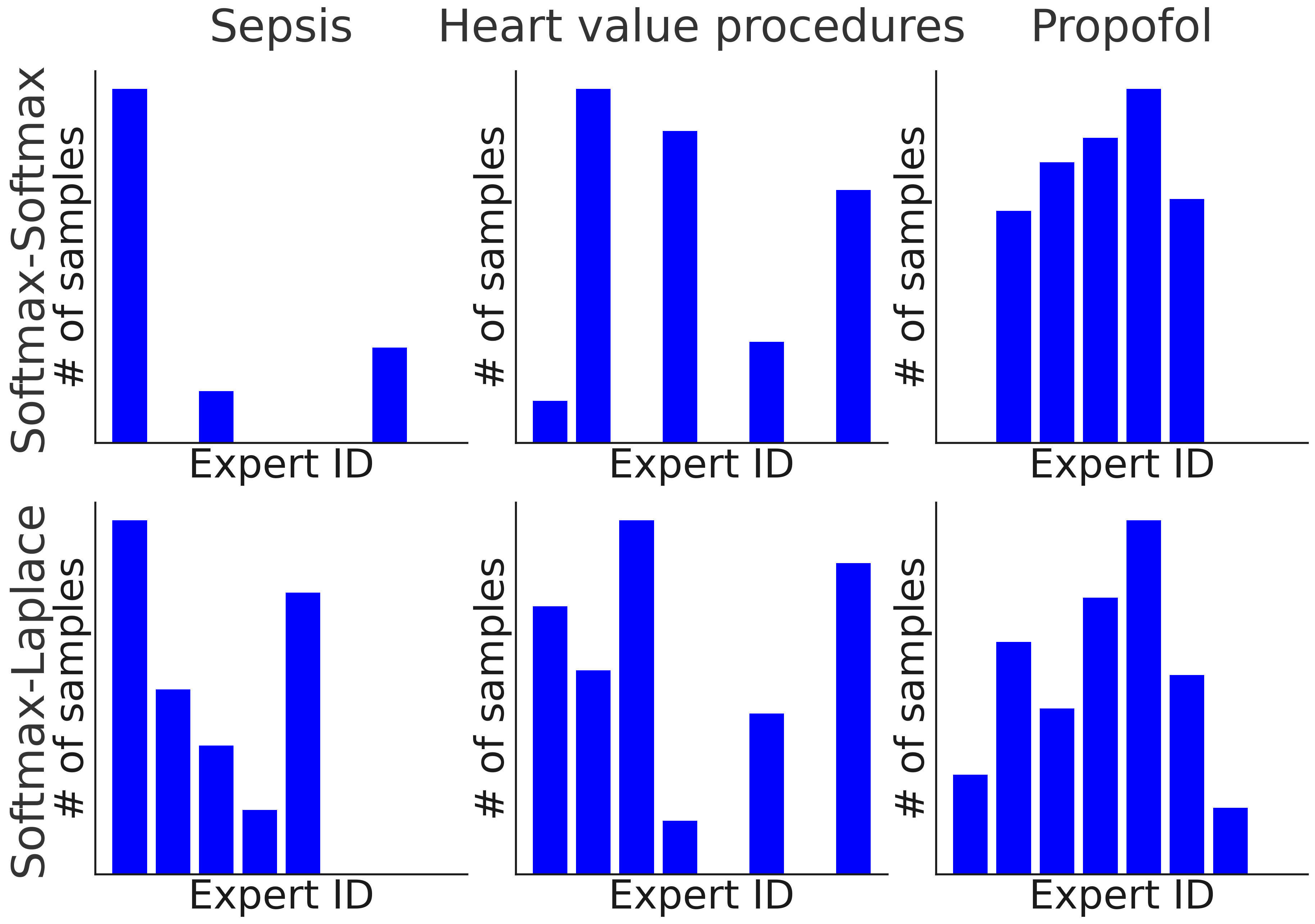}
        \\
        {\small{(a)}}
        \end{tabular} & 
        \begin{tabular}{c}
        \includegraphics[width=.34\textwidth]{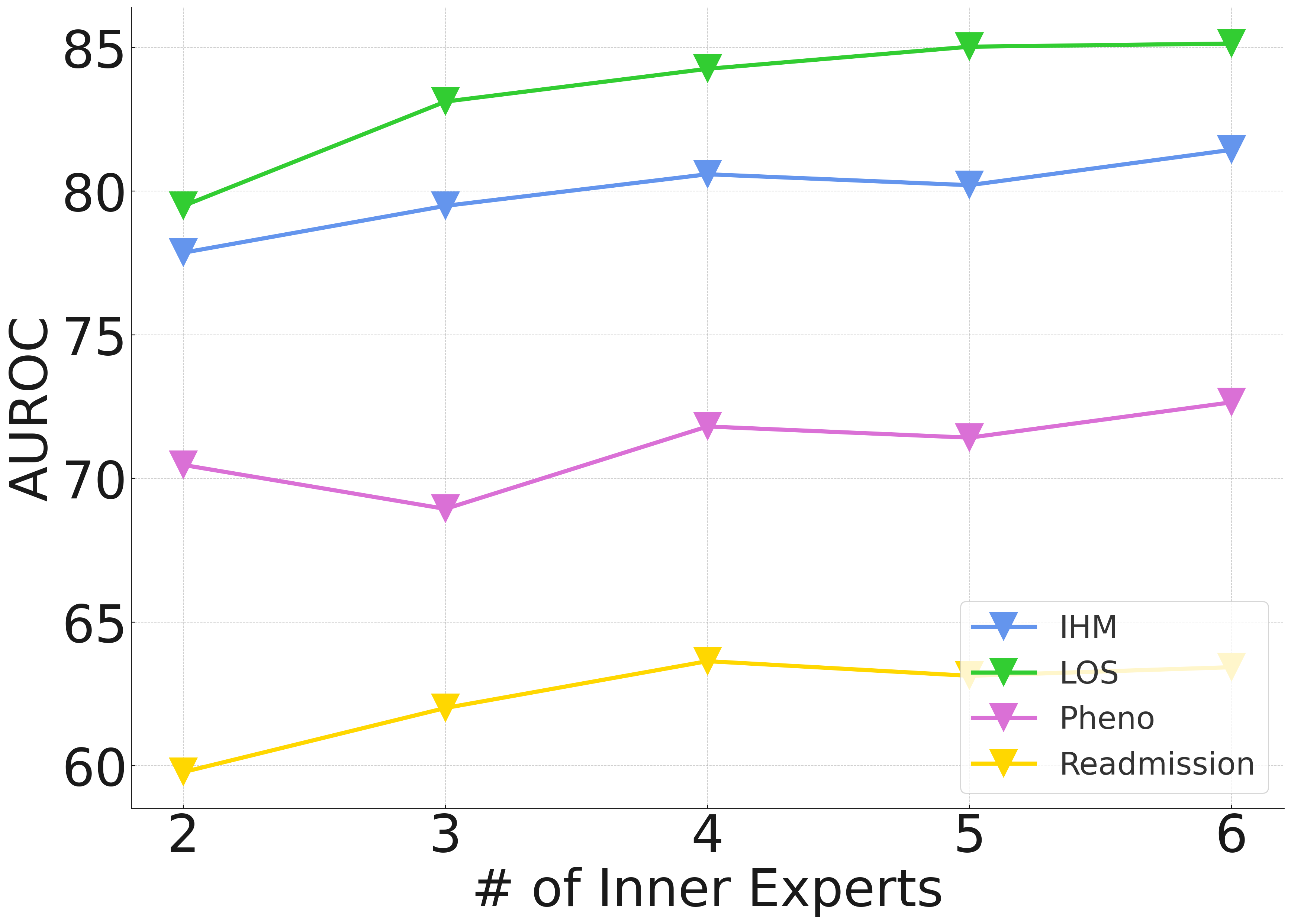} 
        \\
        {\small{(b)}}
        \end{tabular} &
        \begin{tabular}{c}
        \includegraphics[width=.34\textwidth]{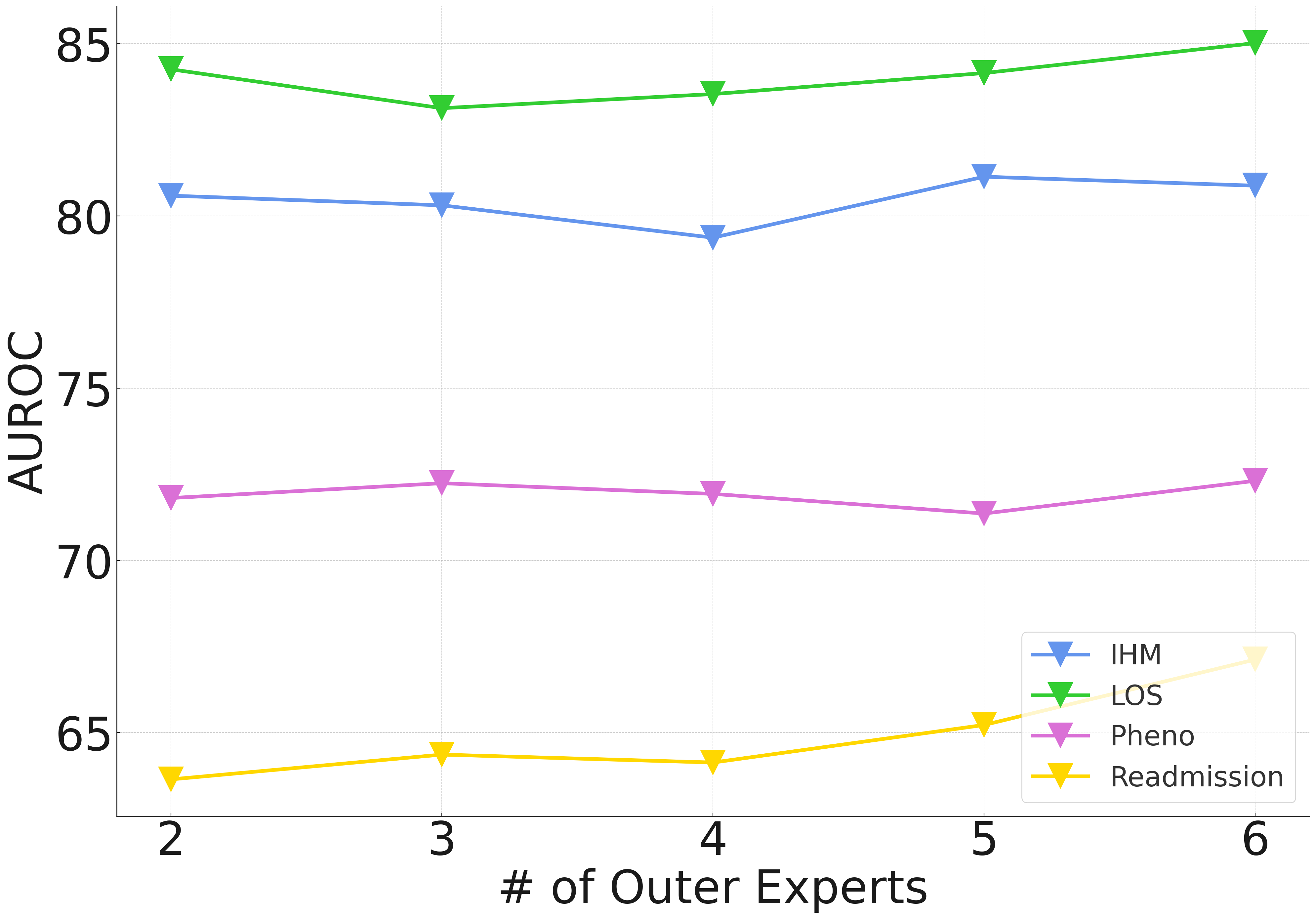} 
        \\
        {\small{(c)}}
        \end{tabular} \\
        \end{tabular}
    \end{minipage}
    \caption{\small (a) Distribution of top clinical events across HMoE expert IDs under the Laplace-Laplace gating combination (top row) compared to the Softmax-Softmax gating combination (bottom row).
(b)/(c) Performance variations as the number of inner/outer experts increases.}
    \label{fig:event_dist}
\end{figure*}

\section{Discussion}
\label{sec:discussion}
In this paper, we explore three different types of two-level hierarchical mixture of experts (HMoE) equipped with three combinations of the vanilla Softmax gating and the Laplace gating. Our theoretical analysis illustrates that using the Softmax gating at either level of the HMoE model would induce some intrinsic parameter interactions expressed in the language of partial differential equations, which decelerates the convergence rates of parameter estimation and expert estimation. Meanwhile, we demonstrate that employing the Laplace gating at both levels allows the model parameters to avoid the interactions caused by the Softmax gating. Therefore, the parameter and expert convergence is substantially accelerated, thereby leading to the improvement of the expert specialization.

\vspace{0.5 em}
\noindent
We conducted a series of experiments to compare different gating combinations across multiple tasks and datasets. The results consistently showed that replacing one or both Softmax gating layers with Laplace gating improved model performance. We also found that Laplace gating provides more robust expert assignments under multi-domain or multimodal inputs, which supports the theoretical premise. Therefore, we conclude that Laplace-based gating strategies, and in particular Laplace-Laplace gating, are highly effective for hierarchical mixture-of-experts models, reinforcing the broader argument for exploring alternative gating functions beyond the standard Softmax.

\vspace{0.5 em}
\noindent
\textbf{Future directions.} There are a few potential research directions based on our paper: 

\vspace{0.5 em}
\noindent
Firstly, the problem of estimating the true number of experts $k^*_2$ has remained open in the literature. It is worth noting from Table~\ref{table:parameter_rates} that the convergence rates of parameter estimation fall proportionately to the cardinality of the Voronoi cells, that is, the corresponding number of fitted experts. Thus, a solution to estimate $k^*_2$ is to reduce the number of fitted experts $k_2$, which leads to the decrease of the Voronoi cell cardinality, until the convergence of all the parameter estimations reach the optimal rate of order $\widetilde{\mathcal{O}}_P(n^{-1/2})$. This can be done by regularizing the log-likelihood function of the Gaussian HMoE model using the parameter discrepancies as suggested by \cite{manole_selection_2021}.  

\vspace{0.5 em}
\noindent
Secondly, we can conduct the convergence analysis of parameter and expert estimation under a more practical scenario called a misspecified setting where the data are generated from an arbitrary distribution $Q(Y|X)$ rather than the Gaussian HMoE model. The MLE then converges to a mixing measure $\overline{G} \in \argmin_{G \in \mathcal{G}_{k^*_1k_2}(\Theta)} \text{KL}(Q(Y|X) || p_{G}(Y|X))$ where $\text{KL}$ denotes the Kullback-Leibler divergence. However, since the current MLE convergence analysis under the misspecified setting has only been conducted when the function space is convex~\cite{vandeGeer-00} while the space $\mathcal{G}_{k^*_1k_2}(\Theta)$ is non-convex, we believe that further technical tools need to be developed to tackle that issue.

\vspace{0.5 em}
\noindent
On the practical side, we plan to explore techniques like pruning or expert-sharing to reduce computational costs in large-scale or multimodal tasks. We also intend to investigate more diverse hybrid gating mechanisms, by introducing additional gatings such as Cosine gating \cite{li2023sparse,nguyen2025cosine} and Sigmoid gating \cite{csordas2023approximating,nguyen2024sigmoid}, to identify the best configurations for specific tasks. Finally, we aim to discover novel applications where HMoE’s hierarchical structure and robust gating functions can provide significant improvements.

\section{Proofs for Convergence of Expert Estimation}
\label{appendix:param_rates}
In this section, we provide proofs for Theorems~\ref{theorem:param_rates_SS}-~\ref{theorem:param_rates_LL}. We first proceed with an overall of the proof strategy.

\vspace{0.5 em}
\noindent
\textbf{Overview.} We will focus on establishing the following inequality:
    \begin{align*}
    \inf_{G\in\mathcal{G}_{k^*_1,k_2}(\Theta)}\bbE_{\bbX}[h(p^{type}_{G}(\cdot|\bbX),p^{type}_{G_*}(\cdot|\bbX))]/\mathcal{L}_{(r_1,r_2,r_3)}(G,G_*)>0,
\end{align*}
where the value of $(r_1,r_2,r_3)$ varies with the variable $type\in\{SS,SL,LL\}$. Note that the Hellinger distance $h$ is lower bounded by the Total Variation distance $V$, that is, $h\geq V$, it suffices to demonstrate that
\begin{align}
    \label{eq:general_universal_inequality_over_ss}
    \inf_{G\in\mathcal{G}_{k^*_1,k_2}(\Theta)}\bbE_{\bbX}[V(p^{type}_{G}(\cdot|\bbX),p^{type}_{G_*}(\cdot|\bbX))]/\mathcal{L}_{(r_1,r_2,r_3)}(G,G_*)>0.
\end{align}
To this end, we first show that
\begin{align}
    \label{eq:general_local_inequality_over_ss}
    \lim_{\varepsilon\to0}\inf_{G\in\mathcal{G}_{k^*_1,k_2}(\Theta):\mathcal{L}_{(r_1,r_2,r_3)}(G,G_*)\leq\varepsilon}\bbE_{\bbX}[V(p^{type}_{G}(\cdot|\bbX),p^{type}_{G_*}(\cdot|\bbX))]/\mathcal{L}_{(r_1,r_2,r_3)}(G,G_*)>0.
\end{align}
The proof of this result will be presented later. Now, suppose that it holds true, then there exists a positive constant $\varepsilon'$ that satisfies
\begin{align*}
    \inf_{G\in\mathcal{G}_{k^*_1,k_2}(\Theta):\mathcal{L}_1(G,G_*)\leq\varepsilon'}\bbE_{\bbX}[V(p^{type}_{G}(\cdot|\bbX),p^{type}_{G_*}(\cdot|\bbX))]/\mathcal{L}_{(r_1,r_2,r_3)}(G,G_*)>0.
\end{align*}
Thus, it suffices to establish the following inequality:
\begin{align}
    \label{eq:general_global_inequality_ss}
    \inf_{G\in\mathcal{G}_{k^*_1,k_2}(\Theta):\mathcal{L}_1(G,G_*)>\varepsilon'}\bbE_{\bbX}[V(p^{type}_{G}(\cdot|\bbX),p^{type}_{G_*}(\cdot|\bbX))]/\mathcal{L}_{(r_1,r_2,r_3)}(G,G_*)>0.
\end{align}
Assume by contrary that the inequality~\eqref{eq:general_global_inequality_ss} does not hold true, then we can seek a sequence of mixing measures $G'_n\in\mathcal{G}_{k^*_1,k_2}(\Theta)$ that satisfy $\mathcal{L}_1(G'_n,G_*)>\varepsilon'$ and
\begin{align*}
    \lim_{n\to\infty}\bbE_{\bbX}[V(p^{type}_{G'_n}(\cdot|\bbX),p^{type}_{G_*}(\cdot|\bbX))]/\mathcal{L}_{(r_1,r_2,r_3)}(G'_n,G_*)=0.
\end{align*}
Thus, we deduce that $\bbE_{\bbX}[V(p^{type}_{G'_n}(\cdot|\bbX),p^{type}_{G_*}(\cdot|\bbX))]\to0$ as $n\to\infty$. Since $\Theta$ is a compact set, we can substitute the sequence $(G'_n)$ by one of its subsequences that converges to a mixing measure $G'\in\mathcal{G}_{k^*_1,k_2}(\Theta)$. Recall that $\mathcal{L}_{(r_1,r_2,r_3)}(G'_n,G_*)>\varepsilon'$, then we deduce that $\mathcal{L}_{(r_1,r_2,r_3)}(G',G_*)>\varepsilon'$.
By employing the Fatou's lemma, it follows that
\begin{align*}
    0&=\lim_{n\to\infty}\bbE_{\bbX}[V(p^{type}_{G'_n}(\cdot|\bbX),p^{type}_{G_*}(\cdot|\bbX))]/\mathcal{L}_{(r_1,r_2,r_3)}(G'_n,G_*)\\
    &\geq\frac{1}{2} \int\liminf_{n\to\infty}\Big|p^{type}_{G'_n}(y|\bx)-p^{type}_{G_*}(y|\bx)\Big|^2~\dint(\bx,y).
\end{align*}
Thus, we obtain that $p^{type}_{G'}(y|\bx)=p^{type}_{G_*}(y|\bx)$ for almost surely $(\bx,y)$. According to Proposition~\ref{prop:identifiability}, we get that $G'\equiv G_*$, which yields that $\mathcal{L}_{(r_1,r_2,r_3)}(G',G_*)=0$. This result contradicts the fact that $\mathcal{L}_{(r_1,r_2,r_3)}(G',G_*)>\varepsilon'>0$. Hence, we obtain the result in equation~\eqref{eq:general_global_inequality_ss}, which together with the inequality~\eqref{eq:general_local_inequality_over_ss} leads to the conclusion in equation~\eqref{eq:general_universal_inequality_over_ss}.

\vspace{0.5 em}
\noindent
Now, we are going back to the proof of the inequality~\eqref{eq:general_local_inequality_over_ss}.

\vspace{0.5 em}
\noindent
\textbf{Proof of the inequality~\eqref{eq:general_local_inequality_over_ss}:} Suppose that the inequality~\eqref{eq:general_local_inequality_over_ss} does not hold, then we can find a sequence of mixing measures $(G_n)$ in $\mathcal{G}_{k^*_1,k_2}(\Theta)$ that satisfies $\mathcal{L}_{(r_1,r_2,r_3)}(G_n,G_*)\to0$ and
\begin{align}
    \label{eq:general_ratio_limit_over_ss}
    \bbE_{\bbX}[V(p^{type}_{G_n}(\cdot|\bbX),p^{type}_{G_*}(\cdot|\bbX))]/\mathcal{L}_{(r_1,r_2,r_3)}(G_n,G_*)\to0,
\end{align}
as $n\to\infty$. For each $j_1\in[k^*_1]$, let $\mathcal{V}^n_{j_1}:=\mathcal{V}_{j_1}(G_n)$ be a Voronoi cell of $G_n$ generated by the $j_1$-th components of $G_*$. As the Voronoi loss $\mathcal{V}^n_{j_1}$ has only one element and our arguments are asymptotic, we may assume WLOG that $\mathcal{V}^n_{j_1}=\mathcal{V}_{j_1}=\{j_1\}$ for any $j_1\in[k^*_1]$. Then, the Voronoi loss becomes
\begin{align}
    &\mathcal{L}_{(r_1,r_2,r_3)}(G_n,G_*)=\sum_{j_1=1}^{k^*_1}\Big|\exp(b^n_{j_1})-\exp(b^*_{j_1})\Big|+\sum_{j_1=1}^{k^*_1}\exp(b^n_{j_1})\|\dajn\|+\sum_{j_1=1}^{k^*_1}\exp(b^n_{j_1})\nonumber\\
    &\times\Bigg[\sum_{j_2:|\mathcal{V}_{j_2|j_1}|=1}\sum_{i_2\in\mathcal{V}_{j_2|j_1}}\exp(\beta^n_{i_2|j_1})\Big(\|\doijn\|+\|\deijn\|+|\dtijn|+|\dvijn|\Big)\nonumber\\
    &+\sum_{j_2:|\mathcal{V}_{j_2|j_1}|>1}\sum_{i_2\in\mathcal{V}_{j_2|j_1}}\exp(\beta^n_{i_2|j_1})\Big(\|\doijn\|^{2}+\|\deijn\|^{r_1}+|\dtijn|^{r_2}\nonumber\\
     \label{eq:loss}
    &+|\dvijn|^{r_3}\Big)\Bigg]+\sum_{j_1=1}^{k^*_1}\exp(b^n_{j_1})\sum_{j_2=1}^{k^*_2}\Big|\sum_{i_2\in\mathcal{V}_{j_2|j_1}}\exp(\beta^n_{i_2|j_1})-\exp(\beta^*_{j_2|j_1})\Big|.
\end{align}
Since $\mathcal{L}_{(r_1,r_2,r_3)}(G_n,G_*)\to0$ as $n\to\infty$, it follows that $\exp(\bjn)\to\exp(\bj)$, $\ajn\to\aj$, $\exp(\bein)\to\exp(\bej)$, $\oin\to\oj$, $\ein\to\ej$, $\tin\to\tj$ and $\vin\to\vj$ for all $j_1\in[k^*_1]$, $j_2\in[k^*_2]$ and $i_2\in\mathcal{V}_{j_2|j_1}$.

\noindent
Subsequently, we consider three different settings where the variable $type$ takes the value in the set $\{SS,SL,LL\}$ in Appendices~\ref{appendix:softmax_softmax}, \ref{appendix:softmax_laplace} and \ref{appendix:laplace_laplace}, respectively. In each appendix, the proof will be divided into three main stages.

\subsection{Proof of Theorem~\ref{theorem:param_rates_SS}: When $type=SS$}
\label{appendix:softmax_softmax}
When $type=SS$, the corresponding Voronoi loss function is $\mathcal{L}_{(\frac{1}{2}r^{SS},r^{SS},\frac{1}{2}r^{SS})}(G_n,G_*)=\mathcal{L}_{1n}$ where we define
\begin{align}
    &\mathcal{L}_{1n}:=\sum_{j_1=1}^{k^*_1}\Big|\exp(b^n_{j_1})-\exp(b^*_{j_1})\Big|+\sum_{j_1=1}^{k^*_1}\exp(b^n_{j_1})\|\dajn\|+\sum_{j_1=1}^{k^*_1}\exp(b^n_{j_1})\nonumber\\
    &\times\Bigg[\sum_{j_2:|\mathcal{V}_{j_2|j_1}|=1}\sum_{i_2\in\mathcal{V}_{j_2|j_1}}\exp(\beta^n_{i_2|j_1})\Big(\|\doijn\|+\|\deijn\|+|\dtijn|+|\dvijn|\Big)\nonumber\\
    &+\sum_{j_2:|\mathcal{V}_{j_2|j_1}|>1}\sum_{i_2\in\mathcal{V}_{j_2|j_1}}\exp(\beta^n_{i_2|j_1})\Big(\|\doijn\|^{2}+\|\deijn\|^{\frac{\hrj}{2}}+|\dtijn|^{\hrj}\nonumber\\
     \label{eq:loss_l1_ss}
    &+|\dvijn|^{\frac{\hrj}{2}}\Big)\Bigg]+\sum_{j_1=1}^{k^*_1}\exp(b^n_{j_1})\sum_{j_2=1}^{k^*_2}\Big|\sum_{i_2\in\mathcal{V}_{j_2|j_1}}\exp(\beta^n_{i_2|j_1})-\exp(\beta^*_{j_2|j_1})\Big|.
\end{align}
\textbf{Step 1 - Taylor expansion:} In this stage, we aim to decompose the term
\begin{align*}
    Q_n:=\left[\sum_{j_1=1}^{k^*_1}\exp((\aj)^{\top}\bx+\bj)\right][p^{SS}_{G_n}(y|\bx)-p^{SS}_{G_*}(y|\bx)]
\end{align*}
into a combination of linearly independent terms using the Taylor expansion. For that purpose, let us denote
\begin{align*}
    p^{SS,n}_{j_1}(y|\bx)&:=\sum_{j_2=1}^{k^*_2}\sum_{i_2\in\mathcal{V}_{j_2|j_1}}\softmax((\oin)^{\top}\bx+\beta^n_{i_2|j_1}) \pi(y|(\ein)^{\top}\bx+\tin,\vin),\\
    p^{SS,*}_{j_1}(y|\bx)&:=\sum_{j_2=1}^{k^*_2}\softmax((\oj)^{\top}\bx+\bej) \pi(y|(\ej)^{\top}\bx+\tj,\vj).
\end{align*}
Then, it can be checked that the quantity $Q_n$ is divided as
\begin{align}
    Q_n&=\sum_{j_1=1}^{k^*_1}\exp(\bjn)\left[\exp((\ajn)^{\top}\bx)p^{SS,n}_{j_1}(y|\bx)-\exp((\aj)^{\top}\bx)p^{SS,*}_{j_1}(y|\bx)\right]\nonumber\\
    &-\sum_{j_1=1}^{k^*_1}\exp(\bjn)\left[\exp((\ajn)^{\top}\bx)-\exp((\aj)^{\top}\bx)\right]p^{SS}_{G_n}(y|\bx)\nonumber\\
    &+\sum_{j_1=1}^{k^*_1}\left(\exp(\bjn)-\exp(\bj)\right)\exp((\aj)^{\top}\bx)\left[p^{SS,n}_{j_1}(y|\bx)-p^{SS}_{G_n}(y|\bx)\right]\nonumber\\
     \label{eq:decompose_Qn_ss}
    :&=A_n-B_n+C_n.
\end{align}
\textbf{Step 1A - Decompose $A_n$:} Using the same techniques for decomposing $Q_n$, we can decompose $A_n$ as follows:
\begin{align*}
    A_n&:=\sum_{j_1=1}^{k^*_1}\frac{\exp(\bjn)}{\sum_{j'_2=1}^{k^*_2}\exp((\boldsymbol{\omega}^*_{j'_2|j_1})^{\top}\bx+\beta^*_{j'_2|j_1})}[A_{n,j_1,1}-A_{n,j_1,2}+A_{n,j_1,3}],
\end{align*}
where
\begin{align*}
    A_{n,j_1,1}&:=\sum_{j_2=1}^{k^*_2}\sum_{i_2\in\mathcal{V}_{j_2|j_1}}\exp(\beta^n_{i_2|j_1})\Big[\exp((\oin)^{\top}\bx)\exp((\ajn)^{\top}\bx)\pi(y|(\ein){\top}\bx+\tin,\vin)\nonumber\\
    &\hspace{3cm}-\exp((\oj)^{\top}\bx)\exp((\aj)^{\top}\bx)\pi(y|(\ej)^{\top}\bx+\tj,\vj)\Big],\\
    A_{n,j_1,2}&:=\sum_{j_2=1}^{k^*_2}\sum_{i_2\in\mathcal{V}_{j_2|j_1}}\exp(\beta^n_{i_2|j_1})\Big[\exp((\oin)^{\top}\bx)-\exp((\oj)^{\top}\bx)\Big]\nonumber\\
    &\hspace{7cm}\times\exp((\ajn)^{\top}\bx)p^{SS,n}_{j_1}(y|\bx),\nonumber\\
    A_{n,j_1,3}&:=\sum_{j_2=1}^{k^*_2}\Big(\sum_{i_2\in\mathcal{V}_{j_2|j_1}}\exp(\bein)-\exp(\bej)\Big)\exp((\oj)^{\top}\bx)\\
    &\hspace{1cm}\times[\exp((\aj)^{\top}\bx)\pi(y|(\ej)^{\top}\bx+\tj,\vj)-\exp((\ajn)^{\top}\bx)p^{SS,n}_{j_1}(y|\bx)].
\end{align*}
Based on the cardinality of the Voronoi cells $\mathcal{V}_{j_2|j_1}$, we continue to divide the term $A_{n,j_1,1}$ into two parts as
\begin{align*}
    A_{n,j_1,1}&=\sum_{j_2:|\mathcal{V}_{j_2|j_1}|=1}\sum_{i_2\in\mathcal{V}_{j_2|j_1}}\exp(\bein)\Big[\exp((\oin)^{\top}\bx)\exp((\ajn)^{\top}\bx)\pi(y|(\ein){\top}\bx+\tin,\vin)\\
    &\hspace{3cm}-\exp((\oj)^{\top}\bx)\exp((\aj)^{\top}\bx)\pi(y|(\ej)^{\top}\bx+\tj,\vj)\Big],\\
    &+\sum_{j_2:|\mathcal{V}_{j_2|j_1}|>1}\sum_{i_2\in\mathcal{V}_{j_2|j_1}}\exp(\bein)\Big[\exp((\oin)^{\top}\bx)\exp((\ajn)^{\top}\bx)\pi(y|(\ein){\top}\bx+\tin,\vin)\\
    &\hspace{3cm}-\exp((\oj)^{\top}\bx)\exp((\aj)^{\top}\bx)\pi(y|(\ej)^{\top}\bx+\tj,\vj)\Big]\\
    :&=A_{n,j_1,1,1}+A_{n,j_1,1,2}.
\end{align*}
Let $\xi(\boldsymbol{\eta},\tau)=\boldsymbol{\eta}^{\top}\bx+\tau$. By applying the first-order Taylor expansion, the term $A_{n,j_1,1,1}$ can be rewritten as
\begin{align*}
    &A_{n,j_1,1,1}=\sum_{j_2:|\mathcal{V}_{j_2|j_1}|=1}\sum_{i_2\in\mathcal{V}_{j_2|j_1}}\sum_{|\balpha|=1}\frac{\exp(\bein)}{2^{\alpha_5}\balpha!}(\doijn)^{\balpha_1}(\dajn)^{\balpha_2}(\deijn)^{\balpha_3}(\dtijn)^{\alpha_4}\\
    &\times (\dvijn)^{\alpha_5}\bx^{\balpha_1+\balpha_2+\balpha_3}\exp((\oj)^{\top}\bx)\exp((\aj)^{\top}\bx)\frac{\partial^{|\balpha_3|+\alpha_4+2\alpha_5}\pi}{\partial\xi^{|\balpha_3|+\alpha_4+2\alpha_5}}(y|(\ej)^{\top}\bx+\tj,\vj)
    \\
    &\hspace{12cm}+R_{n,1,1}(\bx)\\
    &=\sum_{j_2:|\mathcal{V}_{j_2|j_1}|=1}\sum_{|\brho_1|+\brho_2=1}^{2}S_{n,j_2|j_1,\brho_1,\rho_2}\cdot \bx^{\brho_1}\cdot\exp((\oj)^{\top}\bx)\exp((\aj)^{\top}\bx)\\
    &\hspace{6cm}\times\frac{\partial^{\rho_2}\pi}{\partial\xi^{\rho_2}}(y|(\ej)^{\top}\bx+\tj,\vj)+R_{n,1,1}(\bx),
\end{align*}
where $R_{n,1,1}(\bx)$ is a Taylor remainder satisfying $R_{n,1,1}(\bx)/\mathcal{L}_{1n}\to0$ as $n\to\infty$, and
\begin{align*}
    S_{n,j_2|j_1,\brho_1,\rho_2}&:=\sum_{i_2\in\mathcal{V}_{j_2|j_1}}\sum_{(\balpha_1,\balpha_2,\balpha_3,\alpha_4,\alpha_5)\in\mathcal{I}^{SS}_{\brho_1,\rho_2}}\frac{\exp(\bein)}{2^{\alpha_5}\balpha!}(\doijn)^{\balpha_1}(\dajn)^{\balpha_2}(\deijn)^{\balpha_3}\\
    &\hspace{7cm}\times(\dtijn)^{\alpha_4}(\dvijn)^{\alpha_5},
\end{align*}
for any $(\brho_1,\rho_2)\neq(\zerod,0)$ and $j_1\in[k^*_1],j_2\in[k^*_2]$ in which 
\begin{align*}
    \mathcal{I}^{SS}_{\brho_1,\rho_2}:=\{(\balpha_1,\balpha_2,\balpha_3,\alpha_4,\alpha_5)\in\mathbb{R}^d\times\mathbb{R}^d\times\mathbb{R}^d\times\mathbb{R}:\balpha_1+\balpha_2+\balpha_3=\brho_1, |\balpha_3|+\alpha_4+2\alpha_5=\rho_2\}.
\end{align*}
\noindent
For each $(j_1,j_2)\in[k^*_1]\times[k^*_2]$, by applying the Taylor expansion of order $r^{SS}(|\mathcal{V}_{j_2|j_1}|):=\hrj$, we can represent the term $A_{n,j_1,1,2}$ as
\begin{align*}
    &A_{n,j_1,1,2}=\sum_{j_2:|\mathcal{V}_{j_2|j_1}|>1}\sum_{|\brho_1|+\brho_2=1}^{2\hrj}S_{n,j_2|j_1,\brho_1,\rho_2}\cdot \bx^{\brho_1}\cdot\exp((\oj)^{\top}\bx)\exp((\aj)^{\top}\bx)\\
    &\hspace{7cm}\times\frac{\partial^{\rho_2}\pi}{\partial\xi^{\rho_2}}(y|(\ej)^{\top}\bx+\tj,\vj)+R_{n,1,2}(\bx),
\end{align*}
where $R_{n,1,2}(\bx)$ is a Taylor remainder such that $R_{n,1,2}(\bx)/\mathcal{L}_{1n}\to0$ as $n\to\infty$.

\vspace{0.5 em}
\noindent
Subsequently, we rewrite the term $A_{n,j_1,2}$ as follows:
\begin{align*}
   &\sum_{j_2:|\mathcal{V}_{j_2|j_1}|=1}\sum_{i_2\in\mathcal{V}_{j_2|j_1}}\exp(\bein)\Big[\exp((\oin)^{\top}\bx)\nonumber-\exp((\oj)^{\top}\bx)\Big]\exp((\ajn)^{\top}\bx)p^{SS,n}_{j_1}(y|\bx)\\
    &+\sum_{j_2:|\mathcal{V}_{j_2|j_1}|>1}\sum_{i_2\in\mathcal{V}_{j_2|j_1}}\exp(\bein)\Big[\exp((\oin)^{\top}\bx)\nonumber-\exp((\oj)^{\top}\bx)\Big]\exp((\ajn)^{\top}\bx)p^{SS,n}_{j_1}(y|\bx)\\
    :&=A_{n,j_1,2,1}+A_{n,j_1,2,2}.
\end{align*}
By means of the first-order Taylor expansion, we have
\begin{align*}
    &A_{n,j_1,2,1}=\sum_{j_2:|\mathcal{V}_{j_2|j_1}|=1}\sum_{i_2\in\mathcal{V}_{j_2|j_1}}\sum_{|\bpsi|=1}\frac{\exp(\bein)}{\bpsi!}(\doijn)^{\bpsi}\\
    &\hspace{5cm}\times \bx^{\bpsi}\exp((\oj)^{\top}\bx)\exp((\ajn)^{\top}\bx)p^{SS,n}_{j_1}(y|\bx)+R_{n,2,1}(\bx),\\
    &=\sum_{j_2:|\mathcal{V}_{j_2|j_1}|=1}\sum_{|\bpsi|=1}T_{n,j_2|j_1,\bpsi}\cdot\bx^{\bpsi}\exp((\oj)^{\top}\bx)\exp((\ajn)^{\top}\bx)p^{SS,n}_{j_1}(y|\bx)+R_{n,2,1}(\bx),
\end{align*}
where $R_{n,2,1}(\bx)$ is a Taylor remainder such that $R_{n,2,1}(\bx)/\mathcal{L}_{1n}\to0$ as $n\to\infty$, and
\begin{align*}
    T_{n,j_2|j_1,\bpsi}:=\sum_{i_2\in\mathcal{V}_{j_2|j_1}}\frac{\exp(\bein)}{\bpsi!}(\doijn)^{\bpsi},
\end{align*}
for any $j_2\in[k^*_2]$ and $\bpsi\neq\zerod$.

\vspace{0.5 em}
\noindent
At the same time, we apply the second-order Taylor expansion to $A_{n,j_1,2,2}$:
\begin{align*}
    A_{n,j_1,2,2}&=\sum_{j_2:|\mathcal{V}_{j_2|j_1}|>1}\sum_{|\bpsi|=1}^{2}T_{n,j_2|j_1,\bpsi}\cdot\bx^{\bpsi}\exp((\oj)^{\top}\bx)\exp((\ajn)^{\top}\bx)p^{SS,n}_{j_1}(y|\bx)+R_{n,2,2}(\bx),
\end{align*}
where $R_{n,2,2}(\bx)$ is a Taylor remainder such that $R_{n,2,2}(\bx)/\mathcal{L}_{1n}\to0$ as $n\to\infty$.

\vspace{0.5 em}
\noindent
As a result, the term $A_{n}$ can be rewritten as
\begin{align}
    &A_{n}=\sum_{j_1=1}^{k^*_1}\sum_{j_2=1}^{k^*_2}\frac{\exp(\bjn)}{\sum_{j'_2=1}^{k^*_2}\exp((\boldsymbol{\omega}^*_{j'_2|j_1})^{\top}\bx+\beta^*_{j'_2|j_1})}\Bigg[\sum_{|\brho_1|+\brho_2=0}^{2\hrj}S_{n,j_2|j_1,\brho_1,\rho_2}\cdot\bx^{\brho_1}\cdot\exp((\oj)^{\top}\bx)\nonumber\\
    &\times \exp((\aj)^{\top}\bx)\frac{\partial^{\rho_2}\pi}{\partial\xi^{\rho_2}}(y|(\ej)^{\top}\bx+\tj,\vj)+R_{n,1,1}(\bx)+R_{n,1,2}(\bx)\nonumber\\
    \label{eq:decompose_An1_ss}
    &-\sum_{|\bpsi|=0}^{2}T_{n,j_2|j_1,\bpsi}\cdot\bx^{\bpsi}\exp((\oj)^{\top}\bx)\exp((\ajn)^{\top}\bx)p^{SS,n}_{j_1}(y|\bx)-R_{n,2,1}(\bx)-R_{n,2,2}(\bx)\Bigg],
\end{align}
where $S_{n,j_2|j_1,\brho_1,\rho_2}=T_{n,j_2|j_1,\bpsi}=\sum_{i_2\in\mathcal{V}_{j_2|j_1}}\exp(\bein)-\exp(\bej)$ for any $j_2\in[k^*_2]$ where $(\balpha_1,\brho_1,\rho_2)=(\zerod,\zerod,0)$ and $\bpsi=\zerod$.

\vspace{0.5 em}
\noindent
\textbf{Step 1B - Decompose $B_{n}$:} By invoking the first-order Taylor expansion, the term $B_{n}$ defined in equation~\eqref{eq:decompose_Qn_ss} can be rewritten as
\begin{align}
     \label{eq:decompose_Bn_ss}
    B_{n}&=\sum_{j_1=1}^{k^*_1}\exp(\bjn)\sum_{|\bgamma|=1}(\dajn)^{\bgamma}\cdot\bx^{\bgamma}\exp((\aj)^{\top}\bx)p^{SS}_{G_n}(y|\bx) +R_{n,3}(\bx),
\end{align}
where $R_{n,3}(\bx)$ is a Taylor remainder such that $R_{n,3}(\bx)/\mathcal{L}_{1n}\to0$ as $n\to\infty$.

\vspace{0.5 em}
\noindent
From the decomposition in equations~\eqref{eq:decompose_Qn_ss}, \eqref{eq:decompose_An1_ss} and \eqref{eq:decompose_Bn_ss}, we realize that $A_n$, $B_n$ and $C_n$ can be viewed as a combination of elements from the following set union:
\begin{align*}
   &\Bigg\{\bx^{\brho_1}\cdot\exp((\oj)^{\top}\bx)\exp((\aj)^{\top}\bx)\frac{\partial^{\rho_2}\pi}{\partial\xi^{\rho_2}}(y|(\ej)^{\top}\bx+\tj,\vj):j_1\in[k^*_1], \  j_2\in[k^*_2],\\
   &\hspace{9cm} 0\leq|\brho_1|+\rho_2\leq2\hrj\Bigg\}\\
   \cup~&\Bigg\{\frac{\bx^{\bpsi}\exp((\oj)^{\top}\bx)\exp((\ajn)^{\top}\bx)p^{SS,n}_{j_1}(y|\bx)}{\sum_{j'_2=1}^{k^*_2}\exp((\boldsymbol{\omega}^*_{j'_2|j_1})^{\top}\bx+\beta^*_{j'_2|j_1})}:j_1\in[k^*_1], \ j_2\in[k^*_2], \ 0\leq|\bpsi|\leq2\Bigg\}\\
   \cup~&\left\{\bx^{\bgamma}\exp((\aj)^{\top}\bx)p^{SS,n}_{j_1}(y|\bx), \ \bx^{\bgamma}\exp((\aj)^{\top}\bx)p^{SS}_{G_n}(y|\bx): j_1\in[k^*_1], \ 0\leq|\bgamma|\leq1\right\}.
\end{align*}
\noindent
\textbf{Step 2 - Non-vanishing coefficients:} In this stage, we show that not all the coefficients in the representation of $A_n/\mathcal{L}_{1n}$, $B_n/\mathcal{L}_{1n}$ and $C_n/\mathcal{L}_{1n}$ go to zero as $n\to\infty$. Assume that all of them approach zero, then by looking into the coefficients associated with the term 
\begin{itemize}
    \item $\exp((\aj)^{\top}\bx)p^{SS,n}_{j_1}(y|\bx)$ in $C_n/\mathcal{L}_{1n}$, we have
    \begin{align}
        \label{eq:limit_bias_1_ss}
        \frac{1}{\mathcal{L}_{1n}}\cdot\sum_{j_1=1}^{k^*_1}\Big|\exp(\bjn)-\exp(\bj)\Big|\to0.
    \end{align}
    \item $\dfrac{\exp((\oj)^{\top}\bx)\exp((\aj)^{\top}\bx)\pi(y|(\ej)^{\top}\bx+\tj,\vj)}{\sum_{j'_2=1}^{k^*_2}\exp((\boldsymbol{\omega}^*_{j'_2|j_1})^{\top}\bx+\beta^*_{j'_2|j_1})}$ in $A_{n}/\mathcal{L}_{1n}$, we get that
    \begin{align}
        \label{eq:limit_bias_2_ss}
        \frac{1}{\mathcal{L}_{1n}}\cdot\sum_{j_1=1}^{k^*_1}\exp(\bjn)\sum_{j_2=1}^{k^*_2}\Big|\sum_{i_2\in\mathcal{V}_{j_2|j_1}}\exp(\bein)-\exp(\bej)\Big|\to0.
    \end{align}
    \item $\dfrac{\bx^{\bpsi}\exp((\oj)^{\top}\bx)\exp((\ajn)^{\top}\bx)p^{SS,n}_{j_1}(y|\bx)}{\sum_{j'_2=1}^{k^*_2}\exp((\boldsymbol{\omega}^*_{j'_2|j_1})^{\top}\bx+\beta^*_{j'_2|j_1})}$ in $A_{n}/\mathcal{L}_{1n}$ for $j_1\in[k^*_1],j_2\in[k^*_2]:|\mathcal{V}_{j_2|j_1}|=1$ and $\bpsi=e_{d,u}$ where $e_{d,u}:=(0,\ldots,0,\underbrace{1}_{\textit{u-th}},0,\ldots,0)\in\mathbb{N}^{d}$, we receive
    \begin{align*}
         \frac{1}{\mathcal{L}_{1n}}\cdot\sum_{j_1=1}^{k^*_1}\exp(\bjn)\sum_{j_2\in[k^*_2]:|\mathcal{V}_{j_2|j_1}|=1}\sum_{i_2\in\mathcal{V}_{j_2|j_1}}\exp(\bein)\|\oin-\oj\|_1\to0.
    \end{align*}
    Note that since the norm-1 is equivalent to the norm-2, then we can replace the norm-1 with the norm-2, that is,
    \begin{align}
        \label{eq:limit_exact_1_ss}
        \frac{1}{\mathcal{L}_{1n}}\cdot\sum_{j_1=1}^{k^*_1}\exp(\bjn)\sum_{j_2\in[k^*_2]:|\mathcal{V}_{j_2|j_1}|=1}\sum_{i_2\in\mathcal{V}_{j_2|j_1}}\exp(\bein)\|\oin-\oj\|\to0.
    \end{align}
    \item $\dfrac{\exp((\oj)^{\top}\bx)\exp((\aj)^{\top}\bx)\frac{\partial^{\rho_2}\pi}{\partial\xi^{\rho_2}}(y|(\ej)^{\top}\bx+\tj,\vj)}{\sum_{j'_2=1}^{k^*_2}\exp((\boldsymbol{\omega}^*_{j'_2|j_1})^{\top}\bx+\beta^*_{j'_2|j_1})}$ in $A_{n}/\mathcal{L}_{1n}$ for $j_1\in[k^*_1],j_2\in[k^*_2]:|\mathcal{V}_{j_2|j_1}|=1$ and $\rho_2=1$, we have that
    \begin{align}
        \label{eq:limit_exact_3_ss}
         \frac{1}{\mathcal{L}_{1n}}\cdot\sum_{j_1=1}^{k^*_1}\exp(\bjn)\sum_{j_2\in[k^*_2]:|\mathcal{V}_{j_2|j_1}|=1}\exp(\bejn)|\tjn-\tj|\to0.
    \end{align}
    \item $\dfrac{\bx^{\brho_1}\exp((\oj)^{\top}\bx)\exp((\aj)^{\top}\bx)\frac{\partial^{\rho_2}\pi}{\partial\xi^{\rho_2}}(y|(\ej)^{\top}\bx+\tj,\vj)}{\sum_{j'_2=1}^{k^*_2}\exp((\boldsymbol{\omega}^*_{j'_2|j_1})^{\top}\bx+\beta^*_{j'_2|j_1})}$ in $A_{n}/\mathcal{L}_{1n}$ for $j_1\in[k^*_1],j_2\in[k^*_2]:|\mathcal{V}_{j_2|j_1}|=1$, $\brho_1=e_{d,u}$ and $\rho_2=1$, we have that
    \begin{align}
        \label{eq:limit_exact_6_ss}
         \frac{1}{\mathcal{L}_{1n}}\cdot\sum_{j_1=1}^{k^*_1}\exp(\bjn)\sum_{j_2\in[k^*_2]:|\mathcal{V}_{j_2|j_1}|=1}\sum_{i_2\in\mathcal{V}_{j_2|j_1}}\exp(\bejn)\|\ein-\ej\|\to0.
    \end{align}
    \item $\dfrac{\exp((\oj)^{\top}\bx)\exp((\aj)^{\top}\bx)\frac{\partial^{\rho_2}\pi}{\partial\xi^{\rho_2}}(y|(\ej)^{\top}\bx+\tj,\vj)}{\sum_{j'_2=1}^{k^*_2}\exp((\boldsymbol{\omega}^*_{j'_2|j_1})^{\top}\bx+\beta^*_{j'_2|j_1})}$ in $A_{n}/\mathcal{L}_{1n}$ for $j_1\in[k^*_1],j_2\in[k^*_2]:|\mathcal{V}_{j_2|j_1}|=1$ and $\rho_2=2$, we have that
    \begin{align}
        \label{eq:limit_exact_7_ss}
         \frac{1}{\mathcal{L}_{1n}}\cdot\sum_{j_1=1}^{k^*_1}\exp(\bjn)\sum_{j_2\in[k^*_2]:|\mathcal{V}_{j_2|j_1}|=1}\exp(\bejn)|\vjn-\vj|\to0.
    \end{align}
    \item $\bx^{\bgamma}\exp((\aj)^{\top}\bx)p^{SS}_{G_n}(y|\bx)$ in $B_n/\mathcal{L}_{1n}$ for $j_1\in[k^*_1]$ and $\bgamma=e_{d,u}$, we obtain
    \begin{align}
        \label{eq:limit_exact_4_ss}
        \frac{1}{\mathcal{L}_{1n}}\cdot\sum_{j_1=1}^{k^*_1}\exp(\bjn)\|\ajn-\aj\|\to0.
    \end{align}
     \item $\dfrac{\bx^{\bpsi}\exp((\oj)^{\top}\bx)\exp((\ajn)^{\top}\bx)p^{SS,n}_{j_1}(y|\bx)}{\sum_{j'_2=1}^{k^*_2}\exp((\boldsymbol{\omega}^*_{j'_2|j_1})^{\top}\bx+\beta^*_{j'_2|j_1})}$ in $A_{n}/\mathcal{L}_{1n}$ for $j_1\in[k^*_1],j_2\in[k^*_2]:|\mathcal{V}_{j_2|j_1}|>1$ and $\bpsi=2e_{d,u}$, we receive that
    \begin{align}
         \frac{1}{\mathcal{L}_{1n}}\cdot\sum_{j_1=1}^{k^*_1}\exp(\bjn)\sum_{j_2\in[k^*_2]:|\mathcal{V}_{j_2|j_1}|>1}\sum_{i_2\in\mathcal{V}_{j_2|j_1}}\exp(\bein)\|\oin-\oj\|^2\to0.
    \end{align}
\end{itemize}
Combine the above limits together with the loss $\mathcal{L}_{1n}$ in equation~\eqref{eq:loss_l1_ss}, it yields that 
\begin{align*}
    &\frac{1}{\mathcal{L}_{1n}}\cdot\sum_{j_1=1}^{k^*_1}\exp(\bjn)\Bigg[\sum_{j_2:|\mathcal{V}_{j_2|j_1}|>1}\sum_{i_2\in\mathcal{V}_{j_2|j_1}}\exp(\beta^n_{i_2|j_1})\Big(\|\deijn\|^{\frac{\hrj}{2}}+|\dtijn|^{\hrj}\\
    &\hspace{9cm}+|\dvijn|^{\frac{\hrj}{2}}\Big)\Bigg]\not\to0,
\end{align*}
which indicates that
\begin{align*}
    &\frac{1}{\mathcal{L}_{1n}}\cdot\sum_{j_1=1}^{k^*_1}\exp(\bjn)\Bigg[\sum_{j_2:|\mathcal{V}_{j_2|j_1}|>1}\sum_{i_2\in\mathcal{V}_{j_2|j_1}}\exp(\beta^n_{i_2|j_1})\Big(\|\doijn\|^{\hrj}+\|\dajn\|^{\hrj}\nonumber\\
    &\hspace{3cm}+\|\deijn\|^{\frac{\hrj}{2}}+|\dtijn|^{\hrj}+|\dvijn|^{\frac{\hrj}{2}}\Big)\Bigg]\not\to0,
\end{align*}
as $n\to\infty$. Therefore, there exist indices $j^*_1\in[k^*_1]$ and $j^*_2\in[k^*_2]:|\mathcal{V}_{j^*_2|j^*_1}|>1$ such that 
\begin{align}
    &\frac{1}{\mathcal{L}_{1n}}\cdot\sum_{i_2\in\mathcal{V}_{j^*_2|j^*_1}}\exp(\beta^n_{i_2|j^*_1})\Big(\|\boldsymbol{\omega}^n_{i_2|j^*_1}-\boldsymbol{\omega}^*_{j^*_2|j^*_1}\|^{r^{SS}_{j^*_2|j^*_1}}+\|\boldsymbol{a}^n_{j^*_1}-\boldsymbol{a}^*_{j^*_1}\|^{r^{SS}_{j^*_2|j^*_1}}+\|\boldsymbol{\eta}^n_{j^*_1i_2}-\boldsymbol{\eta}^*_{j^*_1j^*_2}\|^{\frac{r^{SS}_{j^*_2|j^*_1}}{2}}\nonumber\\
    \label{eq:non_zero_denominator_ss}
    &\hspace{4cm}+|\tau^n_{j^*_1i_2}-\tau^*_{j^*_1j^*_2}|^{r^{SS}_{j^*_2|j^*_1}}+|\nu^n_{j^*_1i_2}-\nu^*_{j^*_1j^*_2}|^{\frac{r^{SS}_{j^*_2|j^*_1}}{2}}\Big)\not\to0.
\end{align}
WLOG, we may assume that $j^*_1=j^*_2=1$. By examining the coefficients of the terms 
\begin{align*}
    \dfrac{\bx^{\brho_1}\exp((\oj)^{\top}\bx)\exp((\aj)^{\top}\bx)\frac{\partial^{\rho_2}\pi}{\partial\xi^{\rho_2}}(y|(\ej)^{\top}\bx+\tj,\vj)}{\sum_{j'_2=1}^{k^*_2}\exp((\boldsymbol{\omega}^*_{j'_2|j_1})^{\top}\bx+\beta^*_{j'_2|j_1})}
\end{align*}
in $A_{n}/\mathcal{L}_{1n}$ for $j_1=j_2=1$, we have $\exp(b^n_1)S_{n,1|1,\zerod,\brho_1,\rho_2}/\mathcal{L}_{1n}\to0$, or equivalently,
\begin{align}
    &\frac{1}{\mathcal{L}_{1n}}\cdot\sum_{i_2\in\mathcal{V}_{1|1}}\sum_{(\balpha_1,\balpha_2,\balpha_3,\alpha_4,\alpha_5)\in\mathcal{I}^{SS}_{\brho_1,\rho_2}}\frac{\exp(\beta^n_{i_2|1})}{2^{\alpha_5}\balpha!}\cdot(\Delta\boldsymbol{\omega}^n_{1i_21})^{\balpha_1}(\Delta\boldsymbol{a}^n_{1})^{\balpha_2}(\Delta\boldsymbol{\eta}^n_{1i_21})^{\balpha_3}\nonumber\\
    \label{eq:zero_one_limit_ss}
    &\hspace{7cm}\times(\Delta\tau^n_{1i_21})^{\alpha_4}(\Delta\nu^n_{1i_21})^{\alpha_5}\to0.
\end{align}
By dividing the left hand side of equation~\eqref{eq:zero_one_limit_ss} by that of equation~\eqref{eq:non_zero_denominator_ss}, we get 
\begin{align}
    \label{eq:ratio_before_limit_ss}
    \dfrac{\sum_{i_2\in\mathcal{V}_{1|1}}\sum_{(\balpha_1,\balpha_2,\balpha_3,\alpha_4,\alpha_5)\in\mathcal{I}^{SS}_{\brho_1,\rho_2}}\frac{\exp(\beta^n_{i_2|1})}{2^{\alpha_5}\balpha!}\cdot(\Delta\boldsymbol{\omega}^n_{1i_21})^{\balpha_1}(\Delta\boldsymbol{a}^n_{1})^{\balpha_2}(\Delta\boldsymbol{\eta}^n_{1i_21})^{\balpha_3}(\Delta\tau^n_{1i_21})^{\alpha_4}(\Delta\nu^n_{1i_21})^{\alpha_5}}{\sum_{i_2\in\mathcal{V}_{1|1}}\exp(\beta^n_{i_2|1})\Big(\|\Delta\boldsymbol{\omega}^n_{1i_21}\|^{\hrone}+\|\Delta\boldsymbol{a}^n_{1}\|^{\hrone}+\|\Delta\boldsymbol{\eta}^n_{1i_2i}\|^{\frac{\hrone}{2}}+|\Delta\tau^n_{1i_21}|^{\hrone}+|\Delta\nu^n_{1i_21}|^{\frac{\hrone}{2}}\Big)}\to0.
\end{align}
Let us define $\overline{M}_n:=\max\{\|\Delta\boldsymbol{\omega}^n_{1i_21}\|,\|\Delta\boldsymbol{a}^n_{1}\|, \|\Delta\boldsymbol{\eta}^n_{1i_21}\|^{1/2}, \|\Delta\tau^n_{1i_21}\|, \|\Delta\nu^n_{1i_21}\|^{1/2}:i_2\in\mathcal{V}_{1|1}\}$, and $\overline{\beta}_n:=\max_{i_2\in\mathcal{V}_{1|1}}\exp(\beta^n_{i_2|1})$. Since the sequence $\exp(\beta^n_{i_2|1})/\overline{\beta}_n$ is bounded, we can replace it by its subsequence which has a positive limit $p^2_{i_2}:=\lim_{n\to\infty}\exp(\beta^n_{i_2|1})/\overline{\beta}_n$. Note that at least one among the limits $p^2_{i_2}$ must be equal to one. Next, let us define
\begin{align*}
    (\Delta\boldsymbol{\omega}^n_{1i_21})/\overline{M}\to \boldsymbol{q}_{1i_2}& \quad (\Delta\boldsymbol{a}^n_{1})/\overline{M}_n\to \boldsymbol{q}_{2},&  (\Delta\boldsymbol{\eta}^n_{1i_21})/\overline{M}_n\to \boldsymbol{q}_{3i_2},\\
    (\Delta\tau^n_{1i_21})/\overline{M}_n\to q_{4i_2},& \quad (\Delta\nu^n_{1i_21})/2\overline{M}_n\to q_{5i_2} & \quad.
\end{align*}
\noindent
Note that at least one among $\boldsymbol{q}_{1i_2},\boldsymbol{q}_{2},\boldsymbol{q}_{3i_2},q_{4i_2},q_{5i_2}$ must be equal to either 1 or $-1$. 

\vspace{0.5 em}
\noindent
By dividing both the numerator and the denominator of the term in equation~\eqref{eq:ratio_before_limit_ss} by $\overline{\beta}_n\overline{M}^{|\brho_1|+\rho_2}_n$, we obtain the system of polynomial equations:
\begin{align*}
    \sum_{i_2\in\mathcal{V}_{1|1}}\sum_{(\balpha_1,\balpha_2,\balpha_3,\alpha_4,\alpha_5)\in\mathcal{I}^{SS}_{\brho_1,\rho_2}}\frac{1}{\balpha!}\cdot p^2_{i_2}\boldsymbol{q}_{1i_2}^{\balpha_1}\boldsymbol{q}_{2}^{\balpha_2}\boldsymbol{q}_{3i_2}^{\balpha_3}q_{4i_2}^{\alpha_4}q_{5i_2}^{\alpha_5}=0, \quad 1\leq|\brho_1|+\rho_2\leq\hrone.   
\end{align*}
According to the definition of the term $\hrone$, the above system does not have any non-trivial solutions, which is a contradiction. Consequently, at least one among the coefficients in the representation of $A_n/\mathcal{L}_{1n}$, $B_n/\mathcal{L}_{1n}$ and $C_n/\mathcal{L}_{1n}$ must not converge to zero as $n\to\infty$.

\vspace{0.5 em}
\noindent
\textbf{Step 3 - Application of the Fatou's lemma.} In this stage, we show that all the coefficients in the formulations of $A_n/\mathcal{L}_{1n}$, $B_n/\mathcal{L}_{1n}$ and $C_n/\mathcal{L}_{1n}$ go to zero as $n\to\infty$. Denote by $m_n$ the maximum of the absolute values of those coefficients, the result from Step 2 induces that $1/m_n\not\to\infty$. By employing the Fatou's lemma, we have
\begin{align*}
    0=\lim_{n\to\infty}\dfrac{\bbE_{\bbX}[V(p^{SS}_{G_n}(\cdot|\bbX),p^{SS}_{G_*}(\cdot|\bbX))]}{m_n\mathcal{L}_{1n}}\geq\int\liminf_{n\to\infty}\dfrac{|p^{SS}_{G_n}(y|\bx)-p^{SS}_{G_*}(y|\bx)|}{2m_n\mathcal{L}_{1n}}\dint(\bx,y).
\end{align*}
Thus, we deduce that 
\begin{align*}
    \dfrac{|p^{SS}_{G_n}(y|\bx)-p^{SS}_{G_*}(y|\bx)|}{2m_n\mathcal{L}_{1n}}\to0,
\end{align*}
which results in $Q_n/[m_n\mathcal{L}_{1n}]\to0$ as $n\to\infty$ for almost surely $(\bx,y)$. Next, we denote
\begin{align*}
    &\frac{\exp(\bjn)S_{n,j_2|j_1,\brho_1,\rho_2}}{m_n\mathcal{L}_{1n}}\to \phi_{j_2|j_1,\brho_1,\rho_2}, \quad &\frac{\exp(\bjn)T_{n,j_2|j_1,\bpsi}}{m_n\mathcal{L}_{1n}}\to\varphi_{j_2|j_1,\bpsi},\\
    &\frac{\exp(\bjn)(\dajn)^{\bgamma}}{m_n\mathcal{L}_{1n}}\to\lambda_{j_1,\bgamma}, \quad &\frac{\exp(\bjn)-\exp(\bj)}{m_n\mathcal{L}_{1n}}\to\chi_{j_1}
\end{align*}
with a note that at least one among them is non-zero. Then, the decomposition of $Q_n$ in equation~\eqref{eq:decompose_Qn_ss} indicates that
\begin{align*}
    \lim_{n\to\infty}\frac{Q_n}{m_n\mathcal{L}_{1n}}=\lim_{n\to\infty}\frac{A_{n}}{m_n\mathcal{L}_{1n}}-\lim_{n\to\infty}\frac{B_n}{m_n\mathcal{L}_{1n}}+\lim_{n\to\infty}\frac{C_n}{m_n\mathcal{L}_{1n}},
\end{align*}
in which 
\begin{align*}
    &\lim_{n\to\infty}\frac{A_{n}}{m_n\mathcal{L}_{1n}}=\sum_{j_1=1}^{k^*_1}\sum_{j_2=1}^{k^*_2}\Bigg[\sum_{|\brho_1|+\brho_2=0}^{2\hrj}S_{n,j_2|j_1,\brho_1,\rho_2}\cdot\bx^{\brho_1}\exp((\oj)^{\top}\bx)\\
    &\times\exp((\aj)^{\top}\bx)\frac{\partial^{\rho_2}\pi}{\partial\xi^{\rho_2}}(y|(\ej)^{\top}\bx+\tj,\vj)-\sum_{|\bpsi|=0}^{2}\varphi_{j_2|j_1,\bpsi}\cdot\bx^{\bpsi}\exp((\oj)^{\top}\bx)\nonumber\\
    &\hspace{3cm}\times\exp((\aj)^{\top}\bx)p^{SS,*}_{j_1}(y|\bx)\Bigg]\frac{1}{\sum_{j'_2=1}^{k^*_2}\exp((\boldsymbol{\omega}^*_{j'_2|j_1})^{\top}\bx+\beta^*_{j'_2|j_1})},\\
    &\lim_{n\to\infty}\frac{B_{n}}{m_n\mathcal{L}_{1n}}=\sum_{j_1=1}^{k^*_1}\sum_{|\gamma|=1}\lambda_{j_1,\bgamma}\cdot \bx^{\bgamma}\exp((\aj)^{\top}\bx)p^{SS}_{G_*}(y|\bx),\\
    &\lim_{n\to\infty}\frac{C_{n}(\bx)}{m_n\mathcal{L}_{1n}}=\sum_{j_1=1}^{k^*_1}\chi_{j_1}\exp((\aj)^{\top}\bx)\left[p^{SS,*}_{j_1}(y|\bx)-p^{SS}_{G_*}(y|\bx)\right].
\end{align*}
Since the set
\begin{align*}
    &\Bigg\{\dfrac{\bx^{\brho_1}\exp((\oj)^{\top}\bx)\exp((\aj)^{\top}\bx)\frac{\partial^{\rho_2}\pi}{\partial\xi^{\rho_2}}(y|(\ej)^{\top}\bx+\tj,\vj)}{\sum_{j'_2=1}^{k^*_2}\exp((\boldsymbol{\omega}^*_{j'_2|j_1})^{\top}\bx+\beta^*_{j'_2|j_1})}:j_1\in[k^*_1], j_2\in[k^*_2],\\
    &\hspace{8cm}0\leq|\brho_1|+\rho_2\leq2\hrj\Bigg\}\\
    \cup&~\Bigg\{\dfrac{\bx^{\bpsi}\exp((\oj)^{\top}\bx)\exp((\aj)^{\top}\bx)p^{SS,*}_{j_1}(y|\bx)}{\sum_{j'_2=1}^{k^*_2}\exp((\boldsymbol{\omega}^*_{j'_2|j_1})^{\top}\bx+\beta^*_{j'_2|j_1})}:j_1\in[k^*_1], j_2\in[k^*_2],0\leq|\bpsi|\leq2\Bigg\}\\
    \cup&~\Big\{\bx^{\bgamma}\exp((\aj)^{\top}\bx)p^{SS}_{G_*}(y|\bx), \ \exp((\aj)^{\top}\bx)p^{SS,*}_{j_1}(y|\bx), \ \exp((\aj)^{\top}\bx)p^{SS}_{G_*}(y|\bx)\\
    &\hspace{8cm}:j_1\in[k^*_1],0\leq|\bgamma|\leq2\Big\}
\end{align*}
is linearly independent, we obtain that $\phi_{j_2|j_1,\brho_1,\rho_2}=\varphi_{j_2|j_1,\bpsi}=\lambda_{j_1,\bgamma}=\chi_{j_1}=0$ for all $j_1\in[k^*_1]$, $j_2\in[k^*_2]$, $0\leq|\brho_1|+\rho_2\leq2\hrj$, $0\leq|\bpsi|\leq 2$ and $0\leq|\bgamma|\leq 1$, which is a contradiction. As a consequence, we obtain the inequality in equation~\eqref{eq:general_local_inequality_over_ss}. Hence, the proof is completed.

\subsection{Proof of Theorem~\ref{theorem:param_rates_SL}: When $type=SL$}
\label{appendix:softmax_laplace}
When $type=SL$, the corresponding Voronoi loss function is $\mathcal{L}_{(\frac{1}{2}r^{SL},r^{SL},\frac{1}{2}r^{SL})}(G_n,G_*)=\mathcal{L}_{2n}$ where we define
\begin{align}
    &\mathcal{L}_{2n}:=\sum_{j_1=1}^{k^*_1}\Big|\exp(b^n_{j_1})-\exp(b^*_{j_1})\Big|+\sum_{j_1=1}^{k^*_1}\exp(b^n_{j_1})\|\dajn\|+\sum_{j_1=1}^{k^*_1}\exp(b^n_{j_1})\nonumber\\
    &\times\Bigg[\sum_{j_2:|\mathcal{V}_{j_2|j_1}|=1}\sum_{i_2\in\mathcal{V}_{j_2|j_1}}\exp(\beta^n_{i_2|j_1})\Big(\|\doijn\|+\|\deijn\|+|\dtijn|+|\dvijn|\Big)\nonumber\\
    &+\sum_{j_2:|\mathcal{V}_{j_2|j_1}|>1}\sum_{i_2\in\mathcal{V}_{j_2|j_1}}\exp(\beta^n_{i_2|j_1})\Big(\|\doijn\|^{2}+\|\deijn\|^{\frac{\brj}{2}}+|\dtijn|^{\brj}\nonumber\\
     \label{eq:loss_l2_sl}
    &+|\dvijn|^{\frac{\brj}{2}}\Big)\Bigg]+\sum_{j_1=1}^{k^*_1}\exp(b^n_{j_1})\sum_{j_2=1}^{k^*_2}\Big|\sum_{i_2\in\mathcal{V}_{j_2|j_1}}\exp(\beta^n_{i_2|j_1})-\exp(\beta^*_{j_2|j_1})\Big|.
\end{align}
\noindent
\textbf{Step 1 - Taylor expansion:} In this step, we use the Taylor expansion to decompose the term
\begin{align*}
    Q_n:=\left[\sum_{j_1=1}^{k^*_1}\exp((\aj)^{\top}\bx+\bj)\right][p^{SL}_{G_n}(y|\bx)-p^{SL}_{G_*}(y|\bx)].
\end{align*}
Prior to that, let us denote
\begin{align*}
    p^{SL,n}_{j_1}(y|\bx)&:=\sum_{j_2=1}^{k^*_2}\sum_{i_2\in\mathcal{V}_{j_2|j_1}}\softmax(-\|\oin-\bx\|+\beta^n_{i_2|j_1}) \pi(y|(\ein)^{\top}\bx+\tin,\vin),\\
    p^{SL,*}_{j_1}(y|\bx)&:=\sum_{j_2=1}^{k^*_2}\softmax(-\|\oj-\bx\|+\bej) \pi(y|(\ej)^{\top}\bx+\tj,\vj).
\end{align*}
Then, the quantity $Q_n$ is divided into three terms as
\begin{align}
    Q_n&=\sum_{j_1=1}^{k^*_1}\exp(\bjn)\left[\exp((\ajn)^{\top}\bx)p^{SL,n}_{j_1}(y|\bx)-\exp((\aj)^{\top}\bx)p^{SL,*}_{j_1}(y|\bx)\right]\nonumber\\
    &-\sum_{j_1=1}^{k^*_1}\exp(\bjn)\left[\exp((\ajn)^{\top}\bx)-\exp((\aj)^{\top}\bx)\right]p^{SL}_{G_n}(y|\bx)\nonumber\\
    &+\sum_{j_1=1}^{k^*_1}\left(\exp(\bjn)-\exp(\bj)\right)\exp((\aj)^{\top}\bx)\left[p^{SL,n}_{j_1}(y|\bx)-p^{SL}_{G_n}(y|\bx)\right]\nonumber\\
     \label{eq:decompose_Qn_sl}
    :&=A_n-B_n+C_n.
\end{align}
\textbf{Step 1A - Decompose $A_n$:} We continue to decompose $A_n$:
\begin{align*}
    A_n&:=\sum_{j_1=1}^{k^*_1}\frac{\exp(\bjn)}{\sum_{j'_2=1}^{k^*_2}\exp(-\|\boldsymbol{\omega}^*_{j'_2|j_1}-\bx\|+\beta^*_{j'_2|j_1})}[A_{n,j_1,1}+A_{n,j_1,2}+A_{n,j_1,3}],
\end{align*}
in which
\begin{align*}
    A_{n,j_1,1}&:=\sum_{j_2=1}^{k^*_2}\sum_{i_2\in\mathcal{V}_{j_2|j_1}}\exp(\beta^n_{i_2|j_1})\Big[\exp(-\|\oin-\bx\|)\exp((\ajn)^{\top}\bx)\pi(y|(\ein){\top}\bx+\tin,\vin)\nonumber\\
    &\hspace{3cm}-\exp(-\|\oj-\bx\|)\exp((\aj)^{\top}\bx)\pi(y|(\ej)^{\top}\bx+\tj,\vj)\Big],\\
    A_{n,j_1,2}&:=\sum_{j_2=1}^{k^*_2}\sum_{i_2\in\mathcal{V}_{j_2|j_1}}\exp(\beta^n_{i_2|j_1})\Big[\exp(-\|\oin-\bx\|)-\exp(-\|\oj-\bx\|)\Big]\nonumber\\
    &\hspace{7cm}\times\exp((\ajn)^{\top}\bx)p^{SL,n}_{j_1}(y|\bx),\nonumber\\
    A_{n,j_1,3}&:=\sum_{j_2=1}^{k^*_2}\Big(\sum_{i_2\in\mathcal{V}_{j_2|j_1}}\exp(\bein)-\exp(\bej)\Big)\exp(-\|\oj-\bx\|)\\
    &\hspace{1cm}\times[\exp((\aj)^{\top}\bx)\pi(y|(\ej)^{\top}\bx+\tj,\vj)-\exp((\ajn)^{\top}\bx)p^{SL,n}_{j_1}(y|\bx)].
\end{align*}
Based on the cardinality of the Voronoi cells $\mathcal{V}_{j_2|j_1}$, we proceed to divide the term $A_{n,j_1,1}$ into two parts as
\begin{align*}
    A_{n,j_1,1}&=\sum_{j_2:|\mathcal{V}_{j_2|j_1}|=1}\sum_{i_2\in\mathcal{V}_{j_2|j_1}}\exp(\bein)\Big[\exp(-\|\oin-\bx\|)\exp((\ajn)^{\top}\bx)\pi(y|(\ein){\top}\bx+\tin,\vin)\\
    &\hspace{3cm}-\exp(-\|\oj-\bx\|)\exp((\aj)^{\top}\bx)\pi(y|(\ej)^{\top}\bx+\tj,\vj)\Big],\\
    &+\sum_{j_2:|\mathcal{V}_{j_2|j_1}|>1}\sum_{i_2\in\mathcal{V}_{j_2|j_1}}\exp(\bein)\Big[\exp(-\|\oin-\bx\|)\exp((\ajn)^{\top}\bx)\pi(y|(\ein){\top}\bx+\tin,\vin)\\
    &\hspace{3cm}-\exp(-\|\oj-\bx\|)\exp((\aj)^{\top}\bx)\pi(y|(\ej)^{\top}\bx+\tj,\vj)\Big]\\
    :&=A_{n,j_1,1,1}+A_{n,j_1,1,2}.
\end{align*}
Let us denote $F(\bx;\boldsymbol{\omega}):=\exp(-\|\boldsymbol{\omega}-\bx\|)$ and $\xi(\boldsymbol{\eta},\tau)=\boldsymbol{\eta}^{\top}\bx+\tau$. By means of the first-order Taylor expansion, $A_{n,j_1,1,1}$ can be represented as
\begin{align*}
    &A_{n,j_1,1,1}=\sum_{j_2:|\mathcal{V}_{j_2|j_1}|=1}\sum_{i_2\in\mathcal{V}_{j_2|j_1}}\sum_{|\balpha|=1}\frac{\exp(\bein)}{2^{\alpha_5}\balpha!}(\doijn)^{\balpha_1}(\dajn)^{\balpha_2}(\deijn)^{\balpha_3}(\dtijn)^{\alpha_4}\\
    &\times (\dvijn)^{\alpha_5}\bx^{\balpha_2+\balpha_3}\frac{\partial^{|\balpha_1|}F}{\partial\boldsymbol{\omega}^{\balpha_1}}(\bx;\oj)\exp((\aj)^{\top}\bx)\frac{\partial^{|\balpha_3|+\alpha_4+2\alpha_5}\pi}{\partial\xi^{|\balpha_3|+\alpha_4+2\alpha_5}}(y|(\ej)^{\top}\bx+\tj,\vj)
    +R_{n,1,1}(\bx)\\
    &=\sum_{j_2:|\mathcal{V}_{j_2|j_1}|=1}\sum_{|\balpha_1|=0}^{1}\sum_{|\brho_1|+\brho_2=0\vee 1-|\balpha_1|}^{2(1-|\balpha_1|)}S_{n,j_2|j_1,\balpha_1,\brho_1,\rho_2}\cdot \bx^{\brho_1}\cdot\frac{\partial^{|\balpha_1|}F}{\partial\boldsymbol{\omega}^{\balpha_1}}(\bx;\oj)\exp((\aj)^{\top}\bx)\\
    &\hspace{7cm}\times\frac{\partial^{\rho_2}\pi}{\partial\xi^{\rho_2}}(y|(\ej)^{\top}\bx+\tj,\vj)+R_{n,1,1}(\bx),
\end{align*}
where $R_{n,1,1}(\bx)$ is a Taylor remainder such that $R_{n,1,1}(\bx)/\mathcal{L}_{2n}\to0$ as $n\to\infty$, and
\begin{align*}
    S_{n,j_2|j_1,\balpha_1,\brho_1,\rho_2}&:=\sum_{i_2\in\mathcal{V}_{j_2|j_1}}\sum_{(\balpha_2,\balpha_3,\alpha_4,\alpha_5)\in\mathcal{I}^{SL}_{\brho_1,\rho_2}}\frac{\exp(\bein)}{2^{\alpha_5}\balpha!}(\doijn)^{\balpha_1}(\dajn)^{\balpha_2}(\deijn)^{\balpha_3}\\
    &\hspace{7cm}\times(\dtijn)^{\alpha_4}(\dvijn)^{\alpha_5},
\end{align*}
for any $(\balpha_1,\brho_1,\rho_2)\neq(\zerod,\zerod,0)$ and $j_1\in[k^*_1],j_2\in[k^*_2]$ in which 
\begin{align*}
    \mathcal{I}^{SL}_{\brho_1,\rho_2}:=\{(\balpha_2,\balpha_3,\alpha_4,\alpha_5)\in\mathbb{R}^d\times\mathbb{R}^d\times\mathbb{R}^d\times\mathbb{R}:\balpha_2+\balpha_3=\brho_1, |\balpha_3|+\alpha_4+2\alpha_5=\rho_2\}.
\end{align*}
\noindent
For each $(j_1,j_2)\in[k^*_1]\times[k^*_2]$, by applying the Taylor expansion of order $r^{SL}(|\mathcal{V}_{j_2|j_1}|):=\brj$, the term $A_{n,j_1,1,2}$ can be rewritten as
\begin{align*}
    &A_{n,j_1,1,2}=\sum_{j_2:|\mathcal{V}_{j_2|j_1}|>1}\sum_{|\balpha_1|=1}^{\brj}\sum_{|\brho_1|+\brho_2=0\vee 1-|\balpha_1|}^{2(\brj-|\balpha_1|)}S_{n,j_2|j_1,\balpha_1,\brho_1,\rho_2}\cdot \bx^{\brho_1}\cdot\frac{\partial^{|\balpha_1|}F}{\partial\boldsymbol{\omega}^{\balpha_1}}(\bx;\oj)\exp((\aj)^{\top}\bx)\\
    &\hspace{7cm}\times\frac{\partial^{\rho_2}\pi}{\partial\xi^{\rho_2}}(y|(\ej)^{\top}\bx+\tj,\vj)+R_{n,1,2}(\bx),
\end{align*}
where $R_{n,1,2}(\bx)$ is a Taylor remainder such that $R_{n,1,2}(\bx)/\mathcal{L}_{2n}\to0$ as $n\to\infty$.

\vspace{0.5 em}
\noindent
Next, we rewrite the term $A_{n,j_1,2}$ as follows:
\begin{align*}
   &\sum_{j_2:|\mathcal{V}_{j_2|j_1}|=1}\sum_{i_2\in\mathcal{V}_{j_2|j_1}}\exp(\bein)\Big[\exp(-\|\oin-\bx\|)\nonumber-\exp(-\|\oj-\bx\|)\Big]\exp((\ajn)^{\top}\bx)p^{SL,n}_{j_1}(y|\bx)\\
    &+\sum_{j_2:|\mathcal{V}_{j_2|j_1}|>1}\sum_{i_2\in\mathcal{V}_{j_2|j_1}}\exp(\bein)\Big[\exp(-\|\oin-\bx\|)\nonumber-\exp(-\|\oj-\bx\|)\Big]\exp((\ajn)^{\top}\bx)p^{SL,n}_{j_1}(y|\bx)\\
    :&=A_{n,j_1,2,1}+A_{n,j_1,2,2}.
\end{align*}
By applying the first-order Taylor expansion, we have
\begin{align*}
    &A_{n,j_1,2,1}=\sum_{j_2:|\mathcal{V}_{j_2|j_1}|=1}\sum_{i_2\in\mathcal{V}_{j_2|j_1}}\sum_{|\bpsi|=1}\frac{\exp(\bein)}{\bpsi!}(\doijn)^{\bpsi}\\
    &\hspace{5cm}\times\frac{\partial^{|\bpsi|}F}{\partial\boldsymbol{\omega}^{\bpsi}}(\bx;\oj)\exp((\ajn)^{\top}\bx)p^{SL,n}_{j_1}(y|\bx)+R_{n,2,1}(\bx),\\
    &=\sum_{j_2:|\mathcal{V}_{j_2|j_1}|=1}\sum_{|\bpsi|=1}T_{n,j_2|j_1,\bpsi}\cdot\frac{\partial^{|\bpsi|}F}{\partial\boldsymbol{\omega}^{\bpsi}}(\bx;\oj)\exp((\ajn)^{\top}\bx)p^{SL,n}_{j_1}(y|\bx)+R_{n,2,1}(\bx),
\end{align*}
where $R_{n,2,1}(\bx)$ is a Taylor remainder such that $R_{n,2,1}(\bx)/\mathcal{L}_{2n}\to0$ as $n\to\infty$, and
\begin{align*}
    T_{n,j_2|j_1,\bpsi}:=\sum_{i_2\in\mathcal{V}_{j_2|j_1}}\frac{\exp(\bein)}{\bpsi!}(\doijn)^{\bpsi},
\end{align*}
for any $j_2\in[k^*_2]$ and $\bpsi\neq\zerod$.

\vspace{0.5 em}
\noindent
Meanwhile, we employ the second-order Taylor expansion to $A_{n,j_1,2,2}$:
\begin{align*}
    A_{n,j_1,2,2}&=\sum_{j_2:|\mathcal{V}_{j_2|j_1}|>1}\sum_{|\bpsi|=1}^{2}T_{n,j_2|j_1,\bpsi}\cdot\frac{\partial^{|\bpsi|}F}{\partial\boldsymbol{\omega}^{\bpsi}}(\bx;\oj)\exp((\ajn)^{\top}\bx)p^{SL,n}_{j_1}(y|\bx)+R_{n,2,2}(\bx),
\end{align*}
where $R_{n,2,2}(\bx)$ is a Taylor remainder such that $R_{n,2,2}(\bx)/\mathcal{L}_{2n}\to0$ as $n\to\infty$.

\vspace{0.5 em}
\noindent
As a result, the term $A_{n}$ can be rewritten as
\begin{align}
    &A_{n}=\sum_{j_1=1}^{k^*_1}\sum_{j_2=1}^{k^*_2}\frac{\exp(\bjn)}{\sum_{j'_2=1}^{k^*_2}\exp(-\|\boldsymbol{\omega}^*_{j'_2|j_1}-\bx\|+\beta^*_{j'_2|j_1})}\Bigg[\sum_{|\balpha_1|=0}^{\brj}\sum_{|\brho_1|+\brho_2=0\vee 1-|\balpha_1|}^{2(\brj-|\balpha_1|)}S_{n,j_2|j_1,\balpha_1,\brho_1,\rho_2}\nonumber\\
    &\times \bx^{\brho_1}\cdot\frac{\partial^{|\balpha_1|}F}{\partial\boldsymbol{\omega}^{\balpha_1}}(\bx;\oj)\exp((\aj)^{\top}\bx)\frac{\partial^{\rho_2}\pi}{\partial\xi^{\rho_2}}(y|(\ej)^{\top}\bx+\tj,\vj)+R_{n,1,1}(\bx)+R_{n,1,2}(\bx)\nonumber\\
    \label{eq:decompose_An1_sl}
    &-\sum_{|\bpsi|=0}^{2}T_{n,j_2|j_1,\bpsi}\cdot\frac{\partial^{|\bpsi|}F}{\partial\boldsymbol{\omega}^{\bpsi}}(\bx;\oj)\exp((\ajn)^{\top}\bx)p^{SL,n}_{j_1}(y|\bx)-R_{n,2,1}(\bx)-R_{n,2,2}(\bx)\Bigg],
\end{align}
where $S_{n,j_2|j_1,\balpha_1,\brho_1,\rho_2}=T_{n,j_2|j_1,\bpsi}=\sum_{i_2\in\mathcal{V}_{j_2|j_1}}\exp(\bein)-\exp(\bej)$ for any $j_2\in[k^*_2]$ where $(\balpha_1,\brho_1,\rho_2)=(\zerod,\zerod,0)$ and $\bpsi=\zerod$.

\vspace{0.5 em}
\noindent
\textbf{Step 1B - Decompose $B_{n}$:} By invoking the first-order Taylor expansion, we decompose the term $B_{n}$ defined in equation~\eqref{eq:decompose_Qn_sl} as
\begin{align}
     \label{eq:decompose_Bn_sl}
    B_{n}&=\sum_{j_1=1}^{k^*_1}\exp(\bjn)\sum_{|\bgamma|=1}(\dajn)^{\bgamma}\cdot\bx^{\bgamma}\exp((\aj)^{\top}\bx)p^{SL}_{G_n}(y|\bx) +R_{n,3}(\bx),
\end{align}
where $R_{n,3}(\bx)$ is a Taylor remainder such that $R_{n,3}(\bx)/\mathcal{L}_{2n}\to0$ as $n\to\infty$.

\vspace{0.5 em}
\noindent
It can be seen from the decomposition in equations~\eqref{eq:decompose_Qn_sl}, \eqref{eq:decompose_An1_sl} and \eqref{eq:decompose_Bn_sl} that $A_n$, $B_n$ and $C_n$ can be treated as a linear combination of elements from the following set union:
\begin{align*}
   &\Bigg\{\bx^{\brho_1}\cdot\frac{\partial^{|\balpha_1|}F}{\partial\boldsymbol{\omega}^{\balpha_1}}(\bx;\oj)\exp((\aj)^{\top}\bx)\frac{\partial^{\rho_2}\pi}{\partial\xi^{\rho_2}}(y|(\ej)^{\top}\bx+\tj,\vj):j_1\in[k^*_1], \  j_2\in[k^*_2],\\
   &\hspace{4cm}0\leq|\balpha_1|\leq\brj, \ 0\leq|\brho_1|+\rho_2\leq2(\brj-|\balpha_1|)\Bigg\}\\
   \cup~&\Bigg\{\frac{\frac{\partial^{|\bpsi|}F}{\partial\boldsymbol{\omega}^{\bpsi}}(\bx;\oj)\exp((\ajn)^{\top}\bx)p^{SL,n}_{j_1}(y|\bx)}{\sum_{j'_2=1}^{k^*_2}\exp(-\|\boldsymbol{\omega}^*_{j'_2|j_1}-\bx\|+\beta^*_{j'_2|j_1})}:j_1\in[k^*_1], \ j_2\in[k^*_2], \ 0\leq|\bpsi|\leq2\Bigg\}\\
   \cup~&\left\{\bx^{\bgamma}\exp((\aj)^{\top}\bx)p^{SL,n}_{j_1}(y|\bx), \ \bx^{\bgamma}\exp((\aj)^{\top}\bx)p^{SL}_{G_n}(y|\bx): j_1\in[k^*_1], \ 0\leq|\bgamma|\leq1\right\}.
\end{align*}

\vspace{0.5 em}
\noindent
\textbf{Step 2 - Non-vanishing coefficients:} In this stage, we illustrate that not all the coefficients in the representation of $A_n/\mathcal{L}_{2n}$, $B_n/\mathcal{L}_{2n}$ and $C_n/\mathcal{L}_{2n}$ go to zero as $n\to\infty$. Suppose that all of them approach zero, then we examine the coefficients associated with the term 
\begin{itemize}
    \item $\exp((\aj)^{\top}\bx)p^{SL,n}_{j_1}(y|\bx)$ in $C_n/\mathcal{L}_{2n}$, we have
    \begin{align}
        \label{eq:limit_bias_1_sl}
        \frac{1}{\mathcal{L}_{2n}}\cdot\sum_{j_1=1}^{k^*_1}\Big|\exp(\bjn)-\exp(\bj)\Big|\to0.
    \end{align}
    \item $\dfrac{F(\bx;\oj)\exp((\aj)^{\top}\bx)\pi(y|(\ej)^{\top}\bx+\tj,\vj)}{\sum_{j'_2=1}^{k^*_2}\exp(-\|\boldsymbol{\omega}^*_{j'_2|j_1}-\bx\|+\beta^*_{j'_2|j_1})}$ in $A_{n}/\mathcal{L}_{2n}$, we get that
    \begin{align}
        \label{eq:limit_bias_2_sl}
        \frac{1}{\mathcal{L}_{2n}}\cdot\sum_{j_1=1}^{k^*_1}\exp(\bjn)\sum_{j_2=1}^{k^*_2}\Big|\sum_{i_2\in\mathcal{V}_{j_2|j_1}}\exp(\bein)-\exp(\bej)\Big|\to0.
    \end{align}
    \item $\dfrac{\frac{\partial^{|\balpha_1|}F}{\partial\boldsymbol{\omega}^{\balpha_1}}(\bx;\oj)\exp((\ajn)^{\top}\bx)\pi(y|(\ej)^{\top}\bx+\tj,\vj)}{\sum_{j'_2=1}^{k^*_2}\exp(-\|\boldsymbol{\omega}^*_{j'_2|j_1}-\bx\|+\beta^*_{j'_2|j_1})}$ in $A_{n}/\mathcal{L}_{2n}$ for $j_1\in[k^*_1],j_2\in[k^*_2]:|\mathcal{V}_{j_2|j_1}|=1$ and $\balpha_1=e_{d,u}$ where $e_{d,u}:=(0,\ldots,0,\underbrace{1}_{\textit{u-th}},0,\ldots,0)\in\mathbb{N}^{d}$, we receive
    \begin{align*}
         \frac{1}{\mathcal{L}_{2n}}\cdot\sum_{j_1=1}^{k^*_1}\exp(\bjn)\sum_{j_2\in[k^*_2]:|\mathcal{V}_{j_2|j_1}|=1}\sum_{i_2\in\mathcal{V}_{j_2|j_1}}\exp(\bein)\|\oin-\oj\|_1\to0.
    \end{align*}
    Note that since the norm-1 is equivalent to the norm-2, then we can replace the norm-1 with the norm-2, that is,
    \begin{align}
        \label{eq:limit_exact_1_sl}
        \frac{1}{\mathcal{L}_{2n}}\cdot\sum_{j_1=1}^{k^*_1}\exp(\bjn)\sum_{j_2\in[k^*_2]:|\mathcal{V}_{j_2|j_1}|=1}\sum_{i_2\in\mathcal{V}_{j_2|j_1}}\exp(\bein)\|\oin-\oj\|\to0.
    \end{align}
    \item $\dfrac{F(\bx;\oj)\exp((\aj)^{\top}\bx)\frac{\partial^{\rho_2}\pi}{\partial\xi^{\rho_2}}(y|(\ej)^{\top}\bx+\tj,\vj)}{\sum_{j'_2=1}^{k^*_2}\exp(-\|\boldsymbol{\omega}^*_{j'_2|j_1}-\bx\|+\beta^*_{j'_2|j_1})}$ in $A_{n}/\mathcal{L}_{2n}$ for $j_1\in[k^*_1],j_2\in[k^*_2]:|\mathcal{V}_{j_2|j_1}|=1$ and $\rho_2=1$, we have that
    \begin{align}
        \label{eq:limit_exact_3_sl}
         \frac{1}{\mathcal{L}_{2n}}\cdot\sum_{j_1=1}^{k^*_1}\exp(\bjn)\sum_{j_2\in[k^*_2]:|\mathcal{V}_{j_2|j_1}|=1}\exp(\bejn)|\tjn-\tj|\to0.
    \end{align}
    \item $\dfrac{\bx^{\brho_1}F(\bx;\oj)\exp((\aj)^{\top}\bx)\frac{\partial^{\rho_2}\pi}{\partial\xi^{\rho_2}}(y|(\ej)^{\top}\bx+\tj,\vj)}{\sum_{j'_2=1}^{k^*_2}\exp(-\|\boldsymbol{\omega}^*_{j'_2|j_1}-\bx\|+\beta^*_{j'_2|j_1})}$ in $A_{n}/\mathcal{L}_{2n}$ for $j_1\in[k^*_1],j_2\in[k^*_2]:|\mathcal{V}_{j_2|j_1}|=1$, $\brho_1=e_{d,u}$ and $\rho_2=1$, we have that
    \begin{align}
        \label{eq:limit_exact_6_sl}
         \frac{1}{\mathcal{L}_{2n}}\cdot\sum_{j_1=1}^{k^*_1}\exp(\bjn)\sum_{j_2\in[k^*_2]:|\mathcal{V}_{j_2|j_1}|=1}\sum_{i_2\in\mathcal{V}_{j_2|j_1}}\exp(\bejn)\|\ein-\ej\|\to0.
    \end{align}
    \item $\dfrac{F(\bx;\oj)\exp((\aj)^{\top}\bx)\frac{\partial^{\rho_2}\pi}{\partial\xi^{\rho_2}}(y|(\ej)^{\top}\bx+\tj,\vj)}{\sum_{j'_2=1}^{k^*_2}\exp(-\|\boldsymbol{\omega}^*_{j'_2|j_1}-\bx\|+\beta^*_{j'_2|j_1})}$ in $A_{n}/\mathcal{L}_{2n}$ for $j_1\in[k^*_1],j_2\in[k^*_2]:|\mathcal{V}_{j_2|j_1}|=1$ and $\rho_2=2$, we have that
    \begin{align}
        \label{eq:limit_exact_7_sl}
         \frac{1}{\mathcal{L}_{2n}}\cdot\sum_{j_1=1}^{k^*_1}\exp(\bjn)\sum_{j_2\in[k^*_2]:|\mathcal{V}_{j_2|j_1}|=1}\exp(\bejn)|\vjn-\vj|\to0.
    \end{align}
    \item $\bx^{\bgamma}\exp((\aj)^{\top}\bx)p^{SL}_{G_n}(y|\bx)$ in $B_n/\mathcal{L}_{2n}$ for $j_1\in[k^*_1]$ and $\bgamma=e_{d,u}$, we obtain
    \begin{align}
        \label{eq:limit_exact_4_sl}
        \frac{1}{\mathcal{L}_{2n}}\cdot\sum_{j_1=1}^{k^*_1}\exp(\bjn)\|\ajn-\aj\|\to0.
    \end{align}
     \item $\dfrac{\frac{\partial^{|\balpha_1|}F}{\partial\boldsymbol{\omega}^{\balpha_1}}(\bx;\oj)\exp((\aj)^{\top}\bx)\pi(y|(\ej)^{\top}\bx+\tj,\vj)}{\sum_{j'_2=1}^{k^*_2}\exp(-\|\boldsymbol{\omega}^*_{j'_2|j_1}-\bx\|+\beta^*_{j'_2|j_1})}$ in $A_{n}/\mathcal{L}_{2n}$ for $j_1\in[k^*_1],j_2\in[k^*_2]:|\mathcal{V}_{j_2|j_1}|>1$ and $\balpha_1=2e_{d,u}$, we receive that
    \begin{align}
         \frac{1}{\mathcal{L}_{2n}}\cdot\sum_{j_1=1}^{k^*_1}\exp(\bjn)\sum_{j_2\in[k^*_2]:|\mathcal{V}_{j_2|j_1}|>1}\sum_{i_2\in\mathcal{V}_{j_2|j_1}}\exp(\bein)\|\oin-\oj\|^2\to0.
    \end{align}
\end{itemize}
Putting the above limits together with the formulation of the loss $\mathcal{L}_{2n}$ in equation~\eqref{eq:loss_l2_sl}, we deduce that 
\begin{align*}
    &\frac{1}{\mathcal{L}_{2n}}\cdot\sum_{j_1=1}^{k^*_1}\exp(\bjn)\Bigg[\sum_{j_2:|\mathcal{V}_{j_2|j_1}|>1}\sum_{i_2\in\mathcal{V}_{j_2|j_1}}\exp(\beta^n_{i_2|j_1})\Big(\|\deijn\|^{\frac{\brj}{2}}+|\dtijn|^{\brj}\\
    &\hspace{9cm}+|\dvijn|^{\frac{\brj}{2}}\Big)\Bigg]\not\to0,
\end{align*}
which also suggests that
\begin{align*}
    &\frac{1}{\mathcal{L}_{2n}}\cdot\sum_{j_1=1}^{k^*_1}\exp(\bjn)\Bigg[\sum_{j_2:|\mathcal{V}_{j_2|j_1}|>1}\sum_{i_2\in\mathcal{V}_{j_2|j_1}}\exp(\beta^n_{i_2|j_1})\Big(\|\dajn\|^{\brj}+\|\deijn\|^{\frac{\brj}{2}}\nonumber\\
    &\hspace{6cm}+|\dtijn|^{\brj}+|\dvijn|^{\frac{\brj}{2}}\Big)\Bigg]\not\to0,
\end{align*}
as $n\to\infty$. Thus, we can find indices $j^*_1\in[k^*_1]$ and $j^*_2\in[k^*_2]:|\mathcal{V}_{j^*_2|j^*_1}|>1$ such that 
\begin{align}
    &\frac{1}{\mathcal{L}_{2n}}\cdot\sum_{i_2\in\mathcal{V}_{j^*_2|j^*_1}}\exp(\beta^n_{i_2|j^*_1})\Big(\|\boldsymbol{a}^n_{j^*_1}-\boldsymbol{a}^*_{j^*_1}\|^{r^{SL}_{j^*_2|j^*_1}}+\|\boldsymbol{\eta}^n_{j^*_1i_2}-\boldsymbol{\eta}^*_{j^*_1j^*_2}\|^{\frac{r^{SL}_{j^*_2|j^*_1}}{2}}\nonumber\\
    \label{eq:non_zero_denominator}
    &\hspace{4cm}+|\tau^n_{j^*_1i_2}-\tau^*_{j^*_1j^*_2}|^{r^{SL}_{j^*_2|j^*_1}}+|\nu^n_{j^*_1i_2}-\nu^*_{j^*_1j^*_2}|^{\frac{r^{SL}_{j^*_2|j^*_1}}{2}}\Big)\not\to0.
\end{align}
WLOG, we may assume that $j^*_1=j^*_2=1$. By considering the coefficients of the terms 
\begin{align*}
    \dfrac{\bx^{\brho_1}F(\bx;\oj)\exp((\aj)^{\top}\bx)\frac{\partial^{\rho_2}\pi}{\partial\xi^{\rho_2}}(y|(\ej)^{\top}\bx+\tj,\vj)}{\sum_{j'_2=1}^{k^*_2}\exp(-\|\boldsymbol{\omega}^*_{j'_2|j_1}-\bx\|+\beta^*_{j'_2|j_1})}
\end{align*}
in $A_{n}/\mathcal{L}_{2n}$ for $j_1=j_2=1$, we have $\exp(b^n_1)S_{n,1|1,\zerod,\brho_1,\rho_2}/\mathcal{L}_{2n}\to0$, or equivalently,
\begin{align}
    &\frac{1}{\mathcal{L}_{2n}}\cdot\sum_{i_2\in\mathcal{V}_{1|1}}\sum_{(\balpha_2,\balpha_3,\alpha_4,\alpha_5)\in\mathcal{I}^{SL}_{\brho_1,\rho_2}}\frac{\exp(\beta^n_{i_2|1})}{2^{\alpha_5}\balpha_2!\balpha_3!\alpha_4!\alpha_5!}\cdot(\Delta\boldsymbol{a}^n_{1})^{\balpha_2}(\Delta\boldsymbol{\eta}^n_{1i_21})^{\balpha_3}\nonumber\\
    \label{eq:zero_one_limit}
    &\hspace{7cm}\times(\Delta\tau^n_{1i_21})^{\alpha_4}(\Delta\nu^n_{1i_21})^{\alpha_5}\to0.
\end{align}
By dividing the left hand side of equation~\eqref{eq:zero_one_limit} by that of equation~\eqref{eq:non_zero_denominator}, we get 
\begin{align}
    \label{eq:ratio_before_limit}
    \dfrac{\sum_{i_2\in\mathcal{V}_{1|1}}\sum_{(\balpha_2,\balpha_3,\alpha_4,\alpha_5)\in\mathcal{I}^{SL}_{\brho_1,\rho_2}}\frac{\exp(\beta^n_{i_2|1})}{2^{\alpha_5}\balpha_2!\balpha_3!\alpha_4!\alpha_5!}\cdot(\Delta\boldsymbol{a}^n_{1})^{\balpha_2}(\Delta\boldsymbol{\eta}^n_{1i_21})^{\balpha_3}(\Delta\tau^n_{1i_21})^{\alpha_4}(\Delta\nu^n_{1i_21})^{\alpha_5}}{\sum_{i_2\in\mathcal{V}_{1|1}}\exp(\beta^n_{i_2|1})\Big(\|\Delta\boldsymbol{a}^n_{1}\|^{\brone}+\|\Delta\boldsymbol{\eta}^n_{1i_2i}\|^{\frac{\brone}{2}}+|\Delta\tau^n_{1i_21}|^{\brone}+|\Delta\nu^n_{1i_21}|^{\frac{\brone}{2}}\Big)}\to0.
\end{align}
Let us define $\overline{M}_n:=\max\{\|\Delta\boldsymbol{a}^n_{1}\|, \|\Delta\boldsymbol{\eta}^n_{1i_2i}\|^{1/2}, \|\Delta\tau^n_{1i_21}\|, \|\Delta\nu^n_{1i_21}\|^{1/2}:i_2\in\mathcal{V}_{1|1}\}$, and $\overline{\beta}_n:=\max_{i_2\in\mathcal{V}_{1|1}}\exp(\beta^n_{i_2|1})$. Since the sequence $\exp(\beta^n_{i_2|1})/\overline{\beta}_n$ is bounded, we can replace it by its subsequence which has a positive limit $p^2_{i_2}:=\lim_{n\to\infty}\exp(\beta^n_{i_2|1})/\overline{\beta}_n$. Note that at least one among the limits $p^2_{i_2}$ must be equal to one. Next, let us define
\begin{align*}
    (\Delta\boldsymbol{a}^n_{1})/\overline{M}_n\to \boldsymbol{q}_{2},& \quad (\Delta\boldsymbol{\eta}^n_{1i_21})/\overline{M}_n\to \boldsymbol{q}_{3i_2},\\
    (\Delta\tau^n_{1i_21})/\overline{M}_n\to q_{4i_2},& \quad (\Delta\nu^n_{1i_21})/2\overline{M}_n\to q_{5i_2}.
\end{align*}
Note that at least one among $q_{2},q_{3i_2},q_{4i_2},q_{5i_2}$ must be equal to either 1 or $-1$. 

\vspace{0.5 em}
\noindent
By dividing both the numerator and the denominator of the term in equation~\eqref{eq:ratio_before_limit} by $\overline{\beta}_n\overline{M}^{|\brho_1|+\rho_2}_n$, we obtain the system of polynomial equations:
\begin{align*}
    \sum_{i_2\in\mathcal{V}_{1|1}}\sum_{(\balpha_2,\balpha_3,\alpha_4,\alpha_5)\in\mathcal{I}^{SL}_{\brho_1,\rho_2}}\frac{1}{\balpha_2!\balpha_3!\alpha_4!\alpha_5!}\cdot p^2_{i_2}\boldsymbol{q}_{2}^{\balpha_2}\boldsymbol{q}_{3i_2}^{\balpha_3}q_{4i_2}^{\alpha_4}q_{5i_2}^{\alpha_5}=0, \quad 1\leq|\brho_1|+\rho_2\leq\brone.   
\end{align*}
According to the definition of the term $\brone$, the above system does not have any non-trivial solutions, which is a contradiction. Consequently, at least one among the coefficients in the representation of $A_n/\mathcal{L}_{2n}$, $B_n/\mathcal{L}_{2n}$ and $C_n/\mathcal{L}_{2n}$ must not converge to zero as $n\to\infty$.

\vspace{0.5 em}
\noindent
\textbf{Step 3 - Application of the Fatou's lemma.} In this stage, we show that all the coefficients in the formulations of $A_n/\mathcal{L}_{2n}$, $B_n/\mathcal{L}_{2n}$ and $C_n/\mathcal{L}_{2n}$ go to zero as $n\to\infty$. Denote by $m_n$ the maximum of the absolute values of those coefficients, the result from Step 2 induces that $1/m_n\not\to\infty$. By employing the Fatou's lemma, we have
\begin{align*}
    0=\lim_{n\to\infty}\dfrac{\bbE_{\bbX}[V(p^{SL}_{G_n}(\cdot|\bbX),p^{SL}_{G_*}(\cdot|\bbX))]}{m_n\mathcal{L}_{2n}}\geq\int\liminf_{n\to\infty}\dfrac{|p^{SL}_{G_n}(y|\bx)-p^{SL}_{G_*}(y|\bx)|}{2m_n\mathcal{L}_{2n}}\dint(\bx,y).
\end{align*}
Thus, we deduce that 
\begin{align*}
    \dfrac{|p^{SL}_{G_n}(y|\bx)-p^{SL}_{G_*}(y|\bx)|}{2m_n\mathcal{L}_{2n}}\to0,
\end{align*}
which results in $Q_n/[m_n\mathcal{L}_{2n}]\to0$ as $n\to\infty$ for almost surely $(\bx,y)$. 
Next, we denote
\begin{align*}
    &\frac{\exp(\bjn)S_{n,j_2|j_1,\balpha_1,\brho_1,\rho_2}}{m_n\mathcal{L}_{2n}}\to \phi_{j_2|j_1,\balpha_1,\brho_1,\rho_2}, \quad &\frac{\exp(\bjn)T_{n,j_2|j_1,\bpsi}}{m_n\mathcal{L}_{2n}}\to\varphi_{j_2|j_1,\bpsi},\\
    &\frac{\exp(\bjn)(\dajn)^{\bgamma}}{m_n\mathcal{L}_{2n}}\to\lambda_{j_1,\bgamma}, \quad &\frac{\exp(\bjn)-\exp(\bj)}{m_n\mathcal{L}_{2n}}\to\chi_{j_1}
\end{align*}
with a note that at least one among them is non-zero. Then, the decomposition of $Q_n$ in equation~\eqref{eq:decompose_Qn_sl} indicates that
\begin{align*}
    \lim_{n\to\infty}\frac{Q_n}{m_n\mathcal{L}_{2n}}=\lim_{n\to\infty}\frac{A_{n}}{m_n\mathcal{L}_{2n}}-\lim_{n\to\infty}\frac{B_n}{m_n\mathcal{L}_{2n}}+\lim_{n\to\infty}\frac{C_n}{m_n\mathcal{L}_{2n}},
\end{align*}
in which 
\begin{align*}
    &\lim_{n\to\infty}\frac{A_{n}}{m_n\mathcal{L}_{2n}}=\sum_{j_1=1}^{k^*_1}\sum_{j_2=1}^{k^*_2}\Bigg[\sum_{|\balpha_1|=1}^{\brj}\sum_{|\brho_1|+\brho_2=0\vee 1-|\balpha_1|}^{2(\brj-|\balpha_1|)}S_{n,j_2|j_1,\balpha_1,\brho_1,\rho_2}\cdot\bx^{\brho_1}\frac{\partial^{|\balpha_1|}F}{\partial\boldsymbol{\omega}^{\balpha_1}}(\bx;\oj)\\
    &\times\exp((\aj)^{\top}\bx)\frac{\partial^{\rho_2}\pi}{\partial\xi^{\rho_2}}(y|(\ej)^{\top}\bx+\tj,\vj)-\sum_{|\bpsi|=0}^{2}\varphi_{j_2|j_1,\bpsi}\cdot\frac{\partial^{|\bpsi|}F}{\partial\boldsymbol{\omega}^{\bpsi}}(\bx;\oj)\nonumber\\
    &\hspace{3cm}\times\exp((\aj)^{\top}\bx)p^{SL,*}_{j_1}(y|\bx)\Bigg]\frac{1}{\sum_{j'_2=1}^{k^*_2}\exp(-\|\boldsymbol{\omega}^*_{j'_2|j_1}-\bx\|+\beta^*_{j'_2|j_1})},\\
    &\lim_{n\to\infty}\frac{B_{n}}{m_n\mathcal{L}_{2n}}=\sum_{j_1=1}^{k^*_1}\sum_{|\gamma|=1}\lambda_{j_1,\bgamma}\cdot \bx^{\bgamma}\exp((\aj)^{\top}\bx)p^{SL}_{G_*}(y|\bx),\\
    &\lim_{n\to\infty}\frac{C_{n}(\bx)}{m_n\mathcal{L}_{2n}}=\sum_{j_1=1}^{k^*_1}\chi_{j_1}\exp((\aj)^{\top}\bx)\left[p^{SL,*}_{j_1}(y|\bx)-p^{SL}_{G_*}(y|\bx)\right].
\end{align*}
Since the set
\begin{align*}
    &\Bigg\{\dfrac{\bx^{\brho_1}\frac{\partial^{|\balpha_1|}F}{\partial\boldsymbol{\omega}^{\balpha_1}}(\bx;\oj)\exp((\aj)^{\top}\bx)\frac{\partial^{\rho_2}\pi}{\partial\xi^{\rho_2}}(y|(\ej)^{\top}\bx+\tj,\vj)}{\sum_{j'_2=1}^{k^*_2}\exp(-\|\boldsymbol{\omega}^*_{j'_2|j_1}-\bx\|+\beta^*_{j'_2|j_1})}:j_1\in[k^*_1], j_2\in[k^*_2],\\
    &\hspace{6cm}0\leq|\balpha_1|\leq \brj,0\leq|\brho_1|+\rho_2\leq2(\brj-|\balpha_1|)\Bigg\}\\
    \cup&~\Bigg\{\dfrac{\frac{\partial^{|\bpsi|}F}{\partial\boldsymbol{\omega}^{\bpsi}}(\bx;\oj)\exp((\aj)^{\top}\bx)p^{SL,*}_{j_1}(y|\bx)}{\sum_{j'_2=1}^{k^*_2}\exp(-\|\boldsymbol{\omega}^*_{j'_2|j_1}-\bx\|+\beta^*_{j'_2|j_1})}:j_1\in[k^*_1], j_2\in[k^*_2],0\leq|\bpsi|\leq2\Bigg\}\\
    \cup&~\Big\{\bx^{\bgamma}\exp((\aj)^{\top}\bx)p^{SL}_{G_*}(y|\bx), \ \exp((\aj)^{\top}\bx)p^{SL,*}_{j_1}(y|\bx), \ \exp((\aj)^{\top}\bx)p^{SL}_{G_*}(y|\bx)\\
    &\hspace{8cm}:j_1\in[k^*_1],0\leq|\bgamma|\leq2\Big\}
\end{align*}
is linearly independent, we obtain that $\phi_{j_2|j_1,\balpha_1,\brho_1,\rho_2}=\varphi_{j_2|j_1,\bpsi}=\lambda_{j_1,\bgamma}=\chi_{j_1}=0$ for all $j_1\in[k^*_1]$, $j_2\in[k^*_2]$, $0\leq|\balpha_1|\leq \brj$, $0\leq|\brho_1|+\rho_2\leq2(\brj-|\balpha_1|)$, $0\leq|\bpsi|\leq 2$ and $0\leq|\bgamma|\leq 1$, which is a contradiction. As a consequence, we obtain the inequality in equation~\eqref{eq:general_local_inequality_over_ss}. Hence, the proof is completed.

\subsection{Proof of Theorem~\ref{theorem:param_rates_LL}:When $type=LL$}
\label{appendix:laplace_laplace}
When $type=LL$, the corresponding Voronoi loss function is $\mathcal{L}_{(2,r^{LL},\frac{1}{2}r^{LL})}(G_n,G_*)=\mathcal{L}_{3n}$ where we define
\begin{align}
    &\mathcal{L}_{3n}:=\sum_{j_1=1}^{k^*_1}\Big|\exp(b^n_{j_1})-\exp(b^*_{j_1})\Big|+\sum_{j_1=1}^{k^*_1}\exp(b^n_{j_1})\|\dajn\|+\sum_{j_1=1}^{k^*_1}\exp(b^n_{j_1})\nonumber\\
    &\times\Bigg[\sum_{j_2:|\mathcal{V}_{j_2|j_1}|=1}\sum_{i_2\in\mathcal{V}_{j_2|j_1}}\exp(\beta^n_{i_2|j_1})\Big(\|\doijn\|+\|\deijn\|+|\dtijn|+|\dvijn|\Big)\nonumber\\
    &+\sum_{j_2:|\mathcal{V}_{j_2|j_1}|>1}\sum_{i_2\in\mathcal{V}_{j_2|j_1}}\exp(\beta^n_{i_2|j_1})\Big(\|\doijn\|^{2}+\|\deijn\|^{2}+|\dtijn|^{\trj}\nonumber\\
     \label{eq:loss_l3_ll}
    &+|\dvijn|^{\frac{\trj}{2}}\Big)\Bigg]+\sum_{j_1=1}^{k^*_1}\exp(b^n_{j_1})\sum_{j_2=1}^{k^*_2}\Big|\sum_{i_2\in\mathcal{V}_{j_2|j_1}}\exp(\beta^n_{i_2|j_1})-\exp(\beta^*_{j_2|j_1})\Big|.
\end{align}


\vspace{0.5 em}
\noindent
\textbf{Step 1 - Taylor expansion:} In this step, we use the Taylor expansion to decompose the term
\begin{align*}
    Q_n:=\left[\sum_{j_1=1}^{k^*_1}\exp(-\|\aj-\bx\|+\bj)\right][p^{LL}_{G_n}(y|\bx)-p^{LL}_{G_*}(y|\bx)].
\end{align*}
Prior to that, let us denote
\begin{align*}
    p^{LL,n}_{j_1}(y|\bx)&:=\sum_{j_2=1}^{k^*_2}\sum_{i_2\in\mathcal{V}_{j_2|j_1}}\softmax(-\|\oin-\bx\|+\beta^n_{i_2|j_1}) \pi(y|(\ein)^{\top}\bx+\tin,\vin),\\
    p^{LL,*}_{j_1}(y|\bx)&:=\sum_{j_2=1}^{k^*_2}\softmax(-\|\oj-\bx\|+\bej) \pi(y|(\ej)^{\top}\bx+\tj,\vj).
\end{align*}
Then, the quantity $Q_n$ is divided into three terms as
\begin{align}
    Q_n&=\sum_{j_1=1}^{k^*_1}\exp(\bjn)\left[\exp(-\|\ajn-\bx\|)p^{LL,n}_{j_1}(y|\bx)-\exp(-\|\aj-\bx\|)p^{LL,*}_{j_1}(y|\bx)\right]\nonumber\\
    &-\sum_{j_1=1}^{k^*_1}\exp(\bjn)\left[\exp(-\|\ajn-\bx\|)-\exp(-\|\aj-\bx\|)\right]p^{LL}_{G_n}(y|\bx)\nonumber\\
    &+\sum_{j_1=1}^{k^*_1}\left(\exp(\bjn)-\exp(\bj)\right)\exp(-\|\aj-\bx\|)\left[p^{LL,n}_{j_1}(y|\bx)-p^{LL}_{G_n}(y|\bx)\right]\nonumber\\
     \label{eq:decompose_Qn_ll}
    :&=A_n-B_n+C_n.
\end{align}
\textbf{Step 1A - Decompose $A_n$:} We continue to decompose $A_n$:
\begin{align*}
    A_n&:=\sum_{j_1=1}^{k^*_1}\frac{\exp(\bjn)}{\sum_{j'_2=1}^{k^*_2}\exp(-\|\boldsymbol{\omega}^*_{j'_2|j_1}-\bx\|+\beta^*_{j'_2|j_1})}[A_{n,j_1,1}+A_{n,j_1,2}+A_{n,j_1,3}],
\end{align*}
in which
\begin{align*}
    A_{n,j_1,1}&:=\sum_{j_2=1}^{k^*_2}\sum_{i_2\in\mathcal{V}_{j_2|j_1}}\exp(\beta^n_{i_2|j_1})\Big[\exp(-\|\oin-\bx\|)\exp(-\|\ajn-\bx\|)\pi(y|(\ein){\top}\bx+\tin,\vin)\nonumber\\
    &\hspace{2cm}-\exp(-\|\oj-\bx\|)\exp(-\|\aj-\bx\|)\pi(y|(\ej)^{\top}\bx+\tj,\vj)\Big],\\
    A_{n,j_1,2}&:=\sum_{j_2=1}^{k^*_2}\sum_{i_2\in\mathcal{V}_{j_2|j_1}}\exp(\beta^n_{i_2|j_1})\Big[\exp(-\|\oin-\bx\|)-\exp(-\|\oj-\bx\|)\Big]\nonumber\\
    &\hspace{7cm}\times\exp(-\|\ajn-\bx\|)p^{LL,n}_{j_1}(y|\bx),\nonumber\\
    A_{n,j_1,3}&:=\sum_{j_2=1}^{k^*_2}\Big(\sum_{i_2\in\mathcal{V}_{j_2|j_1}}\exp(\bein)-\exp(\bej)\Big)\exp(-\|\oj-\bx\|)\\
    &\hspace{1cm}\times[\exp(-\|\aj-\bx\|)\pi(y|(\ej)^{\top}\bx+\tj,\vj)-\exp(-\|\ajn-\bx\|)p^{LL,n}_{j_1}(y|\bx)].
\end{align*}
Firstly, we separate the term $A_{n,j_1,1}$ into two parts based on the cardinality of the Voronoi cells $\mathcal{V}_{j_2|j_1}$ as
\begin{align*}
    A_{n,j_1,1}&=\sum_{j_2:|\mathcal{V}_{j_2|j_1}|=1}\sum_{i_2\in\mathcal{V}_{j_2|j_1}}\exp(\bein)\Big[\exp(-\|\oin-\bx\|)\exp(-\|\ajn-\bx\|)\pi(y|(\ein)^{\top}\bx+\tin,\vin)\\
    &\hspace{2cm}-\exp(-\|\oj-\bx\|)\exp(-\|\aj-\bx\|)\pi(y|(\ej)^{\top}\bx+\tj,\vj)\Big],\\
    &+\sum_{j_2:|\mathcal{V}_{j_2|j_1}|>1}\sum_{i_2\in\mathcal{V}_{j_2|j_1}}\exp(\bein)\Big[\exp(-\|\oin-\bx\|)\exp(-\|\ajn-\bx\|)\pi(y|(\ein)^{\top}\bx+\tin,\vin)\\
    &\hspace{2cm}-\exp(-\|\oj-\bx\|)\exp(-\|\aj-\bx\|)\pi(y|(\ej)^{\top}\bx+\tj,\vj)\Big]\\
    :&=A_{n,j_1,1,1}+A_{n,j_1,1,2}.
\end{align*}
By denoting $F(\bx;\boldsymbol{\omega}):=\exp(-\|\boldsymbol{\omega}-\bx\|)$ and employing the first-order Taylor expansion, we can represent $A_{n,j_1,1,1}$ as
\begin{align*}
    &A_{n,j_1,1,1}=\sum_{j_2:|\mathcal{V}_{j_2|j_1}|=1}\sum_{i_2\in\mathcal{V}_{j_2|j_1}}\sum_{|\balpha|=1}\frac{\exp(\bein)}{2^{\alpha_5!}\balpha!}(\doijn)^{\balpha_1}(\dajn)^{\balpha_2}(\deijn)^{\balpha_3}(\dtijn)^{\alpha_4}\\
    &\times(\dvijn)^{\alpha_5} \bx^{\balpha_3}\frac{\partial^{|\balpha_1|}F}{\partial\boldsymbol{\omega}^{\balpha_1}}(\bx;\oj)\frac{\partial^{|\balpha_2|}F}{\partial\boldsymbol{a}^{\balpha_2}}(\bx;\aj)\frac{\partial^{|\balpha_3|+\alpha_4+2\alpha_5}\pi}{\partial\xi^{|\balpha_3|+\alpha_4+2\alpha_5}}(y|(\ej)^{\top}\bx+\tj,\vj)
    +R_{n,1,1}(\bx)\\
    &=\sum_{j_2:|\mathcal{V}_{j_2|j_1}|=1}\sum_{|\balpha_1|+|\balpha_2|+|\balpha_3|=0}^{1}\sum_{\rho=0\vee1-|\balpha_1|-|\balpha_2|-|\balpha_3|}^{2(1-|\balpha_1|-|\balpha_2|-|\balpha_3|)}S_{n,j_2|j_1,\balpha_1,\balpha_2,\balpha_3,\rho}\cdot\bx^{\balpha_3} \frac{\partial^{|\balpha_1|}F}{\partial\boldsymbol{\omega}^{\balpha_1}}(\bx;\oj)\\
    &\hspace{4cm}\times\frac{\partial^{|\balpha_2|}F}{\partial\boldsymbol{a}^{\balpha_2}}(\bx;\aj)\frac{\partial^{|\balpha_3|+\rho}\pi}{\partial\xi^{|\balpha_3|+\rho}}(y|(\ej)^{\top}\bx+\tj,\vj)+R_{n,1,1}(\bx),
\end{align*}
where $R_{n,1,1}(\bx,y)$ is a Taylor remainder such that $R_{n,1,1}(\bx,y)/\mathcal{L}_{3n}\to0$ as $n\to\infty$, and
\begin{align*}
    S_{n,j_2|j_1,\balpha_1,\balpha_2,\balpha_3,\rho}&:=\sum_{i_2\in\mathcal{V}_{j_2|j_1}}\sum_{\alpha_4+2\alpha_5=\rho}\frac{\exp(\bein)}{2^{\alpha_5}\balpha!}(\doijn)^{\balpha_1}(\dajn)^{\balpha_2}(\deijn)^{\balpha_3}\\
    &\hspace{6cm}\times(\dtijn)^{\alpha_4}(\dvijn)^{\alpha_5},
\end{align*}
for any $(\balpha_1,\balpha_2,\balpha_3,\rho)\neq(\zerod,\zerod,\zerod,0)$, $j_1\in[k^*_1]$ and $j_2\in[k^*_2]$.

\vspace{0.5 em}
\noindent
For each $(j_1,j_2)\in[k^*_1]\times[k^*_2]$, by invoking the Taylor expansion of order $r^{LL}(|\mathcal{V}_{j_2|j_1}|):=\trj$, the term $A_{n,j_1,1,2}$ can be represented as
\begin{align*}
    &A_{n,j_1,1,2}=\sum_{j_2:|\mathcal{V}_{j_2|j_1}|>1}\sum_{|\balpha_1|+|\balpha_2|+|\balpha_3|=0}^{\trj}\sum_{\rho=0\vee1-|\balpha_1|-|\balpha_2|-|\balpha_3|}^{2(\trj-|\balpha_1|-|\balpha_2|-|\balpha_3|)}S_{n,j_2|j_1,\balpha_1,\balpha_2,\balpha_3,\rho}\cdot\bx^{\balpha_3} \\
    &\hspace{1cm}\times\frac{\partial^{|\balpha_1|}F}{\partial\boldsymbol{\omega}^{\balpha_1}}(\bx;\oj)\frac{\partial^{|\balpha_2|}F}{\partial\boldsymbol{a}^{\balpha_2}}(\bx;\aj)\frac{\partial^{|\balpha_3|+\rho}\pi}{\partial\xi^{|\balpha_3|+\rho}}(y|(\ej)^{\top}\bx+\tj,\vj)+R_{n,1,2}(\bx,y),
\end{align*}
where $R_{n,1,2}(\bx,y)$ is a Taylor remainder such that $R_{n,1,2}(\bx,y)/\mathcal{L}_{3n}\to0$ as $n\to\infty$.

\vspace{0.5 em}
\noindent
Secondly, we rewrite the term $A_{n,j_1,2}$ as follows:
\begin{align*}
   &\sum_{j_2:|\mathcal{V}_{j_2|j_1}|=1}\sum_{i_2\in\mathcal{V}_{j_2|j_1}}\exp(\bein)\Big[\exp(-\|\oin-\bx\|)\nonumber-\exp(-\|\oj-\bx\|)\Big]\exp(-\|\ajn-\bx\|)p^{LL,n}_{j_1}(y|\bx)\\
    &+\sum_{j_2:|\mathcal{V}_{j_2|j_1}|>1}\sum_{i_2\in\mathcal{V}_{j_2|j_1}}\exp(\bein)\Big[\exp(-\|\oin-\bx\|)\nonumber-\exp(-\|\oj-\bx\|)\Big]\exp(-\|\ajn-\bx\|)p^{LL,n}_{j_1}(y|\bx)\\
    :&=A_{n,j_1,2,1}+A_{n,j_1,2,2}.
\end{align*}
According to the first-order Taylor expansion, we have
\begin{align*}
    &A_{n,j_1,2,1}=\sum_{j_2:|\mathcal{V}_{j_2|j_1}|=1}\sum_{i_2\in\mathcal{V}_{j_2|j_1}}\sum_{|\bpsi|=1}\frac{\exp(\bein)}{\bpsi!}(\doijn)^{\bpsi}\\
    &\hspace{5cm}\times\frac{\partial^{|\bpsi|}F}{\partial\boldsymbol{\omega}^{\bpsi}}(\bx;\oj)\exp(-\|\ajn-\bx\|)p^{LL,n}_{j_1}(y|\bx)+R_{n,2,1}(\bx,y),\\
    &=\sum_{j_2:|\mathcal{V}_{j_2|j_1}|=1}\sum_{|\bpsi|=1}T_{n,j_2|j_1,\bpsi}\cdot\frac{\partial^{|\bpsi|}F}{\partial\boldsymbol{\omega}^{\bpsi}}(\bx;\oj)\exp(-\|\ajn-\bx\|)p^{LL,n}_{j_1}(y|\bx)+R_{n,2,1}(\bx,y),
\end{align*}
where $R_{n,2,1}(\bx,y)$ is a Taylor remainder such that $R_{n,2,1}(\bx,y)/\mathcal{L}_{3n}\to0$ as $n\to\infty$, and
\begin{align*}
    T_{n,j_2|j_1,\bpsi}:=\sum_{i_2\in\mathcal{V}_{j_2|j_1}}\frac{\exp(\bein)}{\bpsi!}(\doijn)^{\bpsi},
\end{align*}
for any $j_2\in[k^*_2]$ and $\bpsi\neq\zerod$.

\vspace{0.5 em}
\noindent
Meanwhile, we apply the second-order Taylor expansion to $A_{n,j_1,2,2}$:
\begin{align*}
    A_{n,j_1,2,2}&=\sum_{j_2:|\mathcal{V}_{j_2|j_1}|>1}\sum_{|\bpsi|=1}^{2}T_{n,j_2|j_1,\bpsi}\cdot\frac{\partial^{|\bpsi|}F}{\partial\boldsymbol{\omega}^{\bpsi}}(\bx;\oj)\exp(-\|\ajn-\bx\|)p^{LL,n}_{j_1}(y|\bx)+R_{n,2,2}(\bx,y),
\end{align*}
where $R_{n,2,2}(\bx,y)$ is a Taylor remainder such that $R_{n,2,2}(\bx,y)/\mathcal{L}_{3n}\to0$ as $n\to\infty$.

\vspace{0.5 em}
\noindent
Combine the above results together, we can illustrate the term $A_{n}$ as
\begin{align}
    &A_{n}=\sum_{j_1=1}^{k^*_1}\sum_{j_2=1}^{k^*_2}\frac{\exp(\bjn)}{\sum_{j'_2=1}^{k^*_2}\exp(-\|\boldsymbol{\omega}^*_{j'_2|j_1}-\bx\|+\beta^*_{j'_2|j_1})}\Bigg[\sum_{|\balpha_1|+|\balpha_2|+|\balpha_3|=0}^{\trj}\sum_{\rho=0\vee1-|\balpha_1|-|\balpha_2|-|\balpha_3|}^{2(\trj-|\balpha_1|-|\balpha_2|-|\balpha_3|)}S_{n,j_2|j_1,\balpha_1,\balpha_2,\balpha_3,\rho}\nonumber\\
    &\times\bx^{\balpha_3}\frac{\partial^{|\balpha_1|}F}{\partial\boldsymbol{\omega}^{\balpha_1}}(\bx;\oj)\frac{\partial^{|\balpha_2|}F}{\partial\boldsymbol{a}^{\balpha_2}}(\bx;\aj)\frac{\partial^{|\balpha_3|+\rho}\pi}{\partial\xi^{|\balpha_3|+\rho}}(y|(\ej)^{\top}\bx+\tj,\vj)+R_{n,1,1}(\bx,y)+R_{n,1,2}(\bx,y)\nonumber\\
    \label{eq:decompose_An1_ll}
    &-\sum_{|\bpsi|=0}^{2}T_{n,j_2|j_1,\bpsi}\cdot\frac{\partial^{|\bpsi|}F}{\partial\boldsymbol{\omega}^{\bpsi}}(\bx;\oj)\exp(-\|\ajn-\bx\|)p^{LL,n}_{j_1}(y|\bx)-R_{n,2,1}(\bx,y)-R_{n,2,2}(\bx,y)\Bigg],
\end{align}
where $S_{n,j_2|j_1,\balpha_1,\balpha_2,\balpha_3,\rho}=T_{n,j_2|j_1,\bpsi}=\sum_{i_2\in\mathcal{V}_{j_2|j_1}}\exp(\bein)-\exp(\bej)$ for any $j_1\in[k^*_1]$, $j_2\in[k^*_2]$, $(\balpha_1,\balpha_2,\balpha_3,\rho)=(\zerod,\zerod,\zerod,0)$ and $\bpsi=\zerod$.

\vspace{0.5 em}
\noindent
\textbf{Step 1B - Decompose $B_{n}$:} By invoking the first-order Taylor expansion, we decompose the term $B_{n}$ defined in equation~\eqref{eq:decompose_Qn_ll} as
\begin{align}
     \label{eq:decompose_Bn_ll}
    B_{n}&=\sum_{j_1=1}^{k^*_1}\exp(\bjn)\sum_{|\bgamma|=1}(\dajn)^{\bgamma}\cdot \frac{\partial^{|\bgamma|}F}{\partial\boldsymbol{a}^{\bgamma}}(\bx;\aj)p^{LL}_{G_n}(y|\bx) +R_{n,3}(\bx,y)
\end{align}
where $R_{n,3}(\bx,y)$ is a Taylor remainder such that $R_{n,3}(\bx,y)/\mathcal{L}_{3n}\to0$ as $n\to\infty$.

\vspace{0.5 em}
\noindent
Putting the decomposition in equations~\eqref{eq:decompose_Qn_ll}, \eqref{eq:decompose_An1_ll} and \eqref{eq:decompose_Bn_ll} together, we realize that $A_n$, $B_n$ and $C_n$ can be treated as a linear combination of elements from the following set union:
\begin{align*}
   &\Bigg\{\frac{\bx^{\balpha_3}\frac{\partial^{|\balpha_1|}F}{\partial\boldsymbol{\omega}^{\balpha_1}}(\bx;\oj)\frac{\partial^{|\balpha_2|}F}{\partial\boldsymbol{a}^{\balpha_2}}(\bx;\aj)\frac{\partial^{|\balpha_3|+\rho}\pi}{\partial\xi^{|\balpha_3|+\rho}}(y|(\ej)^{\top}\bx+\tj,\vj)}{\sum_{j'_2=1}^{k^*_2}\exp(-\|\boldsymbol{\omega}^*_{j'_2|j_1}-\bx\|+\beta^*_{j'_2|j_1})}:j_1\in[k^*_1], \  j_2\in[k^*_2],\\
   &\hspace{2cm}0\leq|\balpha_1|+|\balpha_2|+|\balpha_3|\leq2\trj, 0\leq\rho\leq2(\trj-|\balpha_1|-|\balpha_2|-|\balpha_3|)\Bigg\}\\
   \cup~&\Bigg\{\frac{\frac{\partial^{|\bpsi|}F}{\partial\boldsymbol{\omega}^{\bpsi}}(\bx;\oj)\exp(-\|\ajn-\bx\|)p^{LL,n}_{j_1}(y|\bx)}{\sum_{j'_2=1}^{k^*_2}\exp(-\|\boldsymbol{\omega}^*_{j'_2|j_1}-\bx\|+\beta^*_{j'_2|j_1})}:j_1\in[k^*_1], \ j_2\in[k^*_2], \ 0\leq|\bpsi|\leq2\Bigg\}\\
   \cup~&\left\{\frac{\partial^{|\bgamma|}F}{\partial\boldsymbol{a}^{\bgamma}}(\bx;\aj)p^{LL,n}_{j_1}(y|\bx), \ \frac{\partial^{|\bgamma|}F}{\partial\boldsymbol{a}^{\bgamma}}(\bx;\aj)p^{LL}_{G_n}(y|\bx): j_1\in[k^*_1], \ 0\leq|\bgamma|\leq1\right\}.
\end{align*}

\vspace{0.5 em}
\noindent
\textbf{Step 2 - Non-vanishing coefficients:} In this step, we demonstrate that not all the coefficients in the representation of $A_n/\mathcal{L}_{3n}$, $B_n/\mathcal{L}_{3n}$ and $C_n/\mathcal{L}_{3n}$ converge to zero as $n\to\infty$. Assume by contrary that all of them go to zero. Then, we look into the coefficients associated with the term 
\begin{itemize}
    \item $\exp(-\|\aj-\bx\|)p^{LL,n}_{j_1}(y|\bx)$ in $C_n/\mathcal{L}_{3n}$, we have
    \begin{align}
        \label{eq:limit_bias_1_ll}
        \frac{1}{\mathcal{L}_{3n}}\cdot\sum_{j_1=1}^{k^*_1}\Big|\exp(\bjn)-\exp(\bj)\Big|\to0.
    \end{align}
    \item $\dfrac{F(\bx;\oj)F(\bx;\aj)\pi(y|(\ej)^{\top}\bx+\tj,\vj)}{\sum_{j'_2=1}^{k^*_2}\exp(-\|\boldsymbol{\omega}^*_{j'_2|j_1}-\bx\|+\beta^*_{j'_2|j_1})}$ in $A_{n}/\mathcal{L}_{3n}$, we get that
    \begin{align}
        \label{eq:limit_bias_2_ll}
        \frac{1}{\mathcal{L}_{3n}}\cdot\sum_{j_1=1}^{k^*_1}\exp(\bjn)\sum_{j_2=1}^{k^*_2}\Big|\sum_{i_2\in\mathcal{V}_{j_2|j_1}}\exp(\bein)-\exp(\bej)\Big|\to0.
    \end{align}
    \item $\dfrac{\frac{\partial^{|\balpha_1|}F}{\partial\boldsymbol{\omega}^{\balpha_1}}(\bx;\oj)F(\bx;\aj)\pi(y|(\ej)^{\top}\bx+\tj,\vj)}{\sum_{j'_2=1}^{k^*_2}\exp(-\|\boldsymbol{\omega}^*_{j'_2|j_1}-\bx\|+\beta^*_{j'_2|j_1})}$ in $A_{n}/\mathcal{L}_{3n}$ for $j_1\in[k^*_1],j_2\in[k^*_2]:|\mathcal{V}_{j_2|j_1}|=1$ and $\balpha_1=e_{d,u}$ where $e_{d,u}:=(0,\ldots,0,\underbrace{1}_{\textit{u-th}},0,\ldots,0)\in\mathbb{N}^{d}$, we receive that
    \begin{align*}
         \frac{1}{\mathcal{L}_{3n}}\cdot\sum_{j_1=1}^{k^*_1}\exp(\bjn)\sum_{j_2\in[k^*_2]:|\mathcal{V}_{j_2|j_1}|=1}\sum_{i_2\in\mathcal{V}_{j_2|j_1}}\exp(\bein)\|\oin-\oj\|_1\to0.
    \end{align*}
    Note that since the norm-1 is equivalent to the norm-2, then we can replace the norm-1 with the norm-2, that is,
    \begin{align}
        \label{eq:limit_exact_1_ll}
        \frac{1}{\mathcal{L}_{3n}}\cdot\sum_{j_1=1}^{k^*_1}\exp(\bjn)\sum_{j_2\in[k^*_2]:|\mathcal{V}_{j_2|j_1}|=1}\sum_{i_2\in\mathcal{V}_{j_2|j_1}}\exp(\bein)\|\oin-\oj\|\to0.
    \end{align}
    \item $\bx^{\balpha_3}\dfrac{F(\bx;\oj)F(\bx;\aj)\frac{\partial^{|\balpha_3|}\pi}{\partial\xi^{|\balpha_3|}}(y|(\ej)^{\top}\bx+\tj,\vj)}{\sum_{j'_2=1}^{k^*_2}\exp(-\|\boldsymbol{\omega}^*_{j'_2|j_1}-\bx\|+\beta^*_{j'_2|j_1})}$ in $A_{n}/\mathcal{L}_{3n}$ for $j_1\in[k^*_1],j_2\in[k^*_2]:|\mathcal{V}_{j_2|j_1}|=1$ and $\balpha_3=e_{d,u}$, we have that
    \begin{align}
        \label{eq:limit_exact_3_ll}
         \frac{1}{\mathcal{L}_{3n}}\cdot\sum_{j_1=1}^{k^*_1}\exp(\bjn)\sum_{j_2\in[k^*_2]:|\mathcal{V}_{j_2|j_1}|=1}\sum_{i_2\in\mathcal{V}_{j_2|j_1}}\exp(\bejn)\|\ein-\ej\|\to0.
    \end{align}
    \item $\frac{\partial^{|\bgamma|}F}{\partial\boldsymbol{a}^{\bgamma}}(\bx;\aj)p^{LL}_{G_n}(y|\bx)$ in $B_n/\mathcal{L}_{3n}$ for $j_1\in[k^*_1]$ and $\bgamma=e_{d,u}$, we obtain
    \begin{align}
        \label{eq:limit_exact_4_ll}
        \frac{1}{\mathcal{L}_{3n}}\cdot\sum_{j_1=1}^{k^*_1}\exp(\bjn)\|\ajn-\aj\|\to0.
    \end{align}
     \item $\dfrac{\frac{\partial^{|\balpha_1|}F}{\partial\boldsymbol{\omega}^{\balpha_1}}(\bx;\oj)F(\bx;\aj)\pi(y|(\ej)^{\top}\bx+\tj,\vj)}{\sum_{j'_2=1}^{k^*_2}\exp(-\|\boldsymbol{\omega}^*_{j'_2|j_1}-\bx\|+\beta^*_{j'_2|j_1})}$ in $A_{n}/\mathcal{L}_{3n}$ for $j_1\in[k^*_1],j_2\in[k^*_2]:|\mathcal{V}_{j_2|j_1}|>1$ and $\balpha_1=2e_{d,u}$, we receive that
    \begin{align}
         \frac{1}{\mathcal{L}_{3n}}\cdot\sum_{j_1=1}^{k^*_1}\exp(\bjn)\sum_{j_2\in[k^*_2]:|\mathcal{V}_{j_2|j_1}|>1}\sum_{i_2\in\mathcal{V}_{j_2|j_1}}\exp(\bein)\|\oin-\oj\|^2\to0.
    \end{align}
    \item $\dfrac{\bx^{\balpha_3}F(\bx;\oj)F(\bx;\aj)\frac{\partial^{|\balpha_3|}\pi}{\partial\xi^{|\balpha_3|}}(y|(\ej)^{\top}\bx+\tj,\vj)}{\sum_{j'_2=1}^{k^*_2}\exp(-\|\boldsymbol{\omega}^*_{j'_2|j_1}-\bx\|+\beta^*_{j'_2|j_1})}$ in $A_{n}/\mathcal{L}_{3n}$ for $j_1\in[k^*_1],j_2\in[k^*_2]:|\mathcal{V}_{j_2|j_1}|>1$ and $\balpha_3=2e_{d,u}$, we have that
    \begin{align}
        \label{eq:limit_over_1_ll}
         \frac{1}{\mathcal{L}_{3n}}\cdot\sum_{j_1=1}^{k^*_1}\exp(\bjn)\sum_{j_2\in[k^*_2]:|\mathcal{V}_{j_2|j_1}|>1}\sum_{i_2\in\mathcal{V}_{j_2|j_1}}\exp(\bein)\|\ein-\ej\|^2\to0.
    \end{align}
\end{itemize}
Combine the above limits and the formulation of the loss $\mathcal{L}_{3n}$ in equation~\eqref{eq:loss_l3_ll}, we deduce that
\begin{align*}
    \frac{1}{\mathcal{L}_{3n}}\cdot\sum_{j_1=1}^{k^*_1}\exp(\bjn)\sum_{j_2:|\mathcal{V}_{j_2|j_1}|>1}\sum_{i_2\in\mathcal{V}_{j_2|j_1}}\exp(\beta^n_{i_2|j_1})\Big(|\dtijn|^{\trj}+|\dvijn|^{\frac{\trj}{2}}\Big)\not\to0.
\end{align*}
This indicates that there exist indices $j^*_1\in[k^*_1]$ and $j^*_2\in[k^*_2]:|\mathcal{V}_{j^*_2|j^*_1}|>1$ such that 
\begin{align}
    \label{eq:zero_limit_ll}
    \frac{1}{\mathcal{L}_{3n}}\cdot\sum_{i_2\in\mathcal{V}_{j^*_2|j^*_1}}\exp(\beta^n_{i_2|j^*_1})\Big(|\Delta\tau^n_{j^*_1i_2j^*_2}|^{r^{LL}_{j^*_2|j^*_1}}+|\Delta\nu^n_{j^*_1i_2j^*_2}|^{\frac{r^{LL}_{j^*_2|j^*_1}}{2}}\Big)\not\to0.
\end{align}
WLOG, we may assume that $j^*_1=j^*_2=1$. Then, considering the coefficients of the term $F(\bx;\oj)F(\bx;\aj)\frac{\partial^\rho\pi}{\partial\xi^{\rho}}(y|(\ej)^{\top}\bx+\tj,\vj)$ in $A_n/\mathcal{L}_{3n}$ where $j_1=j_2=1$, we get
\begin{align*}
    \exp(b^n_{1})S_{n,1|1,\zerod,\zerod,\zerod,\rho}/\mathcal{L}_{3n}\to0,
\end{align*}
or equivalently,
\begin{align}
    \label{eq:non_zero_denom_ll}
    &\frac{1}{\mathcal{L}_{3n}}\cdot\sum_{i_2\in\mathcal{V}_{1|1}}\sum_{\alpha_4+2\alpha_5=\rho}\frac{\exp(\beta^n_{i_2|1})}{2^{\alpha_5}\alpha_4!\alpha_5!}\cdot(\Delta\tau^n_{1i_21})^{\alpha_4}(\Delta\nu^n_{1i_21})^{\alpha_5}\to0.
\end{align}
Next, we divide the left hand side of equation~\eqref{eq:zero_limit_ll} by that of equation~\eqref{eq:non_zero_denom_ll}, and get that
\begin{align}
    \label{eq:ratio_before_limit_ll}
    \dfrac{\sum_{i_2\in\mathcal{V}_{1|1}}\sum_{\alpha_4+2\alpha_5=\rho}\frac{\exp(\beta^n_{i_2|1})}{2^{\alpha_5}\alpha_4!\alpha_5!}\cdot(\Delta\tau^n_{1i_21})^{\alpha_4}(\Delta\nu^n_{1i_21})^{\alpha_5}}{\sum_{i_2\in\mathcal{V}_{1|1}}\exp(\beta^n_{i_2|1})\Big(|\Delta\tau^n_{1i_21}|^{\trone}+|\Delta\nu^n_{1i_21}|^{\frac{\trone}{2}}\Big)}\to0.
\end{align}
Let us define $\overline{M}_n:=\max\{\|\Delta\tau^n_{1i_21}\|, \|\Delta\nu^n_{1i_21}\|^{1/2}:i_2\in\mathcal{V}_{1|1}\}$, and $\overline{\beta}_n:=\max_{i_2\in\mathcal{V}_{1|1}}\exp(\beta^n_{i_2|1})$. Since the sequence $\exp(\beta^n_{i_2|1})/\overline{\beta}_n$ is bounded, we can replace it by its subsequence which has a positive limit $p^2_{i_2}:=\lim_{n\to\infty}\exp(\beta^n_{i_2|1})/\overline{\beta}_n$. Note that at least one among the limits $p^2_{i_2}$ must be equal to one. Next, let us define
\begin{align*}
    (\Delta\tau^n_{1i_21})/\overline{M}_n\to q_{4i_2},& \quad (\Delta\nu^n_{1i_21})/2\overline{M}_n\to q_{5i_2}.
\end{align*}
Note that at least one among $q_{4i_2},q_{5i_2}$ must be equal to either 1 or $-1$. By dividing both the numerator and the denominator of the term in equation~\eqref{eq:ratio_before_limit} by $\overline{\beta}_n\overline{M}^{\rho}_n$, we obtain the system of polynomial equations:
\begin{align*}
    \sum_{i_2\in\mathcal{V}_{1|1}}\sum_{\alpha_4+2\alpha_5=\rho}\frac{1}{\alpha_4!\alpha_5!}\cdot p^2_{i_2}q_{4i_2}^{\alpha_4}q_{5i_2}^{\alpha_5}=0, \quad 1\leq\rho\leq\trone.   
\end{align*}
According to the definition of the term $\trone$, the above system does not have any non-trivial solutions, which is a contradiction. Consequently, at least one among the coefficients in the representation of $A_n/\mathcal{L}_{3n}$, $B_n/\mathcal{L}_{3n}$ and $C_n/\mathcal{L}_{3n}$ must not approach zero as $n\to\infty$.

\vspace{0.5 em}
\noindent
\textbf{Step 3 - Application of the Fatou's lemma.} In this stage, we show that all the coefficients in the formulations of $A_n/\mathcal{L}_{3n}$, $B_n/\mathcal{L}_{3n}$ and $C_n/\mathcal{L}_{3n}$ go to zero as $n\to\infty$. Denote by $m_n$ the maximum of the absolute values of those coefficients, the result from Step 2 induces that $1/m_n\not\to\infty$. 

\vspace{0.5 em}
\noindent
By employing the Fatou's lemma, we have
\begin{align*}
    0=\lim_{n\to\infty}\dfrac{\bbE_{\bbX}[V(p^{LL}_{G_n}(\cdot|\bbX),p^{LL}_{G_*}(\cdot|\bbX))]}{m_n\mathcal{L}_{3n}}\geq\int\liminf_{n\to\infty}\dfrac{|p^{LL}_{G_n}(y|\bx)-p^{LL}_{G_*}(y|\bx)|}{2m_n\mathcal{L}_{3n}}\dint(\bx,y).
\end{align*}
Thus, we deduce that 
\begin{align*}
    \dfrac{|p^{LL}_{G_n}(y|\bx)-p^{LL}_{G_*}(y|\bx)|}{2m_n\mathcal{L}_{3n}}\to0,
\end{align*}
which results in $Q_n/[m_n\mathcal{L}_{3n}]\to0$ as $n\to\infty$ for almost surely $(\bx,y)$. Next, we denote
\begin{align*}
    &\frac{\exp(\bjn)S_{n,j_2|j_1,\balpha_1,\balpha_2,\balpha_3,\rho}}{m_n\mathcal{L}_{3n}}\to \phi_{j_2|j_1,\balpha_1,\balpha_2,\balpha_3,\rho}, \quad &\frac{\exp(\bjn)T_{n,j_2|j_1,\bpsi}}{m_n\mathcal{L}_{3n}}\to\varphi_{j_2|j_1,\bpsi},\\
    &\frac{\exp(\bjn)(\dajn)^{\bgamma}}{m_n\mathcal{L}_{3n}}\to\lambda_{j_1,\bgamma}, \quad &\frac{\exp(\bjn)-\exp(\bj)}{m_n\mathcal{L}_{3n}}\to\chi_{j_1}
\end{align*}
with a note that at least one among them is non-zero. Then, the decomposition of $Q_n$ in equation~\eqref{eq:decompose_Qn_ll} indicates that
\begin{align*}
    \lim_{n\to\infty}\frac{Q_n}{m_n\mathcal{L}_{3n}}=\lim_{n\to\infty}\frac{A_{n}}{m_n\mathcal{L}_{3n}}-\lim_{n\to\infty}\frac{B_n}{m_n\mathcal{L}_{3n}}+\lim_{n\to\infty}\frac{C_n}{m_n\mathcal{L}_{3n}},
\end{align*}
in which 
\begin{align*}
    &\lim_{n\to\infty}\frac{A_{n}}{m_n\mathcal{L}_{3n}}=\sum_{j_1=1}^{k^*_1}\sum_{j_2=1}^{k^*_2}\Bigg[\sum_{|\balpha|=0}^{2}\phi_{j_2|j_1,\balpha_1,\balpha_2,\balpha_3,\rho}\cdot\bx^{\balpha_3}\frac{\partial^{|\balpha_1|}F}{\partial\boldsymbol{\omega}^{\balpha_1}}(\bx;\oj)\frac{\partial^{|\balpha_2|}F}{\partial\boldsymbol{a}^{\balpha_2}}(\bx;\aj)\\
    &\hspace{7cm}\times\frac{\partial^{|\balpha_3|+\rho}\pi}{\partial\xi^{|\balpha_3|+\rho}}(y|(\ej)^{\top}\bx+\tj,\vj)\nonumber\\
    &-\sum_{|\bpsi|=0}^{2}\varphi_{j_2|j_1,\bpsi}\cdot\frac{\partial^{|\bpsi|}F}{\partial\boldsymbol{\omega}^{\bpsi}}(\bx;\oj)\exp(-\|\aj-\bx\|)p^{LL,*}_{j_1}(y|\bx)\Bigg]\frac{1}{\sum_{j'_2=1}^{k^*_2}\exp(-\|\boldsymbol{\omega}^*_{j'_2|j_1}-\bx\|+\beta^*_{j'_2|j_1})},\\
    &\lim_{n\to\infty}\frac{B_{n}}{m_n\mathcal{L}_{3n}}=\sum_{j_1=1}^{k^*_1}\sum_{|\gamma|=1}\lambda_{j_1,\bgamma}\cdot \frac{\partial^{|\bgamma|}F}{\partial\boldsymbol{a}^{\bgamma}}(\bx;\aj)p^{LL}_{G_*}(y|\bx),\\
    &\lim_{n\to\infty}\frac{C_{n}}{m_n\mathcal{L}_{3n}}=\sum_{j_1=1}^{k^*_1}\chi_{j_1}\exp(-\|\aj-\bx\|)\left[p^{LL,*}_{j_1}(y|\bx)-p^{LL}_{G_*}(y|\bx)\right].
\end{align*}
Since the set
\begin{align*}
    &\Bigg\{\dfrac{\bx^{\balpha_3}\frac{\partial^{|\balpha_1|}F}{\partial\boldsymbol{\omega}^{\balpha_1}}(\bx;\oj)\frac{\partial^{|\balpha_2|}F}{\partial\boldsymbol{a}^{\balpha_2}}(\bx;\aj)\frac{\partial^{|\balpha_3|+\rho}\pi}{\partial\xi^{|\balpha_3|+\rho}}(y|(\ej)^{\top}\bx+\tj,\vj)}{\sum_{j'_2=1}^{k^*_2}\exp(-\|\boldsymbol{\omega}^*_{j'_2|j_1}-\bx\|+\beta^*_{j'_2|j_1})}:j_1\in[k^*_1], \\
    &\quad j_2\in[k^*_2],0\leq|\balpha_1|+|\balpha_2|+|\balpha_3|\leq \trj,0\leq\rho\leq2(\trj-|\balpha_1|-|\balpha_2|-|\balpha_3|\Bigg\}\\
    \cup&~\Bigg\{\dfrac{\frac{\partial^{|\bpsi|}F}{\partial\boldsymbol{\omega}^{\bpsi}}(\bx;\oj)\exp((\aj)^{\top}\bx)p^{LL,*}_{j_1}(y|\bx)}{\sum_{j'_2=1}^{k^*_2}\exp(-\|\boldsymbol{\omega}^*_{j'_2|j_1}-\bx\|+\beta^*_{j'_2|j_1})}:j_1\in[k^*_1], j_2\in[k^*_2],0\leq|\bpsi|\leq2\Bigg\}\\
    \cup&~\Big\{\bx^{\bgamma}\exp((\aj)^{\top}\bx)p^{LL}_{G_*}(y|\bx), \ \exp((\aj)^{\top}\bx)p^{LL,*}_{j_1}(y|\bx), \ \exp((\aj)^{\top}\bx)p^{LL}_{G_*}(y|\bx)\\
    &\hspace{8cm}:j_1\in[k^*_1],0\leq|\bgamma|\leq2\Big\}
\end{align*}
is linearly independent, we obtain that $\phi_{j_2|j_1,\balpha_1,\balpha_2,\balpha_3,\rho}=\varphi_{j_2|j_1,\bpsi}=\lambda_{j_1,\bgamma}=\chi_{j_1}=0$ for all $j_1\in[k^*_1]$, $j_2\in[k^*_2]$, $0\leq|\balpha_1|+|\balpha_2|+|\balpha_3|\leq \trj$, $0\leq\rho\leq2(\trj-|\balpha_1|-|\balpha_2|-|\balpha_3|)$, $0\leq|\bpsi|\leq 2$ and $0\leq|\bgamma|\leq 1$, which is a contradiction. As a consequence, we obtain the inequality in equation~\eqref{eq:general_local_inequality_over_ss}. Hence, the proof is completed.

\textbf{}\\

\appendix
\centering
\textbf{\Large{Supplementary to
``On Expert Estimation in Hierarchical Mixture of Experts: Beyond Softmax Gating Functions''}}

\justifying
\setlength{\parindent}{0pt}
\textbf{}\\

\noindent
We first discuss the dataset information, preprocessing procedures, and implementation details in Appendices \ref{appendix:dataset_information}, \ref{appendix:data_preprocessing}, and \ref{appendix:implementation_details}. Next, we provide the proof for the convergence of density estimation in Appendix~\ref{appendix:density_rate}. Then, we continue to streamline the proof of Lemma~\ref{lemma:rss_values} in Appendix~\ref{appendix:rss_values} before investigating the identifiability of the Gaussian HMoE in Appendix~\ref{appendix:identifiability}.

\section{Dataset Information}
\label{appendix:dataset_information}
\subsection{MIMIC-IV}
MIMIC-IV \cite{johnson2020mimic} is a comprehensive database containing records from nearly 300,000 patients admitted to a medical center between 2008 and 2019, focusing on a subset of 73,181 ICU stays. We linked core ICU records, including lab results and vital signs, with corresponding chest X-rays \cite{johnson2019mimiccxr}, radiological notes \cite{johnson2023mimic}, and electrocardiogram (ECG) data \cite{gowmimicecg} recorded during the same ICU stay. 

\vspace{0.5 em}
\noindent
\textbf{Tasks of Interest.}
We design an in-hospital mortality prediction task (referred to as \textbf{48-IHM}) to assess our method’s capability in forecasting short-term patient deterioration. Additionally, accurately predicting patient discharge times is vital for improving patient outcomes and managing hospital resources efficiently \cite{bertsimas2022predicting}, leading us to implement the length-of-stay (\textbf{LOS}) task. Both the 48-IHM and LOS tasks are framed as binary classification problems, utilizing a 48-hour observation window (for patients staying at least 48 hours in the ICU) to predict in-hospital mortality (48-IHM) and patient discharge (without death) within the subsequent 48 hours (LOS). Moreover, recognizing the presence of specific acute care conditions in patient records is key for several clinical goals, such as forming cohorts for studies and identifying comorbidities \cite{agarwal2016learning}. Traditional approaches, which often rely on manual chart reviews or billing codes, are increasingly being complemented by machine learning models \cite{harutyunyan2019multitask}. Automating this process demands high-accuracy classifications, which drives the development of our 25-type phenotype classification (\textbf{25-PHE}) task. This multilabel classification problem involves predicting one of 25 acute care conditions using data from the entire ICU stay. We summarize the details of these tasks below:
\begin{itemize}
\item \textbf{48-IHM}: This is a binary classification task where we aim to predict in-hospital mortality based on data collected during the first 48 hours of ICU admission, applicable only to patients who remained in the ICU for at least 48 hours.
\item \textbf{LOS}: The length-of-stay task is structured similarly to 48-IHM. For patients who stayed in the ICU for a minimum of 48 hours, the objective is to predict whether they will be discharged (without death) within the next 48 hours.
\item \textbf{25-PHE}: This multilabel classification task involves predicting one of 25 acute care conditions \cite{elixhauser2009clinical, lovaasen2012icd}, such as congestive heart failure, pneumonia, or shock, at the conclusion of each patient’s ICU stay. Since the original task was developed for diagnoses based on ICD-9 codes, and MIMIC-IV includes both ICD-9 and ICD-10 codes, we convert diagnoses coded in ICD-10 using the conversion database from \cite{butler2007icd}.
\end{itemize}
\noindent
\textbf{Evaluation.}
We concentrated on patients with complete data across all modalities, which yielded a dataset of 8,770 ICU stays for the 48-IHM and LOS tasks, and 14,541 stays for the 25-PHE task. To assess the performance of the single-label tasks, 48-IHM and LOS, we utilize the F1-score and AUROC as our evaluation metrics. For the 25-PHE task, following prior research \cite{zhang2023improving, lin2019predicting, arbabi2019identifying}, we rely on macro-averaged F1-score and AUROC as the primary measures of evaluation. For the multimodal fusion task, we allocated 70\% data for training, while the remaining 30\% was evenly divided between validation and testing. For clinical latent domain discovery, similar to \cite{wu2024iterative}, we segment the dataset into four temporal groups: 2008-2010, 2011-2013, 2014-2016, and 2017-2019. Each group is then divided into training, validation, and testing sets, following a 70\%, 10\%, and 20\% split, respectively. Patients admitted after 2014 are treated as the target test data, while all earlier patients are used as the source training data.

\subsection{eICU}
The eICU dataset \cite{pollard2018eicu} includes over 200,000 visits from 139,000 patients admitted to ICUs in 208 hospitals across the United States. The data was gathered between 2014 and 2015. The 208 hospitals are categorized into four regions based on their geographic location: Midwest, Northeast, West, and South. We define our cohorts by excluding visits from patients younger than 18 or older than 89, as well as visits exceeding 10 days in length or containing fewer than 3 or more than 256 timestamps. Additionally, we omit visits shorter than 12 hours, since predictions are made 12 hours post-admission.

\vspace{0.5 em}
\noindent
\textbf{Tasks of Interest.}
For the readmission task using the eICU dataset, our goal is to predict whether a patient will be readmitted within 15 days after discharge. Similar to the MIMIC-IV dataset, the mortality prediction task focuses on determining whether a patient will pass away following discharge.

\vspace{0.5 em}
\noindent
\textbf{Evaluation.}
The eICU dataset is divided into four regional groups: Midwest, Northeast, West, and South. Each region is further split into 70\% for training, 10\% for validation, and 20\% for testing. To assess the performance gap between regions, we compare the backbone model’s performance when trained on data from the same region versus data from other regions, as proposed by \cite{wu2024iterative}. The region with the largest performance gap (Midwest) is selected as the target test data, while the remaining regions (Northeast, West, and South) are used as the source training data. To compare with baselines from \cite{wu2024iterative}, we use the same evaluation metrics: Area Under the Precision-Recall Curve (AUPRC) and the Area Under the Receiver Operating Characteristic Curve (AUROC) scores.

\subsection{Image Classification Datasets}
\textbf{CIFAR-10.}
CIFAR-10 \cite{krizhevsky2009learning} is a well-known dataset in computer vision, commonly used for object recognition tasks. It contains 60,000 color images, each with a resolution of 32x32 pixels, representing one of 10 object categories (“plane,” “car,” “bird,” “cat,” “deer,” “dog,” “frog,” “horse,” “ship,” “truck”), with 6,000 images per class.

\vspace{0.5 em}
\noindent
\textbf{ImageNet.}
We use the ImageNet database from ILSVRC2012 \cite{russakovsky2015imagenet}, where the task is to classify images into 1,000 distinct categories, using a vast dataset of over 1.2 million training images and 150,000 validation and test images sourced from the ImageNet database.

\vspace{0.5 em}
\noindent
\textbf{Tiny-ImageNet.}
The Tiny-ImageNet is a smaller, more manageable subset of the ImageNet dataset. It contains 100,000 images and 200 classes selected from full ImageNet dataset. All images are resized to 64×64 pixels to reduce computational demands.

\vspace{0.5 em}
\noindent
\textbf{CIFAR-10-Corruption.}
The CIFAR-10-corruption \cite{hendrycks2018benchmarking} dataset is a standard benchmark for evaluating distribution shifts. It contains 50,000 clean samples in total, along with 10,000 corrupted samples for each corruption type and each severity level. There are 20 types of corruptions, each with 5 levels of severity.

\section{Data Preprocessing for Clinical Tasks}
\label{appendix:data_preprocessing}
During preprocessing, we selected 30 relevant lab and chart events from each patient’s ICU records to capture vital sign measurements. For chest X-rays, we employed a pre-trained DenseNet-121 model \cite{cohen2022torchxrayvision}, which had been fine-tuned on the CheXpert dataset \cite{irvin2019chexpert}, to extract 1024-dimensional image embeddings. Additionally, we used the BioClinicalBERT model \cite{alsentzer2019publicly} to generate 768-dimensional embeddings for the radiological notes.

\vspace{0.5 em}
\noindent
\textbf{Time Series.}
We selected 30 time-series events for analysis, as outlined in \cite{soenksen2022integrated}. This included nine vital signs: heart rate, mean/systolic/diastolic blood pressure, respiratory rate, oxygen saturation, and Glasgow Coma Scale (GCS) verbal, eye, and motor response. Additionally, 21 laboratory values were incorporated: potassium, sodium, chloride, creatinine, urea nitrogen, bicarbonate, anion gap, hemoglobin, hematocrit, magnesium, platelet count, phosphate, white blood cell count, total calcium, MCH, red blood cell count, MCHC, MCV, RDW, platelet count, neutrophil count, and vancomycin. Each time series value was standardized to have a mean of 0 and a standard deviation of 1, based on values from the training set. We use the Transformer as an encoder for time series data.

\vspace{0.5 em}
\noindent
\textbf{Chest X-Rays.}
To integrate medical imaging into our analysis, we use the MIMIC-CXR-JPG module \cite{johnson2019mimicjpg} available through Physionet \cite{goldberger2000physiobank}, which contains 377,110 JPG images derived from the DICOM-based MIMIC-CXR database \cite{johnson2019mimiccxr}. As described in \cite{soenksen2022integrated}, each image is resized to 224 $\times$ 224 pixels, and we extract embeddings from the final layer of the DenseNet121 model. To identify X-rays taken during the patient’s ICU stay, we match subject IDs from MIMIC-CXR-JPG with the core MIMIC-IV database and then filter the X-rays to those captured between the ICU admission and discharge times.

\vspace{0.5 em}
\noindent
\textbf{Clinical Notes}
To incorporate text data, we use the MIMIC-IV-Note module \cite{johnson2023mimic}, which includes 2,321,355 deidentified radiology reports for 237,427 patients. These reports can be linked to patients in the main MIMIC-IV dataset using a similar matching method as employed for chest X-rays. It is important to note that we were unable to access intermediate clinical notes (i.e., notes recorded by clinicians during the patient’s stay), as they have not yet been made publicly available. We extract note embeddings using the Bio-Clinical BERT model \cite{alsentzer2019publicly}.

\section{Implementation Details}
\label{appendix:implementation_details}
\subsection{Model Architecture}
Once embeddings from each input modality or domain are generated, we address the issue of irregularity in the data. To do this, we use a discretized multi-time attention (mTAND) module \cite{shukla2021multi}, which applies a time attention mechanism \cite{kazemi2019time2vec} to convert irregularly sampled observations into discrete time intervals. This approach has been employed in previous works such as \cite{zhang2023improving, han2024fusemoe}. The mTAND module transforms the irregular sequences into fixed-length representations, which are then passed into the MoE fusion layer with a residual connection. This fusion layer comprises multi-head self-attention followed by the HMoE module. In total, there are 12 MoE fusion layers, and the output from this layer is optimized using task-specific loss and load imbalance loss. We apply a dropout rate of 0.1 and use the Adam optimizer with a learning rate of 1e-4 and a weight decay of 1e-5. All models are trained for 100 epochs. For the multimodal experiment, we use a batch size of 2, while for the latent domain discovery experiment, the batch size is set to 256.

\section{Proofs for Convergence of Density Estimation}
\label{appendix:density_rate}
\begin{proof}[Proof of Proposition~\ref{prop:density_estimation}]
To streamline the arguments for this proof, it is necessary to define some notations that will be used in the sequel. First of all, let $\mathcal{P}^{type}_{k^*_1,k_2}(\Theta)$ stand for the set of conditional density functions w.r.t mixing measures in $\mathcal{G}_{k^*_1,k_2}(\Theta)$ where $type\in\{SS,SL,LL\}$, that is,
\begin{align*}
    \mathcal{P}^{type}_{k^*_1,k_2}(\Theta):=\{p^{type}_{G}(y|\bx):G\in\mathcal{G}_{k^*_1,k_2}(\Theta)\}.
\end{align*}
Additionally, we also define
\begin{align*}
    \widetilde{\mathcal{P}}^{type}_{k^*_1,k_2}(\Theta)&:=\{p^{type}_{(G+G_*)/2}(y|\bx):G\in\mathcal{G}_{k^*_1,k_2}(\Theta)\},\\
    \widetilde{\mathcal{P}}^{type,1/2}_{k^*_1,k_2}(\Theta)&:=\{(p^{type}_{(G+G_*)/2})^{1/2}(y|\bx):G\in\mathcal{G}_{k^*_1,k_2}(\Theta)\}.
\end{align*}
Next, for each $\delta>0$, we define the $L^{2}$-ball centered around the density function $p^{type}_{G_*}$ and intersected with the set $\widetilde{\mathcal{P}}^{type,1/2}_{k^*_1,k_2}(\Theta)$ as
\begin{align*}   
\widetilde{\mathcal{P}}^{type,1/2}_{k^*_1,k_2}(\Theta,\delta):=\left\{p^{1/2} \in \widetilde{\mathcal{P}}^{type,1/2}_{k^*_1,k_2}(\Theta): h(p,p^{type}_{G_*}) \leq\delta\right\}.
\end{align*}
Following the suggestion from Geer et. al. \cite{vandeGeer-00}, we utilize the following integral to capture the size of the above $L^2$-ball:
\begin{align}
    \label{eq:bracket_size}
    \mathcal{J}_B(\delta, \widetilde{\mathcal{P}}^{type,1/2}_{k^*_1,k_2}(\Theta,\delta)):=\int_{\delta^2/2^{13}}^{\delta}H_B^{1/2}(t, \widetilde{\mathcal{P}}^{type,1/2}_{k^*_1,k_2}(\Theta,t),\|\cdot\|_{L^2})~\dint t\vee \delta,
\end{align}
where the term $H_B(t, \widetilde{\mathcal{P}}^{type,1/2}_{k^*_1,k_2}(\Theta,t),\|\cdot\|_{L^2})$ denotes the bracketing entropy \cite{vandeGeer-00} of $ \widetilde{\mathcal{P}}^{type,1/2}_{k^*_1,k_2}(\Theta,t)$ under the $L^{2}$-norm, and $t\vee\delta:=\max\{t,\delta\}$. \\

\vspace{0.5 em}
\noindent
Let us recall the statement of Theorem 7.4 in \cite{vandeGeer-00} with adapted notations to our paper as follows:
\begin{lemma}[Theorem 7.4, \cite{vandeGeer-00}]
    \label{lemma:density_rate}
    Let $\Psi(\delta)\geq \mathcal{J}_B(\delta, \widetilde{\mathcal{P}}^{type,1/2}_{k^*_1,k_2}(\Theta,\delta))$ be such that $\Psi(\delta)/\delta^2$ is a non-increasing function of $\delta$. Then, for some universal constant $c$ and for some sequence $(\delta_n)$ such that $\sqrt{n}\delta^2_n\geq c\Psi(\delta_n)$, the following inequality holds for all  $\delta\geq \delta_n$:
    \begin{align*}
        \mathbb{P}\Big(\bbE_{\bbX}[h(p^{type}_{\widehat{G}^{type}_n}(\cdot|\bbX),p^{type}_{G_*}(\cdot|\bbX))] > \delta\Big)\leq c \exp\left(-\frac{n\delta^2}{c^2}\right).
    \end{align*}
\end{lemma}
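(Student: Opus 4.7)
The plan is to adapt van de Geer's Hellinger-consistency argument to our conditional-density MLE by combining a one-step MLE basic inequality with a bracketing-chaining bound and a peeling/union argument. Three stages will do the work, followed by a remark on the main technical obstacle.

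Stage 1 (Basic inequality from MLE optimality). Because $\widehat{G}^{type}_n$ maximizes the empirical log-likelihood and the density map $G\mapsto p^{type}_G$ is affine on the convex combination $\bar{G}_n:=(\widehat{G}^{type}_n+G_*)/2$, so that $p^{type}_{\bar{G}_n}=\tfrac12 p^{type}_{\widehat{G}^{type}_n}+\tfrac12 p^{type}_{G_*}$ pointwise, exponentiating the optimality relation and invoking the elementary bound $\log u\le 2(\sqrt{u}-1)$ yields
$$\bbE_{\bbX}\!\left[h^{2}\!\big(p^{type}_{\bar{G}_n}(\cdot|\bbX),p^{type}_{G_*}(\cdot|\bbX)\big)\right]\;\lesssim\;\nu_n(g_{\widehat{G}^{type}_n}),$$
where $g_G(\bx,y):=\sqrt{p^{type}_{(G+G_*)/2}(y|\bx)/p^{type}_{G_*}(y|\bx)}-1$ and $\nu_n=n^{-1}\sum_{i=1}^{n}\delta_{(\bbX_i,Y_i)}-\bbP$ is the centered empirical measure. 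This reduces the theorem to a uniform control of $\nu_n$ over the shifted-root class $\{g_G:G\in\mathcal{G}_{k^*_1,k_2}(\Theta)\}$ indexed by Hellinger radius.

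Stages 2 and 3 (Chaining and peeling). The variance bound $\var(g_G)\le 4\,h^2(p^{type}_{\bar{G}},p^{type}_{G_*})$ combined with a Bernstein-plus-chaining argument (cf. Lemma 5.14 of van de Geer) gives, uniformly on the shell $\widetilde{\mathcal{P}}^{type,1/2}_{k^*_1,k_2}(\Theta,\delta)$, a sub-Gaussian tail of exponent $nt^2/(c_2\delta^2)$ once $t\ge c_1\mathcal{J}_B(\delta,\widetilde{\mathcal{P}}^{type,1/2}_{k^*_1,k_2}(\Theta,\delta))/\sqrt{n}$. I then slice the Hellinger range into shells $S_j:=\{2^{j}\delta\le \bbE_{\bbX}[h(p^{type}_{\bar{G}_n},p^{type}_{G_*})]\le 2^{j+1}\delta\}$ and apply the chained tail at scale $2^{j+1}\delta$. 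The monotonicity of $\Psi(\delta)/\delta^2$ yields $\mathcal{J}_B(2^{j+1}\delta,\cdot)\le 2^{j+1}\Psi(\delta)/\delta$, so the hypothesis $\sqrt{n}\delta_n^2\ge c\Psi(\delta_n)$ forces each shell's chaining envelope below its Hellinger radius squared. A geometric union bound over $j$ collapses to $c\exp(-n\delta^2/c^2)$; finally $h^2(p^{type}_{\widehat{G}^{type}_n},p^{type}_{G_*})\lesssim h^2(p^{type}_{\bar{G}_n},p^{type}_{G_*})$ (up to universal constants, by convexity of $h^2$ in its second argument) translates the bound from $\bar{G}_n$ back to $\widehat{G}^{type}_n$.

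The main obstacle is Stage 1: the variance of $g_G$ must be controlled by $h^2$ itself rather than by the raw log-likelihood ratio, which can be unbounded when $p^{type}_G$ nearly vanishes on the support of $p^{type}_{G_*}$. The convexification trick $\bar{G}_n$ is indispensable, because $\sqrt{p^{type}_{\bar{G}}/p^{type}_{G_*}}$ stays in $[\sqrt{1/2},\infty)$ in a way that delivers the genuine variance-to-bias matching; without it the Bernstein/Dudley chaining in Stage 2 collapses. Bookkeeping constants so that the envelope condition $\sqrt{n}\delta_n^2\ge c\Psi(\delta_n)$ exactly absorbs the geometric sum arising in the peeling is delicate but, with the monotonicity of $\Psi/\delta^2$ in hand, reduces to standard manipulations.
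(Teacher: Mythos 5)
The paper does not prove this lemma at all: it is quoted directly as Theorem~7.4 of van de Geer's monograph, with notation adapted to the HMoE setting, and the authors only verify its hypotheses (the entropy integral bound and the choice $\delta_n=\sqrt{\log(n)/n}$) in the proof of Theorem~\ref{theorem:regression_estimation}. Your sketch is therefore a reconstruction of the cited result rather than of anything in the paper, and in outline it faithfully reproduces van de Geer's argument: the basic inequality $\bbE_{\bbX}[h^2(p_{\bar G_n},p_{G_*})]\lesssim \nu_n(g_{\widehat G_n})$ obtained from MLE optimality and $\tfrac12\log u\le\sqrt u-1$, the variance-to-Hellinger matching $\var(g_G)\le 2\,\bbE_{\bbX}[h^2(p_{\bar G},p_{G_*})]$, the bracketing-chaining bound on Hellinger shells, and the peeling argument absorbed by the monotonicity of $\Psi(\delta)/\delta^2$ and the condition $\sqrt n\,\delta_n^2\ge c\Psi(\delta_n)$. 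Two justifications, however, need repair. First, the map $G\mapsto p^{type}_G$ is \emph{not} affine in the mixing measure for softmax-normalized gates (averaging the atoms changes the normalizing denominators), so $p^{type}_{(G+G_*)/2}=\tfrac12 p^{type}_{G}+\tfrac12 p^{type}_{G_*}$ should be read as the \emph{definition} of the convexified class $\widetilde{\mathcal{P}}^{type}_{k^*_1,k_2}(\Theta)$ (as in the prior MoE literature the paper follows), not as a consequence of affinity.

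Second, and more substantively, your final step translating the bound from $\bar G_n$ back to $\widehat G^{type}_n$ is justified by ``convexity of $h^2$ in its second argument,'' which gives the inequality in the \emph{wrong} direction: convexity yields $h^2(p_{\bar G_n},p_{G_*})\le\tfrac12 h^2(p_{\widehat G_n},p_{G_*})$, i.e.\ an upper bound on the convexified distance, whereas you need a \emph{lower} bound. The correct tool is the elementary pointwise estimate
\begin{align*}
\Big|\sqrt{\tfrac{p+p_0}{2}}-\sqrt{p_0}\Big|=\frac{|p-p_0|}{2\big(\sqrt{(p+p_0)/2}+\sqrt{p_0}\big)}\ \ge\ \tfrac14\,\big|\sqrt{p}-\sqrt{p_0}\big|,
\end{align*}
which gives $h(p_{\widehat G_n},p_{G_*})\le 4\,h(p_{\bar G_n},p_{G_*})$ (van de Geer's Lemma~4.2). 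With that substitution, and noting that the paper's statement in terms of $\bbE_{\bbX}[h(\cdot,\cdot)]$ follows from the root-mean-square version via Jensen, your argument is the standard and correct proof of the cited theorem.
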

\noindent
\textbf{Proof overview.} Given that the expert functions are Lipschitz continuous, we begin with showing that the following bound holds for any $0 < \varepsilon \leq 1/2$:
\begin{align}    
H_B(\varepsilon,\mathcal{P}^{type}_{k^*_1,k_2}(\Theta),h) \lesssim \log(1/\varepsilon), \label{eq:bracket_entropy_bound}
\end{align}
which yields that 
\begin{align}
    \label{eq:bracketing_integral}
    \mathcal{J}_B(\delta, \widetilde{\mathcal{P}}^{type,1/2}_{k^*_1,k_2}(\Theta,\delta))&= \int_{\delta^2/2^{13}}^{\delta}H_B^{1/2}(t, \widetilde{\mathcal{P}}^{type,1/2}_{k^*_1,k_2}(\Theta,t),\|\cdot\|_{L^2})~\dint t\vee \delta\nonumber\\
    &\leq\int_{\delta^2/2^{13}}^{\delta}H_B^{1/2}(t, \mathcal{P}^{type}_{k^*_1,k_2}(\Theta,t),h)~\dint t\vee \delta\nonumber\\
    &\lesssim \int_{\delta^2/2^{13}}^{\delta}\log(1/t)dt\vee\delta.
\end{align}
Let $\Psi(\delta)=\delta\cdot[\log(1/\delta)]^{1/2}$, then it can be checked that $\Psi(\delta)/\delta^2$ is a non-increasing function of $\delta$. Moreover, the result in equation~\eqref{eq:bracketing_integral} implies that $\Psi(\delta)\geq \mathcal{J}_B(\delta,\widetilde{\mathcal{P}}^{type,1/2}_{k^*_1,k_2}(\Theta,\delta))$. By choosing $\delta_n=\sqrt{\log(n)/n}$, we have that $\sqrt{n}\delta^2_n\geq c\Psi(\delta_n)$ for some universal constant $c$. Then, the conclusion of this theorem is achieved according to Lemma~\ref{lemma:density_rate}. Consequently, it is sufficient to derive the bracketing entropy bound in equation~\eqref{eq:bracket_entropy_bound}.

\vspace{0.5 em}
\noindent
\textbf{Proof for the bound~\eqref{eq:bracket_entropy_bound}.} To begin with, we provide an upper bound for the Gaussian density function $\pi(y|\eta^{\top}\bx+\tau,\nu)$. In particular, since the input space $\mathcal{X}$ and the parameter space $\Theta$ are both bounded, we can find some constant $\kappa,\ell,u>0$ such that $-\kappa\leq\eta^{\top}\bx+\tau\leq\kappa$ and $\ell\leq\nu\leq u$. Then, it can be validated that 
\begin{align*}
    \pi(y|\eta^{\top}\bx+\tau,\nu)=\frac{1}{\sqrt{2\pi\nu}}\exp\Big(-\frac{(y-(\eta^{\top}\bx+\tau))^2}{2\nu}\Big)\leq\frac{1}{\sqrt{2\pi\ell}},
\end{align*}
for any $|y|<2\kappa$. On the other hand, for $|y|\geq2\kappa$, since $\frac{(y-(\eta^{\top}\bx+\tau))^2}{2\nu}\geq\frac{y^2}{8u}$, we have that
\begin{align*}
     \pi(y|\eta^{\top}\bx+\tau,\nu)\leq\frac{1}{\sqrt{2\pi\ell}}\exp\Big(-\frac{y^2}{8u}\Big).
\end{align*}
Therefore, we deduce that $\pi(y|\eta^{\top}\bx+\tau,\nu)\leq M(y|\bx)$, where 
\begin{align*}
    M(y|\bx)=\begin{cases}
        \frac{1}{\sqrt{2\pi\ell}}\exp\Big(-\frac{y^2}{8u}\Big), \quad \text{for } |y|\geq 2\kappa,\\
        \frac{1}{\sqrt{2\pi\ell}}, \hspace{2.3cm} \text{for } |y|<2\kappa.
    \end{cases}
\end{align*}
Next, let $0<\tau\leq\varepsilon$ and $\{\pi_1,\ldots,\pi_N\}$ be the $\tau$-cover under the $L^{\infty}$-norm of the set $\mathcal{P}^{type}_{k^*_1,k_2}(\Theta)$ where $N:={N}(\tau,\mathcal{P}^{type}_{k^*_1,k_2}(\Theta),\|\cdot\|_{L^{\infty}})$ stands for the $\tau$-covering number of the norm space $(\mathcal{P}^{type}_{k^*_1,k_2}(\Theta),\|\cdot\|_{L^{\infty}})$. Equipped with the brackets of the form $[L_i,U_i]$ where
    \begin{align*}
        L_i(y|\bx)&:=\max\{\pi_i(y|\bx)-\tau,0\},\\
        U_i(y|\bx)&:=\max\{\pi_i(y|\bx)+\tau, M(y|\bx)\},
    \end{align*}
for all $i\in[N]$, we can validate that $\mathcal{P}^{type}_{k^*_1,k_2}(\Theta)\subset\cup_{i=1}^{N}[L_i,U_i]$, and $U_i(y|\bx)-L_i(y|\bx)\leq \min\{2\tau,M\}$. Those results yield that
\begin{align*}
    \|U_i-L_i\|_{L^1}=\int(U_i(y|\bx)-L_i(y|\bx))\dint(\bx,y)\leq\int 2\tau\dint(\bx,y)=2\tau,
\end{align*}
From the definition of the bracketing entropy, we have
\begin{align}
    \label{eq:standard_bracketing_covering}
    H_B(2\tau,\mathcal{P}^{type}_{k^*_1,k_2}(\Theta),\|\cdot\|_{L^1})\leq\log N=\log {N}(\tau,\mathcal{P}^{type}_{k^*_1,k_2}(\Theta),\|\cdot\|_{L^{\infty}}).
\end{align}
Therefore, it suffices to provide an upper bound for the covering number $N$. Indeed, let us denote $\Delta:=\{(b,\boldsymbol{a})\in\mathbb{R}\times\mathbb{R}^{d}:(b,\boldsymbol{a},\beta,\boldsymbol{\omega},\tau,\boldsymbol{\eta},\nu)\in\Theta\}$ and $\Omega:=\{(\beta,\boldsymbol{\omega},\tau,\boldsymbol{\eta},\nu)\in\mathbb{R}\times\mathbb{R}^{d}\times\mathbb{R}\times\mathbb{R}^{d}\times\mathbb{R}_+:(b,\boldsymbol{a},\beta,\boldsymbol{\omega},\tau,\boldsymbol{\eta},\nu)\in\Theta\}$. As $\Theta$ is a compact set, so are $\Delta$ and $\Omega$. Thus, we can find $\tau$-covers $\Delta_{\tau}$ and ${\Omega}_{\tau}$ for $\Delta$ and $\Omega$, respectively. Furthermore, it can be validated that 
\begin{align*}
    |\Delta_{\tau}|\leq \mathcal{O}_{P}(\tau^{-(d+1)k^*_1}), \quad |\Omega_{\tau}|\leq \mathcal{O}_{P}(\tau^{-(2d+3)k^*_1k_2}).
\end{align*}
For each mixing measure $G=\sum_{i_1=1}^{k^*_1}\exp(b_{i_1})\sum_{i_2=1}^{k_2}\exp(\beta_{i_2|i_1})\delta_{(\boldsymbol{a}_{i_1},\boldsymbol{\omega}_{i_2|i_1},\boldsymbol{\eta}_{i_1i_2},\tau_{i_1i_2},\nu_{i_1i_2})}\in\mathcal{G}_{k^*_1,k_2}(\Theta)$, we consider two other mixing measures $G'$ and $\overline{G}$ defined as
\begin{align*}
    G'&:=\sum_{i_1=1}^{k^*_1}\exp(b_{i_1})\sum_{i_2=1}^{k_2}\exp(\overline{\beta}_{i_2|i_1})\delta_{(\boldsymbol{a}_{i_1},\overline{\boldsymbol{\omega}}_{i_2|i_1},\overline{\boldsymbol{\eta}}_{i_1i_2},\overline{\tau}_{i_1i_2},\overline{\nu}_{i_1i_2})}, \\
    \overline{G}&:=\sum_{i_1=1}^{k^*_1}\exp(\overline{b}_{i_1})\sum_{i_2=1}^{k_2}\exp(\overline{\beta}_{i_2|i_1})\delta_{(\overline{\boldsymbol{a}}_{i_1},\overline{\boldsymbol{\omega}}_{i_2|i_1},\overline{\boldsymbol{\eta}}_{i_1i_2},\overline{\tau}_{i_1i_2},\overline{\nu}_{i_1i_2})}.
\end{align*}
Above, $(\overline{\beta}_{i_2|i_1},\overline{\boldsymbol{\omega}}_{i_2|i_1},\overline{\boldsymbol{\eta}}_{i_1i_2},\overline{\tau}_{i_1i_2},\overline{\nu}_{i_1i_2})\in{\Omega}_{\tau}$ such that $(\overline{\beta}_{i_2|i_1},\overline{\boldsymbol{\omega}}_{i_2|i_1},\overline{\boldsymbol{\eta}}_{i_1i_2},\overline{\tau}_{i_1i_2},\overline{\nu}_{i_1i_2})$ is the closest to $(\beta_{i_2|i_1},\boldsymbol{\omega}_{i_2|i_1},\boldsymbol{\eta}_{i_1i_2},\tau_{i_1i_2},\nu_{i_1i_2})$ in that set, while $(\overline{b}_{i_1},\overline{\boldsymbol{a}}_{i_1})\in\Delta_{\tau}$ is the closest to $(b_{i_1},\boldsymbol{\omega}_{i})$ in that set. 

\vspace{0.5 em}
\noindent
Now, we begin bounding the term $\|p^{type}_{G}-p^{type}_{G'}\|_{L^{\infty}}$. For brevity, we will consider only the case when $type=SS$, while the other two cases when $type=SL$ and $type=LL$ can be argued in a similar fashion.

\textbf{When $type=SS$:} Let us define
\begin{align*}
    p^{SS}_{i_1}(\bx)&:=\sum_{i_2=1}^{k_2}\softmax((\boldsymbol{\omega}_{i_2|i_1})^{\top}\bx+\beta_{i_2|i_1})\pi(y|(\boldsymbol{\eta}_{i_1i_2})^{\top}\bx+\tau_{i_1i_2},\nu_{i_1i_2}),\\
    \overline{p}^{SS}_{i_1}(\bx)&:=\sum_{i_2=1}^{k_2}\softmax((\overline{\boldsymbol{\omega}}_{i_2|i_1})^{\top}\bx+\overline{\beta}_{i_2|i_1})\pi(y|(\overline{\boldsymbol{\eta}}_{i_1i_2})^{\top}\bx+\overline{\tau}_{i_1i_2},\overline{\nu}_{i_1i_2}).
\end{align*}
Then, we have
\begin{align}
    \label{eq:triangle_1}
    \|p^{SS}_{G}-p^{SS}_{G'}\|_{L^{\infty}}&=\sum_{i_1=1}^{k^*_1}\softmax\left((\boldsymbol{a}_{i_1})^{\top}\bx+b_{i_1}\right)\cdot\|p^{SS}_{i_1}-\overline{p}^{SS}_{i_1}\|_{L^{\infty}}\leq\sum_{i_1=1}^{k^*_1}\|p^{SS}_{i_1}-\overline{p}^{SS}_{i_1}\|_{L^{\infty}}.
\end{align}
Next, we need to bound the terms $p^{SS}_{i_1}(\bx)-\overline{p}^{SS}_{i_1}(\bx)$ using the triangle inequality
\begin{align}
    \label{eq:sub_triangle}
    \|p^{SS}_{i_1}-\overline{p}^{SS}_{i_1}\|_{L^{\infty}}\leq\|p^{SS}_{i_1}-\widetilde{p}^{SS}_{i_1}\|_{L^{\infty}}+\|\widetilde{p}^{SS}_{i_1}-\overline{p}^{SS}_{i_1}\|_{L^{\infty}},
\end{align}
where we define
\begin{align*}
    \widetilde{p}^{SS}_{i_1}(\bx):=\sum_{i_2=1}^{k_2}\softmax((\boldsymbol{\omega}_{i_2|i_1})^{\top}\bx+\beta_{i_2|i_1})\pi(y|(\overline{\boldsymbol{\eta}}_{i_1i_2})^{\top}\bx+\overline{\tau}_{i_1i_2},\overline{\nu}_{i_1i_2}).
\end{align*} 
Firstly, we have
\begin{align}
    \label{eq:sub_triangle_1}
    &\|p^{SS}_{i_1}-\widetilde{p}^{SS}_{i_1}\|_{L^{\infty}}\leq\sum_{i_2=1}^{k_2}\softmax((\boldsymbol{\omega}_{i_2|i_1})^{\top}\bx+\beta_{i_2|i_1})\nonumber\\
    &\hspace{3cm}\times\|\pi(y|(\boldsymbol{\eta}_{i_1i_2})^{\top}\bx+\tau_{i_1i_2},\nu_{i_1i_2})-\pi(y|(\overline{\boldsymbol{\eta}}_{i_1i_2})^{\top}\bx+\overline{\tau}_{i_1i_2},\overline{\nu}_{i_1i_2})\|_{L^{\infty}}\nonumber\\
    &\leq\sum_{i_2=1}^{k_2}\|\pi(y|(\boldsymbol{\eta}_{i_1i_2})^{\top}\bx+\tau_{i_1i_2},\nu_{i_1i_2})-\pi(y|(\overline{\boldsymbol{\eta}}_{i_1i_2})^{\top}\bx+\overline{\tau}_{i_1i_2},\overline{\nu}_{i_1i_2})\|_{L^{\infty}}\nonumber\\
    &\lesssim\sum_{i_2=1}^{k_2}\Big(\|\boldsymbol{\eta}_{i_1i_2}-\overline{\boldsymbol{\eta}}_{i_1i_2}\|+|\tau_{i_1i_2}-\overline{\tau}_{i_1i_2}|+|\nu_{i_1i_2}-\overline{\nu}_{i_1i_2}|\Big)\lesssim\tau.
\end{align}
Secondly, since $\mathcal{X}$ is a bounded set, we may assume that $\|\bx\|\leq B$ for any $\bx\in\mathcal{X}$. Then, it follows that
\begin{align}
    \label{eq:sub_triangle_2}
    \|\widetilde{p}^{SS}_{i_1}-\overline{p}^{SS}_{i_1}\|_{L^{\infty}}&\leq\sum_{i_2=1}^{k_2}\Big|\softmax((\boldsymbol{\omega}_{i_2|i_1})^{\top}\bx+\beta_{i_2|i_1})-\softmax((\overline{\boldsymbol{\omega}}_{i_2|i_1})^{\top}\bx+\overline{\beta}_{i_2|i_1})\Big|\nonumber\\
    &\hspace{4cm}\times\|\pi(y|(\overline{\boldsymbol{\eta}}_{i_1i_2})^{\top}\bx+\overline{\tau}_{i_1i_2},\overline{\nu}_{i_1i_2})\|_{L^{\infty}}\nonumber\\
    &\lesssim\sum_{i_2=1}^{k_2}\Big[\|\boldsymbol{\omega}_{i_2|i_1}-\overline{\boldsymbol{\omega}}_{i_2|i_1}\|\cdot|\bx\|+|\beta_{i_2|i_1}-\overline{\beta}_{i_2|i_1}|\Big]\nonumber\\
    &\leq\sum_{i_2=1}^{k_2}\Big(\tau B+\tau\Big)\lesssim\tau.
\end{align}
From the results in equations~\eqref{eq:triangle_1}, \eqref{eq:sub_triangle}, \eqref{eq:sub_triangle_1} and \eqref{eq:sub_triangle_2}, we deduce that
\begin{align}
    \label{eq:standard_first_term_triangle}
    \|p^{SS}_{G}-p^{SS}_{G'}\|_{L^{\infty}}\lesssim\tau.
\end{align}
\noindent
Furthermore, we have
\begin{align}
    \|p^{SS}_{G'}-p^{SS}_{\overline{G}}\|_{L^{\infty}}&=\sum_{i_1=1}^{k^*_1}|\softmax((\boldsymbol{a}_{i_1})^{\top}\bx+b_{i_1})-\softmax((\overline{\boldsymbol{a}}_{i_1})^{\top}\bx+\overline{b}_{i_1})|\cdot\|\pi(y|(\overline{\boldsymbol{\eta}}_{i_1i_2})^{\top}\bx+\overline{\tau}_{i_1i_2},\overline{\nu}_{i_1i_2})\|_{L^{\infty}}\nonumber\\
    &\lesssim\sum_{i_1=1}^{k^*_1}\Big(\|\boldsymbol{a}_{i_1}-\overline{\boldsymbol{a}}_{i_1}\|\cdot\|\bx\|+|b_{i_1}-\overline{b}_{i_1}|\Big)\nonumber\\
    \label{eq:standard_second_term_triangle}
    &\leq\sum_{i_1=1}^{k^*_1}(\tau B+\tau)\lesssim\tau.
\end{align}
According to the triangle inequality and the results in equations~\eqref{eq:standard_first_term_triangle}, \eqref{eq:standard_second_term_triangle}, we have
\begin{align*}
    \|p^{SS}_{G}-p^{SS}_{\overline{G}}\|_{L^{\infty}}\leq \|p^{SS}_{G}-p^{SS}_{G'}\|_{L^{\infty}}+\|p^{SS}_{G'}-p^{SS}_{\overline{G}}\|_{L^{\infty}}\lesssim\tau.
\end{align*}
By definition of the covering number, we deduce that
\begin{align}
    \label{eq:standard_covering_bound}
    {N}(\tau,\mathcal{P}^{type}_{k^*_1,k_2}(\Theta),\normf{\cdot})&\leq |\Delta_{\tau}|\times|\Omega_{\tau}|\nonumber\\
    &\leq\mathcal{O}_{P}(\tau^{-(d+1)k^*_1})\times\mathcal{O}_{P}(\tau^{-(2d+3)k^*_1k_2})\nonumber\\
    &\leq\mathcal{O}_P(\tau^{-(d+1)k^*_1-(2d+3)k^*_1k_2}).
\end{align}
Combine the result in equation~\eqref{eq:standard_bracketing_covering} with that in \eqref{eq:standard_covering_bound}, we arrive at
\begin{align*}
    H_B(2\tau,\mathcal{P}^{type}_{k^*_1,k_2}(\Theta),\|\cdot\|_{L^1})\lesssim \log(1/\tau).
\end{align*}
Let $\tau=\varepsilon/2$, then it follows that
\begin{align*}
    H_B(\varepsilon,\mathcal{P}^{type}_{k^*_1,k_2}(\Theta),\|.\|_{L^1}) \lesssim \log(1/\varepsilon).
\end{align*}
Finally, due to the inequality between the Hellinger distance and the $L^1$-norm $h\leq\|\cdot\|_{L^1}$, we achieve the conclusion that
\begin{align*}
    H_B(\varepsilon,\mathcal{P}^{type}_{k^*_1,k_2}(\Theta),h) \lesssim \log(1/\varepsilon).
\end{align*}
Hence, the proof is completed.
\end{proof}

\section{Proof of Lemma~\ref{lemma:rss_values}}
\label{appendix:rss_values}

Firstly, let us recall the system of polynomial equations given in equation~\eqref{eq:system_SS}:
\begin{align}
    \label{eq:original_system}
    \sum_{i_2 = 1}^{m} \sum_{\alpha \in \mathcal{I}^{SS}_{\brho_{1}, \rho_{2}}} \dfrac{p_{i_2}^2~ \boldsymbol{q}_{1i_2}^{\balpha_1} ~\boldsymbol{q}_{2i_2}^{\balpha_2}~\boldsymbol{q}_{3i_2}^{\balpha_3} ~q_{4i_2}^{\alpha_4} ~q_{5i_2}^{\alpha_5}}{\balpha_1!~\balpha_2!~\balpha_3!~\alpha_4!\alpha_5!} = 0, \quad 1\leq |\brho_1|+\rho_2\leq r,
\end{align}
where $\mathcal{I}^{SS}_{\brho_{1}, \rho_{2}} = \{\alpha = (\balpha_1, \balpha_2, \balpha_3, \alpha_4, \alpha_5) \in \mathbb{N}^{d} \times \mathbb{N}^{d} \times \mathbb{N}^{d} \times \mathbb{N} \times \mathbb{N}: \ \balpha_1 + \balpha_2+\balpha_3 = \brho_1, \ \alpha_4 + 2 \alpha_5 = \rho_2- |\balpha_3| \}$. 

\vspace{0.5 em}
\noindent
\textbf{When $m=2$:} By observing a portion of the above system when $\brho_1= \mathbf{0}_{d}$, which is given by
\begin{align}
    \label{eq:reduced_system}
    \sum_{i_2=1}^{m}\sum_{\alpha_4+2\alpha_5=\rho_2}\dfrac{p_{i_2}^{2}~q_{4i_2}^{\alpha_4}~q_{5i_2}^{\alpha_5}}{\alpha_4!~\alpha_5!}=0, \quad \rho_2=1,2,\ldots,r.
\end{align}
Proposition 2.1 in \cite{Ho-Nguyen-Ann-16} shows that the smallest $r\in\mathbb{N}$ such that the system~\eqref{eq:reduced_system} does not admit any non-trivial solutions when $m=2$ is $r=4$. Note that a solution of the system~\ref{eq:reduced_system} is called non-trivial in \cite{Ho-Nguyen-Ann-16} if all the values of $p_{i_2}$ are different from zero, whereas at least one among $q_{4i_2}$ is non-zero. This definition of non-trivial solutions totally aligns with ours for the system~\eqref{eq:original_system}. Therefore, we have $\Bar{r}(m)\leq 4$, and it suffices to prove that $\Bar{r}(m)>3$.

\vspace{0.5 em}
\noindent
Indeed, when $r=3$, we demonstrate that the system~\eqref{eq:original_system} admits a non-trivial solution: $p_{i_2}=1$, $\boldsymbol{q}_{1i_2} = \boldsymbol{q}_{2i_2}=\boldsymbol{q}_{3i_2} = \zerod$ for all $i_2\in[m]$, $q_{41}=1$, $q_{42}=-1$, $q_{51}=q_{52}=-\frac{1}{2}$. Since $\boldsymbol{q}_{1i_2} = \boldsymbol{q}_{2i_2}=\boldsymbol{q}_{3i_2} = \zerod$, this solution clearly satisfies the equations associated with $\brho_1\neq\zerod$. Thus, we only need to verify those with $\brho_1=\zerod$, which are given by
\begin{align*}
    \sum_{j=1}^mp_{i_2}^2q_{4i_2}&=0,\nonumber\\
     \sum_{i_2=1}^{m}p_{i_2}^2\Big(\frac{1}{2}q_{4i_2}^2+q_{5i_2}\Big)&=0,\nonumber\\
    \sum_{i_2=1}^{m}p_{i_2}^2\Big(\frac{1}{3!}q_{4i_2}^3+q_{4i_2}q_{5i_2}\Big)&=0.
\end{align*}
By simple calculations, we can check that $p_{i_2}=1$, $q_{41}=1$, $q_{42}=-1$, $q_{51}=q_{52}=-\frac{1}{2}$ satisfies the above equations. Hence, we obtain that $\Bar{r}(m)>3$, leading to $\Bar{r}(m)=4$.

\vspace{0.5 em}
\noindent
\textbf{When $m=3$:} Note that $\Bar{r}(m)$ is a monotonically increasing function of $m$. Therefore, it follows from the previous result that $\Bar{r}(m)>\Bar{r}(2)=4$, or equivalently, $\Bar{r}(m)\geq 5$. Additionally, according to Proposition 2.1 in \cite{Ho-Nguyen-Ann-16}, we deduce that $\Bar{r}(m)\leq 6$ based on the reduced system in equation~\eqref{eq:reduced_system}. Thus, we only need to show that $\Bar{r}(m)>5$.

\vspace{0.5 em}
\noindent
Indeed, we show that the following is a non-trivial solution of the system~\eqref{eq:original_system} when $r=5$: 
\begin{align*}
    p_{i_2}=1,\quad \boldsymbol{q}_{1i_2} = \boldsymbol{q}_{2i_2}=\boldsymbol{q}_{3i_2} = \zerod, \quad \forall i_2\in[m],\\
    q_{41}=\frac{\sqrt{3}}{3}, \quad q_{42}=-\frac{\sqrt{3}}{3}, \quad q_{43}=0,\\
    q_{51}=q_{52}=-\frac{1}{6}, \quad q_{53}=0.
\end{align*}
Since $\boldsymbol{q}_{1i_2} = \boldsymbol{q}_{2i_2}=\boldsymbol{q}_{3i_2} = \zerod$, this solution clearly satisfies the equations associated with $\brho_1\neq\zerod$. Thus, we only need to verify those with $\brho_1=\zerod$, which are given by
\begin{align*} 
    \sum_{j=1}^mp_{i_2}^2q_{4i_2}&=0,\nonumber\\
     \sum_{i_2=1}^{m}p_{i_2}^2\Big(\frac{1}{2}q_{4i_2}^2+q_{5i_2}\Big)&=0,\nonumber\\
    \sum_{i_2=1}^{m}p_{i_2}^2\Big(\frac{1}{3!}q_{4i_2}^3+q_{4i_2}q_{5i_2}\Big)&=0,\\
    \sum_{i_2=1}^{m}p_{i_2}^2\Big(\frac{1}{4!}q_{4i_2}^4+\frac{1}{2!}q_{4i_2}^2q_{5i_2}+\frac{1}{2!}q_{5i_2}^2\Big)&=0,\\
    \sum_{i_2=1}^{m}p_{i_2}^2\Big(\frac{1}{5!}q_{4i_2}^5+\frac{1}{3!}q_{4i_2}^3q_{5i_2}+\frac{1}{2!}q_{4i_2}q_{5i_2}^2\Big)&=0.
\end{align*}
By simple calculations, it can be validated that $p_{i_2}=1$, $q_{41}=\frac{\sqrt{3}}{3}$, $q_{42}=-\frac{\sqrt{3}}{3}$, $q_{43}=0$, $q_{51}=q_{52}=-\frac{1}{6}$, $q_{53}=0$ satisfies the above equations. Hence, we conclude $\Bar{r}(m)>5$, meaning that $\Bar{r}(m)=6$.

    

\section{Identifiability of the Gaussian HMoE}
\label{appendix:identifiability}
\begin{proof}[Proof of Proposition~\ref{prop:identifiability}]
    In this proof, we will consider only the case when $type=SS$ as other cases can be done similarly.

    \vspace{0.5 em}
\noindent
    To start with, let us write the equation  $p^{SS}_{G}(y|\bx)=p^{SS}_{G_*}(y|\bx)$ explicitly as follows:
    \begin{align}
        \label{eq:general_identifiable_equation}
        &\sum_{i_1=1}^{k^*_1}\softmax\Big((\boldsymbol{a}_{i_1})^{\top}\bx+b_{i_1}\Big)\sum_{i_2=1}^{k_2}\softmax\Big((\boldsymbol{\omega}_{i_2|i_1})^{\top}\bx+\beta_{i_2|i_1}\Big)\pi(y|(\boldsymbol{\eta}_{i_1i_2})^{\top}\bx+\tau_{i_1i_2},\nu_{i_1i_2})\nonumber\\
        &=\sum_{i_1=1}^{k^*_1}\softmax\Big((\boldsymbol{a}^*_{i_1})^{\top}\bx+b^*_{i_1}\Big)\sum_{i_2=1}^{k^*_2}\softmax\Big((\boldsymbol{\omega}^*_{i_2|i_1})^{\top}\bx+\beta^*_{i_2|i_1}\Big)\pi(y|(\boldsymbol{\eta}^*_{i_1i_2})^{\top}\bx+\tau^*_{i_1i_2},\nu^*_{i_1i_2}).
    \end{align}
    Then, it follows from the identifiability of the location-scale Gaussian mixtures \cite{Teicher-1960,Teicher-1961} that the number of components and the weight set of the mixing measure $G$ equal to those of its counterpart $G_*$, i.e. $k_2=k_2^*$ and
    \begin{align*}
        &\Bigg\{\softmax\Big((\boldsymbol{a}_{i_1})^{\top}\bx+b_{i_1}\Big)\cdot\softmax\Big((\boldsymbol{\omega}_{i_2|i_1})^{\top}\bx+\beta_{i_2|i_1}\Big):i_1\in[k^*_1],i_2\in[k^*_2]\Bigg\}\\
        &=\Bigg\{\softmax\Big((\boldsymbol{a}^*_{i_1})^{\top}\bx+b^*_{i_1}\Big)\cdot\softmax\Big((\boldsymbol{\omega}^*_{i_2|i_1})^{\top}\bx+\beta^*_{i_2|i_1}\Big):i_1\in[k^*_1],i_2\in[k^*_2]\Bigg\},
    \end{align*}
    for almost every $\bx$. WLOG, we may assume that 
    \begin{align}
        \label{eq:general_soft}
        &\softmax\Big((\boldsymbol{a}_{i_1})^{\top}\bx+b_{i_1}\Big)\cdot\softmax\Big((\boldsymbol{\omega}_{i_2|i_1})^{\top}\bx+\beta_{i_2|i_1}\Big)=\softmax\Big((\boldsymbol{a}^*_{i_1})^{\top}\bx+b^*_{i_1}\Big)\cdot\softmax\Big((\boldsymbol{\omega}^*_{i_2|i_1})^{\top}\bx+\beta^*_{i_2|i_1}\Big),
    \end{align}
    for almost every $\bx$, for any $i_1\in[k^*_1],i_2\in[k^*_2]$.  
    Due to the assumptions that $\boldsymbol{\omega}_{k^*_2|i_1}=\boldsymbol{\omega}^*_{k^*_2|i_1}=\mathbf{0}_{d}$ and $\beta_{k^*_2|i_1}=\beta^*_{k^*_2|i_1}=0$, we have that
    \begin{align}
        \label{eq:general_soft-soft}
        \softmax\Big((\boldsymbol{a}_{i_1})^{\top}\bx+b_{i_1}\Big)=\softmax\Big((\boldsymbol{a}^*_{i_1})^{\top}\bx+b^*_{i_1}\Big),
    \end{align}
    for almost every $\bx$, for any $i_1\in$. Since the $\softmax$ function is invariant to translations, then it follows from the equation~\eqref{eq:general_soft-soft} that 
    \begin{align*}
       \boldsymbol{a}_{i_1}&=\boldsymbol{a}^*_{i_1}+\boldsymbol{a}\\
        b_{i_1}&=b^*_{i_1}+b,
    \end{align*}
    for some $\boldsymbol{a}\in\mathbb{R}^d$ and $b\in\mathbb{R}$. Moreover, due to the assumption that $\boldsymbol{a}_{k^*_1}=\boldsymbol{a}^*_{k^*_1}$ and $b_{k^*_1}=b^*_{k^*_1}=0$, we get $\boldsymbol{a}=\zerod$ and $b=0$. This leads to $\boldsymbol{a}_{i_1}=\boldsymbol{a}^*_{i_1}$ and $b_{i_1}=b^*_{i_1}$ for any $i_1\in[k^*_1]$. Those results together with equation~\eqref{eq:general_soft} yield that
    \begin{align*}
        \softmax\Big((\boldsymbol{\omega}_{i_2|i_1})^{\top}\bx+\beta_{i_2|i_1}\Big)=\softmax\Big((\boldsymbol{\omega}^*_{i_2|i_1})^{\top}\bx+\beta^*_{i_2|i_1}\Big),
    \end{align*}
    for almost every $\bx$, for any $i_1\in[k^*_1],i_2\in[k^*_2]$. By employing the previous arguments, we also obtain that 
    \begin{align*}
    \boldsymbol{\omega}_{i_2|i_1}&=\boldsymbol{\omega}^*_{i_2|i_1},\\
        \beta_{i_2|i_1}&=\beta^*_{i_2|i_1}.
    \end{align*}
    Then, the equation~\eqref{eq:general_identifiable_equation} can be rewritten as
    \begin{align}
        \label{eq:general_new_identifiable_equation}
        &\sum_{i_1=1}^{k^*_1}\exp(b_{i_1})\sum_{i_2=1}^{k^*_2}\exp(\beta_{i_2|i_1})\exp\Big((\boldsymbol{a}_{i_1}+\boldsymbol{\omega}_{i_2|i_1})^{\top}\bx\Big)\pi(y|(\boldsymbol{\eta}_{i_1i_2})^{\top}\bx+\tau_{i_1i_2},\nu_{i_1i_2})\nonumber\\
        &=\sum_{i_1=1}^{k^*_1}\exp(b^*_{i_1})\sum_{i_2=1}^{k^*_2}\exp(c^*_{i_2|i_1})\exp\Big((\boldsymbol{a}^*_{i_1}+\boldsymbol{\omega}^*_{i_2|i_1})^{\top}\bx\Big)\pi(y|(\boldsymbol{\eta}^*_{i_1i_2})^{\top}\bx+\tau^*_{i_1i_2},\nu^*_{i_1i_2}).
    \end{align}
    for almost every $\bx\in\mathcal{X}$. 

    \vspace{0.5 em}
\noindent
    Next, we denote $P_1,P_2,\ldots,P_{m_1}$ as a partition of the index set $[k^*_1]$, where $m_1\leq k^*_1$, such that $\exp(b_{i_1})=\exp(b^*_{i'_1})$ for any $i_1,i'_1\in P_j$ and $j_1\in[m_1]$. On the other hand, when $i_1$ and $i'_1$ do not belong to the same set $P_{j_1}$, we let $\exp(b_{i_1})\neq\exp(b^*_{i'_1})$. 

    \vspace{0.5 em}
\noindent
    Similarly, for each $i_1\in[k^*_1]$, we also define $Q_{1|i_1},Q_{2|i_1},\ldots,Q_{m_2|i_1}$ as a partition of the index set $[k^*_2]$, where $m_2\leq k^*_2$, such that $\exp(\beta_{i_2|i_1})=\exp(\beta^*_{i'_2|i_1})$ for any $i_2,i'_2\in Q_{j_2|i_1}$ and $j_2\in[m_2]$. Conversely, when $i_2$ and $i'_2$ do not belong to the same set $Q_{j_2|i_1}$, we let $\exp(\beta_{i_2|i_1})\neq\exp(\beta^*_{i'_2|i_1})$. 

    \vspace{0.5 em}
\noindent
    Thus, we can represent equation~\eqref{eq:general_new_identifiable_equation} as
    \begin{align*}
        &\sum_{j_1=1}^{m_1}\sum_{i_1\in{P}_{j_1}}\exp(b_{i_1})\sum_{j_2=1}^{m_2}\sum_{i_1\in{Q}_{j_2|i_1}}\exp(\beta_{i_2|i_1})\exp\Big((\boldsymbol{a}_{i_1}+\boldsymbol{\omega}_{i_2|i_1})^{\top}\bx\Big)\pi(y|(\boldsymbol{\eta}_{i_1i_2})^{\top}\bx+\tau_{i_1i_2},\nu_{i_1i_2})\nonumber\\
        &=\sum_{j_1=1}^{m_1}\sum_{i_1\in{P}_{j_1}}\exp(b^*_{i_1})\sum_{j_2=1}^{m_2}\sum_{i_1\in{Q}_{j_2|i_1}}\exp(\beta^*_{i_2|i_1})\exp\Big((\boldsymbol{a}^*_{i_1}+\boldsymbol{\omega}^*_{i_2|i_1})^{\top}\bx\Big)\pi(y|(\boldsymbol{\eta}^*_{i_1i_2})^{\top}\bx+\tau^*_{i_1i_2},\nu^*_{i_1i_2}),
    \end{align*}
    for almost every $\bx\in\mathcal{X}$. Recall that we have $b_{i_1}=b^*_{i_1}$, $\boldsymbol{a}_{i_1}=\boldsymbol{a}^*_{i_1}$, $\boldsymbol{\omega}_{i_2|i_1}=\boldsymbol{\omega}^*_{i_2|i_1}$ and $\beta_{i_2|i_1}=\beta^*_{i_2|i_1}$, for any $i_1\in[k^*_1]$ and $i_2\in[k^*_2]$, then the above result leads to
    \begin{align*}
        \Big\{\Big((\boldsymbol{\eta}_{i_1i_2})^{\top}\bx+\tau_{i_1i_2},\nu_{i_1i_2}\Big)&:i_1\in P_{j_1}, i_2\in Q_{j_2|i_1}\Big\}\\
        &\equiv\Big\{\Big((\boldsymbol{\eta}^*_{i_1i_2})^{\top}\bx+\tau^*_{i_1i_2},\nu^*_{i_1i_2}\Big):i_1\in P_{j_1}, i_2\in Q_{j_2|i_1}\Big\},
    \end{align*}
    for any $j_1\in[m_1]$ and $j_2\in[m_2]$. Consequently, we obtain that
    \begin{align*}
        G&=\sum_{j_1=1}^{m_1}\sum_{i_1\in{P}_{j_1}}\exp(b_{i_1})\sum_{j_2=1}^{m_2}\sum_{i_1\in{Q}_{j_2|i_1}}\exp(\beta_{i_2|i_1})\delta_{(\boldsymbol{a}_{i_1},\boldsymbol{\omega}_{i_2|i_1},\boldsymbol{\eta}_{i_1i_2},\tau_{i_1i_2},\nu_{i_1i_2})}\\
        &=\sum_{j_1=1}^{m_1}\sum_{i_1\in{P}_{j_1}}\exp(b^*_{i_1})\sum_{j_2=1}^{m_2}\sum_{i_1\in{Q}_{j_2|i_1}}\exp(\beta^*_{i_2|i_1})\delta_{\boldsymbol{a}^*_{i_1},\boldsymbol{\omega}^*_{i_2|i_1},\boldsymbol{\eta}^*_{i_1i_2},\tau^*_{i_1i_2},\nu^*_{i_1i_2})}\\
        &\equiv G_*.
    \end{align*}
    Hence, the proof is totally completed.
\end{proof}

\bibliography{reference}

\begin{thebibliography}{10}

\bibitem{agarwal2016learning}
V.~Agarwal, T.~Podchiyska, J.~M. Banda, V.~Goel, T.~I. Leung, E.~P. Minty,
  T.~E. Sweeney, E.~Gyang, and N.~H. Shah.
\newblock Learning statistical models of phenotypes using noisy labeled
  training data.
\newblock {\em Journal of the American Medical Informatics Association},
  23(6):1166--1173, 2016.

\bibitem{alsentzer2019publicly}
E.~Alsentzer, J.~R. Murphy, W.~Boag, W.-H. Weng, D.~Jin, T.~Naumann, and
  M.~McDermott.
\newblock Publicly available clinical bert embeddings.
\newblock {\em arXiv preprint arXiv:1904.03323}, 2019.

\bibitem{arbabi2019identifying}
A.~Arbabi, D.~R. Adams, S.~Fidler, M.~Brudno, et~al.
\newblock Identifying clinical terms in medical text using ontology-guided
  machine learning.
\newblock {\em JMIR medical informatics}, 7(2):e12596, 2019.

\bibitem{arjovsky2019invariant}
M.~Arjovsky, L.~Bottou, I.~Gulrajani, and D.~Lopez-Paz.
\newblock Invariant risk minimization.
\newblock {\em arXiv preprint arXiv:1907.02893}, 2019.

\bibitem{azran2004data}
A.~Azran and R.~Meir.
\newblock Data dependent risk bounds for hierarchical mixture of experts
  classifiers.
\newblock In {\em International Conference on Computational Learning Theory},
  pages 427--441. Springer, 2004.

\bibitem{bertsimas2022predicting}
D.~Bertsimas, J.~Pauphilet, J.~Stevens, and M.~Tandon.
\newblock Predicting inpatient flow at a major hospital using interpretable
  analytics.
\newblock {\em Manufacturing \& Service Operations Management},
  24(6):2809--2824, 2022.

\bibitem{butler2007icd}
R.~R. Butler.
\newblock Icd-10 general equivalence mappings: Bridging the translation gap
  from icd-9.
\newblock {\em Journal of AHIMA}, 78(9):84--86, 2007.

\bibitem{chen2022theory}
Z.~Chen, Y.~Deng, Y.~Wu, Q.~Gu, and Y.~Li.
\newblock Towards understanding the mixture-of-experts layer in deep learning.
\newblock In {\em Advances in Neural Information Processing Systems},
  volume~35, pages 23049--23062. Curran Associates, Inc., 2022.

\bibitem{chi2022on}
Z.~Chi, L.~Dong, S.~Huang, D.~Dai, S.~Ma, B.~Patra, S.~Singhal, P.~Bajaj,
  X.~Song, X.-L. Mao, H.~Huang, and F.~Wei.
\newblock On the representation collapse of sparse mixture of experts.
\newblock In {\em Advances in Neural Information Processing Systems}, 2022.

\bibitem{cohen2022torchxrayvision}
J.~P. Cohen, J.~D. Viviano, P.~Bertin, P.~Morrison, P.~Torabian, M.~Guarrera,
  M.~P. Lungren, A.~Chaudhari, R.~Brooks, M.~Hashir, et~al.
\newblock Torchxrayvision: A library of chest x-ray datasets and models.
\newblock In {\em International Conference on Medical Imaging with Deep
  Learning}, pages 231--249. PMLR, 2022.

\bibitem{csordas2023approximating}
R.~Csord{\'a}s, K.~Irie, and J.~Schmidhuber.
\newblock Approximating two-layer feedforward networks for efficient
  transformers.
\newblock In H.~Bouamor, J.~Pino, and K.~Bali, editors, {\em Findings of the
  Association for Computational Linguistics: EMNLP 2023}, pages 674--692,
  Singapore, Dec. 2023. Association for Computational Linguistics.

\bibitem{dai2024deepseekmoe}
D.~Dai, C.~Deng, C.~Zhao, R.~X. Xu, H.~Gao, D.~Chen, J.~Li, W.~Zeng, X.~Yu,
  Y.~Wu, Z.~Xie, Y.~K. Li, P.~Huang, F.~Luo, C.~Ruan, Z.~Sui, and W.~Liang.
\newblock Deepseekmoe: Towards ultimate expert specialization in
  mixture-of-experts language models.
\newblock {\em arXiv preprint arXiv:2401.04088}, 2024.

\bibitem{deveaux1989linear}
R.~D. {De Veaux}.
\newblock Mixtures of linear regressions.
\newblock {\em Computational Statistics and Data Analysis}, 8(3):227--245,
  1989.

\bibitem{deng2009imagenet}
J.~Deng, W.~Dong, R.~Socher, L.-J. Li, K.~Li, and L.~Fei-Fei.
\newblock Imagenet: A large-scale hierarchical image database.
\newblock In {\em 2009 IEEE conference on computer vision and pattern
  recognition}, pages 248--255. Ieee, 2009.

\bibitem{Du_Glam_MoE}
N.~Du, Y.~Huang, A.~M. Dai, S.~Tong, D.~Lepikhin, Y.~Xu, M.~Krikun, Y.~Zhou,
  A.~W. Yu, O.~Firat, B.~Zoph, L.~Fedus, M.~P. Bosma, Z.~Zhou, T.~Wang,
  E.~Wang, K.~Webster, M.~Pellat, K.~Robinson, K.~Meier-Hellstern, T.~Duke,
  L.~Dixon, K.~Zhang, Q.~Le, Y.~Wu, Z.~Chen, and C.~Cui.
\newblock {GL}a{M}: Efficient scaling of language models with
  mixture-of-experts.
\newblock In {\em Proceedings of the 39th International Conference on Machine
  Learning}, volume 162 of {\em Proceedings of Machine Learning Research},
  pages 5547--5569. PMLR, 17--23 Jul 2022.

\bibitem{elixhauser2009clinical}
A.~Elixhauser.
\newblock Clinical classifications software (ccs) 2009.
\newblock {\em http://www. hcug-us. ahrq. gov/toolssoft-ware/ccs/ccs. jsp},
  2009.

\bibitem{faria2010regression}
S.~Faria and G.~Soromenho.
\newblock Fitting mixtures of linear regressions.
\newblock {\em Journal of Statistical Computation and Simulation},
  80(2):201--225, 2010.

\bibitem{fedus2022switch}
W.~Fedus, B.~Zoph, and N.~Shazeer.
\newblock Switch transformers: Scaling to trillion parameter models with simple
  and efficient sparsity.
\newblock {\em The Journal of Machine Learning Research}, 23(1):5232--5270,
  2022.

\bibitem{fritsch1996adaptively}
J.~Fritsch, M.~Finke, and A.~Waibel.
\newblock Adaptively growing hierarchical mixtures of experts.
\newblock {\em Advances in Neural Information Processing Systems}, 9, 1996.

\bibitem{ganin2016domain}
Y.~Ganin, E.~Ustinova, H.~Ajakan, P.~Germain, H.~Larochelle, F.~Laviolette,
  M.~March, and V.~Lempitsky.
\newblock Domain-adversarial training of neural networks.
\newblock {\em Journal of machine learning research}, 17(59):1--35, 2016.

\bibitem{goldberger2000physiobank}
A.~L. Goldberger, L.~A. Amaral, L.~Glass, J.~M. Hausdorff, P.~C. Ivanov, R.~G.
  Mark, J.~E. Mietus, G.~B. Moody, C.-K. Peng, and H.~E. Stanley.
\newblock Physiobank, physiotoolkit, and physionet: components of a new
  research resource for complex physiologic signals.
\newblock {\em circulation}, 101(23):e215--e220, 2000.

\bibitem{gowmimicecg}
B.~Gow, T.~Pollard, L.~A. Nathanson, A.~Johnson, B.~Moody, C.~Fernandes,
  N.~Greenbaum, S.~Berkowitz, D.~Moukheiber, P.~Eslami, et~al.
\newblock Mimic-iv-ecg-diagnostic electrocardiogram matched subset.
\newblock 2022.

\bibitem{gulati_conformer_2020}
A.~Gulati, J.~Qin, C.-C. Chiu, N.~Parmar, Y.~Zhang, J.~Yu, W.~Han, S.~Wang,
  Z.~Zhang, Y.~Wu, and R.~Pang.
\newblock Conformer: {Convolution}-augmented {Transformer} for {Speech}
  {Recognition}.
\newblock In {\em Proc. {Interspeech} 2020}, pages 5036--5040, 2020.

\bibitem{gupta2022sparsely}
S.~Gupta, S.~Mukherjee, K.~Subudhi, E.~Gonzalez, D.~Jose, A.~H. Awadallah, and
  J.~Gao.
\newblock Sparsely activated mixture-of-experts are robust multi-task learners.
\newblock {\em arXiv preprint arxiv 2204.0768}, 2022.

\bibitem{han2024fusemoe}
X.~Han, H.~Nguyen, C.~Harris, N.~Ho, and S.~Saria.
\newblock Fusemoe: Mixture-of-experts transformers for fleximodal fusion.
\newblock In {\em Advances in Neural Information Processing Systems}, 2024.

\bibitem{harutyunyan2019multitask}
H.~Harutyunyan, H.~Khachatrian, D.~C. Kale, G.~Ver~Steeg, and A.~Galstyan.
\newblock Multitask learning and benchmarking with clinical time series data.
\newblock {\em Scientific data}, 6(1):96, 2019.

\bibitem{hazimeh_dselect_k_2021}
H.~Hazimeh, Z.~Zhao, A.~Chowdhery, M.~Sathiamoorthy, Y.~Chen, R.~Mazumder,
  L.~Hong, and E.~Chi.
\newblock {DSelect}-k: {Differentiable} {Selection} in the {Mixture} of
  {Experts} with {Applications} to {Multi}-{Task} {Learning}.
\newblock In {\em Advances in {Neural} {Information} {Processing} {Systems}},
  volume~34, pages 29335--29347. Curran Associates, Inc., 2021.

\bibitem{hendrycks2019benchmarking}
D.~Hendrycks and T.~Dietterich.
\newblock Benchmarking neural network robustness to common corruptions and
  perturbations.
\newblock {\em arXiv preprint arXiv:1903.12261}, 2019.

\bibitem{hendrycks2018benchmarking}
D.~Hendrycks and T.~Dietterich.
\newblock Benchmarking neural network robustness to common corruptions and
  perturbations.
\newblock In {\em International Conference on Learning Representations}, 2019.

\bibitem{Ho-Nguyen-Ann-16}
N.~Ho and X.~Nguyen.
\newblock Convergence rates of parameter estimation for some weakly
  identifiable finite mixtures.
\newblock {\em Annals of Statistics}, 44:2726--2755, 2016.

\bibitem{ho2022gaussian}
N.~Ho, C.-Y. Yang, and M.~I. Jordan.
\newblock Convergence rates for {G}aussian mixtures of experts.
\newblock {\em Journal of Machine Learning Research}, 23(323):1--81, 2022.

\bibitem{hu2022unimse}
G.~Hu, T.-E. Lin, Y.~Zhao, G.~Lu, Y.~Wu, and Y.~Li.
\newblock Unimse: Towards unified multimodal sentiment analysis and emotion
  recognition.
\newblock {\em arXiv preprint arXiv:2211.11256}, 2022.

\bibitem{irsoy2021dropout}
O.~Irsoy and E.~Alpayd{\i}n.
\newblock Dropout regularization in hierarchical mixture of experts.
\newblock {\em Neurocomputing}, 419:148--156, 2021.

\bibitem{irvin2019chexpert}
J.~Irvin, P.~Rajpurkar, M.~Ko, Y.~Yu, S.~Ciurea-Ilcus, C.~Chute, H.~Marklund,
  B.~Haghgoo, R.~Ball, K.~Shpanskaya, et~al.
\newblock Chexpert: A large chest radiograph dataset with uncertainty labels
  and expert comparison.
\newblock In {\em Proceedings of the AAAI conference on artificial
  intelligence}, volume~33, pages 590--597, 2019.

\bibitem{jacobs1991adaptive}
R.~A. Jacobs, M.~I. Jordan, S.~J. Nowlan, and G.~E. Hinton.
\newblock Adaptive mixtures of local experts.
\newblock {\em Neural computation}, 3(1):79--87, 1991.

\bibitem{jeremiah2013specifying}
E.~Jeremiah, L.~Marshall, S.~A. Sisson, and A.~Sharma.
\newblock Specifying a hierarchical mixture of experts for hydrologic modeling:
  Gating function variable selection.
\newblock {\em Water Resources Research}, 49(5):2926--2939, 2013.

\bibitem{jiang2024mixtral}
A.~Q. Jiang, A.~Sablayrolles, A.~Roux, A.~Mensch, B.~Savary, C.~Bamford, D.~S.
  Chaplot, D.~d.~l. Casas, E.~B. Hanna, F.~Bressand, et~al.
\newblock Mixtral of experts.
\newblock {\em arXiv preprint arXiv:2401.04088}, 2024.

\bibitem{jiang1999approximation}
W.~Jiang and M.~A. Tanner.
\newblock On the approximation rate of hierarchical mixtures-of-experts for
  generalized linear models.
\newblock {\em Neural computation}, 11(5):1183--1198, 1999.

\bibitem{johnson2020mimic}
A.~Johnson, L.~Bulgarelli, T.~Pollard, S.~Horng, L.~A. Celi, and R.~Mark.
\newblock Mimic-iv.
\newblock {\em PhysioNet. Available online at: https://physionet.
  org/content/mimiciv/1.0/(accessed August 23, 2021)}, 2020.

\bibitem{johnson2019mimicjpg}
A.~Johnson, M.~Lungren, Y.~Peng, Z.~Lu, R.~Mark, S.~Berkowitz, and S.~Horng.
\newblock Mimic-cxr-jpg-chest radiographs with structured labels.
\newblock {\em PhysioNet}, 2019.

\bibitem{johnson2023mimic}
A.~Johnson, T.~Pollard, S.~Horng, L.~A. Celi, and R.~Mark.
\newblock Mimic-iv-note: Deidentified free-text clinical notes, 2023.

\bibitem{johnson2019mimiccxr}
A.~E. Johnson, T.~J. Pollard, S.~J. Berkowitz, N.~R. Greenbaum, M.~P. Lungren,
  C.-y. Deng, R.~G. Mark, and S.~Horng.
\newblock Mimic-cxr, a de-identified publicly available database of chest
  radiographs with free-text reports.
\newblock {\em Scientific data}, 6(1):317, 2019.

\bibitem{Jordan-1994}
M.~Jordan and R.~Jacobs.
\newblock Hierarchical mixtures of experts and the {EM} algorithm.
\newblock {\em Neural Computation}, 6:181--214, 1994.

\bibitem{kazemi2019time2vec}
S.~M. Kazemi, R.~Goel, S.~Eghbali, J.~Ramanan, J.~Sahota, S.~Thakur, S.~Wu,
  C.~Smyth, P.~Poupart, and M.~Brubaker.
\newblock Time2vec: Learning a vector representation of time.
\newblock {\em arXiv preprint arXiv:1907.05321}, 2019.

\bibitem{krishnamurthy2023improvingexpertspecializationmixture}
Y.~Krishnamurthy, C.~Watkins, and T.~Gaertner.
\newblock Improving expert specialization in mixture of experts.
\newblock {\em arXiv preprint arXiv:2302.14703}, 2023.

\bibitem{krizhevsky2009learning}
A.~Krizhevsky, G.~Hinton, et~al.
\newblock Learning multiple layers of features from tiny images.
\newblock 2009.

\bibitem{lepikhin2020gshard}
D.~Lepikhin, H.~Lee, Y.~Xu, D.~Chen, O.~Firat, Y.~Huang, M.~Krikun, N.~Shazeer,
  and Z.~Chen.
\newblock Gshard: Scaling giant models with conditional computation and
  automatic sharding.
\newblock {\em arXiv preprint arXiv:2006.16668}, 2020.

\bibitem{li2023sparse}
B.~Li, Y.~Shen, J.~Yang, Y.~Wang, J.~Ren, T.~Che, J.~Zhang, and Z.~Liu.
\newblock Sparse mixture-of-experts are domain generalizable learners.
\newblock In {\em The Eleventh International Conference on Learning
  Representations}, 2023.

\bibitem{li2018learning}
D.~Li, Y.~Yang, Y.-Z. Song, and T.~Hospedales.
\newblock Learning to generalize: Meta-learning for domain generalization.
\newblock In {\em Proceedings of the AAAI conference on artificial
  intelligence}, volume~32, 2018.

\bibitem{liang_m3vit_2022}
H.~Liang, Z.~Fan, R.~Sarkar, Z.~Jiang, T.~Chen, K.~Zou, Y.~Cheng, C.~Hao, and
  Z.~Wang.
\newblock M$^3${ViT}: {Mixture}-of-{Experts} {Vision} {Transformer} for
  {Efficient} {Multi}-task {Learning} with {Model}-{Accelerator} {Co}-design.
\newblock In {\em {NeurIPS}}, 2022.

\bibitem{liao2025hmora}
M.~Liao, W.~Chen, J.~Shen, S.~Guo, and H.~Wan.
\newblock {HM}o{RA}: Making {LLM}s more effective with hierarchical mixture of
  lo{RA} experts.
\newblock In {\em The Thirteenth International Conference on Learning
  Representations}, 2025.

\bibitem{lin2019predicting}
K.~Lin, Y.~Hu, and G.~Kong.
\newblock Predicting in-hospital mortality of patients with acute kidney injury
  in the icu using random forest model.
\newblock {\em International journal of medical informatics}, 125:55--61, 2019.

\bibitem{Lindsay-1995}
B.~Lindsay.
\newblock {\em Mixture models: Theory, geometry and applications}.
\newblock In NSF-CBMS Regional Conference Series in Probability and Statistics.
  IMS, Hayward, CA., 1995.

\bibitem{liu2024deepseek}
A.~Liu, B.~Feng, B.~Xue, B.~Wang, B.~Wu, C.~Lu, C.~Zhao, C.~Deng, C.~Zhang,
  C.~Ruan, et~al.
\newblock Deepseek-v3 technical report.
\newblock {\em arXiv preprint arXiv:2412.19437}, 2024.

\bibitem{lovaasen2012icd}
K.~R. Lovaasen and J.~Schwerdtfeger.
\newblock {\em ICD-9-CM Coding: Theory and Practice with ICD-10, 2013/2014
  Edition-E-Book}.
\newblock Elsevier Health Sciences, 2012.

\bibitem{manole22refined}
T.~Manole and N.~Ho.
\newblock Refined convergence rates for maximum likelihood estimation under
  finite mixture models.
\newblock In {\em Proceedings of the 39th International Conference on Machine
  Learning}, 2022.

\bibitem{manole_selection_2021}
T.~Manole and A.~Khalili.
\newblock Estimating the number of components in finite mixture models via the
  group-sort-fuse procedure.
\newblock {\em The Annals of Statistics}, 49(6):3043–3069, 2021.

\bibitem{mcfee2015librosa}
B.~McFee, C.~Raffel, D.~Liang, D.~P. Ellis, M.~McVicar, E.~Battenberg, and
  O.~Nieto.
\newblock librosa: Audio and music signal analysis in python.
\newblock In {\em SciPy}, pages 18--24, 2015.

\bibitem{mendes2011convergence}
E.~F. Mendes and W.~Jiang.
\newblock Convergence rates for mixture-of-experts.
\newblock {\em arXiv preprint arxiv 1110.2058}, 2011.

\bibitem{moges2016hierarchical}
E.~Moges, Y.~Demissie, and H.-Y. Li.
\newblock Hierarchical mixture of experts and diagnostic modeling approach to
  reduce hydrologic model structural uncertainty.
\newblock {\em Water Resources Research}, 52(4):2551--2570, 2016.

\bibitem{mustafa2022multimodal}
B.~Mustafa, C.~Riquelme, J.~Puigcerver, R.~Jenatton, and N.~Houlsby.
\newblock Multimodal contrastive learning with limoe: the language-image
  mixture of experts.
\newblock {\em Advances in Neural Information Processing Systems},
  35:9564--9576, 2022.

\bibitem{ng2007extension}
S.-K. Ng and G.~J. McLachlan.
\newblock Extension of mixture-of-experts networks for binary classification of
  hierarchical data.
\newblock {\em Artificial Intelligence in Medicine}, 41(1):57--67, 2007.

\bibitem{nguyen2024general}
H.~Nguyen, P.~Akbarian, T.~Nguyen, and N.~Ho.
\newblock A general theory for softmax gating multinomial logistic mixture of
  experts.
\newblock In {\em Proceedings of the ICML}, 2024.

\bibitem{nguyen2025cosine}
H.~Nguyen, P.~Akbarian, T.~Pham, T.~Nguyen, S.~Zhang, and N.~Ho.
\newblock Statistical advantages of perturbing cosine router in mixture of
  experts.
\newblock In {\em International Conference on Learning Representations}, 2025.

\bibitem{nguyen2024sigmoid}
H.~Nguyen, N.~Ho, and A.~Rinaldo.
\newblock Sigmoid gating is more sample efficient than softmax gating in
  mixture of experts.
\newblock In {\em The Thirty-eighth Annual Conference on Neural Information
  Processing Systems}, 2024.

\bibitem{nguyen2023demystifying}
H.~Nguyen, T.~Nguyen, and N.~Ho.
\newblock Demystifying softmax gating function in {G}aussian mixture of
  experts.
\newblock In {\em Advances in Neural Information Processing Systems}, 2023.

\bibitem{nguyen2016latentmixing}
X.~Nguyen.
\newblock {Convergence of latent mixing measures in finite and infinite mixture
  models}.
\newblock {\em The Annals of Statistics}, 41(1):370 -- 400, 2013.

\bibitem{nie2021evomoe}
X.~Nie, X.~Miao, S.~Cao, L.~Ma, Q.~Liu, J.~Xue, Y.~Miao, Y.~Liu, Z.~Yang, and
  B.~Cui.
\newblock Evomoe: An evolutional mixture-of-experts training framework via
  dense-to-sparse gate.
\newblock {\em arXiv preprint arXiv:2112.14397}, 2021.

\bibitem{oldfield2024multilinear}
J.~Oldfield, M.~Georgopoulos, G.~Chrysos, C.~Tzelepis, Y.~Panagakis,
  M.~Nicolaou, J.~Deng, and I.~Patras.
\newblock Multilinear mixture of experts: Scalable expert specialization
  through factorization.
\newblock In {\em The Thirty-eighth Annual Conference on Neural Information
  Processing Systems}, 2024.

\bibitem{peng1996bayesian}
F.~Peng, R.~A. Jacobs, and M.~A. Tanner.
\newblock Bayesian inference in mixtures-of-experts and hierarchical
  mixtures-of-experts models with an application to speech recognition.
\newblock {\em Journal of the American Statistical Association},
  91(435):953--960, 1996.

\bibitem{peralta2014embedded}
B.~Peralta and A.~Soto.
\newblock Embedded local feature selection within mixture of experts.
\newblock {\em Information Sciences}, 269:176--187, 2014.

\bibitem{pham2024competesmoe}
Q.~Pham, G.~Do, H.~Nguyen, T.~Nguyen, C.~Liu, M.~Sartipi, B.~T. Nguyen,
  S.~Ramasamy, X.~Li, S.~Hoi, and N.~Ho.
\newblock Competesmoe -- effective training of sparse mixture of experts via
  competition.
\newblock {\em arXiv preprint arXiv:2402.02526}, 2024.

\bibitem{pollard2018eicu}
T.~J. Pollard, A.~E. Johnson, J.~D. Raffa, L.~A. Celi, R.~G. Mark, and
  O.~Badawi.
\newblock The eicu collaborative research database, a freely available
  multi-center database for critical care research.
\newblock {\em Scientific data}, 5(1):1--13, 2018.

\bibitem{puigcerver2024sparse}
J.~Puigcerver, C.~Riquelme, B.~Mustafa, and N.~Houlsby.
\newblock From sparse to soft mixtures of experts.
\newblock In {\em The Twelfth International Conference on Learning
  Representations}, 2024.

\bibitem{2020t5}
C.~Raffel, N.~Shazeer, A.~Roberts, K.~Lee, S.~Narang, M.~Matena, Y.~Zhou,
  W.~Li, and P.~J. Liu.
\newblock Exploring the limits of transfer learning with a unified text-to-text
  transformer.
\newblock {\em Journal of Machine Learning Research}, 21(140):1--67, 2020.

\bibitem{rahman2020integrating}
W.~Rahman, M.~K. Hasan, S.~Lee, A.~Zadeh, C.~Mao, L.-P. Morency, and E.~Hoque.
\newblock Integrating multimodal information in large pretrained transformers.
\newblock In {\em Proceedings of the conference. Association for Computational
  Linguistics. Meeting}, volume 2020, page 2359. NIH Public Access, 2020.

\bibitem{riquelme2021scaling}
C.~Riquelme, J.~Puigcerver, B.~Mustafa, M.~Neumann, R.~Jenatton,
  A.~Susano~Pinto, D.~Keysers, and N.~Houlsby.
\newblock Scaling vision with sparse mixture of experts.
\newblock {\em Advances in Neural Information Processing Systems},
  34:8583--8595, 2021.

\bibitem{russakovsky2015imagenet}
O.~Russakovsky, J.~Deng, H.~Su, J.~Krause, S.~Satheesh, S.~Ma, Z.~Huang,
  A.~Karpathy, A.~Khosla, M.~Bernstein, et~al.
\newblock Imagenet large scale visual recognition challenge.
\newblock {\em International journal of computer vision}, 115:211--252, 2015.

\bibitem{shazeer2017outrageously}
N.~Shazeer, A.~Mirhoseini, K.~Maziarz, A.~Davis, Q.~Le, G.~Hinton, and J.~Dean.
\newblock Outrageously large neural networks: The sparsely-gated
  mixture-of-experts layer.
\newblock In {\em In International Conference on Learning Representations},
  2017.

\bibitem{shen2023scaling}
S.~Shen, Z.~Yao, C.~Li, T.~Darrell, K.~Keutzer, and Y.~He.
\newblock Scaling vision-language models with sparse mixture of experts.
\newblock {\em arXiv preprint arXiv:2303.07226}, 2023.

\bibitem{shukla2021multi}
S.~N. Shukla and B.~M. Marlin.
\newblock Multi-time attention networks for irregularly sampled time series.
\newblock {\em arXiv preprint arXiv:2101.10318}, 2021.

\bibitem{soenksen2022integrated}
L.~R. Soenksen, Y.~Ma, C.~Zeng, L.~Boussioux, K.~Villalobos~Carballo, L.~Na,
  H.~M. Wiberg, M.~L. Li, I.~Fuentes, and D.~Bertsimas.
\newblock Integrated multimodal artificial intelligence framework for
  healthcare applications.
\newblock {\em NPJ digital medicine}, 5(1):149, 2022.

\bibitem{Sturmfels_System}
B.~Sturmfels.
\newblock {\em Solving Systems of Polynomial Equations}.
\newblock Providence, RI: American Mathematical Soc, 2002.

\bibitem{tan2019efficientnet}
M.~Tan and Q.~Le.
\newblock Efficientnet: Rethinking model scaling for convolutional neural
  networks.
\newblock In {\em International conference on machine learning}, pages
  6105--6114. PMLR, 2019.

\bibitem{Teicher-1960}
H.~Teicher.
\newblock On the mixture of distributions.
\newblock {\em Annals of Statistics}, 31:55--73, 1960.

\bibitem{Teicher-1961}
H.~Teicher.
\newblock Identifiability of mixtures.
\newblock {\em Annals of Statistics}, 32:244--248, 1961.

\bibitem{tsai2019multimodal}
Y.-H.~H. Tsai, S.~Bai, P.~P. Liang, J.~Z. Kolter, L.-P. Morency, and
  R.~Salakhutdinov.
\newblock Multimodal transformer for unaligned multimodal language sequences.
\newblock In {\em Proceedings of the conference. Association for Computational
  Linguistics. Meeting}, volume 2019, page 6558. NIH Public Access, 2019.

\bibitem{vandeGeer-00}
S.~van~de Geer.
\newblock {\em Empirical processes in M-estimation}.
\newblock Cambridge University Press, 2000.

\bibitem{wu2024iterative}
Z.~Wu, H.~Yao, D.~Liebovitz, and J.~Sun.
\newblock An iterative self-learning framework for medical domain
  generalization.
\newblock {\em Advances in Neural Information Processing Systems}, 36, 2024.

\bibitem{xu1994alternative}
L.~Xu, M.~Jordan, and G.~E. Hinton.
\newblock An alternative model for mixtures of experts.
\newblock {\em Advances in neural information processing systems}, 7, 1994.

\bibitem{You_Speech_MoE}
Z.~You, S.~Feng, D.~Su, and D.~Yu.
\newblock Speechmoe: Scaling to large acoustic models with dynamic routing
  mixture of experts.
\newblock In {\em Interspeech}, 2021.

\bibitem{zadeh2017tensor}
A.~Zadeh, M.~Chen, S.~Poria, E.~Cambria, and L.-P. Morency.
\newblock Tensor fusion network for multimodal sentiment analysis.
\newblock {\em arXiv preprint arXiv:1707.07250}, 2017.

\bibitem{zadeh2018multi}
A.~Zadeh, P.~P. Liang, S.~Poria, P.~Vij, E.~Cambria, and L.-P. Morency.
\newblock Multi-attention recurrent network for human communication
  comprehension.
\newblock In {\em Thirty-Second AAAI Conference on Artificial Intelligence},
  2018.

\bibitem{zhang2023improving}
X.~Zhang, S.~Li, Z.~Chen, X.~Yan, and L.~R. Petzold.
\newblock Improving medical predictions by irregular multimodal electronic
  health records modeling.
\newblock In {\em International Conference on Machine Learning}, pages
  41300--41313. PMLR, 2023.

\bibitem{zhao2021hierarchical}
W.~Zhao, Y.~Gao, S.~A. Memon, B.~Raj, and R.~Singh.
\newblock Hierarchical routing mixture of experts.
\newblock In {\em 2020 25th International Conference on Pattern Recognition
  (ICPR)}, pages 7900--7906. IEEE, 2021.

\bibitem{zhao1994hierarchical}
Y.~Zhao, R.~Schwartz, J.~Sroka, and J.~Makhoul.
\newblock Hierarchical mixtures of experts methodology applied to continuous
  speech recognition.
\newblock {\em Advances in Neural Information Processing Systems}, 7, 1994.

\bibitem{zhou2023brainformers}
Y.~Zhou, N.~Du, Y.~Huang, D.~Peng, C.~Lan, D.~Huang, S.~Shakeri, D.~So, A.~M.
  Dai, Y.~Lu, et~al.
\newblock Brainformers: Trading simplicity for efficiency.
\newblock In {\em International Conference on Machine Learning}, pages
  42531--42542. PMLR, 2023.

\bibitem{zhou2022mixture}
Y.~Zhou, T.~Lei, H.~Liu, N.~Du, Y.~Huang, V.~Zhao, A.~M. Dai, Q.~V. Le,
  J.~Laudon, et~al.
\newblock Mixture-of-experts with expert choice routing.
\newblock {\em Advances in Neural Information Processing Systems},
  35:7103--7114, 2022.

\end{thebibliography}
\bibliographystyle{abbrv}

\end{document}